\def\eqref#1{equation~\ref{#1}}
\def\1{\bm{1}}
\DeclareMathAlphabet{\mathsfit}{\encodingdefault}{\sfdefault}{m}{sl}
\SetMathAlphabet{\mathsfit}{bold}{\encodingdefault}{\sfdefault}{bx}{n}
\newcommand{\R}{\mathbb{R}}
\DeclareMathOperator*{\argmin}{arg\,min}
\newcommand{\W}{{\bm W}}
\newcommand{\A}{{\bm A}}
\newcommand{\D}{{\bm D}}
\newcommand{\lift}{{\bm L}}
\newcommand{\proj}{{\bm P}}
\newcommand{\eye}{{\bm I}}
\newcommand{\smla}{{\bm a}}
\newcommand{\smlb}{{\bm b}}
\newcommand{\smlv}{{\bm v}}
\newcommand{\smlw}{{\bm w}}
\newcommand{\y}{{\bm y}}
\newcommand{\x}{{\bm x}}
\newcommand{\z}{{\bm z}}
\definecolor{myblue}{RGB}{0,0,150}
\definecolor{myred}{RGB}{120,50,50}
\definecolor{mygray}{RGB}{90,90,110}
\crefname{proposition}{prop.}{propositions}
\Crefname{proposition}{Prop.}{Propositions}
\theoremstyle{plain}
\newtheorem{theorem}{Theorem}[section]
\newtheorem{proposition}[theorem]{Proposition}
\theoremstyle{definition}
\newtheorem{definition}[theorem]{Definition}
\theoremstyle{remark}
\title{Mechanistic Interpretability with Sparse Autoencoder Neural Operators}
\author{%
  Bahareh Tolooshams\thanks{equal contribution}\\
  University of Alberta\\
  Alberta Machine Intelligence Institute (Amii)\\
  \And
  Ailsa Shen$^*$\\
  California Institute of Technology (Caltech)\\
  \AND
  Anima Anandkumar\\
  California Institute of Technology (Caltech)\\
}
\begin{document}

\maketitle

\begin{abstract}
We introduce sparse autoencoder neural operators (SAE-NOs), a new class of sparse autoencoders that operate in function spaces rather than fixed-dimensional Euclidean representations. We formalize the functional representation hypothesis, where data are explained through sparse compositions of structured functions. Unlike standard SAEs that represent concepts with scalar activations, SAE-NOs parameterize concepts as functions, enabling representations that capture not only a concept's presence, but also how and where it is expressed across the input domain. We achieve this through joint sparsity: concept sparsity selects active concepts, while domain sparsity governs where they are expressed. We instantiate this framework using Fourier neural operators (SAE-FNOs), parameterizing concepts as integral operators in the Fourier domain. This functional and spectral parameterization is particularly advantageous when data exhibit spatial structure across scales or when concepts are frequency-structured. We characterize SAE-FNO on vision data and demonstrate that it learns localized patterns, uses concepts more efficiently, and exhibits stable concept characteristics across sparsity levels. We further show that SAE-FNO adapts to changes in domain size and generalizes across discretizations, operating at resolutions beyond those seen during training, where standard SAEs fail. We also introduce lifting into SAEs and show theoretically and empirically that it acts as a preconditioner that accelerates optimization. Overall, our results show that moving from vector-valued to functional parameterizations, with concept and domain sparsity, extends SAEs from representing concept presence to modeling structured concept expression, highlighting the importance of parameterization.
\end{abstract}

\section{Introduction}\label{sec:intro}
\vspace{-1mm}
Sparse representations provide a principled framework for uncovering parsimonious latent structures in machine learning~\citep{bengio2013representation}. Building on this foundation, sparse autoencoders (SAEs)~\citep{makhzani2014ksae} enable the learning of structured and interpretable representations\citep{huben2023sparse} and provide a framework for understanding internal model capabilities~\citep{doshi2017towards, kim2018interpretability}. They are widely used for mechanistic interpretability~\citep{elhage2022toy, bricken2023towards, nanda2023progress, rajamanoharan2024jumping, park2024linear, templeton2024scaling, lieberum2024gemma, gao2025scaling, fel2025archetypal, marks2025sparse, karvonen2025saebench, costa2025flat} to explain model behaviour by identifying the internal concepts that drive computation in large neural networks. To extract these concepts, most existing SAEs rely on the linear representation hypothesis (LRH)~\citep{elhage2022toy, olah2023distributed, zheng2025model}, which models data as sparse combinations of linearly accessible, and approximately orthogonal concepts. While this perspective has been supported empirically~\citep{fel2023holistic, merullo2025linear}, recent works suggest that neural representations may exhibit richer organization, including hierarchical concepts~\citep{costa2025flat} and additive mixtures of manifolds~\citep{2026manifold}.

Despite these developments, the dominant SAE formulation (which we refer to as SAE-MLP) remains fundamentally tied to shallow MLP encoders and linear decoders~\citep{bricken2023towards, bussmann2024batchtopk, gao2025scaling, rajamanoharan2024jumping}. Consequently, most advances have focused on encoder nonlinearities~\citep{hindupur2025projecting} or training procedures~\citep{bussmann2025learning}.

\begin{figure*}[t]
    \centering 
    \includegraphics[width=0.94\linewidth]{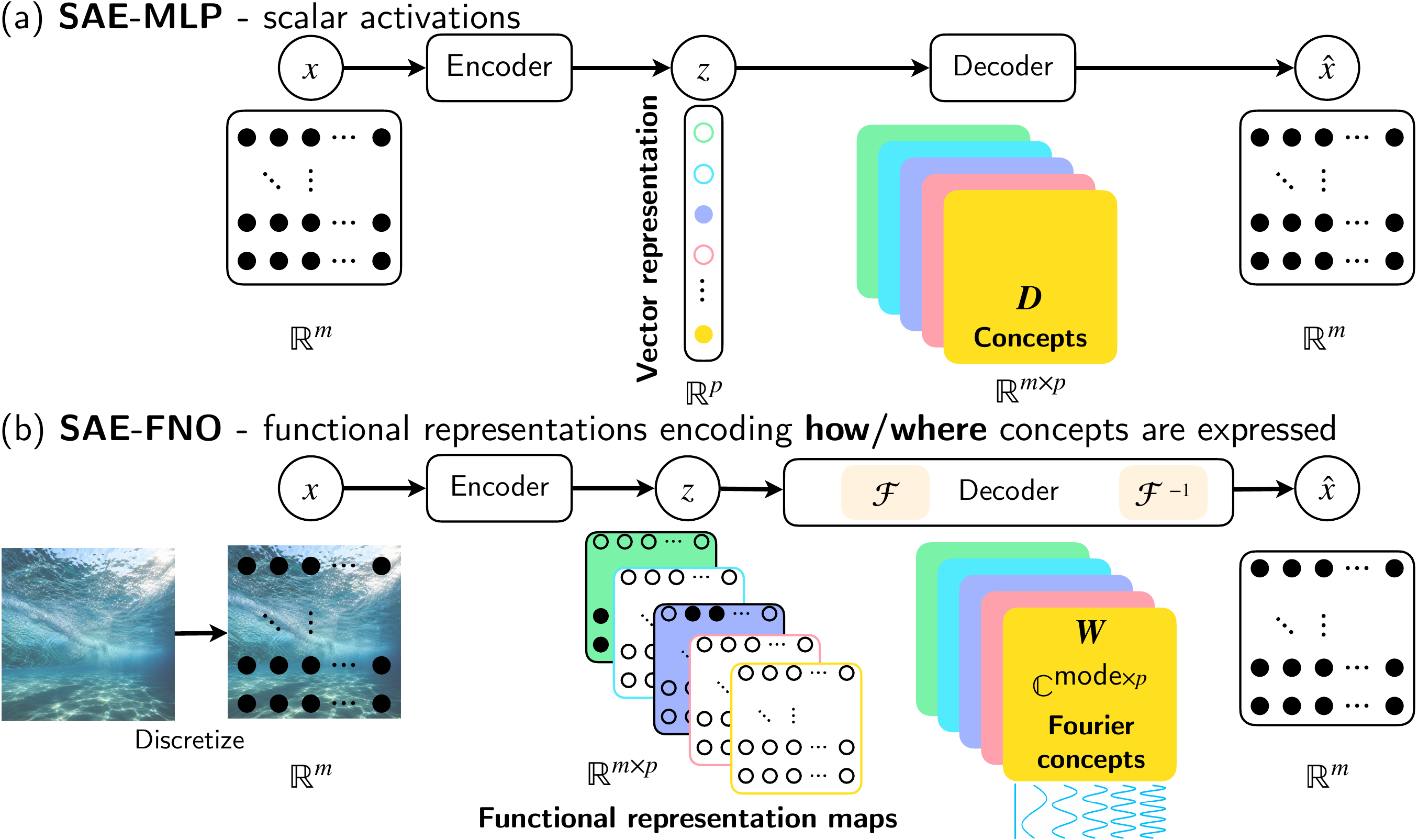}
    \vspace{-2mm}
    \caption{\textbf{Sparse Autoencoder Neural Operator (SAE-NO).} (a) Conventional SAEs (SAE-MLP) parameterize concepts $\D$ in the Euclidean domain, assigning each concept a scalar activation indicating presence and strength. (b) SAE-FNOs use functional concept parameterization via Fourier integral operators, encoding \emph{concept sparsity} and \emph{domain sparsity} to capture how and where concepts are expressed. SAE-FNO treats inputs as samples from an underlying continuous function and supports querying of representations, reconstructions, and concepts at varying resolutions.}
    \label{fig:fig1}
    \vspace{-7.0mm}
\end{figure*}

While SAE-MLPs have proven effective for sparse model recovery~\citep{olshausen1996emergence}, identifying monosemantic features~\citep{bricken2023towards}, and capturing geometry of neural representations~\citep{fel2026into, karkada2026symmetry}, their design imposes a restrictive inductive bias on what a concept can represent: i) concept utilization is summarized by a scalar activation, encoding only presence and strength, ii) concepts are parameterized as vectors in the same domain as the inputs, and iii) learning and inference occur in Euclidean spaces at a fixed discretization. Consequently, codes encode whether concepts are present, but not \emph{how} or \emph{where} they are expressed across the domain. For spatially structured data, this forces concepts into globally uniform contributions, limiting their ability to represent localized or domain-dependent structure.

These limitations are particularly critical in scientific modeling~\citep{kovachki2023neural}, computer vision, and the analysis of physical signals, where data are structured (e.g., images, fields, spatiotemporal processes)~\citep{mallat1999wavelet, krizhevsky2012imagenet}. While convolutional SAEs (SAE-CNNs) introduce localized concepts, they remain tied to a fixed resolution with pre-defined receptive fields, inheriting inductive biases imposed by parameterization in the input domain. Furthermore, while recent works~\citep{bhalla2025temporal, lubana2025priors} extend SAEs to sequential settings via temporal smoothness~\citep{klindt2021towards} or hybrid dense–sparse representations~\citep{tasissa2024discriminative}, these approaches model dynamics. Our focus, instead, lies in capturing continuous structures within the representation itself.

To overcome the restrictions of Euclidean, fixed-discretization formulations of SAEs, we turn to neural operators (NOs)~\citep{lu2019deeponet,li2020neural,li2021fourier,kovachki2023neural}, which learn mappings between infinite-dimensional function spaces. Neural operators are central to data-driven scientific modeling~\citep{azizzadenesheli2024neural}, particularly for solving PDEs, due to their ability to generalize across discretizations. They have also shown promise in neuroscience for modeling brain activity~\citep{ghafourpour2025noble}. Yet, while their predictive capabilities~\citep{li2024geometry, jatyani2025unifying} and recent extension to function-space autoencoders~\citep{bunker2024autoencoders} are well established, their representational properties and potential for concept learning and mechanistic interpretability remain largely unexplored.

%
\textbf{Our contributions}\quad We introduce sparse autoencoder neural operators (SAE-NOs), a new class of sparse autoencoders where concepts are parameterized as functions rather than vectors, offering resolution-invariant representations. We formalize the functional representation hypothesis (FRH) and instantiate it as sparse autoencoder Fourier neural operators (SAE-FNOs).

\textit{Functional sparse coding.} We replace scalar concept activations with feature-map representations, introducing \emph{concept} sparsity and \emph{domain} sparsity to encode not only which concepts are active, but also \emph{how} and \emph{where} they are expressed (\Cref{fig:fig1}). More precisely, concept sparsity acts as group sparsity (selecting active concepts), while domain sparsity is intra-group spatial sparsity.

\textit{Emergent temporal concept consistency.} Through domain sparsity and Fourier-based functional representations, SAE-FNO naturally reuses concepts across translated temporally varying patterns by shifting spatial activations over the domain. This emergent temporal consistency improves concept reuse and representation efficiency. In contrast, SAE-MLP lacks a mechanism to encode how concepts are expressed across the domain, and therefore switches between multiple distinct concepts to capture temporal transitions. Importantly SAE-FNO learns stable concepts across spatial translations over time even when frames are treated independently during training (\Cref{fig:video}), without temporal memory, temporal smoothness, or sequence-level training constraints.

\textit{Structured and stable concepts.} We focus on extracting concepts from vision data, where local and spatial structure is inherent and where sparse autoencoders should capture such structure across both spatial locations and scales. We show that SAE-FNO learns localized and frequency-structured concepts whose characteristics remain stable across concept sparsity levels, unlike SAE-MLP and SAE-CNN. This decouples structure from expressivity, enabling controlled changes in concept structure through domain sparsity while increasing expressivity through concept sparsity.

\begin{figure}[t]
    \centering
    \begin{subfigure}{0.98\textwidth}
        \centering
        \includegraphics[width=\linewidth]{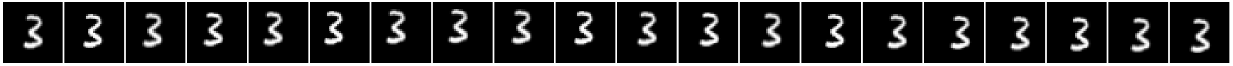}
        \vspace{-4.8mm}
        \caption{Sequence of image frames in time.}
    \end{subfigure}
    \begin{subfigure}{0.70\textwidth}
        \centering
        \includegraphics[width=\linewidth]{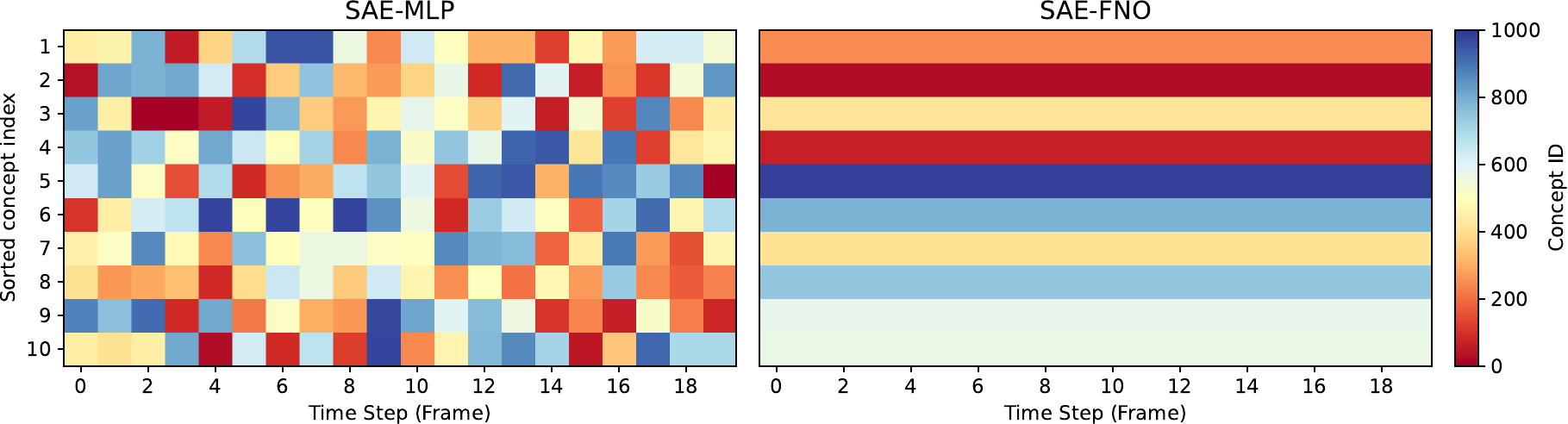}
        \caption{Top concept activations over time.}
    \end{subfigure}
    \hfill
    \begin{subfigure}{0.28\textwidth}
        \centering
        \includegraphics[width=0.97\linewidth]{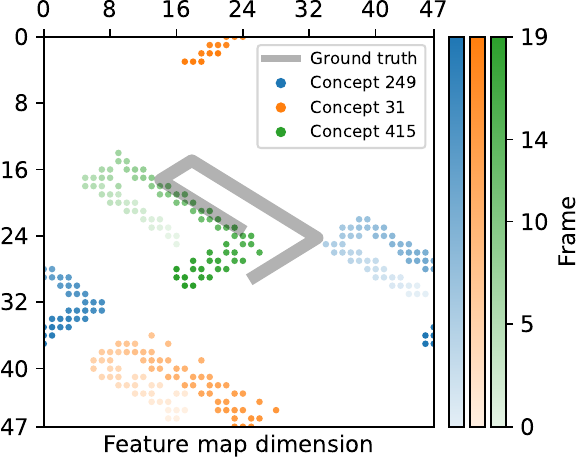}
        \caption{Code trajectories in domain.}
    \end{subfigure}
        \begin{subfigure}{0.98\linewidth}
        \centering
        \includegraphics[width=0.32\linewidth]{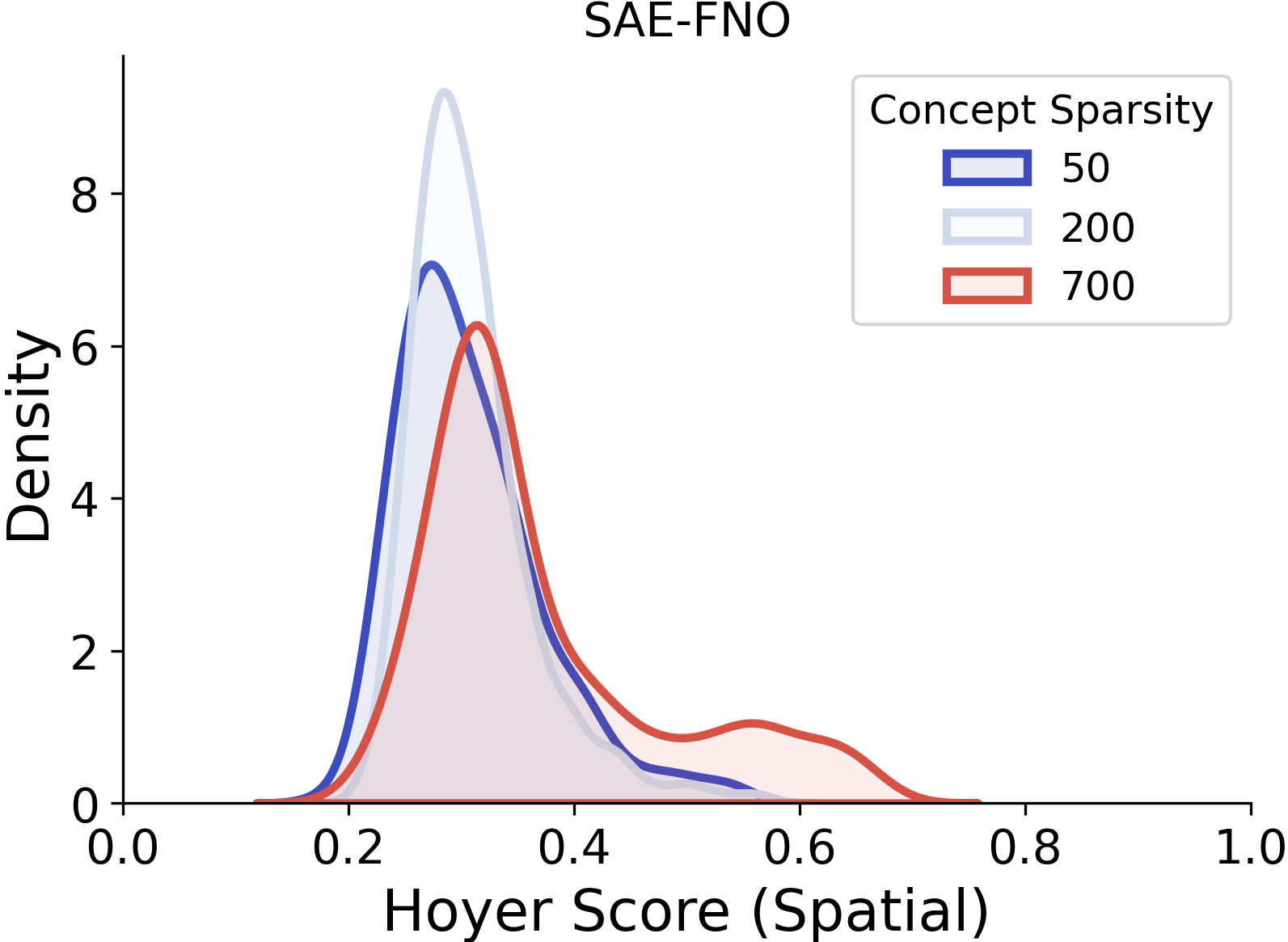}
        \includegraphics[width=0.32\linewidth]{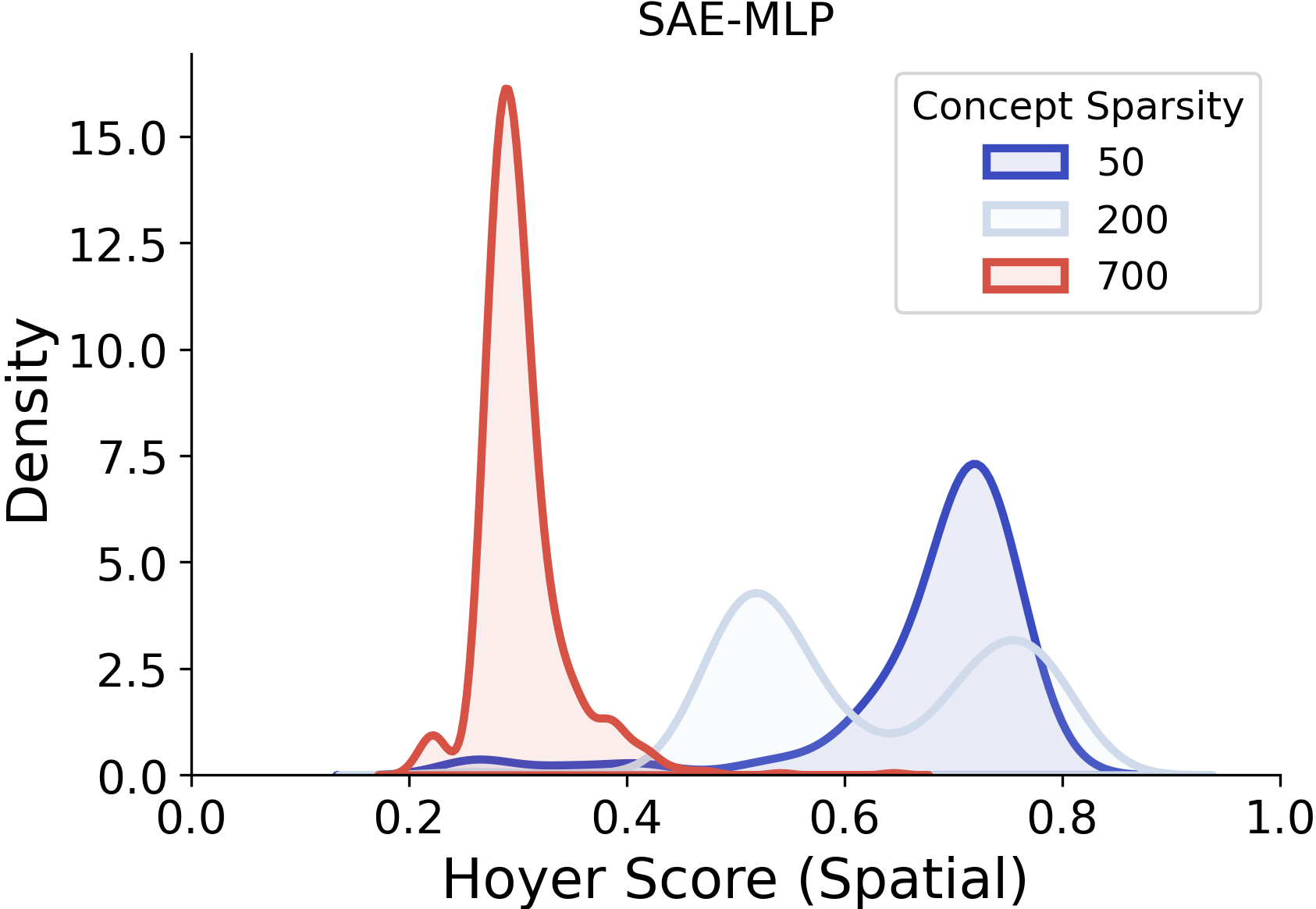}
        \includegraphics[width=0.32\linewidth]{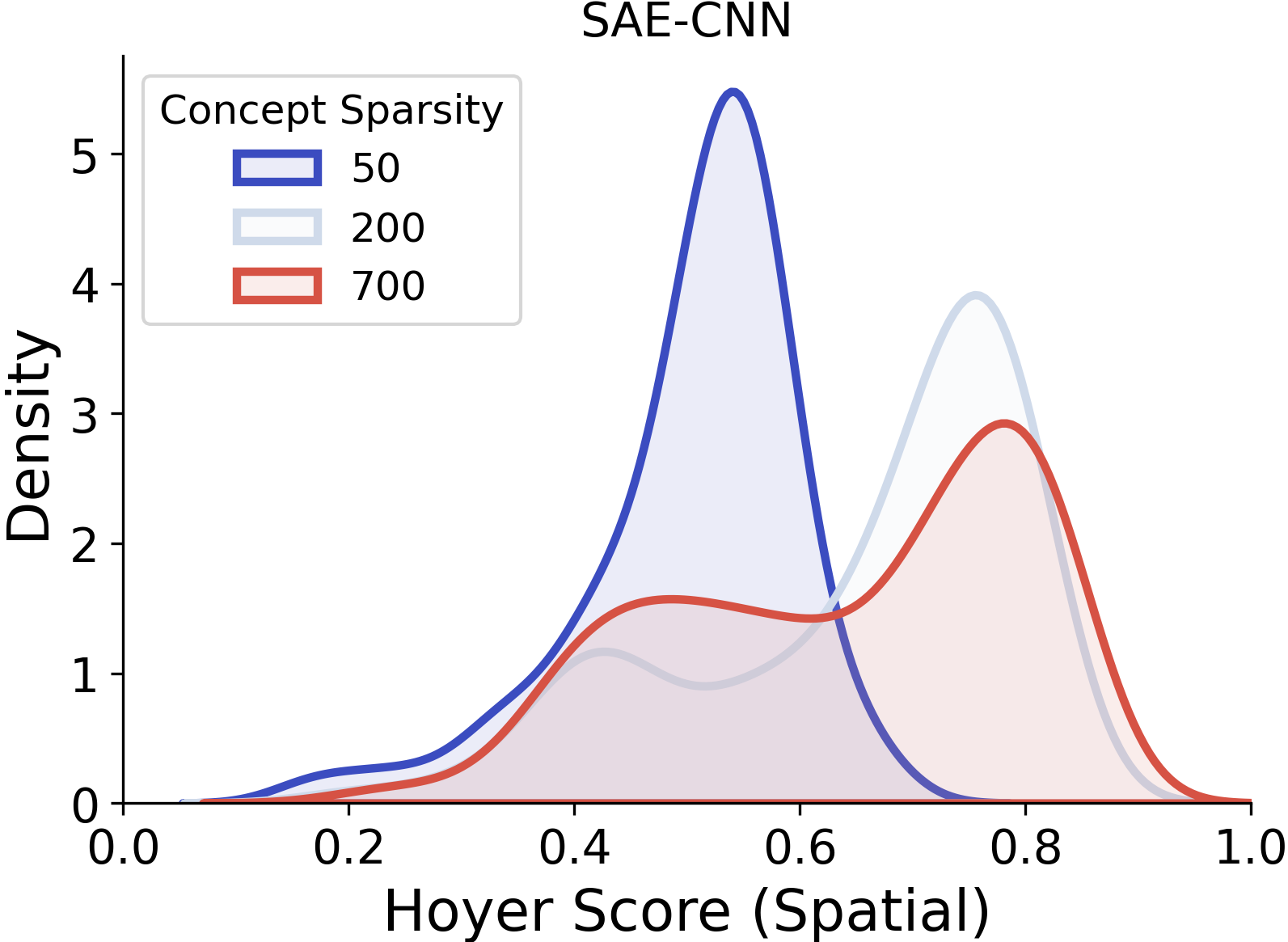}
        \caption{Locality and sparsity structure characterization of learned concepts using Hoyer score.}
    \end{subfigure}
    \caption{\textbf{Encoding where and how concepts are expressed.} (a) A digit 3 translates across the spatial domain over $T = 20$ frames. (b) Top-10 active concepts over time, ranked by $\ell_2$ activation magnitude. The SAE-MLP (left) switches between different concepts as the digit moves, whereas SAE-FNO (right) maintains a stable set of concepts whose spatial activations shift across the domain. (c) Spatial feature-map trajectories of the top-3 SAE-FNO concepts, where color denotes concept index and brightness indicates time. The activations follow the ground-truth trajectory, illustrating how SAE-FNO represents translated instances of the same pattern through domain sparsity rather than introducing new concepts. (d) SAE-FNO exhibits stable concept characteristics across different sparsity levels. In contrast, SAE-MLP and SAE-CNN show significant changes in concept structure as sparsity varies, indicating undesirable sensitivity to this choice. Experiments have $p=1000$ and TopK with a constructed moving MNIST for a,b,c and static MNIST for d.} 
    \label{fig:video}
    \vspace{-7.5mm}
\end{figure}
\textit{Efficient concept usage and expressivity.} SAE-FNO uses fewer effective concepts to represent the data population and achieves higher expressivity per concept, especially as the input domain grows. 

\textit{Correlated concept representations.} SAE-FNO learns a broader range of concept correlations than SAE-MLP, enabled by feature-map representations where correlated concepts can differ in how and where they are expressed across the domain, resulting in reduced interference.

\textit{Generalization across discretizations.} Unlike standard SAEs, SAE-FNO operates across resolutions beyond those seen during training and is more stable under frequency-structured perturbations.

\textit{Lifting.} We introduce lifting into SAEs for concept extraction, mapping inputs to a higher-dimensional space where concept learning is performed. As a core component of neural operators~\citep{li2021fourier,li2023fourier,liu2025difffno}, lifting improves learning and optimization. We show that lifting acts as a preconditioner that accelerates recovery and improve concept learning.

Our experiments show that these properties consistently hold across different SAEs (TopK~\cite{makhzani2014ksae}, BatchTopK~\cite{bussmann2024batchtopk}, and Matryoshka~\cite{maheswaranathan2019universality}), datasets (MNIST~\citep{lecun2002gradient}, CIFAR~\cite{krizhevsky2009learning}, and ImageNet~\cite{deng2009imagenet}).

\vspace{-3mm}
\section{Preliminaries}\label{sec:prelim}
\vspace{-1mm}
\textbf{Sparse Generative Models}\quad A data sample $\x \in \R^{m}$ is said to follow a \textit{sparse linear generative model} if there exists a sparse code $\z \in \R^{p}$ (with $|\text{supp}(\z)| \leq k \ll p$) and an overcomplete dictionary $\D^{*} \in \R^{m \times p}$ (with $p \gg m$) that model the data as $\x = \D^{*} \z$. The sparse generative model, also known as sparse coding~\citep{olshausen1996emergence,olshausen1997sparse}, has a rich history in statistics and compressed sensing~\citep{donoho2006compressed, candes2006robust}. We focus on \textit{sparse model recovery}, where $\D^{*}$ is unknown (a.k.a. \textit{dictionary learning})~\citep{agarwal2016learning}, and can be learned by solving an outer objective~\citep{tolooshams2023phd, hindupur2025projecting}: $\min_{\D \in \mathcal{D}}\ \tfrac{1}{2}\ \| \x - \D \z \|_2^2 + \lambda \mathcal{R}(\z)$ subject to an inner one $\z = \argmin_{\smlv \in \R^{p}} \mathcal{L}(\x, \smlv, \theta)$, where $\mathcal{R}(\z)$ is a sparse regularizer. In mechanistic interpretability, sparse generative models takes the form of linear representation hypothesis (LRH) which, beside linear reconstruction, also imposes a linear accessibility, hence a shallow encoder~\citep{costa2025flat}.

\textbf{Sparse Autoencoders (SAEs)}\quad The connection between the above optimization and SAEs is well established~\citep{malezieux2022understanding, tolooshams2022stable} with inner/outer levels mirroring the encoder–decoder structure of a neural network. Specifically, an SAE infers a sparse code $\z \in \R^{p}$ (i.e., $\z = \text{Enc}_{\theta}(\x)$), representing the data $\x \in \R^{m}$, with a linear combination of a dictionary of concepts (i.e., $\hat \x = \D \z$). Most widely used SAEs for mechanistic interpretability are shallow autoencoders~\citep{huben2023sparse, makhzani2014ksae, rajamanoharan2024jumping, bussmann2024batchtopk, hindupur2025projecting}.

\textbf{Neural Operators (NOs)}\quad Neural operators~\citep{lu2019deeponet,li2020neural,li2021fourier,kovachki2023neural} consist of three components: lifting, kernel integration, and projection (see~\Cref{app:background}). Lifting is a linear operator that maps data to a higher-dimensional channel space where learning occurs, followed by a linear projection. Fourier Neural Operator (FNO)~\citep{li2021fourier} models the kernel integral operator (\Cref{def:kernelopt}) with a convolution parameterized in Fourier space (\Cref{def:fno,def:fourier}). FNO is used with truncated frequency modes in practice, shown to improve performance and lower sensitivity to noise and discretization. Fourier modes denote the number of frequencies used to parameterize each kernel.

\vspace{-3mm}
\section{Sparse Autoencoder Neural Operators: Encoding How and Where Concepts Are Expressed}\label{sec:methods}
\vspace{-1mm}
We now introduce sparse autoencoder neural operators (SAE-NOs), extending sparse autoencoders from representing concept presence to modeling structured concept expression over the input domain.

\textbf{Functional Representation Hypothesis (FRH)} \quad Consider a linear operator $\mathcal{G}_{\D^{*}}: \mathcal{Z} \rightarrow \mathcal{X}$ in function spaces, where $\mathcal{Z}$ and $\mathcal{X}$ are Banach spaces of functions defined on bounded domains $D_z \subset \R^{p}$ and $D_x \subset \R^{m}$, respectively. The data sample $\x \in \mathcal{X}$ is said to follow the functional representation hypothesis (FRH), if there exists a sparse representation $\z$ drawn from the support $\mathcal{Z}$ of a probability measure $\mu$, where $\x = \mathcal{G}_{\D^{*}}(\z)$. Under this model, the goal of sparse concept recovery in function spaces is to estimate the underlying operator $\mathcal{G}_{\D^{*}}$ by solving the following:
\begin{equation}
\begin{aligned}
    \min_{\D \in \mathcal{D}}\quad \ &\mathbb{E}_{\z \sim \mu} \| \mathcal{G}_{\D^{*}}(\z) - \mathcal{G}_{\D}(\hat \z) \|_{\mathcal{X}}^2\ \qquad
    \text{s.t.}\quad \hat \z = \argmin_{\smlv}\ \| \smlv - \mathcal{H}_{\theta}(\mathcal{G}_{\D^{*}}(\z))\|_{\mathcal{Z}}^2 + \mathcal{R}(\smlv),
\end{aligned}
\end{equation}
where $\smlv$ is the coding variable, and the operator $\mathcal{H}_{\theta}: \mathcal{X} \rightarrow \mathcal{Z}$ aims to approximate the inverse of $\mathcal{G}_{\D^{*}}$. Model recovery is successful when the learned $\D$ in $\mathcal{G}_{\D}$ closely approximates $\D^{*}$ in $\mathcal{G}_{\D^{*}}$.

\textbf{Sparse Autoencoder Neural Operators (SAE-NOs)}\quad We formalize an SAE-NO with an encoder approximating the inverse generative operator and a decoder that captures the underlying operator and its learnable concepts from FRH. Under the FRH, the goal is to infer a sparse representation $\z$ as a sample from the function space $\mathcal{Z}$, representing the data sample from $\mathcal{X}$ via $\mathcal{G}_{\D}$:
\begin{equation}
\text{(encoder)}\quad \z = \text{Enc}_{\theta}(\x),\quad \text{(decoder)}\quad \hat \x = \text{Dec}_{\D}(\z),
\end{equation}
where $\text{Enc}_{\theta}$ is the operator associated with $\mathcal{H}{\theta}$, and model recovery estimates $\D$ through $\text{Dec}_{\D}$. We note that real data does not need to strictly satisfy the FRH for SAE-NO to be effective. FRH acts as an \emph{inductive bias}, encouraging extraction of structures aligned with this hypothesis~\citep{hindupur2025projecting}.

\textbf{Fourier Parameterization}\quad We instantiate SAE-NO using a Fourier parameterization, defining Sparse Autoencoder Fourier Neural Operators (SAE-FNOs) as
$\mathcal{G}_{\D}(\z) \coloneqq \mathcal{F}^{-1} (\sum\nolimits_{c=1}^p \W_c \cdot (\mathcal{F} \z_{c}))$, where $\W_c = \mathcal{F} \D_c$ is the Fourier transform of the convolution kernel $\D_c$. In the data domain, this corresponds to a sum of convolutions and hence a linear decoder. This formulation induces an inductive bias toward sparsely structured convolutional representations in function spaces. Owing to the duality between convolution and multiplication, inference in SAE-FNOs can be viewed as recovering concepts under a sparse convolutional generative model (\Cref{def:sgmconv}), where reconstruction is expressed as a sum of convolutions between $\z$ and $\D$.

\textbf{Encoding How and Where Concepts Are Expressed}\quad The primary advantage of SAE-FNO is in capturing how and where concepts are expressed, going beyond concept presence. We introduce two distinct notions of sparsity: \emph{concept} sparsity and \emph{domain} sparsity. Concept sparsity extends the standard notion of sparsity in SAEs by enforcing that only a small number of concepts are used to explain each data. Domain sparsity constrains the activation pattern within the feature map associated with each concept. Mathematically, concept sparsity induces a form of \emph{group sparsity} (where all entries of a concept's feature map constitute one group), and domain sparsity enforces element-wise sparsity within that feature map. Intuitively, concept sparsity decides \emph{which} concepts are active, and domain sparsity determines \emph{where} each active concept is expressed across the input domain.

As demonstrated in the results section, this joint sparsity structure enables SAE-FNO to learn concepts that are robust to variations in concept sparsity and shared across data examples, while being expressed differently through example-specific domain sparsity patterns. For example, SAE-FNO can represent translated instances of the same pattern using a stable set of concepts whose spatial activations shift across the domain, whereas SAE-MLPs require switching between multiple distinct concepts to capture such variations (\Cref{fig:video}). This ability to reuse concepts through domain sparsity leads to more efficient representations, requiring fewer concepts to explain the data.

Importantly, this notion of concept stability across time emerges naturally from the Fourier parameterization and domain sparsity of SAE-FNO. In \Cref{fig:video}, frames are treated as i.i.d.\ training examples, without imposing temporal smoothness constraints, sequence-level objectives, or temporal priors during training. This contrasts with recent works that explicitly model temporal consistency or dense/sparse structure across sequences through coordinated training over time~\citep{bhalla2025temporal,lubana2025priors}. 

Denoting the sparsifying projections for concept and domain by $\Pi$ and $\mathcal{P}$, respectively, SAE-FNO implements the following (see Appendix for implementation details; SAEs may include bias vectors):
\begin{equation}\label{eq:relusaefno}
    \begin{aligned}
        \min_{\W \in \mathcal{W}}\quad & \tfrac{1}{2}\| \x - \mathcal{F}^{-1} (\W \cdot \mathcal{F}\z) \|_2^2 + \lambda \mathcal{R}(\z) \\
        \text{s.t}\quad & \bm{\tilde v} = \mathcal{F}^{-1}(\bm{E} \cdot (\mathcal{F}\x))\\
        \text{(concept sparsity)}\quad &  \bm{v}  = \mathbf{1}_{\bm{a}} \cdot \bm{\tilde v},\qquad \text{where}\quad \bm{a} = \Pi \left( [\| \bm{\tilde v}_1\|_2, \|\bm{\tilde v}_2\|_2, \ldots, \| \bm{\tilde v}_p\|_2 ]^{\top} \right)  \\
        \text{(domain-sparsity)}\quad & \z_c = \mathcal{P}\left(\bm{v}_c\right),\quad \text{for}\ c=1,\ldots,p\quad \text{, where}\ \|\bm{v}_c\|_2 > 0,
    \end{aligned}
\end{equation}
where $\mathcal{R}(\z)$ is a sparse regularizer, $\tilde{\smlv}$ is the dense pre-activation map, and $\mathbf{1}_{\bm{a}}$ is a sparse indicator vector that zeros out entire concepts with low $\ell_2$-norm (energy). This SAE-FNO sparsification decouples expressivity (concept sparsity $k$) from structure (domain sparsity), unlike conventional SAEs where changing sparsity levels alters concept structure. This formulation makes SAE-FNO a general functional framework rather than an architecture: the FNO parameterization is orthogonal to the sparsification mechanism, allowing existing SAEs to be extended by replacing linear encoder/decoder mappings with operator-based ones while keeping their original activation within $\Pi$ and $\mathcal{P}$.

\textbf{Lifting}\quad We further extend SAEs to operate in lifted spaces through explicit linear \emph{lifting} and \emph{projection} modules. Distinct from the high-dimensional latent representations learned within SAEs~\citep{elhage2022toy}, lifting applies a linear transformation that maps inputs to a higher-dimensional space before entering SAE. While widely used in neural operator architectures~\citep{li2021fourier,liu2025difffno, kovachki2023neural, azizzadenesheli2024neural, li2024geometry}, its role in SAE dynamics and recovery remains unexplored. We introduce lifting to provide a more favourable optimization parameterization while preserving the underlying data structure without increasing expressivity.

The lifted SAE-NO (L-SAE-NO) performs model recovery in function space by inferring a sparse code $\z$ as a sample from the function space $\mathcal{Z}$, representing the data from $\mathcal{X}$ via the operator $\text{Dec}_{\D_{\lift}}$:
\begin{equation}\label{eq:lifted-opsae}
\begin{aligned}
    \text{(lift)}\quad \y = \lift \x, \quad
    \text{(enc)}\quad \z = \text{Enc}_{\theta}(\y),
    \quad
    \text{(dec)}\quad & \hat{\y} = \text{Dec}_{\D_{\lift}}(\z),
    \quad
    \text{(proj)}\quad \hat{\x} = \proj \hat{\y},
\end{aligned}
\end{equation}
where $\text{Enc}_{\theta}$ encodes the lifted data $\y$, and concept recovery is achieved through the learned lifted concepts $\D_{\lift}$. Introducing these linear operators ($\lift$ and $\proj$) raises key questions: i) can concepts be faithfully recovered in the original domain? ii) how does lifting affect training dynamics and concept learning? and iii) under what conditions is a lifted SAE equivalent to its unlifted counterpart?

First, \Cref{fig:conv-sae-lift-b,fig:dense-lift-acc,fig:cifar-lifted-concepts} confirm that learned lifted concepts can be faithfully projected and interpreted back in the original data domain. Second, \Cref{prop:lift_fno_train} shows that the gradient dynamics of Fourier concepts $\W$ in L-SAE-FNO correspond to dynamics in the original space over $\D$, where lifting acts as a \emph{preconditioner}. This induces more isotropic updates and accelerates optimization~\citep{arora2019implicit} (analogous results for SAE-MLPs/CNNs in \Cref{prop:lift_train_equiv} and \ref{prop:conv_lift_train_equiv}). Third, \Cref{prop:operator-lift-arch-equiv} establishes that L-SAE-FNO reduces to SAE-FNO when lifting and projection are tied ($\proj = \lift^\top$) and orthogonal ($\lift^\top \lift = \eye$) (see also \Cref{prop:lift-arch-equiv,prop:conv-lift-arch-equiv}). 
\begin{restatable}[Training Dynamics of Lifted SAE-FNO]{proposition}{liftfnotrain}\label{prop:lift_fno_train}
    The training dynamics of the Lifted-SAE-FNO (L-SAE-FNO)'s frequency domain weights $\W_{L, c}^{(k + 1)} = \W_{L, c}^{(k)} + \nicefrac{\eta_L}{\sqrt{M}} \mathcal{F} \left( \lift \left( \x - \hat{\x}^{(k)} \right)\right) \odot \overline{\mathcal{F}(\z_{c, k})}$, with lifting $\lift$ and projection $\proj$, has an effective update in the original space, expressed as: $\D_c^{(k + 1)} = \D_c^{(k)} + \nicefrac{\eta_L}{M} \left( (\lift^\top\lift) \left( \x - \hat{\x}^{(k)} \right)\right) \star \z_{c, t}$, where $\lift^\top \lift$ can act as a preconditioner to rescale update directions to improve optimization ($\ast$: convolution, $\star$: correlations). If the architectural inference equivalence condition is satisfied (\Cref{prop:operator-lift-arch-equiv}), then the dynamics are equivalent to that of an SAE-FNO: $\D_c'^{(k + 1)} = \D_c'^{(k)} + \nicefrac{\eta}{M}\left(\x - \nicefrac{1}{\sqrt{M}} \mathcal{F}^{-1} ( \sum\limits_{i = 1}^p \mathcal{F}(\D_i^{(k)}) \odot \mathcal{F}(\z_{i, k}) ) \right) \star \z_{c, k}$.
\end{restatable}

\vspace{-3mm}
\section{Results}\label{sec:results}
\vspace{-1mm}

Following recent work that studies SAEs on interpretable outputs rather than internal activations~\citep{bohacek2025uncovering, jiang2025interpretable}, we train SAE-FNOs directly on data with spatial structure. This enables direct comparison between SAE-FNO and SAE-MLPs in the data domain. For fair comparison, we analyze SAE-FNO concepts after mapping them from their Fourier parameterization back into the data domain.

Crucially, our experiments isolate the impact of concept parameterization. By keeping all other aspects of the SAE fixed, including the sparsity mechanism (e.g., BatchTopK) and loss formulation (e.g., Matryoshka), and evaluating matched configurations across models (e.g., SAE-MLP-TopK vs. SAE-FNO-TopK), we ensure that any observed differences arise purely from the functional sparse coding formulation. For simplicity, we omit these details in naming and refer to the models as SAE-MLP and SAE-FNO, specifying the exact configuration in respective figures and tables.

Given the standard amortized inference setting used in shallow SAEs, we emphasize that SAE-FNO is not designed to resolve amortized inference pathologies such as feature splitting or absorption~\citep{sun2026price}. Rather, our results isolate the effect of functional parameterization.

%
\textbf{Concept Recovery Under Known Generative Model}\quad We characterize SAE-FNO under a known sparse convolutional generative model. We instantiate the model with an unrolled encoder~\citep{tolooshams2022stable} (see~\Cref{app:background}). As shown in \Cref{fig:synthetic_a}, SAE-FNO is successful in identifying the underlying model and learning the ground-truth concepts. We further analyze concept recovery in the lifted space, and found: i) underlying concepts can be recovered in the lifted space; ii) lifting can act as a preconditioner (\Cref{prop:lift_fno_train}), accelerating recovery (\Cref{fig:synthetic_b}) (emerging naturally through learning the lifting matrix) by reducing dictionary atom correlation, promoting a more isotropic loss landscape (\Cref{fig:dense-lift-acc}); iii) when lifting and projection are tied and orthogonal, the architectural inference (\Cref{prop:operator-lift-arch-equiv}) and training dynamics (\Cref{prop:lift_fno_train}) of L-SAE-NO reduces to SAE-NO. 

In real-data experiments, lifting further improves concept learning and reduces reconstruction error, consistently across Fourier modes (\Cref{fig:mnist-lifting-acc,fig:cifar-lifting-acc}). Beyond optimization benefits, we speculate that lifting shifts stochasticity and noise into the lifted space, while projection back to the original domain induces a smoothing effect that facilitates extraction of higher signal-to-noise concepts.

We further extend our synthetic experiments to the regime of large initialization error, comparing SAE-FNO and SAE-CNN in recovering smooth concepts. \Cref{fig:synthetic_c} shows that SAE-FNO outperforms SAE-CNN, demonstrating greater robustness as initialized cosine similarity worsens. Notably, all existing sparse coding theory assumes a ``close'' initialization, and, to our knowledge, provides no guarantees for recovery from random initialization~\citep{agarwal2014learning, arora2015simple, chatterji2017alternating, nguyen2019dynamics}.

\begin{figure*}[!t]
    \centering 
    \begin{subfigure}{0.325\linewidth}
        \centering
        \includegraphics[width=0.98\linewidth]{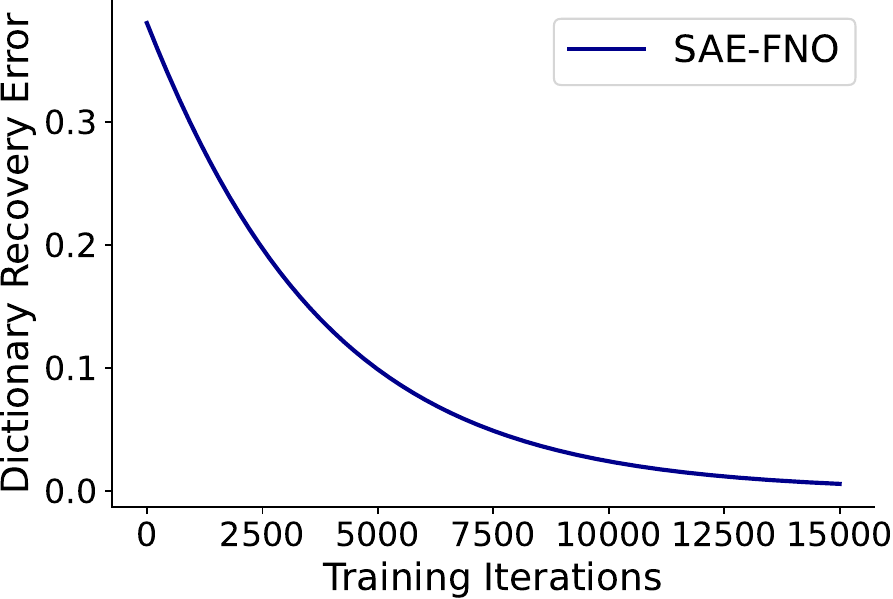}
        \caption{}
        \label{fig:synthetic_a}
    \end{subfigure}
    \begin{subfigure}{0.325\linewidth}
    \centering
    \includegraphics[width=0.98\linewidth]{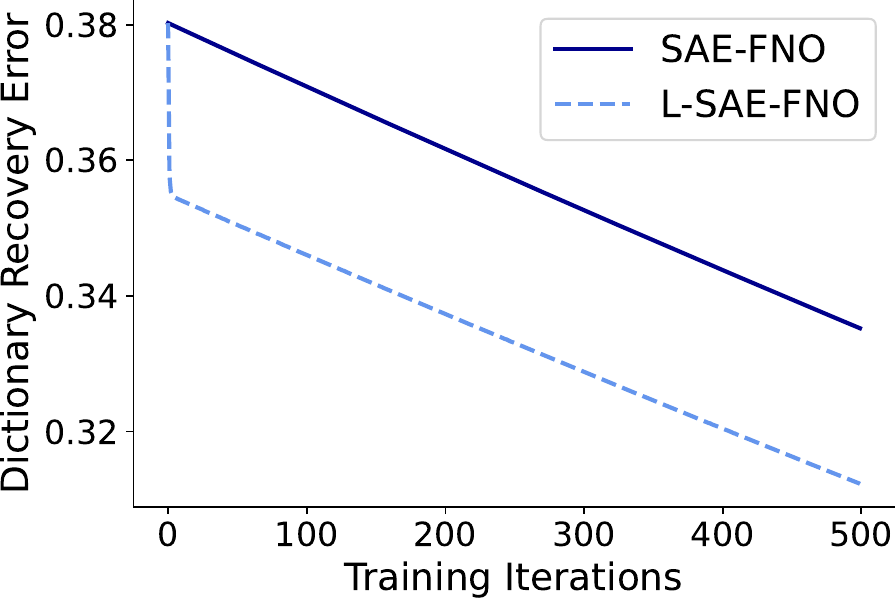}
    \caption{}
     \label{fig:synthetic_b}
    \end{subfigure}
    \begin{subfigure}{0.325\linewidth}
    \centering
    \includegraphics[width=0.98\linewidth]{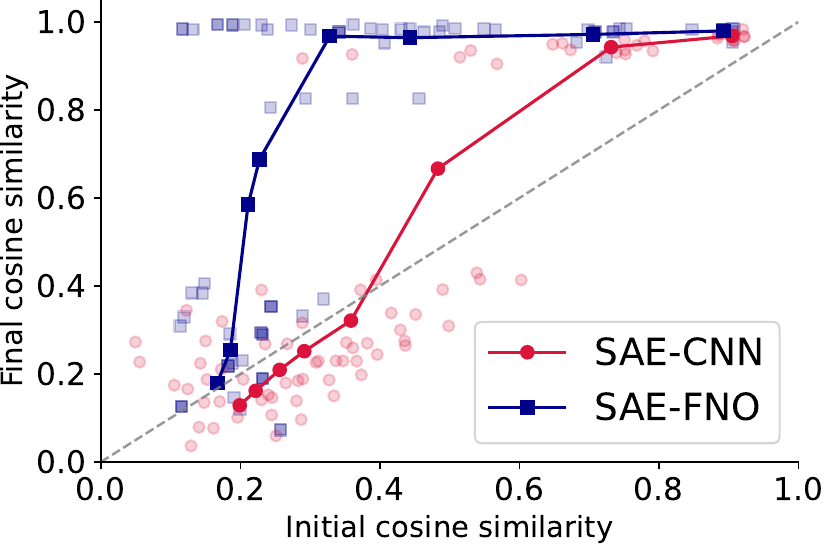}
    \caption{}
    \label{fig:synthetic_c}
    \end{subfigure}
    \vspace{-2mm}
    \caption{\textbf{Concept recovery under known generative model.} (a) SAE-FNO recovers the underlying concepts, with dictionary recovery error converging to zero. (b) Lifting to a higher-dimensional space acts as a preconditioner, accelerating recovery. (c) SAE-FNO outperforms SAE-CNN in recovering smooth concepts. Each point corresponds to a training run.}
    \label{fig:synthetic}
    \vspace{-7mm}
\end{figure*}

%
\textbf{SAE-NO Concept Structural Stability}\quad A defining characteristic of SAE-NOs is their joint concept/domain sparsity and Fourier-based concept parameterization. To characterize the geometric structure of learned concepts, we use the Hoyer score~\citep{fel2026into} (\Cref{app:metrics}), where values near $1$ indicate spatially localized structured concepts and values near $0$ indicate dense ones. As shown in \Cref{fig:video}d, SAE-FNO learns concepts whose sparse structure remains stable across varying sparsity levels, whereas SAE-MLP structure does not as $k$ increases. This structural stability is robust across Fourier parameterizations, datasets, and domain sparsity levels (\Cref{fig:mnist-fno-hoyer}, \ref{fig:cifar-fno-hoyer-modes}, \ref{fig:cifar-fno-hoyer-ss}), and avoids the sensitivity to architectural hyperparameters such as kernel size seen in SAE-CNNs (\Cref{fig:mnist-cnn-hoyer}). Moreover, SAE-FNO structural properties can be explicitly controlled through domain sparsity (\Cref{fig:change-strcuture-domain-cifar}).

This analysis does not claim that concepts remain identical as $k$ varies, but rather that their \emph{structural characteristics} remain stable. As $k$ increases, SAE-FNO uses more concepts while preserving the nature of the learned features, improving expressivity without altering concept structure. We verify this via a ``identity consistency'' metric (\Cref{fig:identity-consistency})~\citep{leask2025sparse} that matches concepts from an anchor model ($k=50$) to those at larger $k$ using cosine similarity (see~\Cref{app:metrics}). SAE-FNO maintains a consistent fraction of matched concepts, whereas SAE-MLP rapidly loses them, suggesting that SAE-FNO preserves a stable subset of concepts across sparsity levels while adding new ones.

\textbf{SAE-NO Concept Correlations}\quad Another property of SAE-FNOs is that they permit a broader range of concept correlations. This contrasts with shallow SAEs, which tend to produce approximately orthogonal features~\citep{elhage2022toy,costa2025flat}. \Cref{fig:mnist-atom-corr,fig:cifar-atom-corr} show that SAE-FNO exhibits a broader distribution of pairwise concept correlations, whereas SAE-MLPs are sharply concentrated near zero. This flexibility arises from the functional parameterization of SAE-FNO, where concepts are represented as feature maps with concept and domain sparsity. Consequently, concepts can remain moderately correlated while being expressed differently across the domain, reducing effective interference between them.

\textbf{Concept Visualization and Utilization}\quad We visualize learned concepts on natural images and examine their utilization across the data population (see~\Cref{app:expsetup} for detailed setup). Notably, conventional SAE-MLPs struggle to efficiently extract spatially local structure in the underlying model/data. In contrast, SAE-FNO learns sinusoidal filters for local patches and localized frequency patterns for larger images (\Cref{fig:concept-visualization}). Although parameterized via Fourier operators, the learned concepts are mapped back to the data domain for visualization, revealing localized structures similar to classical sparse coding and early visual system representations~\citep{olshausen1996emergence}.

\begin{figure*}[t]
    \centering
    \begin{subfigure}{0.95\textwidth}
        \includegraphics[width=0.28\linewidth]{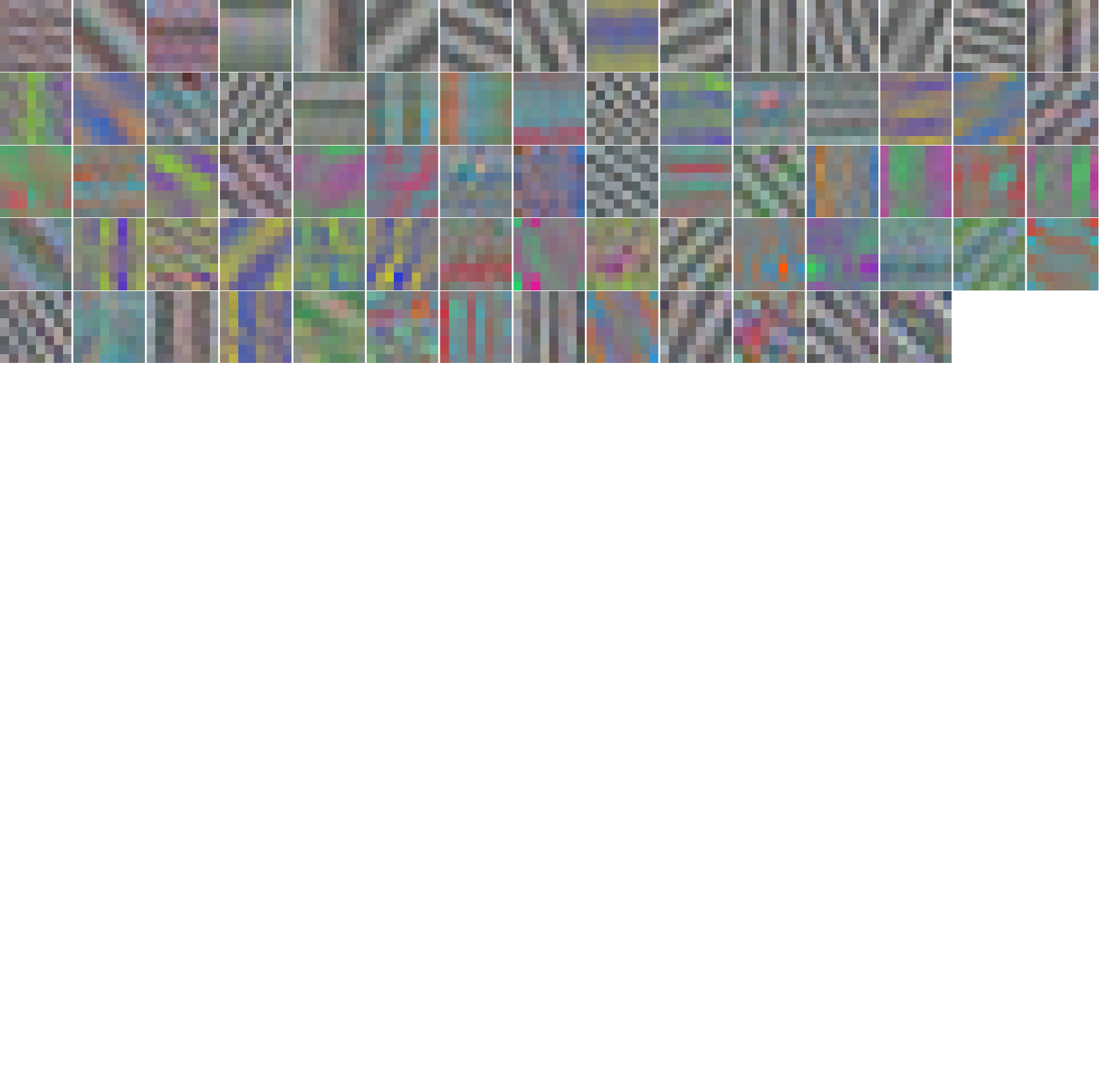}
        \hfill
        \includegraphics[width=0.28\linewidth]{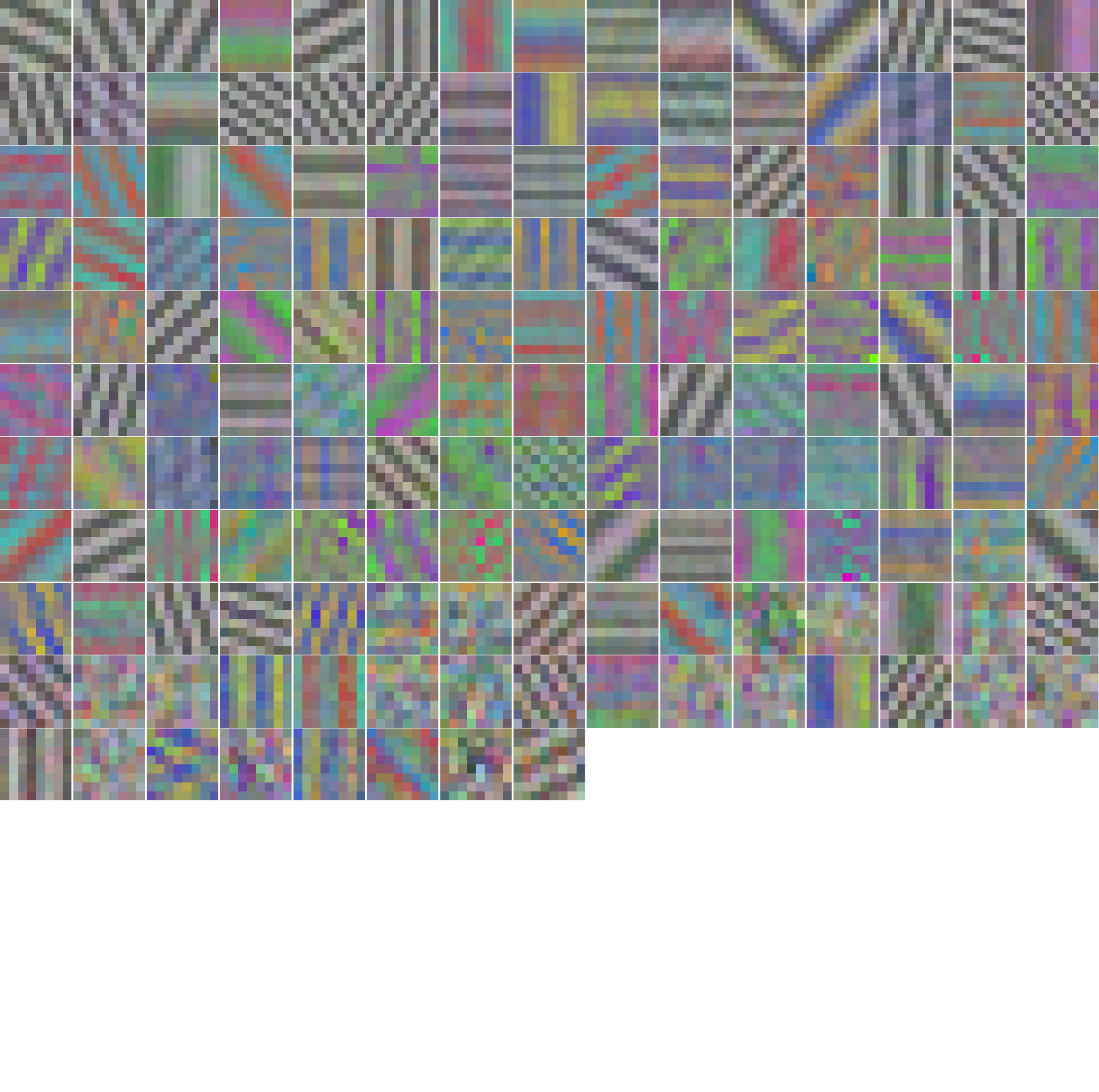}
        \hfill
        \includegraphics[width=0.28\linewidth]{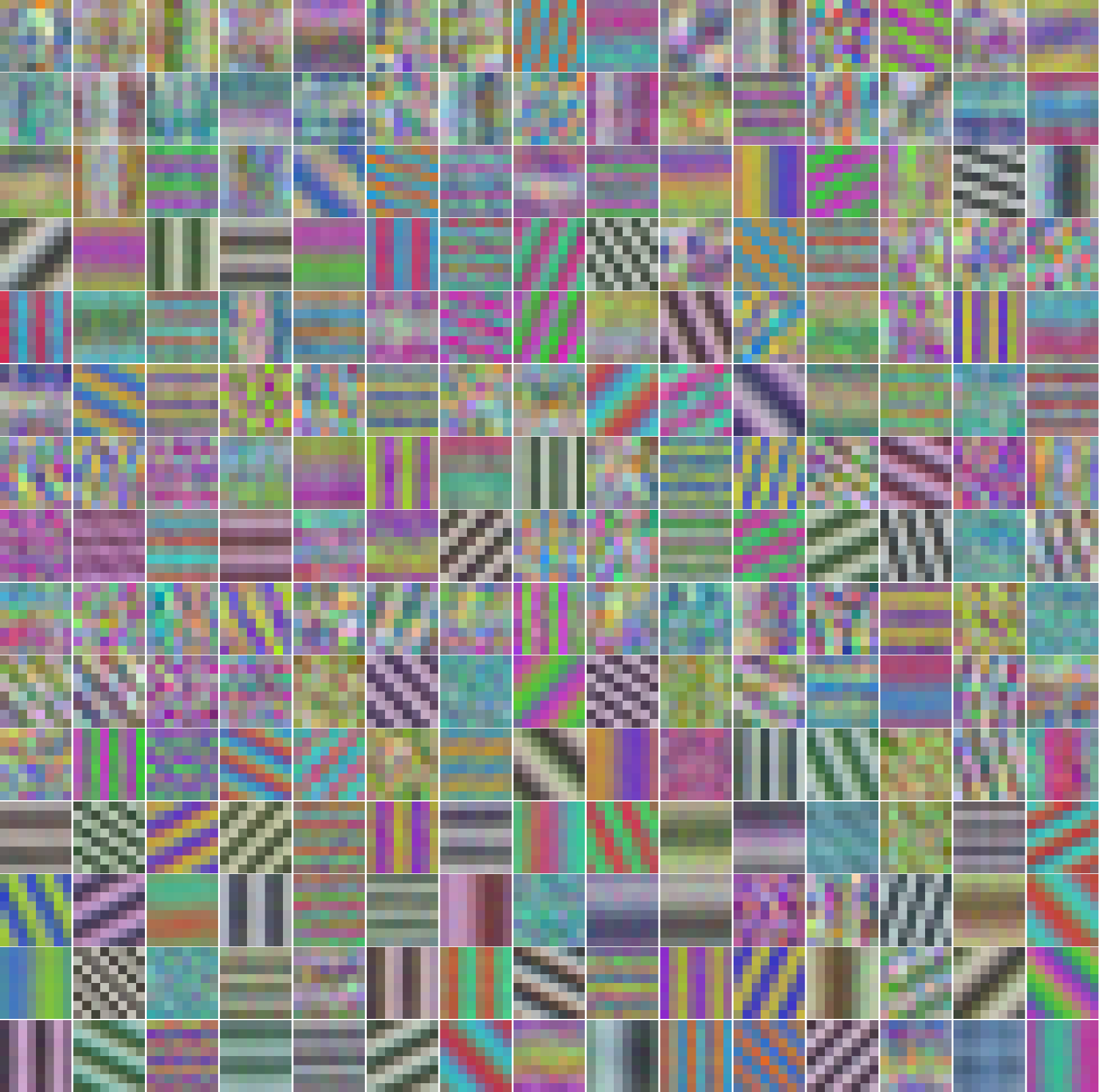}
        \caption{SAE-FNO as concept sparsity level changes (left) $k=25$ (middle) $k=50$ (right) $k=200$.}
    \end{subfigure}
    \begin{subfigure}{0.95\textwidth}
        \includegraphics[width=0.28\linewidth]{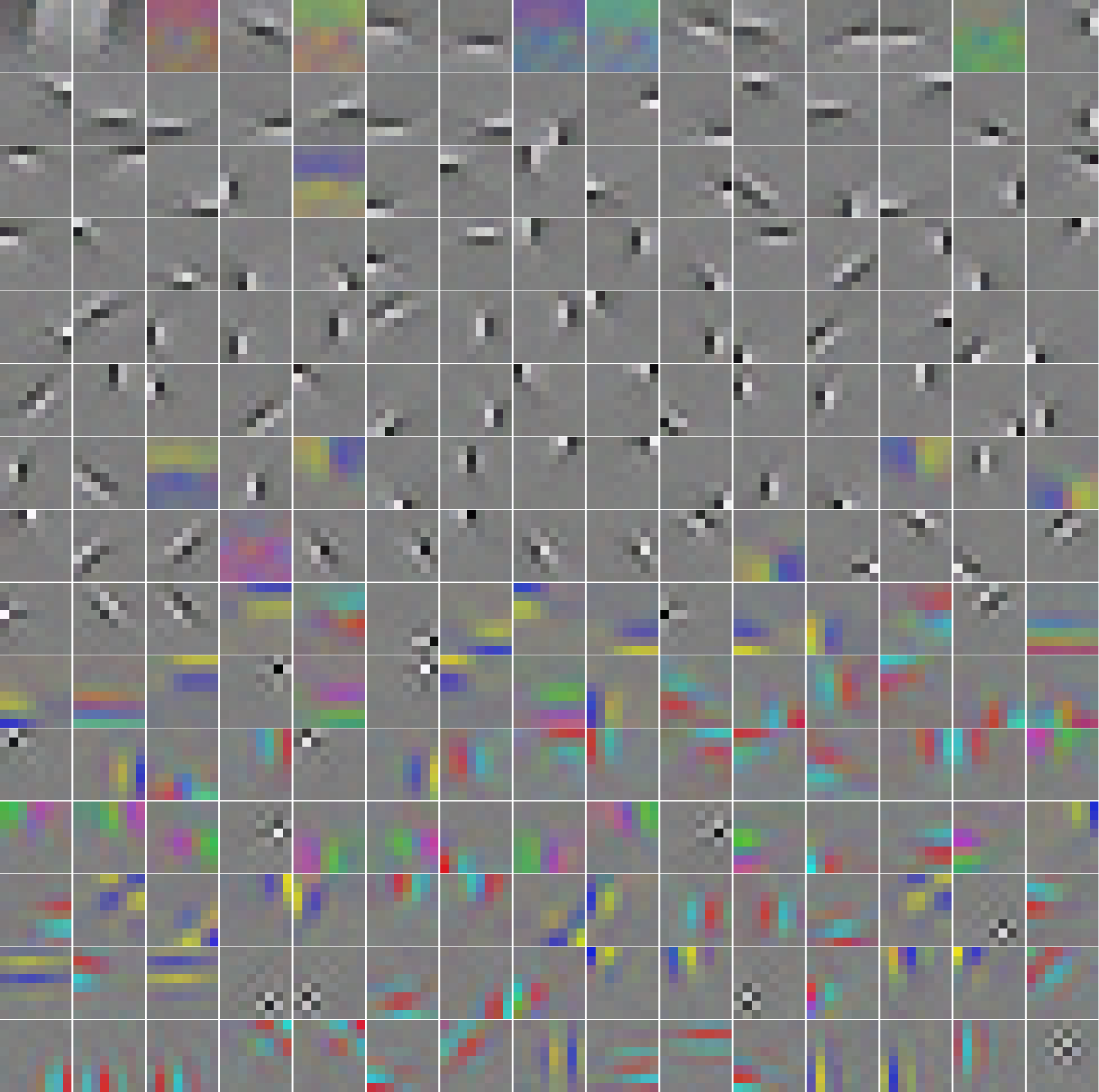}
        \hfill
        \includegraphics[width=0.28\linewidth]{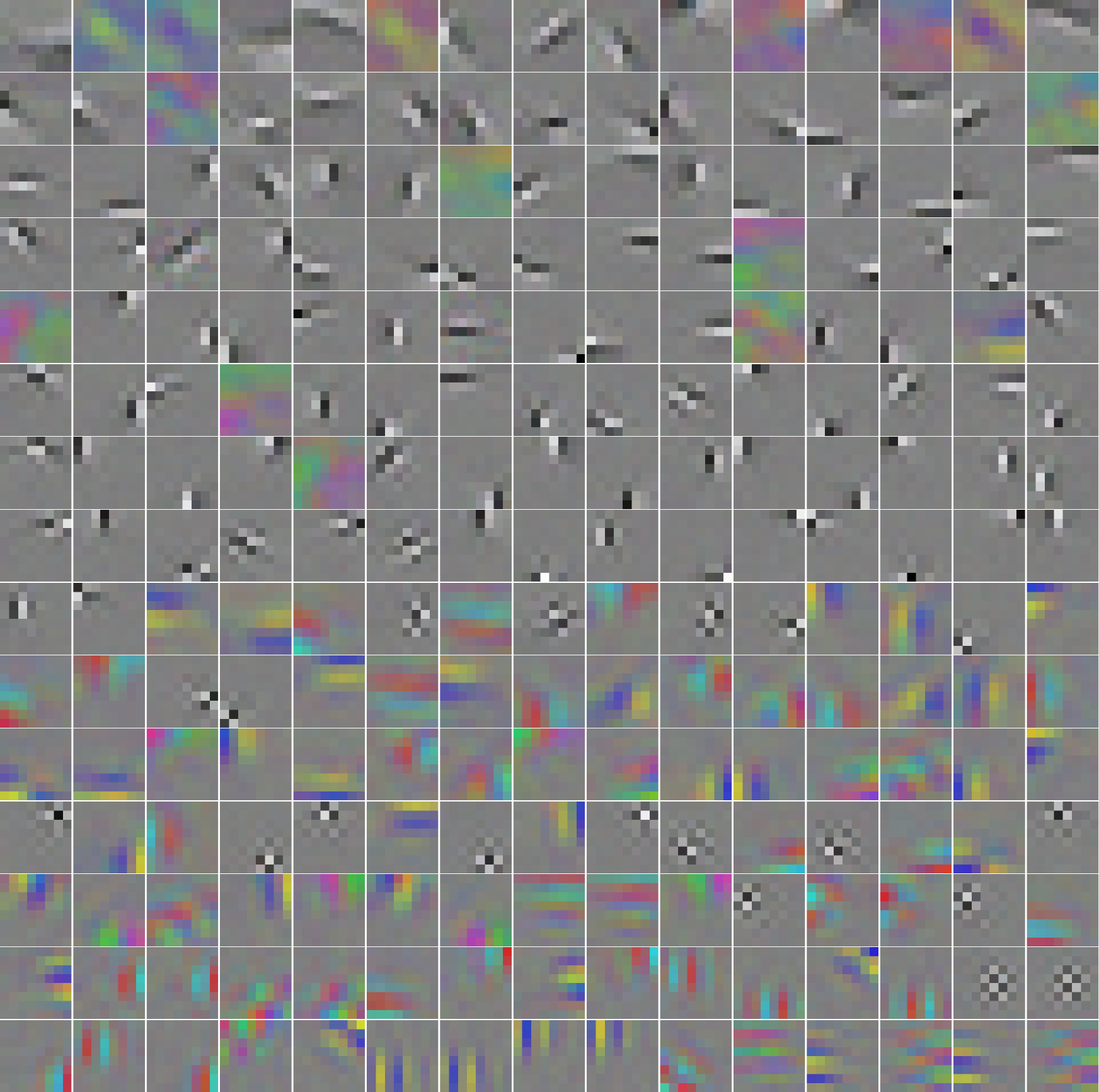}
        \hfill
        \includegraphics[width=0.28\linewidth]{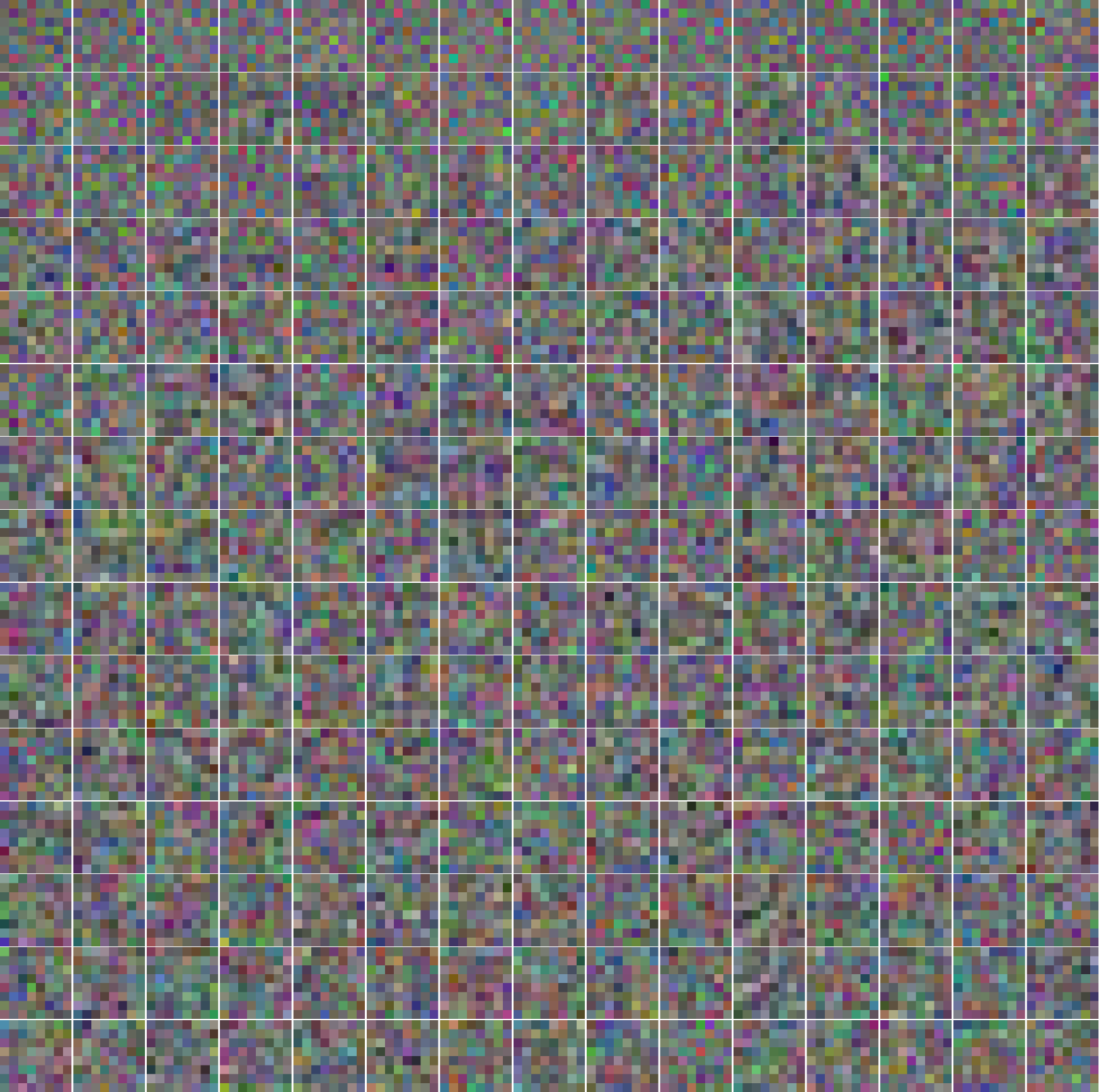}
        \caption{SAE-MLP as concept sparsity level changes (left) $k=25$ (middle) $k=50$ (right) $k=200$.}
    \end{subfigure}
    \vspace{-2mm}
    \caption{\textbf{Learned concepts across the dataset}. Each panel visualizes the most utilized concepts (each square is one concept) for a given concept sparsity level $k$. Fewer displayed concepts indicate more efficient concept reuse across the data population. SAEs are trained on CIFAR10 images ($8 \times 8$) with TopK and $p=1000$. (a) SAE-FNO exhibits stable concept characteristics across sparsity levels due to domain sparsity. As sparsity decreases, more concepts become active while preserving similar structure. (b) SAE-MLP concepts vary significantly with sparsity, losing structure and failing to capture locality. See also~\Cref{fig:concept-visualization-32} for $32 \times 32$ inputs, where SAE-FNO shows superior expressivity.}
    \label{fig:concept-visualization}
    \vspace{-6mm}
\end{figure*}
\begin{figure*}[t]
    \centering
    \begin{subfigure}{0.245\textwidth}
        \includegraphics[width=0.95\linewidth]{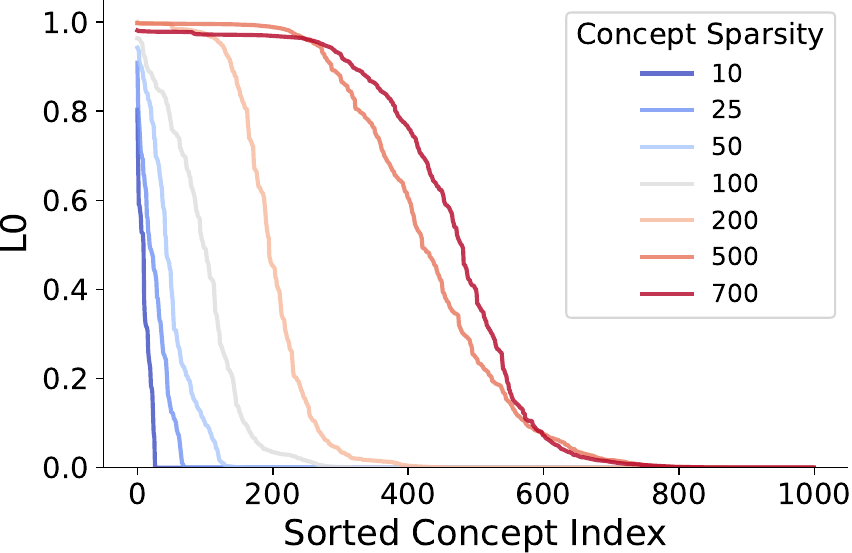}
        \caption{SAE-FNO ($8 \times 8)$.}
    \end{subfigure}
    \begin{subfigure}{0.245\textwidth}
        \includegraphics[width=0.98\linewidth]{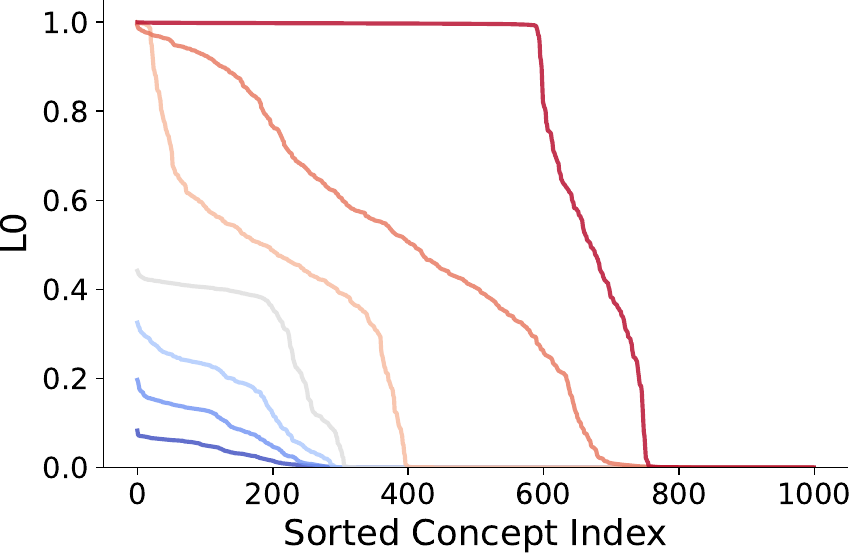}
        \caption{SAE-MLP.}
    \end{subfigure}
    \begin{subfigure}{0.245\textwidth}
            \includegraphics[width=0.98\linewidth]{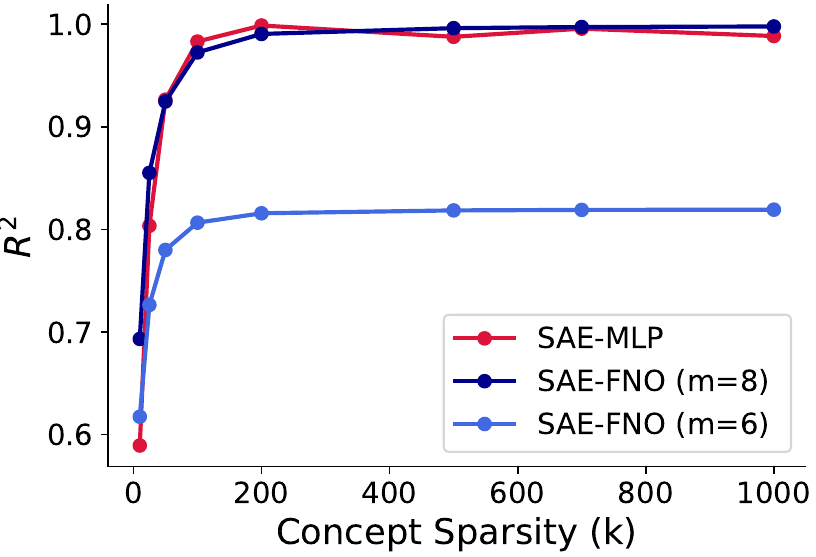}
        \caption{$\text{R}^2$ vs. $k$.}
    \end{subfigure}
    \begin{subfigure}{0.245\textwidth}
    \includegraphics[width=0.98\linewidth]{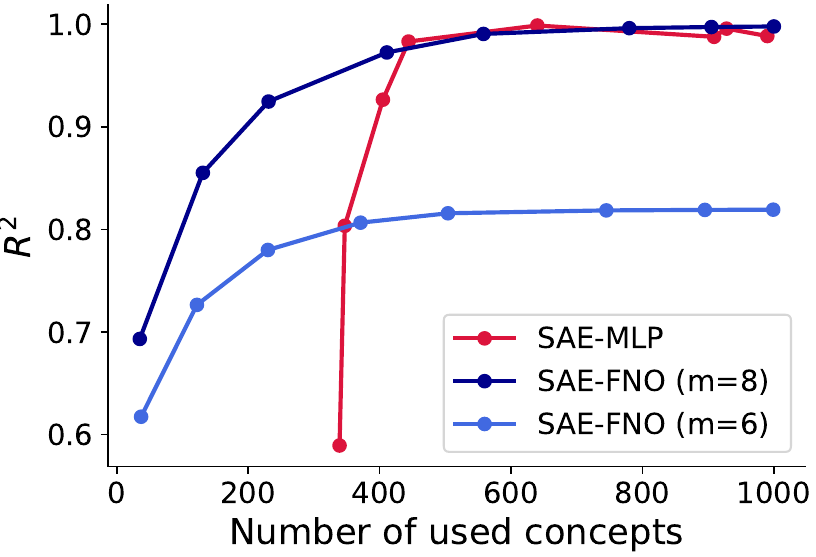}
    \caption{$\text{R}^2$ vs. used concepts.}
    \end{subfigure}
    \newline
    \begin{subfigure}{0.245\textwidth}
        \includegraphics[width=0.98\linewidth]{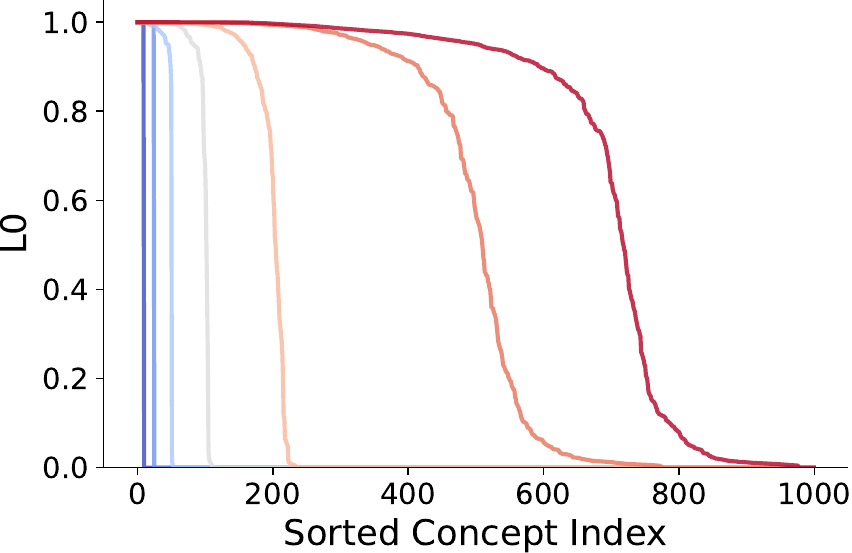}
        \caption{SAE-FNO ($32 \times 32)$.}
    \end{subfigure}
    \begin{subfigure}{0.245\textwidth}
        \includegraphics[width=0.98\linewidth]{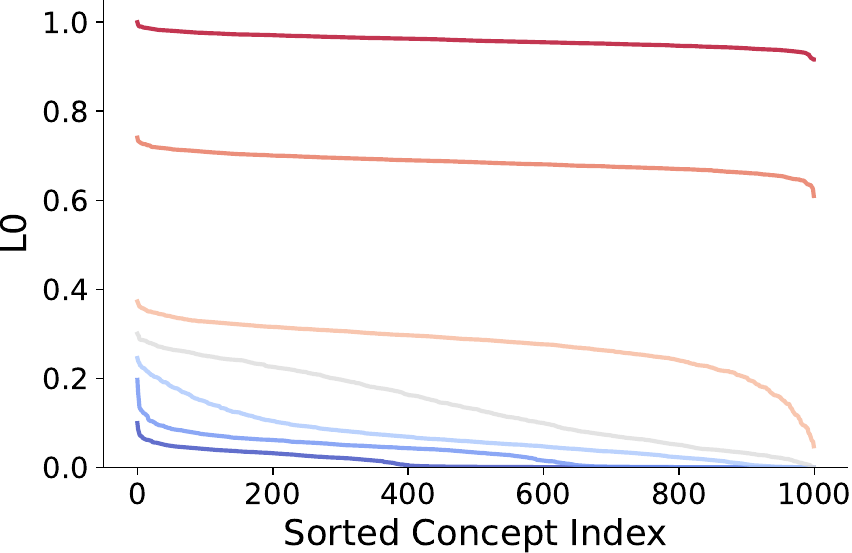}
        \caption{SAE-MLP.}
    \end{subfigure}
    \begin{subfigure}{0.245\textwidth}
            \includegraphics[width=0.98\linewidth]{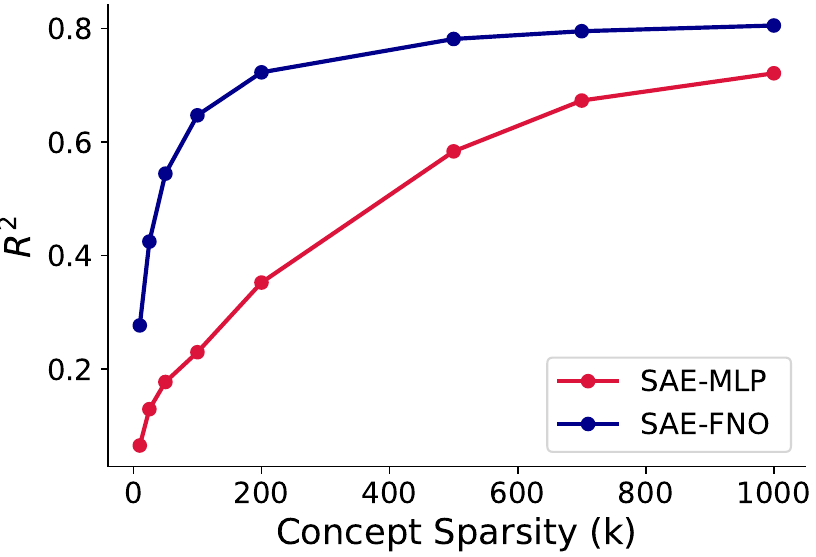}
        \caption{$\text{R}^2$ vs. $k$.}
    \end{subfigure}
    \begin{subfigure}{0.245\textwidth}
        \includegraphics[width=0.98\linewidth]{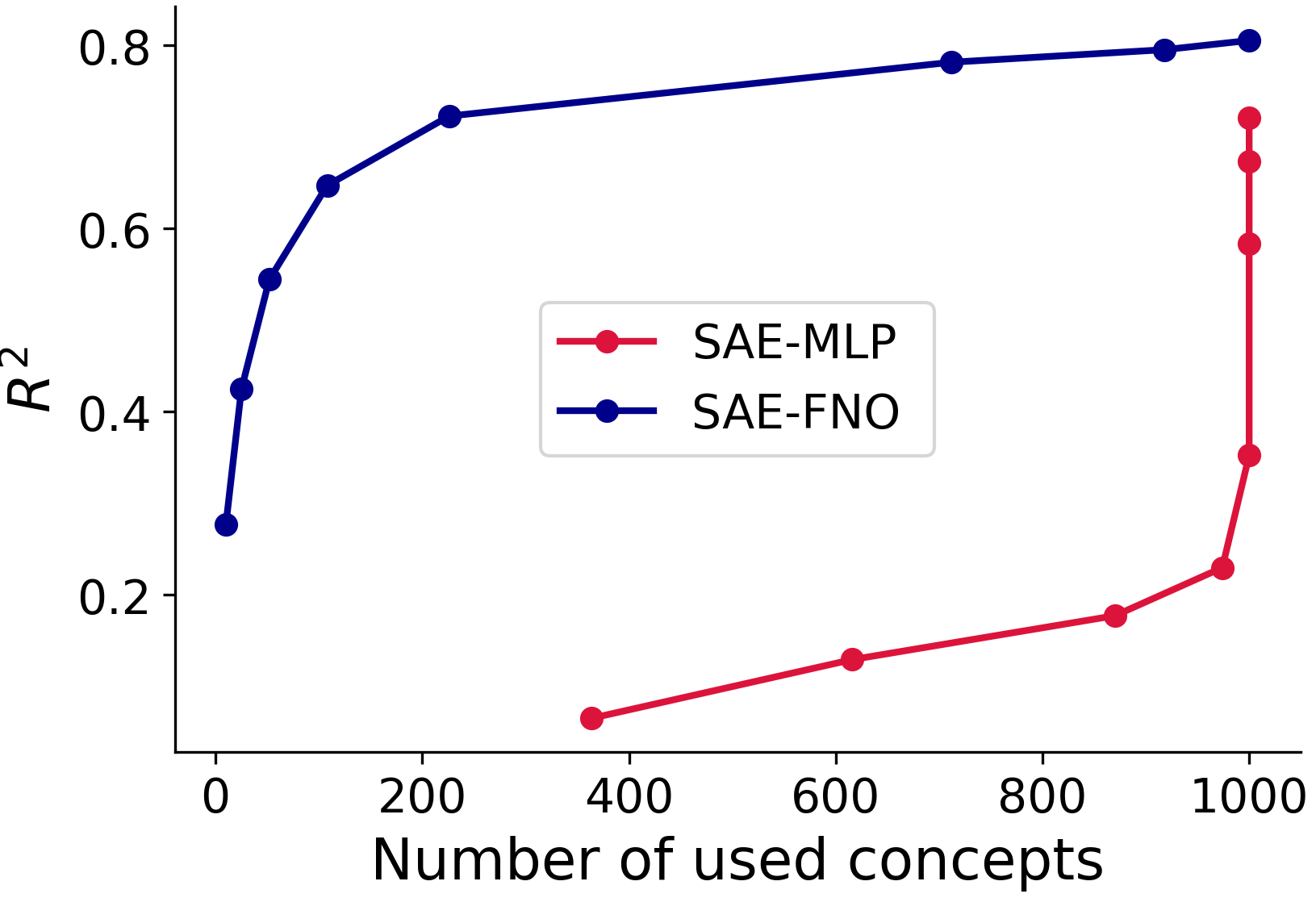}
        \caption{$\text{R}^2$ vs. used concepts.}
    \end{subfigure}
    \vspace{-4mm}
    \caption{\textbf{Concept utilization and expressivity}. first row: $8 \times 8$, second row: $32 \times 32$ for CIFAR10 and TopK. (a,b,e,f) $\ell_0$ code statistics (y-axis) across the data population, plotted against concept indices sorted by decreasing utilization (x-axis). SAE-FNO shows a sharp decay, indicating efficient concept usage, while SAE-MLP exhibits a flatter distribution, requiring more concepts, an effect that worsens for larger patches. (c,g) $R^2$ vs. $k$ . (d,h) $R^2$ vs. the number of used concepts; SAE-FNO achieves higher expressivity with fewer concepts, whereas SAE-MLP fails to scale to larger domains.}
    \label{fig:utilization}
\vspace{-7mm}
\end{figure*}

SAE-FNO concept characteristics remain stable across varying sparsity levels, whereas SAE-MLP concepts change drastically and lose structure as $k$ increases. Remarkably, even without concept sparsity ($k=p$), domain sparsity alone drives SAE-FNO to learn concepts with similar structural properties (\Cref{fig:saemlp-concepts-cifar}), while SAE-MLP degrades into noise-like patterns. This also holds across domain sparsity levels (\Cref{fig:cifar-concept-stability-mode-ss}). These structural properties, alongside gains in concept utilization and efficiency (\Cref{fig:imagenet-concept-utilization}), scale to ImageNet (\Cref{fig:imagenet-conceptstable-mlp,fig:imagenet-conceptstable-fno}) and persist across different sparsification and loss formulations, including BatchTopK (\Cref{fig:imagenet-conceptstable-mlp-batchtopk,fig:imagenet-conceptstable-fno-batchtopk}) and Matryoshka (\Cref{fig:imagenet-conceptstable-mlp-matryoshka,fig:imagenet-conceptstable-fno-matryoshka}). The active sparse codes and reconstructions of these variants are visualized in \Cref{fig:imagenet-rec-seq}. Finally, since we applied SAE-FNO directly on image data, learned concepts are low-level structures. SAE-FNO could naturally be applied to deep internal representations to extract high-level semantics~\citep{olshausen1997sparse}. 

This qualitative difference between SAE-MLP and SAE-FNO explains the efficiency observed in \Cref{fig:utilization}: SAE-FNO requires fewer active concepts to represent the data population. Specifically, SAE-FNO exhibits a much sharper decay in concept usage, indicating that a smaller set of structured concepts is sufficient to characterize the data, whereas SAE-MLP shows a flatter distribution and requires substantially more concepts. Consequently, SAE-FNO achieves higher expressivity at lower $p$ than SAE-MLP (see also~\Cref{fig:imagenet-r2}). Moreover, the expressivity and smoothness of SAE-FNO can be controlled through the number of Fourier modes (\Cref{fig:fouriermode_ablation}). We note that concept sparsity in SAE-FNO is not directly equivalent to scalar sparsity in SAE-MLP. Our comparison therefore evaluates efficiency at the level of reusable structured concepts rather than total latent dimensionality.

Finally, \Cref{tab:throughput_datadim,tab:throughput_p} reports runtime and computational cost across domain and model sizes. Although SAE-FNO introduces additional cost from feature-map representations and Fourier operations, its higher expressivity per concept and more efficient concept usage lead to a substantially smaller effective representation cost than raw architectural complexity alone would suggest.

\textbf{SAE Generalization Under Discretization and Frequency-Structured Perturbations}\quad Finally, we show that SAE-FNO is more stable under spectral shifts, particularly discretization changes and frequency-structured perturbations. During inference, we perturb data with additive high-frequency noise and observe that SAE-FNO exhibits substantially smaller degradation than SAE-MLP. Specifically, standard SAEs show a large drop in reconstruction quality measured by PSNR (peak signal-to-noise ratio, higher is better), whereas SAE-FNO exhibits a much smaller relative decrease, indicating greater stability under high-frequency perturbations (\Cref{fig:imagenet-highfreq-noise}). This robustness naturally arises from the truncated Fourier parameterization of SAE-FNO.

Moreover, SAE-FNO is advantageous when spatial structure varies in scale or resolution at inference. Conventional SAEs operate on fixed-grid Euclidean representations, with concepts tied to a specific discretization. In contrast, the functional parameterization of SAE-FNO inherits key properties of neural operators, enabling inference across discretization levels. As shown in \Cref{fig:multires}, an SAE-FNO with $p=1000$ concepts trained at $28\times28$ resolution continues to represent data accurately at higher resolutions, whereas the reconstruction error of standard SAEs increases sharply beyond the training discretization. This discretization generalization persists across different Fourier mode truncations. Additional analysis under a known generative model is provided in \Cref{fig:simulated-fno-discretization-invariance}.

\vspace{-3mm}
\section{Discussion and Conclusion}\label{sec:discussion}
\vspace{-1mm}
\textbf{Ablating Functional Sparse Coding and Domain Sparsity}\quad The utility of SAE-FNO stems from four properties: i) operator-based parameterization provides discretization convergence, ii) Fourier parameterization introduces a spectral bias toward smooth global structures, iii) functional sparse coding represents each concept as a function over the domain, and iv) domain sparsity encodes where and how a concept is expressed. We argue that SAE-FNO's distinct advantages arise from the latter two: ablating concept and domain sparsity prevents the model from learning the structured and stable concepts observed in SAE-FNO (\Cref{fig:aefno-saefno-no-ss}).

\textbf{When is SAE-FNO Advantageous?}\quad SAE-FNO excels when data exhibits spatial structure that varies in scale or resolution. Its functional parametrization adapts the receptive field, an attribute absent in fixed-vector SAE-MLPs. This enables generalization under domain size changes ($8 \times 8 \rightarrow 32 \times 32$, \Cref{fig:utilization}d,h), in addition to the resolution shift and cross-discretization generalization.

Furthermore, SAE-FNO is highly effective when the underlying concepts are sparsely structured in the frequency domain. We demonstrate this on a synthetic time-series dataset comprising 10 single-frequency concepts, generated with a concept sparsity of 3 and a domain sparsity of 2 (i.e., each concept appears locally twice). Under additive Gaussian noise, SAE-FNO outperforms SAE-MLP in recovering the underlying concepts (\Cref{tab:synthetic-recovery}). Restricting SAE-FNO's frequency modes further improves performance. SAE-FNO's spectral inductive bias naturally filters noise and favors smooth, low-frequency concepts, offering a powerful practical advantage when target concepts are known to not be concentrated in high-frequencies (\Cref{tab:synthetic-recovery} last column).

\textbf{When is SAE-FNO Less Useful?}\quad SAE-FNO is designed for data with spatial or temporal structure, where neighbouring dimensions are related or vary smoothly. Its Fourier parameterization introduces a spectral bias toward frequency-structured patterns, while domain sparsity assumes localized rather than globally dense concepts. When these assumptions fail (e.g., tabular, combinatorial, or highly irregular/discontinuous data), vector-based SAEs may be preferable.

\textbf{Limitations}\quad The advantages of SAE-FNO incur higher computational and memory costs than SAE-MLPs due to Fourier operations and feature-map representations (\Cref{tab:throughput_datadim,tab:throughput_p}), a trade-off analogous to adopting CNNs over MLPs for improved structural inductive biases and representational efficiency. SAE-FNO introduced two additional hyperparameters: number of modes (controlling smoothness) and domain sparsity (controlling spatial structure). Our ablations (modes: \Cref{fig:cifar-fno-hoyer-modes,fig:fouriermode_ablation}, domain sparsity: \Cref{fig:cifar-fno-hoyer-ss,fig:change-strcuture-domain-cifar}) demonstrate consistent behavior across modes, with expressivity increasing alongside more modes. Finally, to provide a clear and controlled evaluation of our framework, we focus our empirical study on vision datasets. While not explored here, these experiments naturally open the door to future applications of SAE-NOs in mechanistic interpretability of neural operator foundation models and scientific machine learning.


\textbf{Conclusion}\quad  We introduced sparse autoencoder neural operators (SAE-NOs), a functional framework for sparse representation learning in structured domains. By replacing scalar activations with Fourier-based representations and domain sparsity, SAE-FNO captures not only which concepts are active, but also how and where they are expressed. Across multiple SAE variants and datasets, SAE-FNO learns more structured, stable, and efficient concepts, while naturally supporting spatial and temporal consistency, discretization generalization, and controllable expressivity. Our results highlight the central role of functional parameterization for learning representations.

\bibliographystyle{unsrt}
\bibliography{references}

\newpage
\appendix

\section{Appendix - Broader Impact}\label{app:broader}

This work advances the field of machine learning by introducing a new framework for representation learning and interpretability based on sparse autoencoder neural operators (SAE-NO). By extending sparse autoencoders to functional and multi-resolution settings, our approach enables more stable, generalizable concept representations, with potential benefits for scientific modeling and mechanistic interpretability in general.

The methods proposed here are intended for understanding and characterizing learned representations rather than for deployment in high-stakes decision-making systems. hence, we do not anticipate immediate negative societal impacts. However, as with any interpretability or representation learning tool, insights derived from these models could influence how complex systems are analyzed or trusted. Care should therefore be taken when applying SAEs in sensitive domains, and conclusions should be interpreted in conjunction with domain expertise.

Overall, this work aims to contribute foundational insights into representation learning and mechanistic interpretability. We do not identify any foreseeable ethical concerns specific to this contribution.

\section{Appendix - Additional Background}\label{app:background}

\textbf{Notations}\quad We denote scalars as non-bold-lower-case a, vectors as bold-lower-case $\smla$, and matrices as upper-case letters $\A$.

Sparse model recovery has attracted interest from two representation learning communities: a) \textit{unrolled learning}, using optimization-inspired architectures for learning sparse representation~\cite{ablin2019stepsize, malezieux2022understanding, tolooshams2022stable, nguyen2019dynamics, arora2015sparsecoding, chen2018unfoldista, rangamani2018sparseae, tolooshams2020icml, rambhatla2018noodl}; and b) \textit{sparse interpretability} or mechanistic interpretability, employing SAEs to extract concepts from large models~\cite{elhage2022toy, huben2023sparse, bricken2023towards, rajamanoharan2024jumping, park2024linear, templeton2024scaling, lieberum2024gemma, gao2025scaling, fel2025archetypal, marks2025sparse, karvonen2025saebench}.

\textbf{Sparse model recovery}\quad The bi-level optimization takes a general form. When the inner level objective $\mathcal{L}(\x, \smlv, \theta) =  \frac{1}{2} \| \x - \D \smlv \|_2^2 + \mathcal{R}(\smlv)$, this problem is reduced to the classical alternating-minimization dictionary learning~\citep{agarwal2016learning, chatterji2017alternating}. In this classical form, sparse coding is widely applied in science and engineering; e.g., in signal processing~\citep{elad2010sparse}, image denoising~\citep{aharon2006ksvd}, and extracting interpretable gene modules~\citep{cleary2017trans, cleary2021compressed}. When the model parameters $\D$ are known, the problem is reduced to recovering the sparse representation by solving $\min_{\z \in \R^{p}}\ \tfrac{1}{2}\| \x  - \D \z \|_2^2 + \lambda \mathcal{R}(\z)$, where $\mathcal{R}(\z)$ is a sparse regularizer; setting $\mathcal{R}(\z) = \| \z \|_1$, the optimization problem is reduced down to lasso~\citep{tibshirani1996lasso}, or referred to as basis pursuit~\citep{chen2001atomic}, solving via proximal gradient descent~\citep{parikh2014proximal} such as iterative shrinkage-thresholding algorithm (ISTA)~\citep{daubehies2004ista} and its fast momentum-based version~\citep{beck2009fast}. 

\textbf{Sparse ReLU autoencoders}\quad For shallow ReLU autoencoders~\citep{bricken2023towards}, the formulation reduces to
\begin{equation}\label{eq:relusae}
    \begin{aligned}
        \min_{\D \in \mathcal{D}}\quad & \tfrac{1}{2} \| \x - (\D \z + \smlb_{\text{pre}}) \|_2^2 + \lambda \| \z \|_1\\
        \text{s.t}\quad & \z = \text{ReLU}(\W^\top (\x - \smlb_{\text{pre}}) + \smlb_{\text{enc}})
    \end{aligned}
\end{equation}
This connection between bilevel optimization and neural networks, highlighted in~\citep{tolooshams2023phd, hindupur2025projecting}, has been explored extensively in the context of sparse autoencoders. Prior works have studied the gradient dynamics of variants of ReLU networks when the encoder is shallow~\citep{nguyen2019dynamics, arora2015sparsecoding, rangamani2018sparseae} or deep, as in learned ISTA architectures~\citep{tolooshams2022stable}. Others have analyzed model recovery when the encoder is deep and iterative with hard-thresholding nonlinearity (JumpReLU~\citep{rajamanoharan2024jumping})~\citep{rambhatla2018noodl}.

Overall, optimizing or recovering sparse generative linear models has attracted sustained attention from two main deep learning communities over the past years; a) \textit{Unrolling learning}: using optimization formulations to design and theoretically study neural architectures~\citep{ablin2019stepsize, malezieux2022understanding, tolooshams2022stable, nguyen2019dynamics, arora2015sparsecoding, chen2018unfoldista, rangamani2018sparseae, tolooshams2020icml, rambhatla2018noodl}, and b) \textit{Sparse interpretability} or mechanistic interpretability: leveraging sparse models to interpret and analyze the internal representations of complex larger networks~\citep{elhage2022toy, huben2023sparse, bricken2023towards, rajamanoharan2024jumping, park2024linear, templeton2024scaling, lieberum2024gemma, gao2025scaling, fel2025archetypal, marks2025sparse, karvonen2025saebench}. We briefly expand on both works.

\textbf{Unrolled learning}\quad The early connections between sparse coding and deep learning trace back to sparse energy-based deep models~\citep{ranzato2007sparsenergy, ranzato2008sparsedbn} and the pioneering work of LISTA~\citep{gregor2010lista} in constructing sparsifying recurrent neural networks. This line of work, known as unfolding~\citep{hershey2014deep} or unrolling~\citep{monga2021algorithm}, designs neural network layers based on iterations of an algorithm that solves an optimization problem. This connection has enabled researchers to use optimization models as a proxy to theoretically study accelerated convergence~\citep{chen2018unfoldista, ablin2019stepsize}, gradient dynamics~\citep{arora2015sparsecoding, rangamani2018sparseae, nguyen2019dynamics, tolooshams2020icml}, and model recovery in both shallow~\citep{arora2015sparsecoding} and deep neural networks~\citep{rambhatla2018noodl, tolooshams2022stable}. Furthermore, several works have explored the theoretical connections between convolutional neural networks and convolutional sparse coding~\citep{papyan2017convolutional, papyan2017working}, or more generally, between deeply sparse signal representations and deep neural networks~\citep{ba2020deeply}. An example of a deep unrolled JumpReLU neural network for sparse model recovery can be formulated as the following bilevel optimization problem.
\begin{equation}\label{eq:jumpdeepbilevel}
    \begin{aligned}
        \min_{\D \in \mathcal{D}}\quad \| \x - \D \z_T \|_2^2\quad
        \text{s.t.}\quad \z_t = \text{JumpReLU}_{\lambda}(\z_{t-1} - \alpha \D^\top(\D \z_{t-1} - \x))
    \end{aligned}
\end{equation}
for $t = 1, \ldots, T$, where $\z_0 = \bm{0}$, $\alpha$ is the step size, and $\text{JumpReLU}_{\lambda}(\z) = \z \cdot \mathbbm{1}_{\z > \lambda} $, where $\mathbbm{1}$ is the indicator function, and $\lambda$ controls the sparsity level of the representation. The inner mapping is now a deep/iterative encoder, which for simplicity we denote by $\z_T = f_{\theta}(\x)$. The outer objective enforces the structure of the decoder and the loss function. The bilevel optimization in~\cref{eq:jumpdeepbilevel} can be mapped to the following neural network autoencoder architecture, which uses a recurrent and residual encoder. The synthetic experiments in this paper use this encoder in its architecture for sparse model recovery.
\begin{equation}\label{eq:jumpdeepsae}
\begin{aligned}
    \text{(encoder)}\quad & \z_t = \text{JumpReLU}_{\lambda}(\z_{t-1} - \alpha \D^\top(\D \z_{t-1} - \x)),\quad  \text{for}\quad t=1,\ldots,T\\
    \text{(decoder)}\quad & \hat \x = \D \z_T.
\end{aligned}
\end{equation}
For experiments recovering the sparse convolutional generative model, we use the architecture that takes convolution blocks, as shown below.
\begin{equation}\label{eq:jumpdeepsaeconv}
\begin{aligned}
    \text{(encoder)}\quad & \z_{c,t} = \text{JumpReLU}_{\theta}(\z_{c,t-1} - \alpha \D_c \star (\sum_{i=1}^p \D_i * \z_{i,t-1} - \x)),\quad  \text{for}\quad t=1,\ldots,T\\
    \text{(decoder)}\quad & \ \ \ \hat \x = \D * \z_T
\end{aligned}
\end{equation}
for $t = 1, \ldots, T$ and $c=1,\ldots,p$, where $\z_0 = \bm{0}$, $\alpha$ is the step size, $*$ is convolution operator, and $\star$ is a correlation operator. Recent work has extended unrolling ideas to neural operators~\citep{he2024selfcomposing}.

\textbf{Convolutional Setting}\quad Sparse generative models can be extended to \textit{sparse convolutional generative models} (\cref{def:sgmconv}), where the concepts are locally and sparsely appearing in the data. 

\begin{definition}[Sparse Convolutional Generative Models]\label{def:sgmconv}
A data example $\x \in \R^{m}$ is said to follow a sparse convolutional generative model if there exists a sparse latent representation $\z = \{\z_c \in \R^{d}\}_{c=1}^p$ (with $\text{supp}(\z_c) \leq k \ll m$) and a set of $p$ localized dictionary $\{\D^{*}_c \in \R^{h}\}_{c=1}^p$ (with $h \ll m$) such that $\x = {\textstyle \sum_{c=1}^p} \D_c ^{*} \ast \z_c$
in the noiseless setting, where $\ast$ denotes the convolution operator.
\end{definition}

\textbf{Neural operators}\quad Neural operators consist of the following three modules:
\begin{itemize}[leftmargin=5mm]
    \setlength\itemsep{0em}
    \item {\bf Lifting}: This is a fully local operator which we model by a matrix $\lift: \R^{m} \rightarrow \R^{d_{v_0}}$. It maps the input $\{\x: D_x \rightarrow \R^{m}\}$ into a latent representation $\{\smlv_0: D_0 \rightarrow \R^{d_{v_0}}\}$, where $h > m$.
    \item {\bf Kernel Integration}: For $t=0,\ldots, T-1$, this is a non-local integral kernel operator that maps representation at one layer $\{\smlv_t: D_t \rightarrow \R^{d_{v_t}}\}$ to the next $\{\smlv_{t+1}: D_{t+1} \rightarrow \R^{d_{v_{t+1}}}\}$ (see~\cref{def:kernelopt}).
    \item {\bf Projection}: This is a fully local operator, similar to the lifting operator. It maps the filtered lifted data to the output function, i.e., $\proj: \R^{d_{v_T}} \rightarrow \R^{m}$, where $d_{v_T} > m$.
\end{itemize}
where the kernel integral operator is used to map $\x$ to an estimate of the representation $\z$, the input/output domains would be $D_x$ and $D_z$, respectively. Moreover, when the kernel integral operator is used to refine the latent representation $\z$ from one layer to another, the input/output domain of the operator would be both $D_z$. 

\begin{definition}[Kernel integral operator $\mathcal{K}$ (restated from~\citep{kovachki2023neural})]\label{def:kernelopt}
Define the kernel integral operator by
\begin{equation}
(\mathcal{K}_t(\smlv_t))(x) \coloneqq \int_{D_t} \kappa^{(t)}(x,y) \smlv_t(y) d\smlv_t(y),\quad \forall x \in D_{t+1}
\end{equation}
where $\kappa^{(t)}: C(D_{t+1} \times D_t; \R^{dv_{t+1} \times dv_{t}})$ are the parameters of the kernels, modeled by a neural network, and $v_t$ is a Borel measure on $D_t$, where $C$ denotes the space of continuous functions.
\end{definition}
\begin{definition}[Fourier integral operator $\mathcal{K}$ (restated from~\citep{kovachki2023neural})]\label{def:fno}
Define the Fourier integral operator by
\begin{equation}
(\mathcal{K}(\phi)\smlv_t)(\x) = \mathcal{F}^{-1} \big(R_{\phi} \cdot (\mathcal{F}\smlv_t)\big)(\x),\qquad \forall \x \in D
\end{equation}
where $R_{\phi} \coloneqq \mathcal{F}(\kappa)$ is the Fourier transform of a periodic function $\kappa: D \rightarrow \mathbb{C}^{d_w \times d_v}$ parameterized by $\phi$.
\end{definition}

Fourier neural operator (FNO)~\citep{li2021fourier} models the kernel integral operator with a convolution operator parameterized in Fourier space (\cref{def:fno}, see also~\cref{def:fourier} for Fourier transform). The convolution operator parameterizes the kernel $\kappa(x,y) = \kappa(x - y)$ as a complex function $\kappa: D \rightarrow \mathbb{C}^{d_w \times d_v}$. FNO is used with truncated frequency modes in practice; this has been shown to improve performance and lower sensitivity to change (decreasing) the discretization sampling~\citep{li2021fourier, kovachki2023neural}.
\begin{definition}[Fourier transform]\label{def:fourier}
Let $\mathcal{F}$ the Fourier transform of the function $\smlv: D \rightarrow \R^{d_v}$, whose inverse is denoted by $\mathcal{F}^{-1}$ on the function $\smlw: D \rightarrow \mathbb{C}^{d_w}$. We have
\begin{equation}
\begin{aligned}
(\mathcal{F}\smlv)_j(k) &= \int_{D} \smlv_j(x) e^{-2i\pi\langle x,k\rangle}dx,\qquad j=1,\ldots, d_v\\
(\mathcal{F}^{-1}\smlw)_j(x) &= \int_{D} \smlw_j(k) e^{2i\pi\langle x,k\rangle}dk,\qquad j=1,\ldots, d_w
\end{aligned}
\end{equation}
where $i = \sqrt{-1}$ is the imaginary unit.
\end{definition}

\textbf{MACO}\quad Given the Fourier parameterization in our framework, it may appear related to MACO~\citep{fel2023unlocking}. However, the two approaches differ fundamentally in both goal and formulation. MACO uses Fourier parameterization for feature visualization, imposing phase and magnitude constraints to generate natural images. In contrast, our work focuses on representation learning, where Fourier operators parameterize concepts within a functional sparse coding framework, without imposing explicit constraints on Fourier modes.

\section{Appendix - Experiment Setup}\label{app:expsetup}

\subsection{Synthetic Experiments with Known Ground Truth}

\textbf{Sparse Generative Model} \quad The results shown in \cref{fig:dense-lift-acc} are from a synthetic dataset that follows a sparse generative model. The atoms of the ground-truth dictionary $\D^* \in \R^{1000 \times 1500}$ are drawn from a standard normal distribution and then $\ell_2$-normalized. The dataset consists of 50,000 samples $\x \in \R^{1000}$. Each latent code $\z \in \R^{1500}$ has a total sparsity of 20. The amplitudes of the non-zero elements are drawn from a sub-Gaussian distribution with mean of 15 and standard deviation of 1.0.

\textbf{SAE} \quad For both SAE (SAE-MLP) and L-SAE (L-SAE-MLP) models, the encoder implements $f_\theta(\x)$ following sparse model recovery using a deep unrolled JumpReLU network (eq. \ref{eq:jumpdeepbilevel},\ref{eq:jumpdeepsae}) with $T = 50$ layers, as described in the sparse model recovery framework. The non-linearity is Hard-Thresholding with a threshold of $\lambda = 0.5$, and the algorithm's internal step-size is $\alpha = 0.2$. The decoder reconstructs data as $\hat{\x} = \D\z$. The dictionary weights for both SAE and L-SAE are initialized with a noisy ($\sigma = 0.02)$ version of the ground-truth dictionary. For the L-SAE, the 1000-dimensional input is lifted to a 1200-dimensional space. Both models are trained to minimize the Mean Squared Error (MSE) reconstruction loss using the SGD optimizer with a learning rate of $\eta = 10^{-3}$. 

\textbf{Sparse Convolutional Generative Model} \quad The results shown in \cref{fig:synthetic} are from a synthetic dataset that follows a sparse convolutional generative model (\cref{def:sgmconv}). The ground-truth dictionary $\D^*$ consists of $p = 5$ kernels with spatial support $h = 99$. Each kernel is a multi-channel signal with 64 input channels. The values for each kernel are drawn from a standard normal distribution and then normalized. The dataset consists of 50,000 samples $\x \in \R^{64 \times 1000}$. Each sample is generated by convolving a single (i.e., sparsity of 1) randomly chosen kernel from the dictionary with a sparse feature map, whose non-zero amplitudes are drawn from a sub-Gaussian distribution. For~\Cref{fig:synthetic}c, the data is generated using a smooth (low-pass filtered) version of the ground-truth dictionary described above.

\textbf{SAE-CNN} \quad For both SAE-CNN and L-SAE-CNN models, the encoder implements the convolutional version of a deep unrolled network (\cref{eq:jumpdeepsaeconv}) with $T = 50$ layers, as described in the sparse model recovery framework. The non-linearity is Hard-Thresholding with a threshold of $\lambda = 10$, and the algorithm's internal step-size is $\alpha = 0.01$. The dictionary kernels for both SAE-CNN and L-SAE-CNN are initialized with a noisy ($\sigma = 0.05$) version of the ground-truth kernels. For L-SAE-CNN, the 64-channel input is lifted to a 128-dimensional space. Both models are trained to minimize the Mean Squared Error (MSE) reconstruction loss using the SGD optimizer with a learning rate of $\eta = 0.04$. 

\textbf{SAE-FNO}\quad For both SAE-FNO and L-SAE-FNO models, the encoder implements a deep unrolled network with $T = 50$ layers, as described in the sparse functional model recovery framework. The non-linearity is Hard-Thresholding with a threshold of $\lambda = 10$, and the algorithm's internal step-size is $\alpha = 0.01$. The dictionary kernels are initialized with a noisy ($\sigma = 0.05)$ version of the ground-truth. For L-SAE-FNO, the 64-channel input is lifted to a 128-dimensional space. Both SAE-FNO and L-SAE-FNO are trained to minimize the Mean Squared Error (MSE) reconstruction loss using the SGD optimizer with a learning rate of $\eta_L = 20.04$. 

\textbf{1D Discretization Generalization (Upsampling)} \quad In \cref{fig:upsampling} and \ref{fig:simulated-fno-discretization-invariance}, we evaluate models trained at base resolution on higher-resolution inputs ($2 \times$, $4 \times$, $8 \times$ upsampling) during inference. We use discrete-time interpolation: expanding the sequence to the target resolution by inserting zeros between the original data points, and passing the expanded signal through a 10th-order Butterworth low-pass filter to eliminate high-frequency aliasing artifacts. The filter cutoff is set to half of the original sampling rate, and the resulting signal amplitude is scaled by the upsampling factor to preserve signal energy. 

\textbf{1D Time-Series} \quad We train and evaluate SAE-MLP and SAE-FNO on a synthetic 1D time-series dataset of $50,000$ samples (length $m = 1000$, $1$ channel), constructed via 1D transposed convolution of sparse codes with ground-truth dictionary $D^*$. The dictionary contains $p=10$ normalized cosine waves (kernel size 99) with frequencies linearly spaced from 1 to 10 cycles per window. Each sample is generated with a concept sparsity of $k = 3$ and a domain sparsity of 2, meaning each active frequency appears at exactly two random spatial locations. Activation amplitudes are drawn from a sub-Gaussian distribution (mean $5.0$, standard deviation $1.0$) with randomly flipped polarities. We evaluate the architectures on $\sigma_{\text{noise}} = 0$ (clean baseline) and $\sigma_{\text{noise}} = 0.5$ (high-noise). Results are shown in \cref{tab:synthetic-recovery}.


\subsection{Image Experiments}

\textbf{Datasets and Preprocessing} \quad We evaluate our SAE-FNO and SAE-MLP (SAE-CNN) across MNIST, CIFAR-10, and ImageNet datasets. For MNIST, the images are inherently $28 \times 28$ grayscale and are passed directly into the SAEs without additional preprocessing. For CIFAR-10 and ImageNet, we first extract random patches to match the model's expected receptive field (e.g., $8 \times 8$ for CIFAR, $16 \times 16$ or $32 \times 32$ for ImageNet images initially resized to $256 \times 256$). Additionally, Global Contrast Normalization (GCN) and ZCA Whitening ($\epsilon = 0.1$) are computed and applied after patch extraction. 

\textbf{Architecture and Training} \quad Unless specified otherwise, all models are trained with a dictionary size of $p = 1000$ concepts. For MNIST and CIFAR-10 experiments, we enforce concept and domain sparsity via the standard Top-$K$ mechanism. For ImageNet, we evaluate Batch Top-$K$ and Matryoshka SAE variants in addition to Top-$K$ SAE. 

All models use the AdamW optimizer with a base learning rate of $10^{-3}$. The primary objective is the Mean Squared Error (MSE) reconstruction loss. Throughout training, we apply unit normalization along the channel dimension. After taking the $L_2$ norm over the spatial dimensions, we scale the weights by $1/\sqrt{\text{ch}}$ (where ch is the number of channels or lift dimension). Additionally, for SAE-FNOs, we scale the encoder and decoder weights of the SAE-FNO by a factor of $\sqrt{\text{modes}}$ for training stability. The learning rate is also scaled accordingly.

\textbf{High-frequency Noise Robustness} \quad Models trained on ImageNet $16 \times 16$ patches are subjected to additive high-frequency noise at inference. We sample white Gaussian noise, project it into the Fourier domain via a 2D FFT, and apply a high-pass filter that zeros out all frequencies within a radial cutoff of $r \leq 0.25 \min(H, W)$. The filtered spectrum is inverted back to the spatial domain, rescaled, and added to the raw input images. We then compute the Peak Signal-to-Noise Ratio (PSNR) between the model's reconstruction and the ground-truth image, as shown in \cref{fig:imagenet-highfreq-noise}.

\textbf{2D Discretization Generalization (Upsampling)} \quad For spatial image data, we apply bilinear interpolation directly to the test images to scale them to the target resolutions. To evaluate SAE-MLP on these higher-dimensional inputs, its fixed-size dense parameter matrices must be proportionally scaled. We reshape the encoder and decoder weight matrices (as well as the learned spatial encoder bias tensor when applicable) from flat vectors into their original spatial configurations (e.g., mapping from $\mathbb{R}^{d \times 784}$ to $\mathbb{R}^{d \times 28 \times 28}$). These tensors are then bilinearly interpolated to the new target resolution (e.g., $\mathbb{R}^{d \times 56 \times 56}$) and flattened back into dense matrices.

\textbf{Ablation of Concept and Domain Sparsity} \quad To isolate the specific contributions of our sparsity formulation, we compare the full SAE-FNO against two restricted variants (\cref{fig:aefno-saefno-no-ss}): 1) SAE-FNO without domain sparsity. This network retains the concept-level sparsity to select active dictionary atoms, but leaves their corresponding feature map activations dense across the spatial domain (applying only spatial ReLU); and 2) AE-FNO (no sparsity). This is a baseline model where both concept and domain sparsity mechanisms are removed. The projection onto the $p$ dictionary atoms is followed only by a ReLU nonlinearity, allowing all concepts to activate simultaneously across the entire domain.

\textbf{MNIST Translated Digits} \quad To evaluate the stability of learned concepts under continuous translation, we construct a custom moving MNIST dataset. Standard MNIST digits are placed onto an expanded $48 \times 48$ spatial canvas and linearly translated across the domain over a fixed temporal window ($T = 20$ frames). Crucially, both the SAE-MLP and SAE-FNO process each frame independently: the Top-K sparsity mechanism is computed and applied independently to each individual frame without any constraints enforcing temporal consistency. We then encode the translating digit frame-by-frame and identify the top active concepts at each time step, ranked by their $L_2$ activation magnitude (\cref{fig:video}).

\section{Appendix - Metrics}\label{app:metrics}

\textbf{Hoyer Score}\quad For a flattened representation $\smlv_c\in \mathbb{R}^d$ of the $c$-th concept, the Hoyer score is given by: 
\begin{equation}
    \text{Hoyer}(\smlv_c) = \frac{ \sqrt{d} - \frac{\|\smlv_c\|_1}{\|\smlv_c\|_2}}{\sqrt{d} - 1}
\end{equation}

\textbf{Identity Consistency} \quad To quantify whether SAEs learn a stable and consistent set of features regardless of the specific sparsity constraint, we introduce the Identity Consistency metric. This measures the fraction of learned concepts in an anchor model ($k_{\text{ref}} = 50$ in \cref{fig:identity-consistency}) that are consistently recovered in models trained at more relaxed sparsity levels ($k > 50$).

To ensure the metric evaluates meaningful representations, we restrict the matching process to active concepts. A concept is considered active if it generates a non-zero activation for at least 1\% of the evaluation images. Let $\mathcal{D}_{\text{ref}}$ be the set of active, unit-normalized spatial dictionary atoms from the anchor model, and $\mathcal{D}_{(k)}$ be the active, unit-normalized atoms from the comparison model. 

To accurately reflect the capacity of each model, we tailor the similarity metric to their respective inductive biases. Because SAE-MLPs use scalar activations, their dictionary atoms are fixed to their specific position. A structural feature learned at one spatial location cannot be shifted and applied to another. Therefore, we evaluate SAE-MLP using cosine similarity: 
\begin{equation}
    \text{sim}_{\text{MLP}}(d_i, d_j) = d_i^\top d_j
\end{equation} 
Conversely, SAE-FNO concepts are parameterized functionally in the Fourier domain, allowing the model to represent the same structural feature translated across the spatial domain. This leads us to compute similarity via maximum spatial cross-correlation: 
\begin{equation}
    \text{sim}_{\text{FNO}}(d_i, d_j) = \max_{\Delta x, \Delta y} (\mathcal{F}^{-1}(\mathcal{F}(d_i) \odot \overline{\mathcal{F}(d_j)}))
\end{equation}
where $\odot$ denotes element-wise multiplication.

For every concept in the anchor model, we independently find the best-matching concept in the comparison model. For each $d_i \in \mathcal{D}_{\text{ref}}$, the maximum similarity score is defined as 
\begin{equation}
    s(d_i) = \max_{d_j \in \mathcal{D}_{(k)}} \text{sim} (d_i, d_j)
\end{equation}
From this, we derive the identity consistency metric, defined as the proportion of anchor concepts successfully recovered above a similarity threshold $\tau$ (typically $\tau \in \{0.5, 0.7\}$)~\citep{leask2025sparse}:
\begin{equation}
    \text{Identity Consistency} = \frac{1}{|\mathcal{D}_{\text{ref}}|} \sum_i \mathbf{1}\big(s(d_i) > \tau\big)
\end{equation}

\textbf{Concept Utilization and Expressivity} \quad To evaluate representation efficiency, we quantify how effectively the architectures use their dictionary capacity and how accurately they reconstruct the test population distribution (\cref{fig:utilization} and \ref{fig:imagenet-concept-utilization}).

Let $z_p^{(i)}$ be the feature map activations of concept $p$ for image $i$ after enforcing sparsity. For a feature map with spatial size $S$ (where $S = 1$ for scalar SAE-MLPs and $S = H \times W$ for SAE-FNOs), the normalized concept utilization $L_0(p)$ across a validation set of $N$ images is defined as the total count of non-zero spatial activations relative to the total possible activations:
\begin{equation}
    L_0(p) = \frac{1}{N \cdot S}\sum_{i=1}^N \|z_p^{(i)}\|_0    
\end{equation}
This normalization naturally bounds $L_0(p) \in [0, 1]$.

Additionally, we measure the proportion of data variance explained by the model's reconstructions via goodness-of-fit ($R^2$), defined as:
\begin{equation}
    R^2 = 1 - \frac{\text{MSE}}{\text{Var}(X)}   
\end{equation}
where $X$ represents the ground-truth images. We analyze $R^2$ against concept sparsity level and the number of globally active concepts (\cref{fig:utilization}). A concept $p$ is considered as "globally active" if it produces at least one non-zero spatial activation ($\|z_p^{(i)}\|_0 > 0$) on $\ge 1\%$ of the validation set. Models that achieve a higher $R^2$ using fewer globally active concepts show better disentanglement and representational efficiency.

\textbf{Dictionary Atom Correlation} \quad To evaluate the orthogonality and structural redundancy of the learned dictionaries, we compute the distribution of pairwise cosine similarities between all atoms within a model. This evaluation measures spatial overlap. For SAE-FNO, the $p$ dictionary atoms are projected into the spatial domain and flattened into vectors $d \in \mathbb{R}^{CHW}$, before computing the cosine similarity. We plot the density of these similarities for all unique atom pairs (excluding self-correlations) (\cref{fig:mnist-atom-corr}, \ref{fig:cifar-atom-corr}).

\section{Appendix - Additional Figure}\label{app:addfigures}

\begin{figure}[H]
\centering
\includegraphics[width=0.5\linewidth]{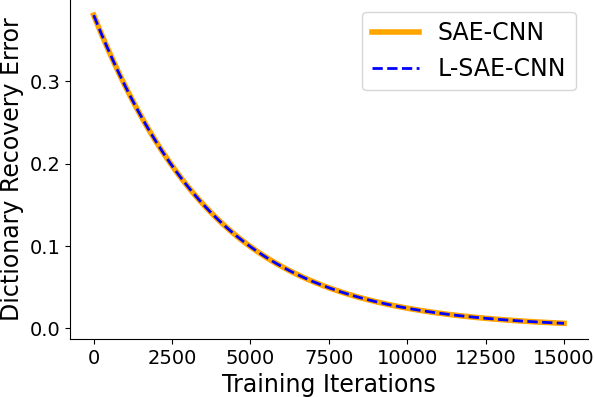}
21\caption{\textbf{Lifting.} Learning when $\lift^{\text{T}}\lift = \eye$.}
    \label{fig:conv-sae-lift-b}
\end{figure}

\begin{figure}
    \centering 
    \begin{subfigure}{\textwidth}
        \centering
        \includegraphics[width=0.4\linewidth]{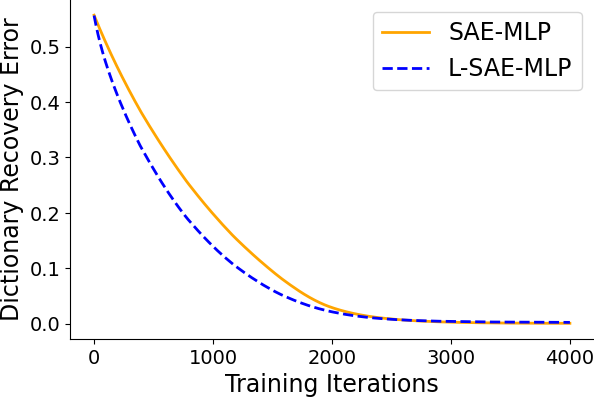}
        \hspace{4mm}
        \includegraphics[width=0.4\linewidth]{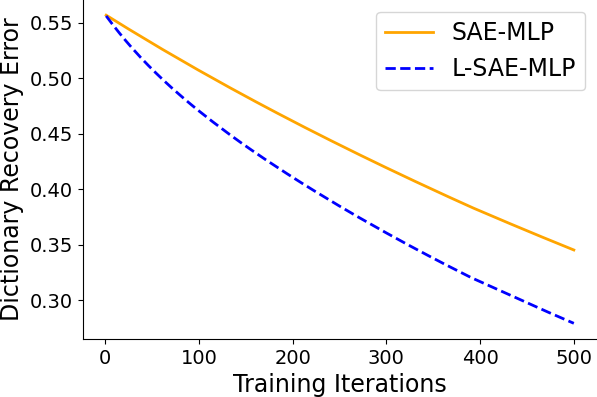}
        \caption{Dictionary error}
        \label{fig:dense-lift-acc-dict-err}
    \end{subfigure}
    \hfill
    \begin{subfigure}{\textwidth}
        \centering
        \includegraphics[width=0.4\linewidth]{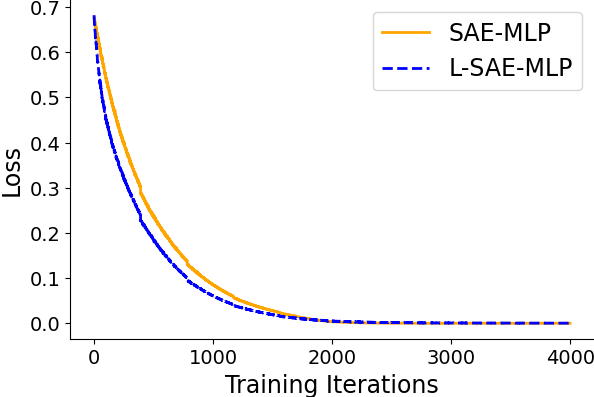}
        \hspace{4mm}
        \includegraphics[width=0.4\linewidth]{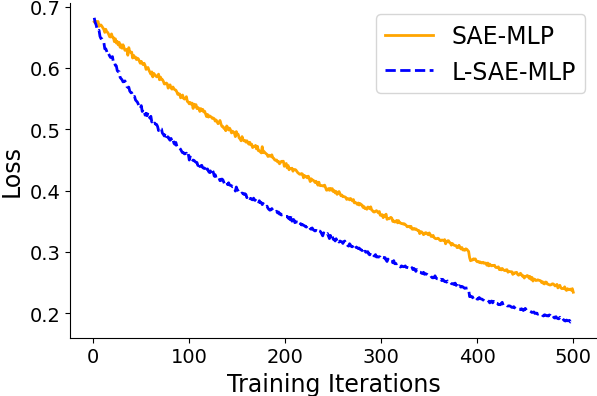}
        \caption{Reconstruction loss}
        \label{fig:dense-lift-acc-rec-loss}
    \end{subfigure}
    \hfill
    \begin{subfigure}{\textwidth}
        \centering
        \includegraphics[width=0.4\linewidth]{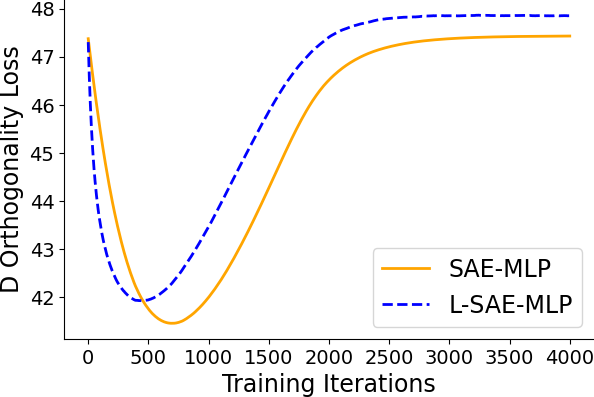}
        \hspace{4mm}
        \includegraphics[width=0.4\linewidth]{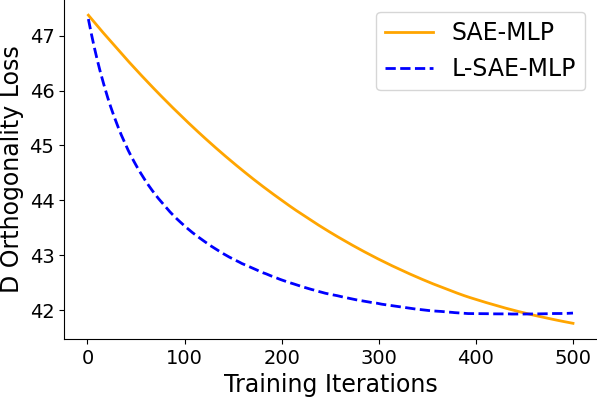}
        \caption{D Orthogonality Loss}
        \label{fig:dense-lift-acc-orthd}
    \end{subfigure}
    \hfill
    \begin{subfigure}{\textwidth}
        \centering
        \includegraphics[width=0.4\linewidth]{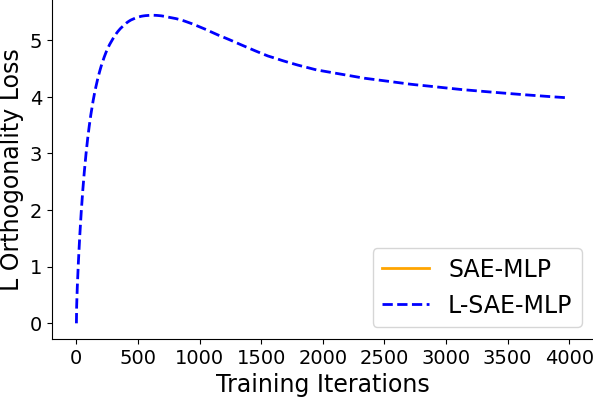}
        \hspace{4mm}
        \includegraphics[width=0.4\linewidth]{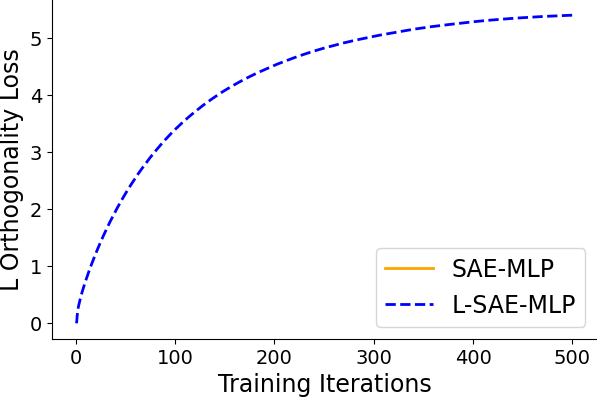}
        \caption{L Orthogonality Loss}
        \label{fig:dense-lift-acc-orthl}
    \end{subfigure}%

    \caption{\textbf{SAEs vs. L-SAEs.} Learning the lifting operator $\lift$ accelerates model recovery.}
    \label{fig:dense-lift-acc}
\end{figure}

\begin{figure}
\vspace{-3mm}
    \centering 
    \begin{subfigure}{0.7\linewidth}
        \centering
        \includegraphics[width=1.0\linewidth]{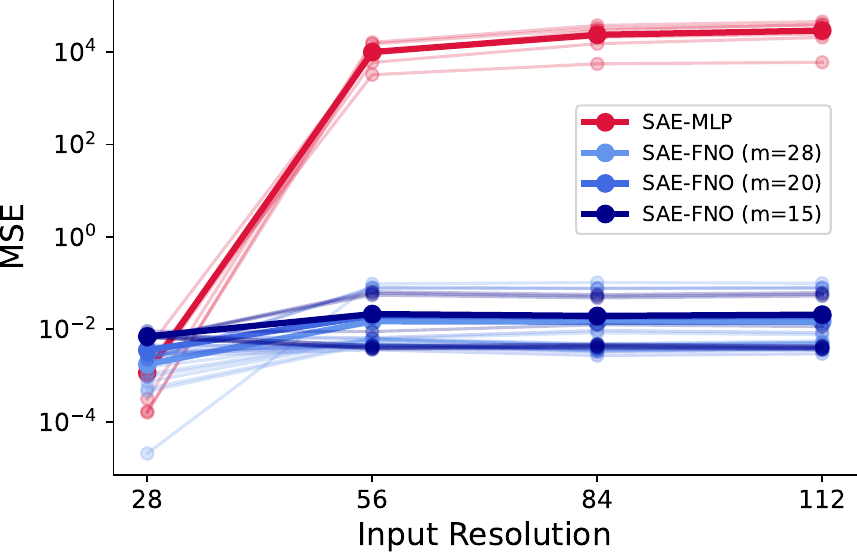}
        \vspace{-4.8mm}
    \end{subfigure}
    \vspace{-2mm}
    \caption{\textbf{SAE-FNO vs. SAE as a function of input discretization}. SAE-NO can successfully infer the underlying representations that allow it to reconstruct data at varying resolution beyond the original trained resolution of $28$. The figure shows MSE on the reconstruction. Transparent lines are individual runs, solid lines show their average.}
    \label{fig:multires}  
\end{figure}

\begin{figure}
    \centering
    \begin{subfigure}{0.47\textwidth}
        \includegraphics[width=0.48\linewidth]{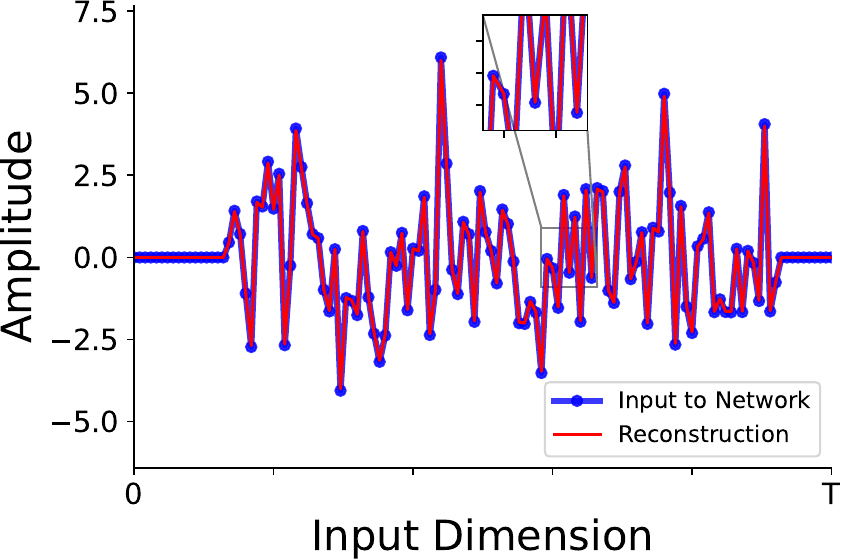}
        \hfill
        \includegraphics[width=0.47\linewidth]{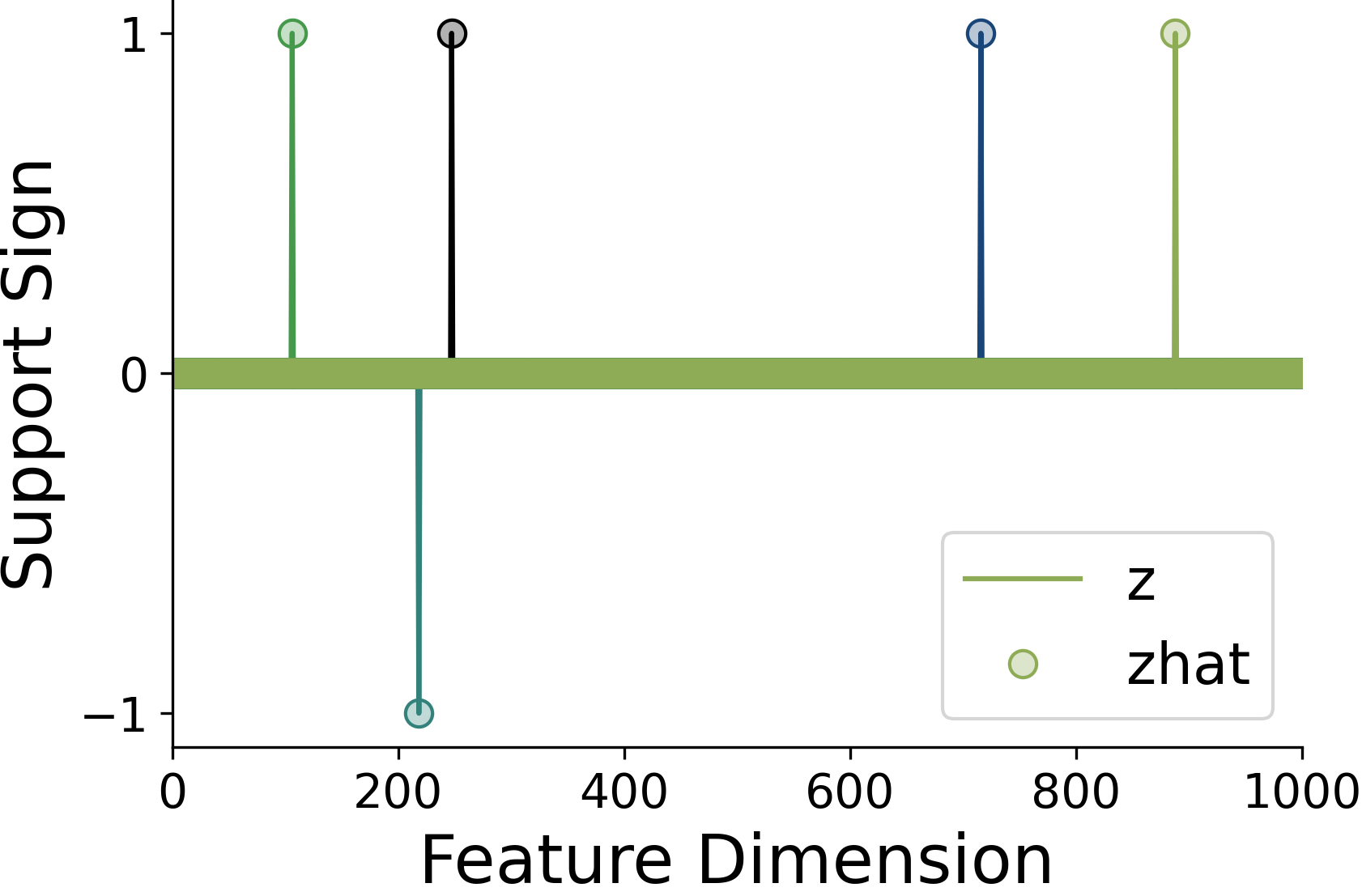}
        \caption{SAE-FNO (Original rate)}
    \end{subfigure}
    \hfill
    \begin{subfigure}{0.47\textwidth}
        \includegraphics[width=0.48\linewidth]{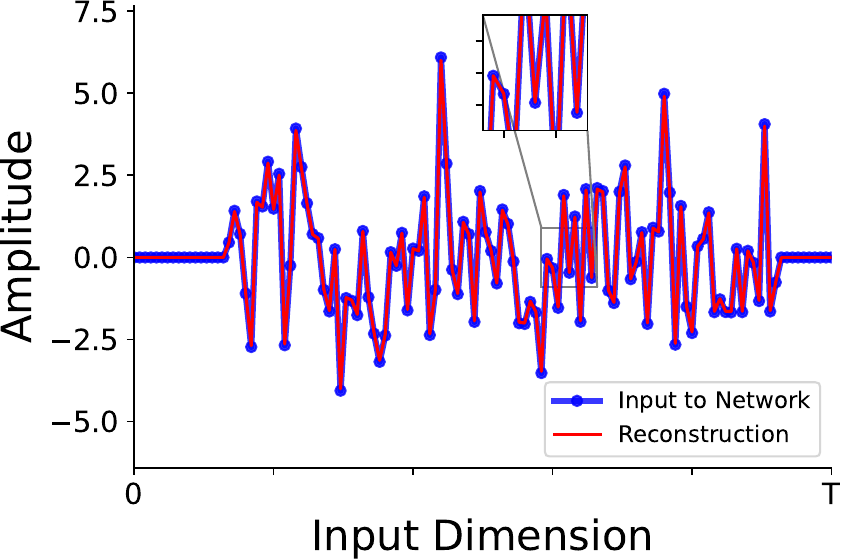}
        \hfill
        \includegraphics[width=0.48\linewidth]{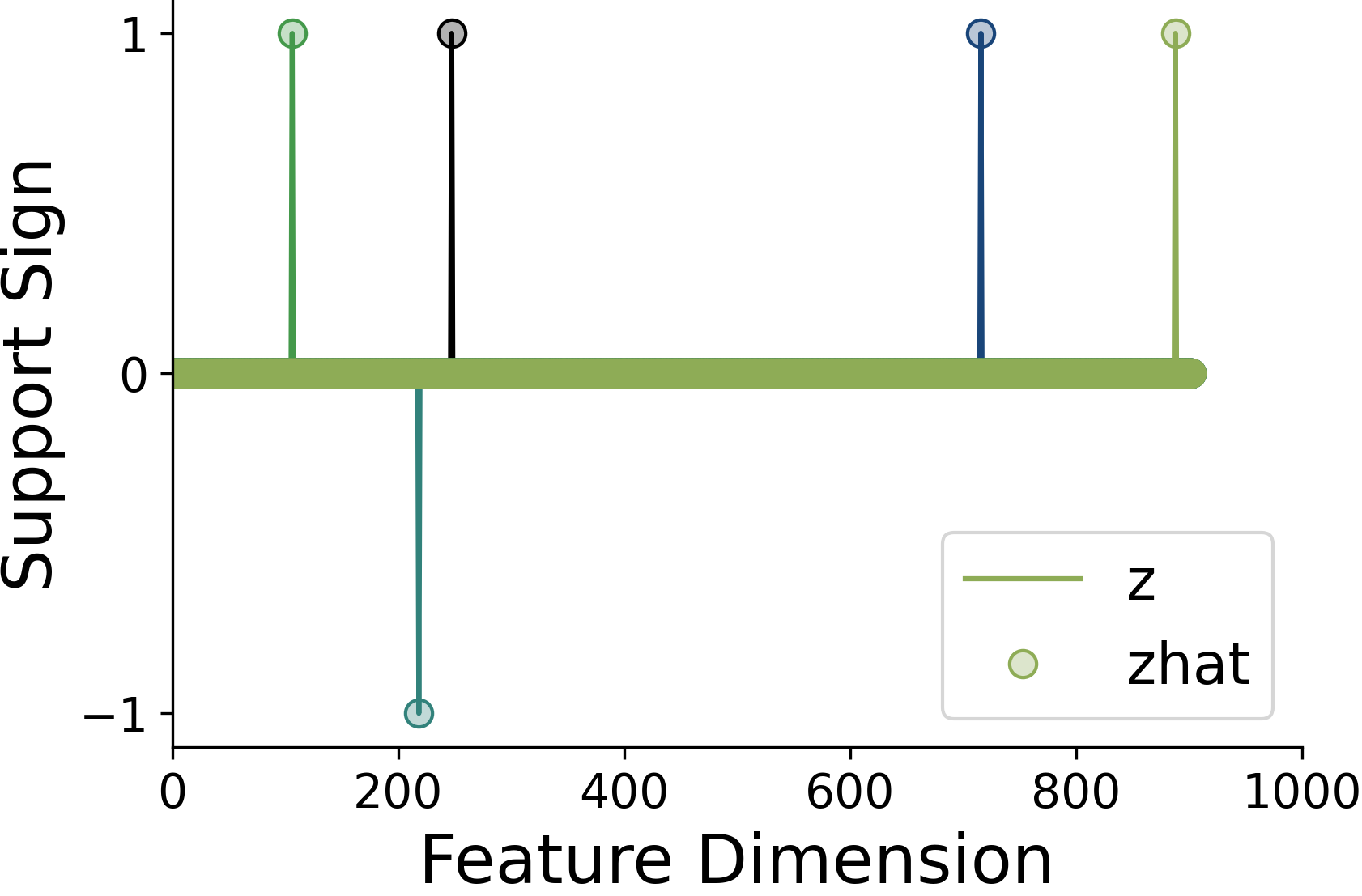}
        \caption{SAE-CNN (Original rate)}
    \end{subfigure}
    \hfill
    \begin{subfigure}{0.48\textwidth}
        \includegraphics[width=0.48\linewidth]{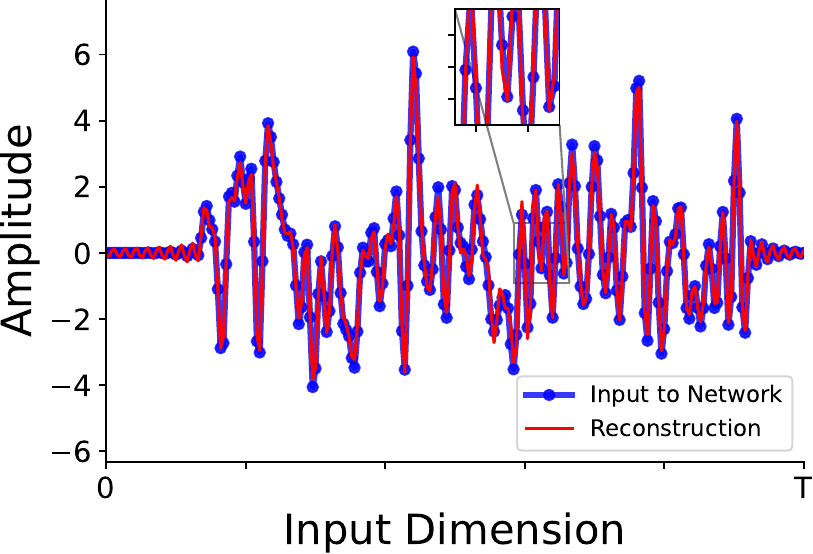}
        \hfill
        \includegraphics[width=0.48\linewidth]{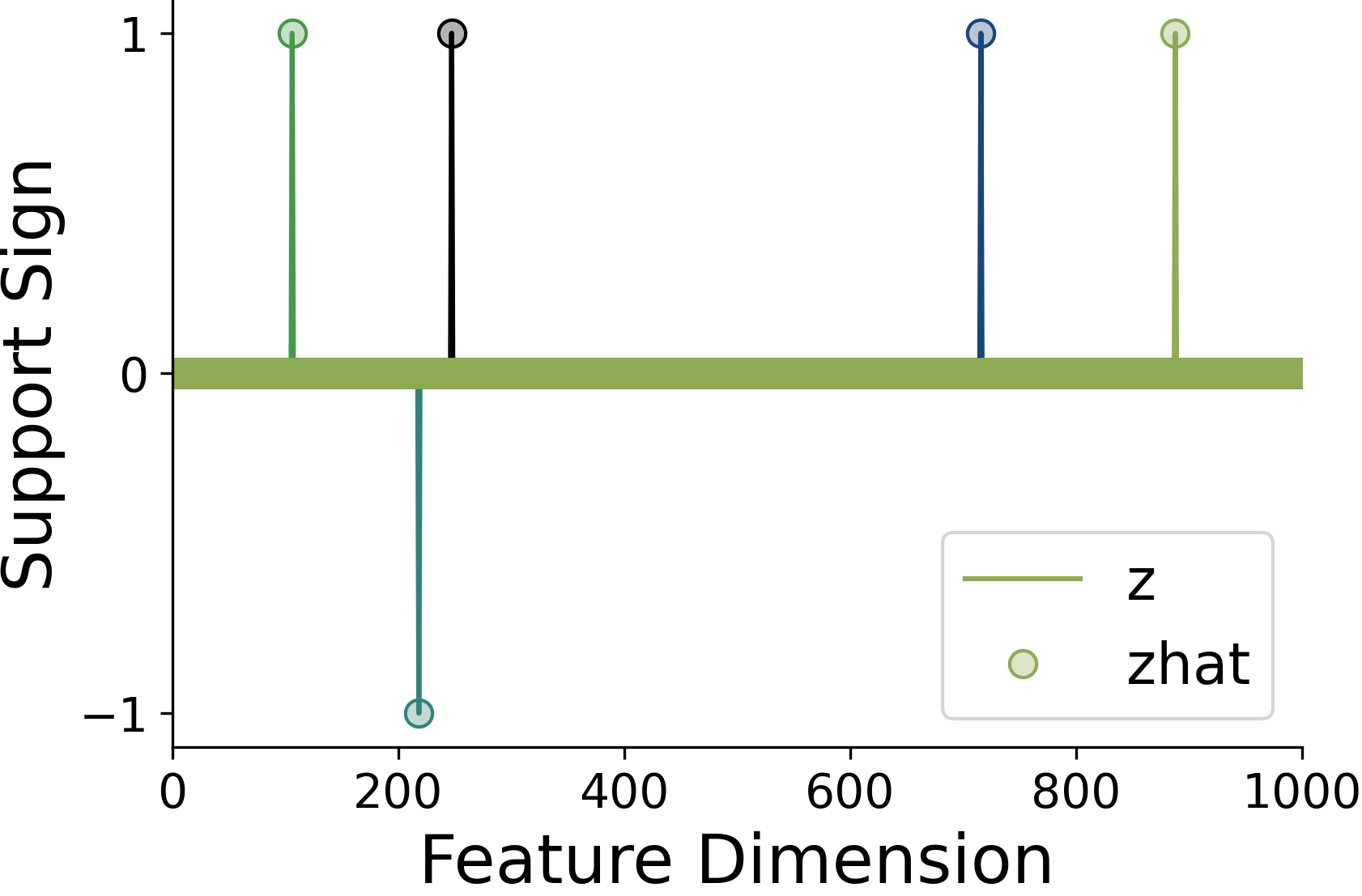}
        \caption{SAE-FNO ($2\times$ Upsampling rate)}
    \end{subfigure}
    \hfill
    \begin{subfigure}{0.48\textwidth}
        \includegraphics[width=0.48\linewidth]{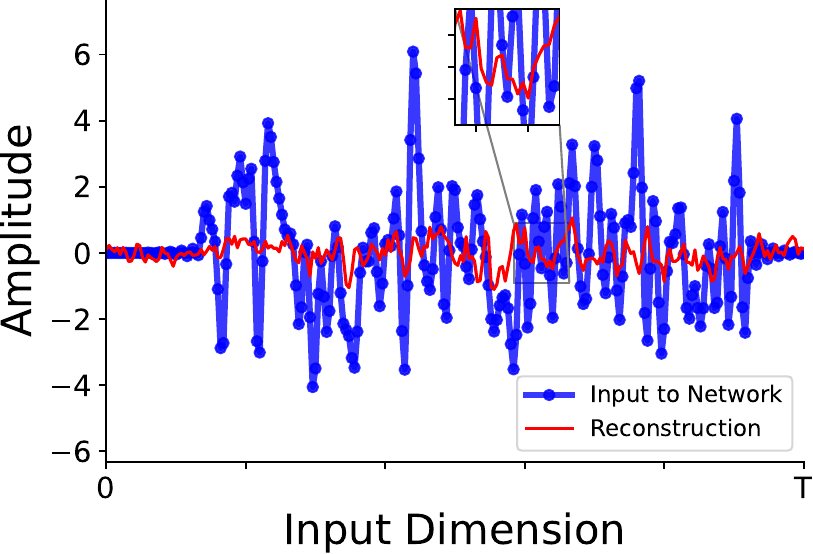}
        \hfill
        \includegraphics[width=0.48\linewidth]{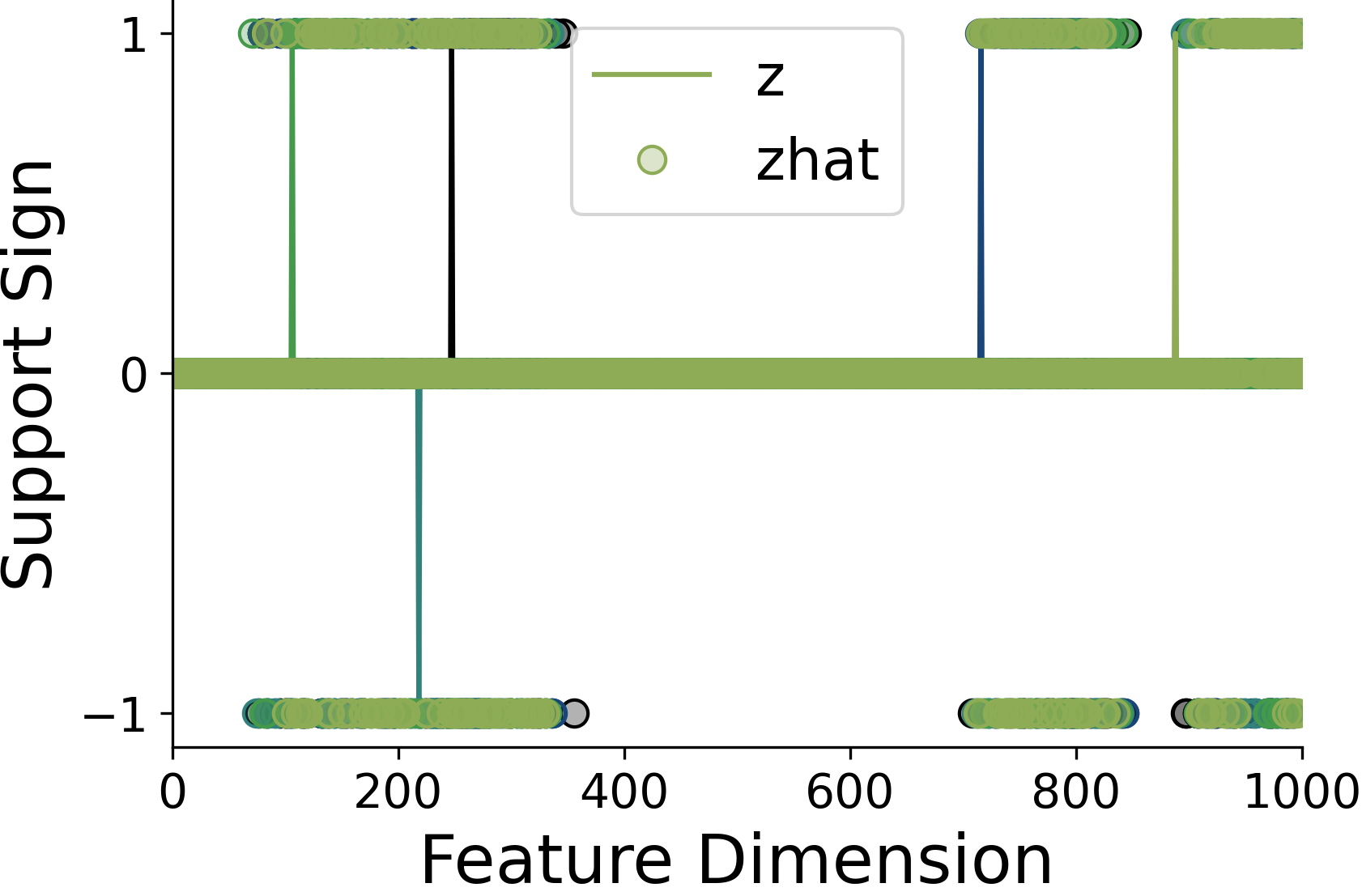}
        \caption{SAE-CNN ($2\times$ Upsampling rate)}
    \end{subfigure}
    \hfill
    \begin{subfigure}{0.48\textwidth}
        \includegraphics[width=0.48\linewidth]{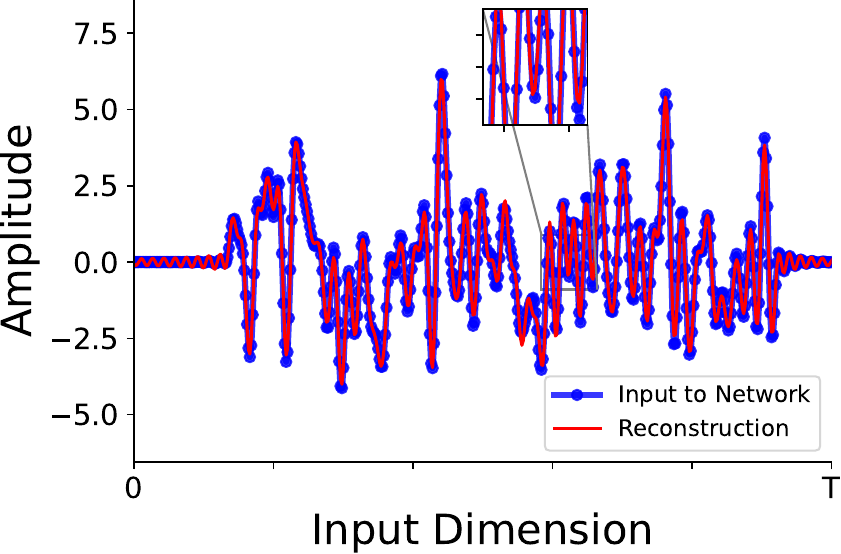}
        \hfill
        \includegraphics[width=0.48\linewidth]{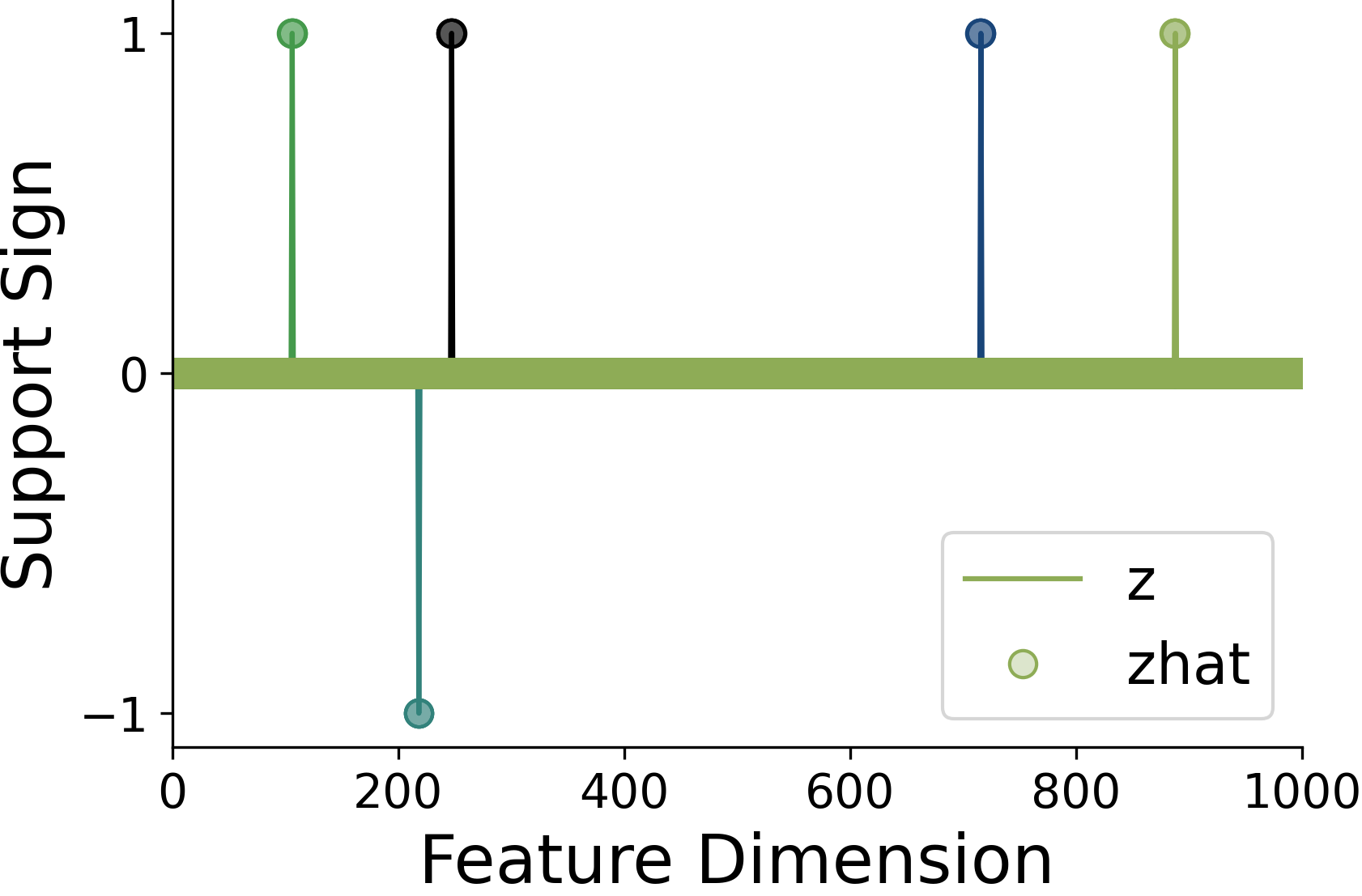}
        \caption{SAE-FNO ($4\times$ Upsampling rate)}
    \end{subfigure}
    \hfill
    \begin{subfigure}{0.48\textwidth}
        \includegraphics[width=0.48\linewidth]{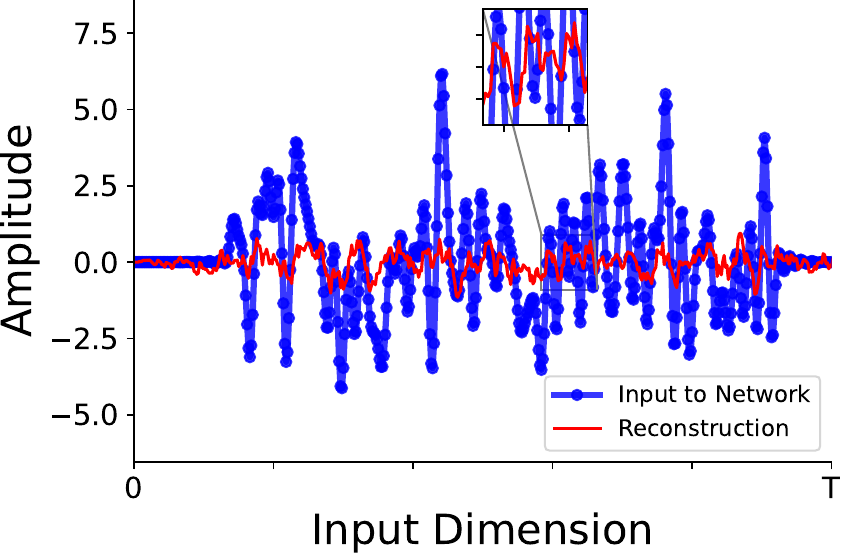}
        \hfill
        \includegraphics[width=0.48\linewidth]{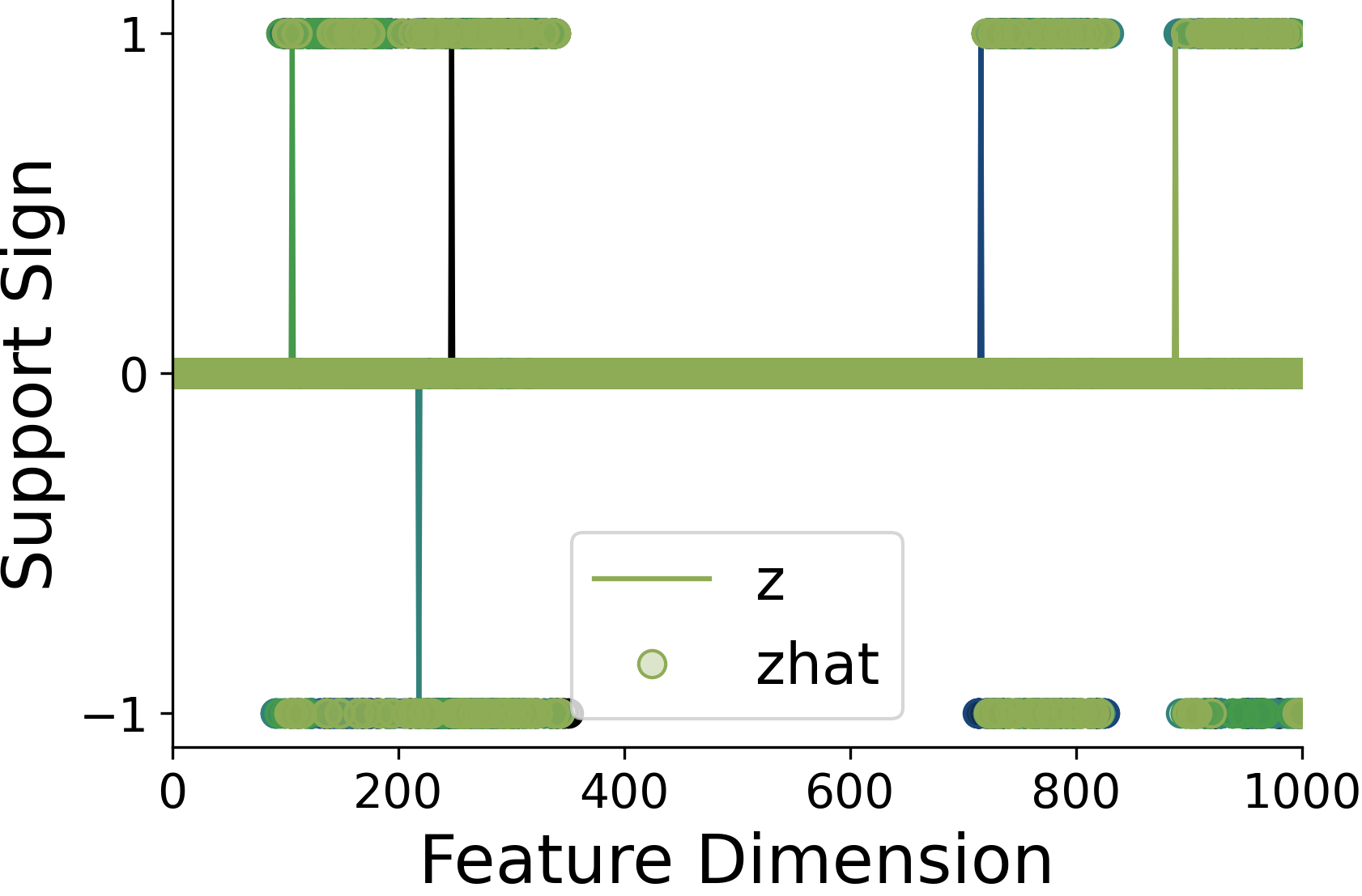}
        \caption{SAE-CNN ($4\times$ Upsampling rate)}
    \end{subfigure}
    \hfill
    \begin{subfigure}{0.48\textwidth}
        \includegraphics[width=0.48\linewidth]{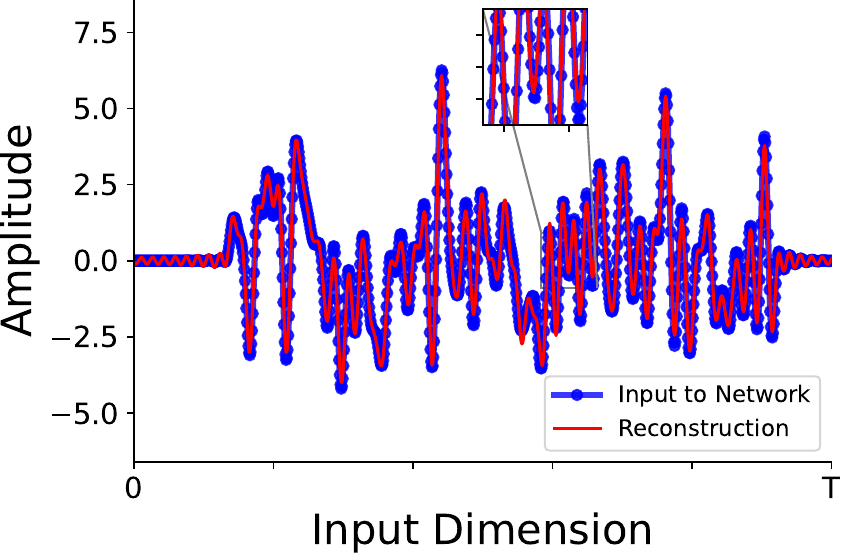}
        \hfill
        \includegraphics[width=0.48\linewidth]{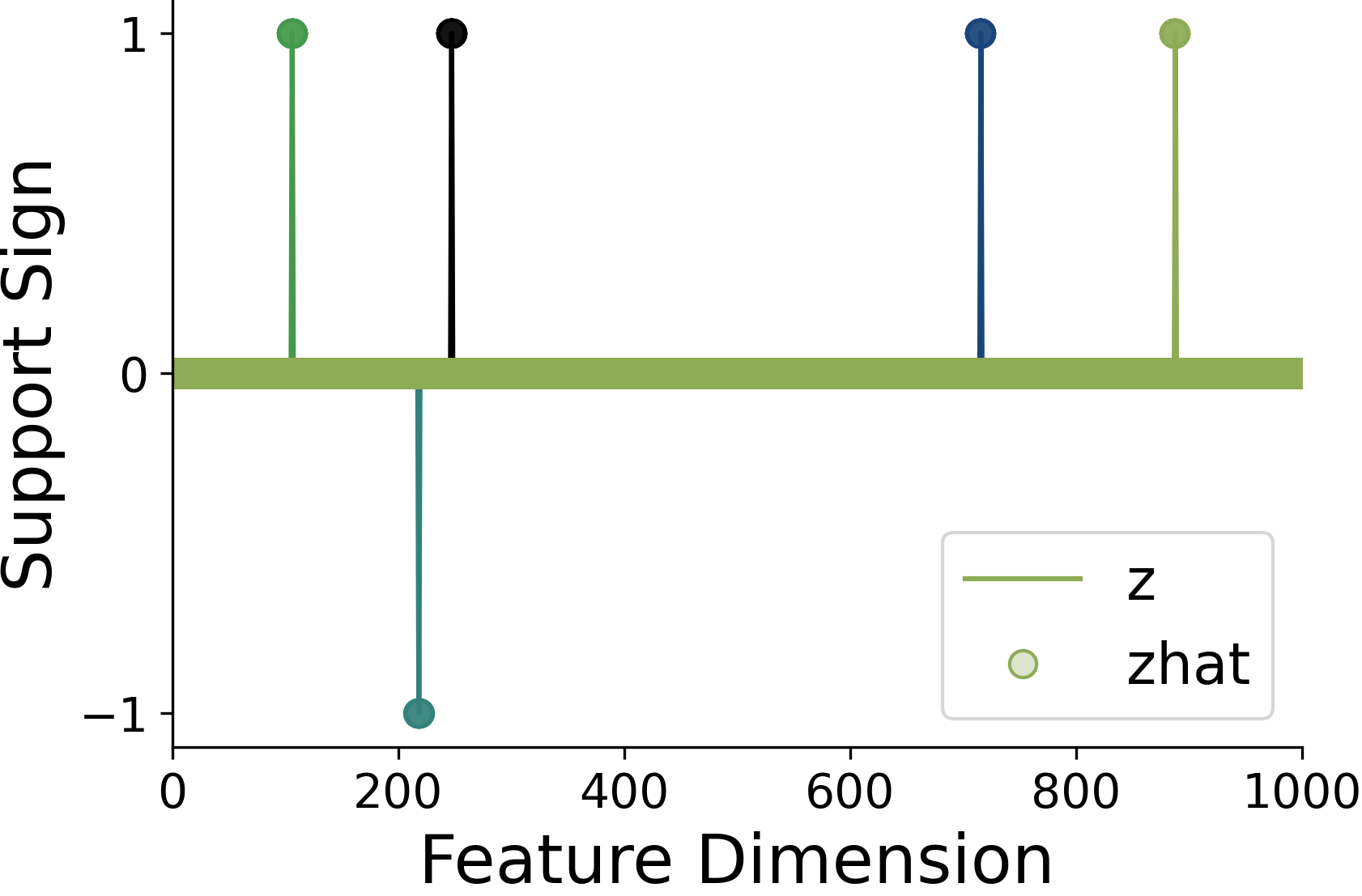}
        \caption{SAE-FNO ($8\times$ Upsampling rate)}
    \end{subfigure}
    \hfill
    \begin{subfigure}{0.48\textwidth}
        \includegraphics[width=0.48\linewidth]{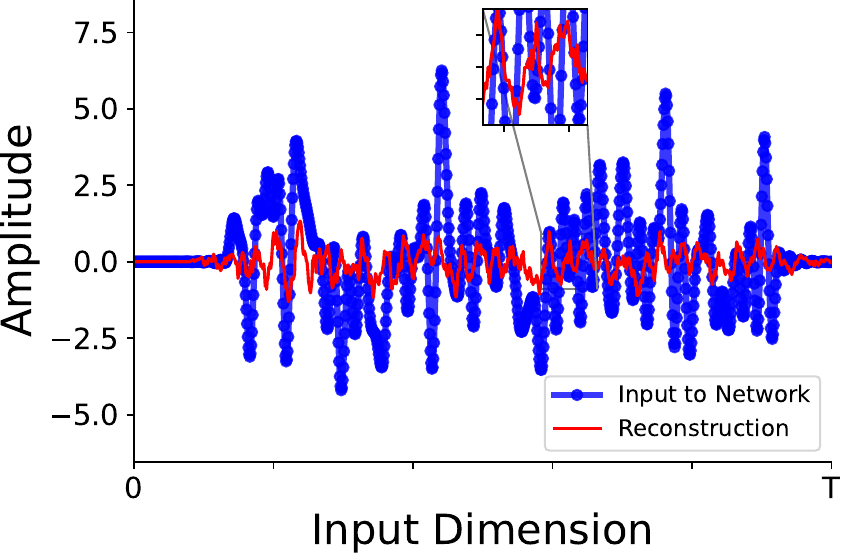}
        \hfill
        \includegraphics[width=0.48\linewidth]{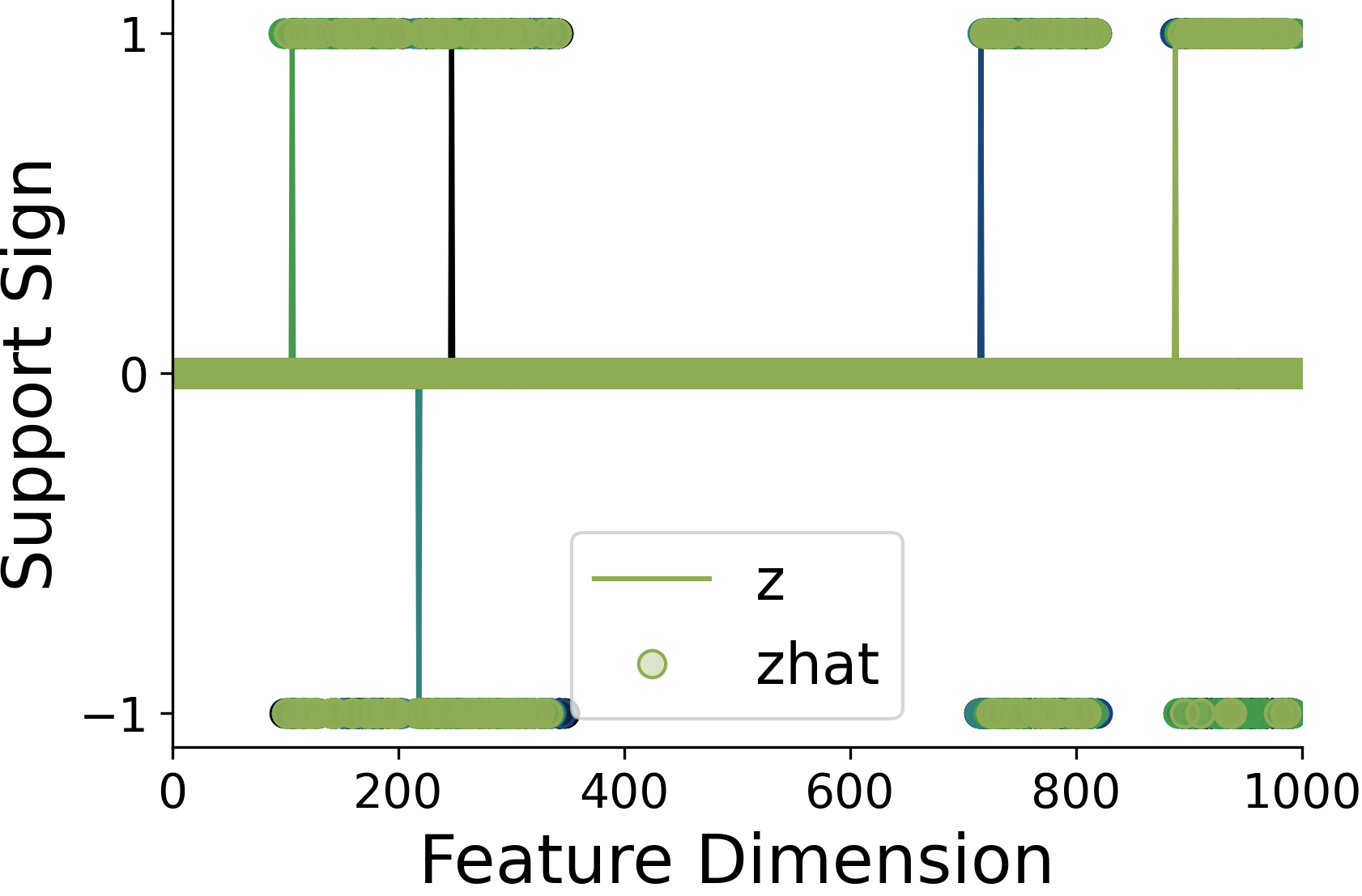}
        \caption{SAE-CNN ($8\times$ Upsampling rate)}
    \end{subfigure}
    
    \caption{\textbf{Discretization Generalization of SAE-FNO.} The underlying data structure has five convolutional concepts along with a $5$-sparse code (i.e., $1$-sparse representation for each concept). Each subplot shows (left) the input data and its reconstruction, and (right) the support sign (active sparse codes). We evaluate the ability of SAE-FNO (left column) and SAE-CNN (right column) to infer sparse representations and reconstruct data across varying discretization levels. Each row represents a different upsampling rate ($1 \times$ to $8 \times$ applied to the input signal). For each panel, the left plot shows the spatial-domain signal reconstruction (red) against the input to the network (blue), and the right plot shows the inference of 1-sparse code supports across 5 kernels. (a-b) Original resolution ($1\times$): Baseline performance at training resolution. Both models accurately reconstruct the signal and recover the correct support. (c-h) Higher upsampling rates ($2\times$, $4\times$, $8\times$): As spatial resolution increases, SAE-FNO maintains accurate reconstruction and code support inference. In contrast, SAE-CNN fails to generalize; its support inference collapses, leading to poor reconstruction. The SAE-FNO's function space parameterizations enables this generalization across discretizations.}
    \label{fig:simulated-fno-discretization-invariance}
\end{figure}

\begin{figure}
    \centering 
    \begin{subfigure}{0.49\textwidth}
        \centering
        \includegraphics[width=1.0\linewidth]{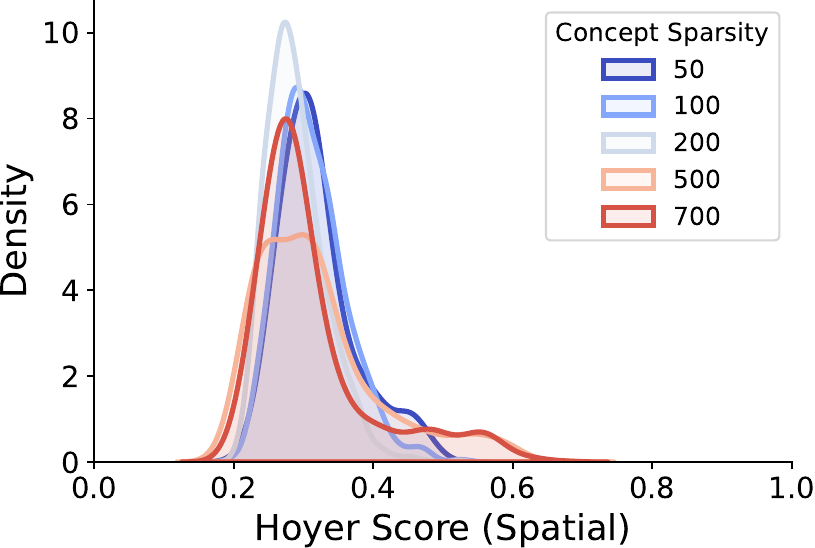}
        \caption{SAE-FNO (modes = 15)}
        \label{fig:dense-lift-acc-dict-err}
    \end{subfigure}
    \begin{subfigure}{0.49\textwidth}
        \centering
        \includegraphics[width=1.0\linewidth]{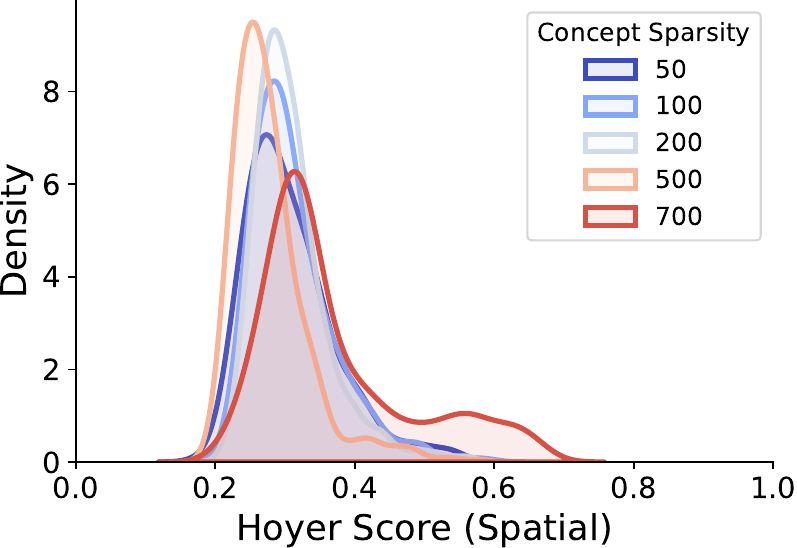}
        \caption{SAE-FNO (modes = 20)}
        \label{fig:dense-lift-acc-rec-loss}
    \end{subfigure}
    \hfill
    \begin{subfigure}{0.49\textwidth}
        \centering
        \includegraphics[width=1.0\linewidth]{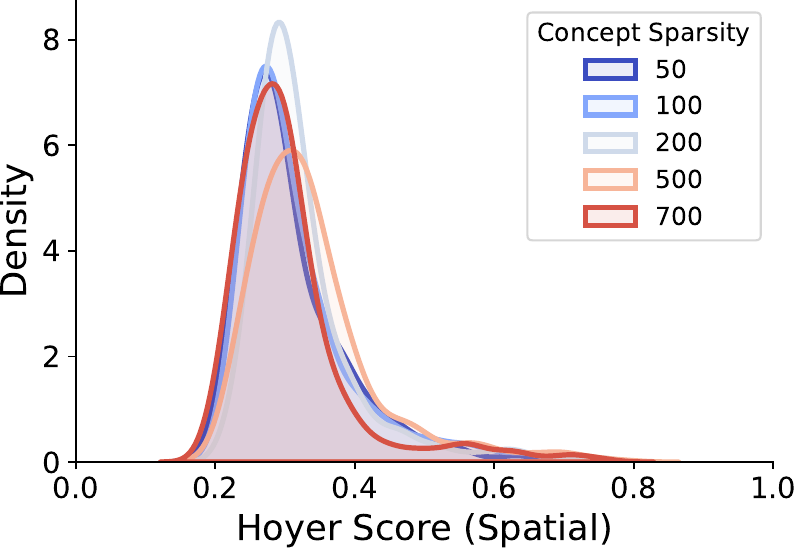}
        \caption{SAE-FNO (modes = 28)}
        \label{fig:dense-lift-acc-orthd}
    \end{subfigure}
    \begin{subfigure}{0.49\textwidth}
        \centering
        \includegraphics[width=1.0\linewidth]{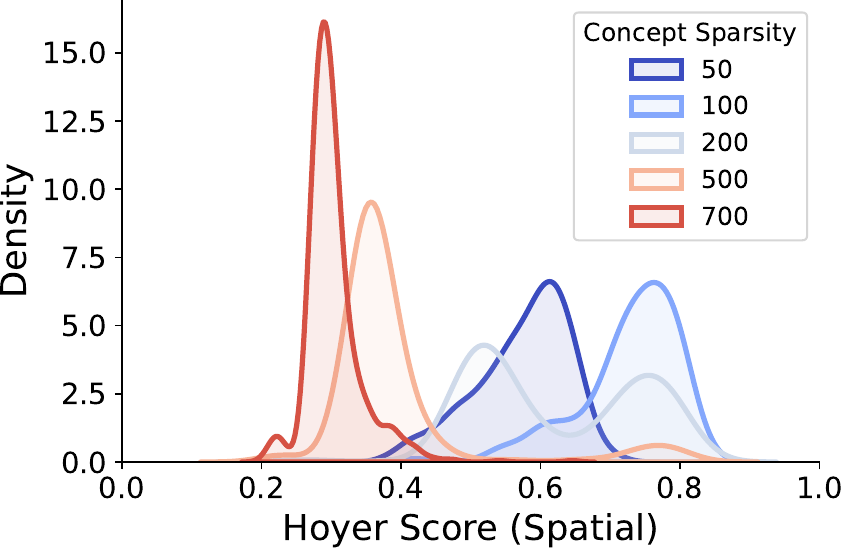}
        \caption{SAE-MLP}
        \label{fig:dense-lift-acc-orthl}
    \end{subfigure}%

    \caption{\textbf{Stability of concept characteristics across concept sparsity levels (MNIST).} We evaluate the stability of learned concepts of SAE-MLP and SAE-FNO (with spatial sparsity 0.007) by computing the distribution of their Hoyer scores, across varying concept sparsity constraints ($k \in [50, 700]$). The y-axis represents the probability density normalized such that the area under the curve equals 1. (a-c) SAE-NO stability: The distribution of both spatial and frequency Hoyer score remains consistent as $k$ varies, regardless of the number of Fourier modes used. (d) SAE-MLP instability: the distributions for SAE-MLP shift drastically with changes in $k$, indicating that the characteristics of learned atoms are highly sensitive to the concept sparsity constraint.}
    \label{fig:mnist-fno-hoyer}
\end{figure}

\begin{figure}
    \centering
    \begin{subfigure}{0.49\textwidth}
        \centering
        \includegraphics[width=1.0\linewidth]{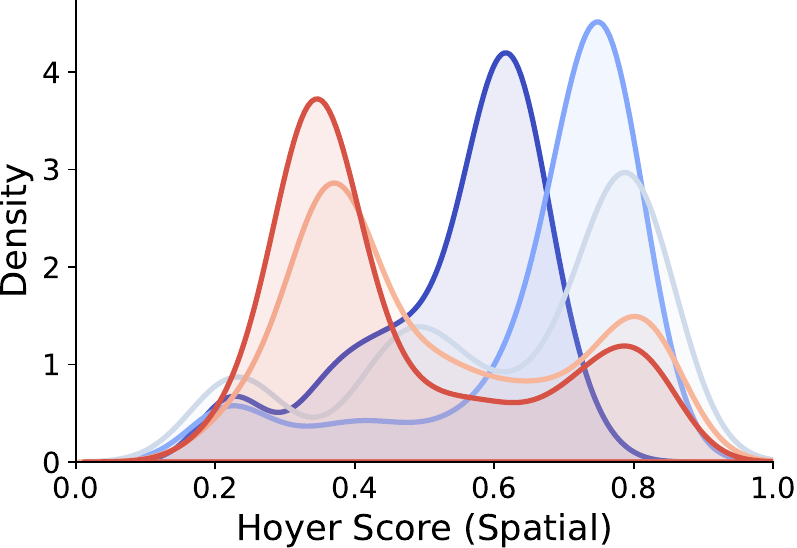}
        \caption{SAE-CNN (kernel size $20\times20$)}
        \label{fig:dense-lift-acc-orthl}
    \end{subfigure}%
    \begin{subfigure}{0.49\textwidth}
        \centering
        \includegraphics[width=1.0\linewidth]{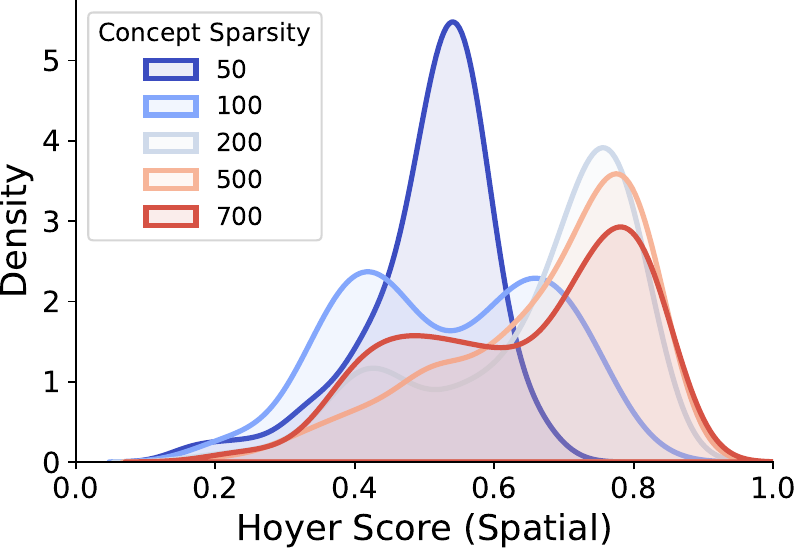}
        \caption{SAE-CNN (kernel size $15\times15$)}
        \label{fig:dense-lift-acc-orthl}
    \end{subfigure}%
    \hfill
    \begin{subfigure}{0.49\textwidth}
        \centering
        \includegraphics[width=1.0\linewidth]{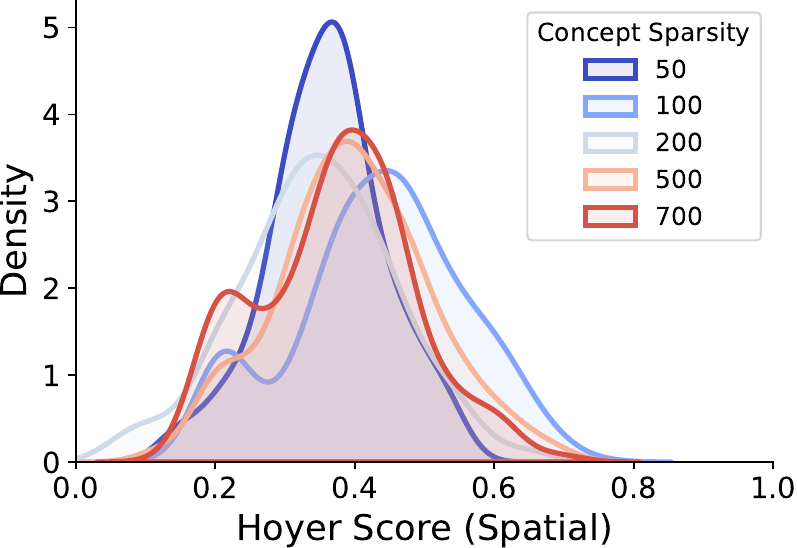}
        \caption{SAE-CNN (kernel size $10\times10$)}
        \label{fig:dense-lift-acc-orthl}
    \end{subfigure}%
    \begin{subfigure}{0.49\textwidth}
        \centering
        \includegraphics[width=1.0\linewidth]{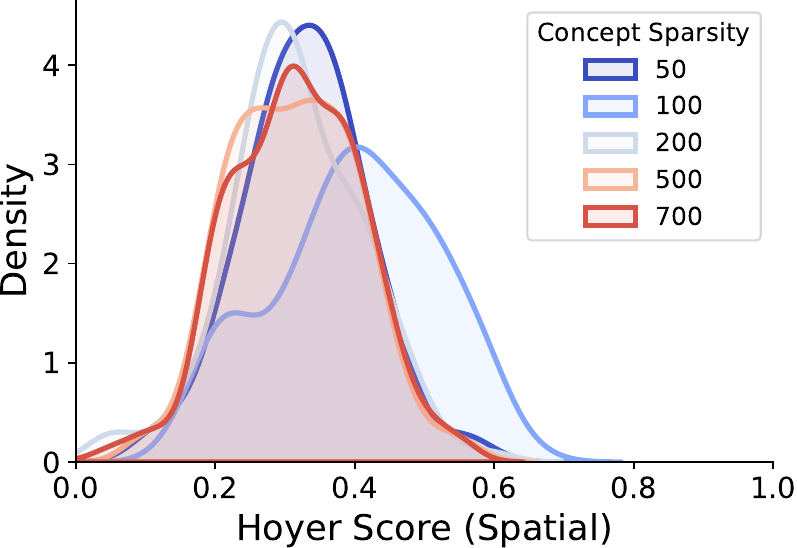}
        \caption{SAE-CNN (kernel size $8\times8$)}
        \label{fig:dense-lift-acc-orthl}
    \end{subfigure}
    
    \caption{\textbf{Effect of kernel size on SAE-CNN concept stability (MNIST). } We examine Hoyer score densities for SAE-CNN architectures with decreasing kernel sizes. As the kernel size decreases from $20\times20$ to $8\times8$, the distributions of atom sparsity become increasingly stable across concept sparsity values. This is expected as restricting the receptive field imposes a stronger inductive bias for locality. Smaller kernels force the network to learn more consistent and localized features. }
    \label{fig:mnist-cnn-hoyer}
\end{figure}

\begin{figure}
    \centering
    \begin{subfigure}{0.49\textwidth}
        \centering
        \includegraphics[width=1.\linewidth]{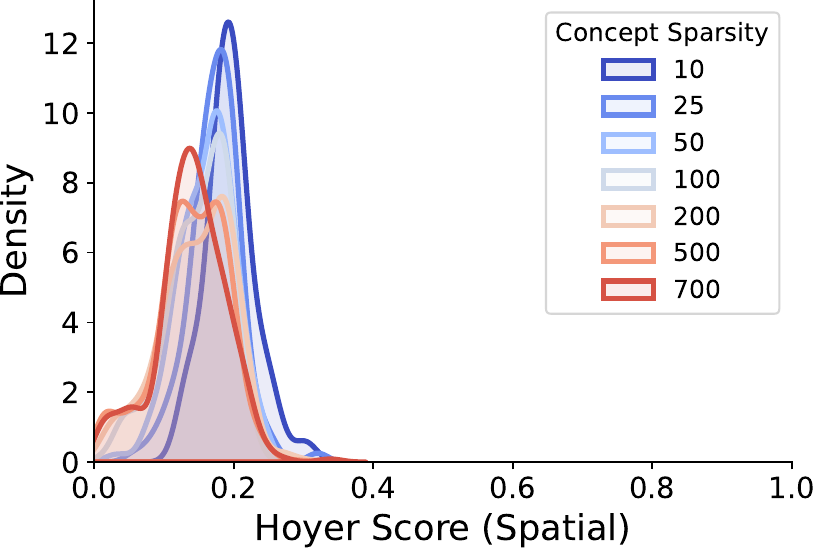}
        \caption{SAE-FNO (modes = 4)}
    \end{subfigure}%
    \begin{subfigure}{0.49\textwidth}
        \centering
        \includegraphics[width=1.0\linewidth]{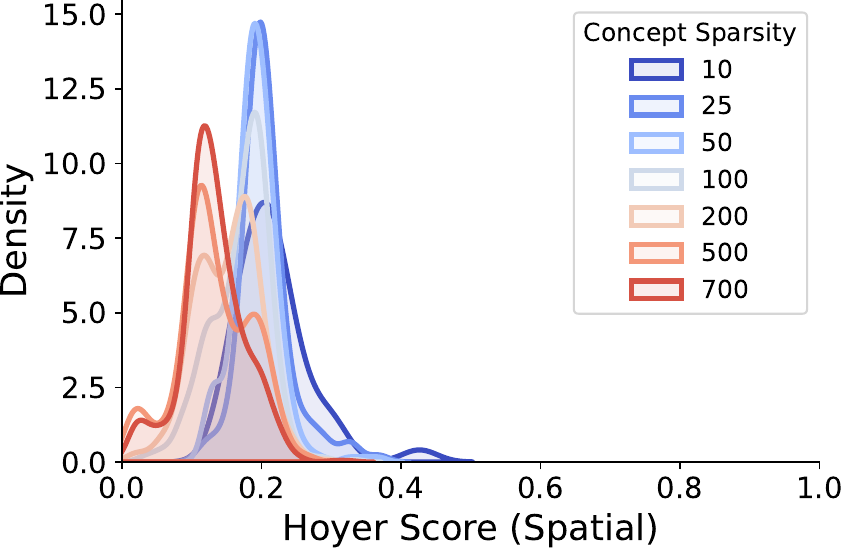}
        \caption{SAE-FNO (modes = 6)}
    \end{subfigure}%
    \hfill
    \begin{subfigure}{0.49\textwidth}
        \centering
        \includegraphics[width=1.0\linewidth]{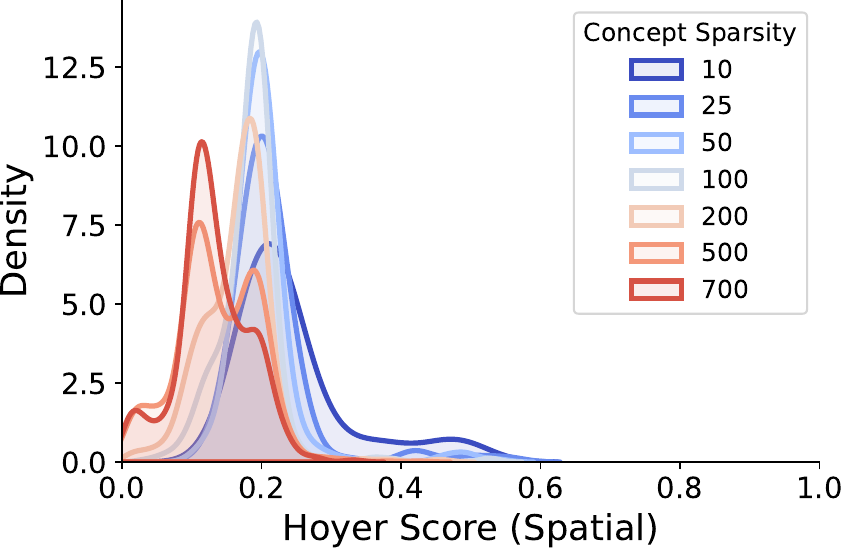}
        \caption{SAE-FNO (modes = 8)}
    \end{subfigure}%
    \hfill
    \begin{subfigure}{0.49\textwidth}
        \centering
        \includegraphics[width=1.0\linewidth]{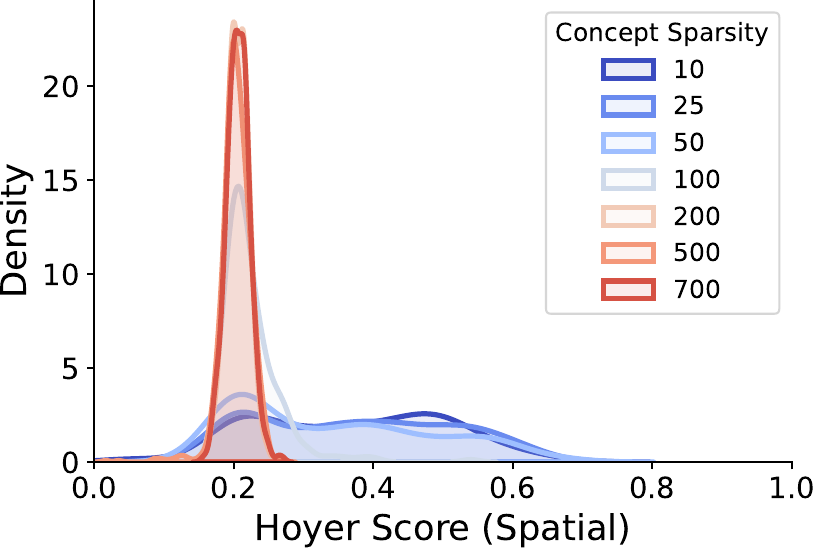}
        \caption{SAE-MLP}
        \label{fig:dense-lift-acc-orthl}
    \end{subfigure}%
    
    \caption{\textbf{Stability of concept characteristics across concept sparsity levels (CIFAR).} CIFAR Hoyer scores for SAE-MLP and SAE-FNO (ss = 0.005, 3-sparse)}
    \label{fig:cifar-fno-hoyer-modes}
\end{figure}

\begin{figure}
    \centering
    \begin{subfigure}{0.49\textwidth}
        \centering
        \includegraphics[width=1.0\linewidth]{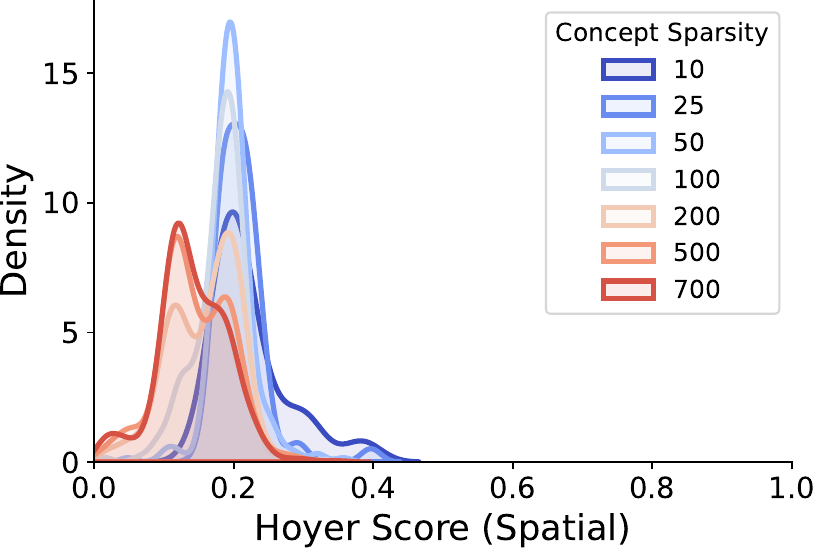}
        \caption{SAE-FNO (6 sparse)}
    \end{subfigure}%
    \begin{subfigure}{0.49\textwidth}
        \centering
        \includegraphics[width=1.0\linewidth]{figures/cifar_hoyer/hoyer_score_overlay_fno_p1000_m6_ss0p05_spatial.pdf}
        \caption{SAE-FNO (3 sparse)}
    \end{subfigure}%
    \hfill
    \begin{subfigure}{0.49\textwidth}
        \centering
        \includegraphics[width=1.0\linewidth]{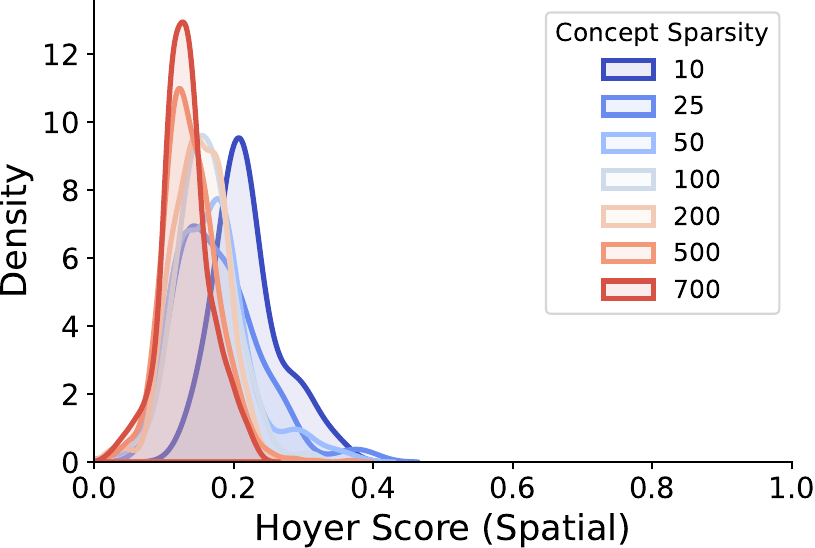}
        \caption{SAE-FNO (1 sparse)}
    \end{subfigure}%
    \begin{subfigure}{0.49\textwidth}
        \centering
        \includegraphics[width=1.0\linewidth]{figures/cifar_hoyer/hoyer_score_overlay_cnn_p1000_ker8x8_spatial.pdf}
        \caption{SAE-MLP}
        \label{fig:dense-lift-acc-orthl}
    \end{subfigure}%
    
    \caption{\textbf{Concept characterization in CIFAR.} Hoyer scores for SAE-MLP and SAE-FNO (modes = 6).}
    \label{fig:cifar-fno-hoyer-ss}
\end{figure}

\begin{figure}
    \centering
    \begin{subfigure}{0.329\textwidth}
        \centering
        \includegraphics[width=\linewidth]{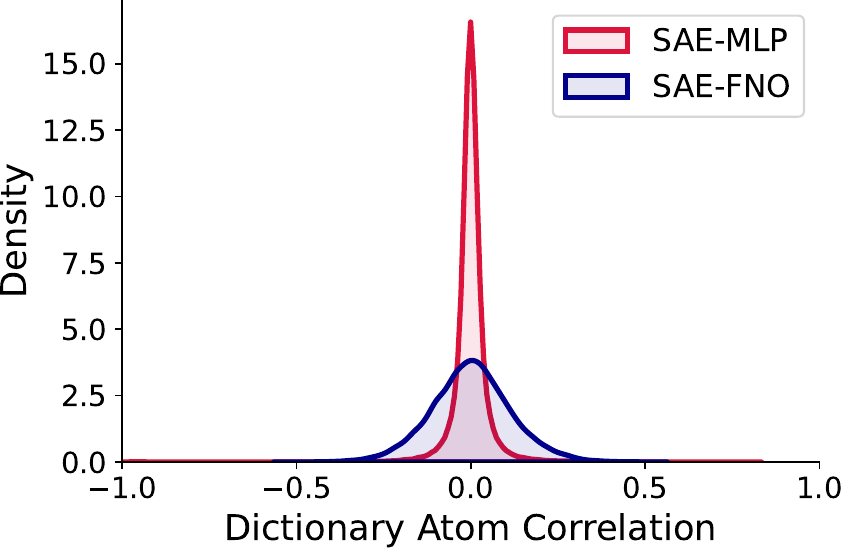}
        \caption{$k = 50$}
    \end{subfigure}
    \begin{subfigure}{0.329\textwidth}
        \centering
        \includegraphics[width=\linewidth]{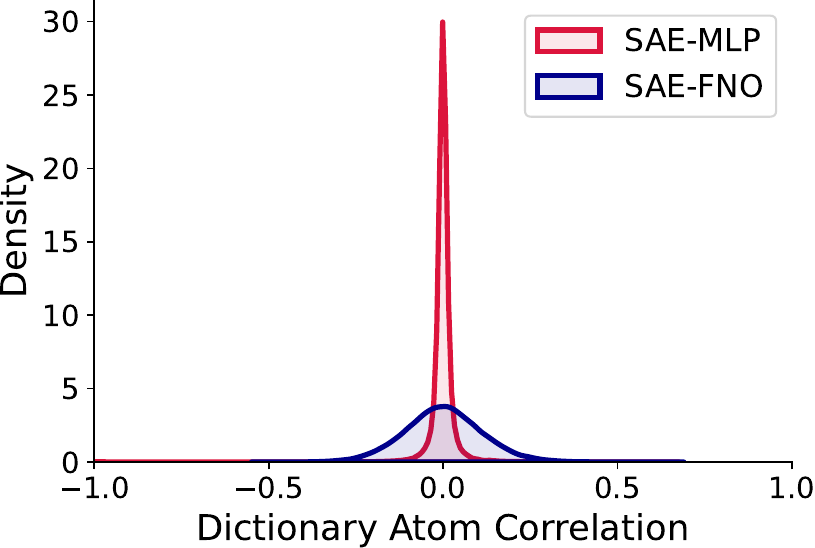}
        \caption{$k = 100$}
    \end{subfigure}
    \begin{subfigure}{0.329\textwidth}
        \centering
        \includegraphics[width=\linewidth]{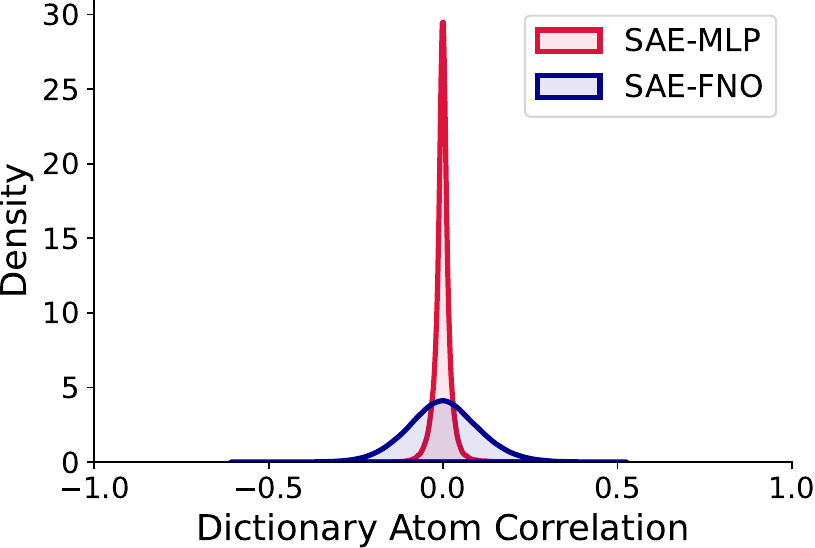}
        \caption{$k = 200$}
    \end{subfigure}
    \hfill
    \begin{subfigure}{0.329\textwidth}
        \centering
        \includegraphics[width=\linewidth]{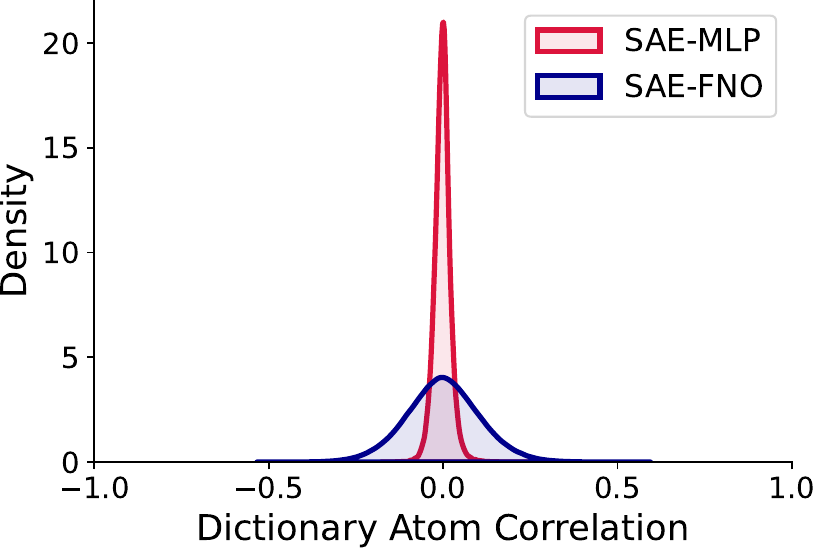}
        \caption{$k = 500$}
    \end{subfigure}
    \begin{subfigure}{0.329\textwidth}
        \centering
        \includegraphics[width=\linewidth]{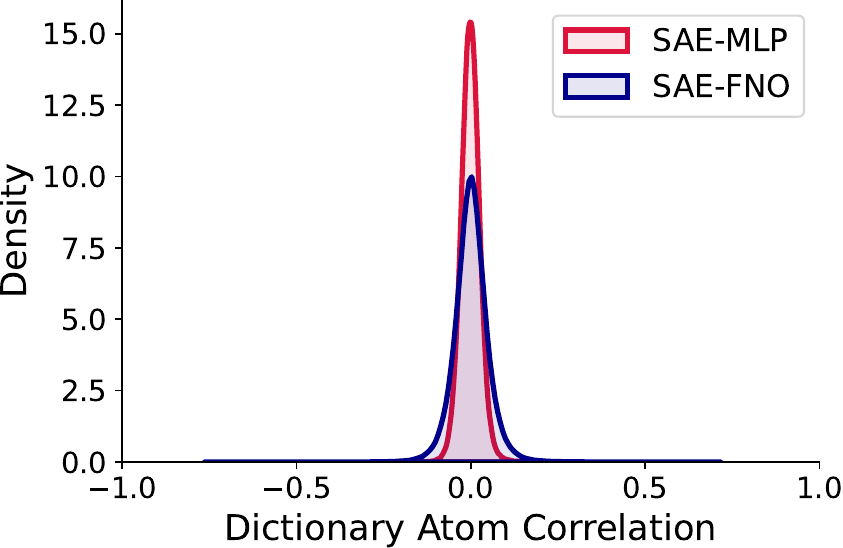}
        \caption{$k = 700$}
    \end{subfigure}
    \begin{subfigure}{0.329\textwidth}
        \centering
        \includegraphics[width=\linewidth]{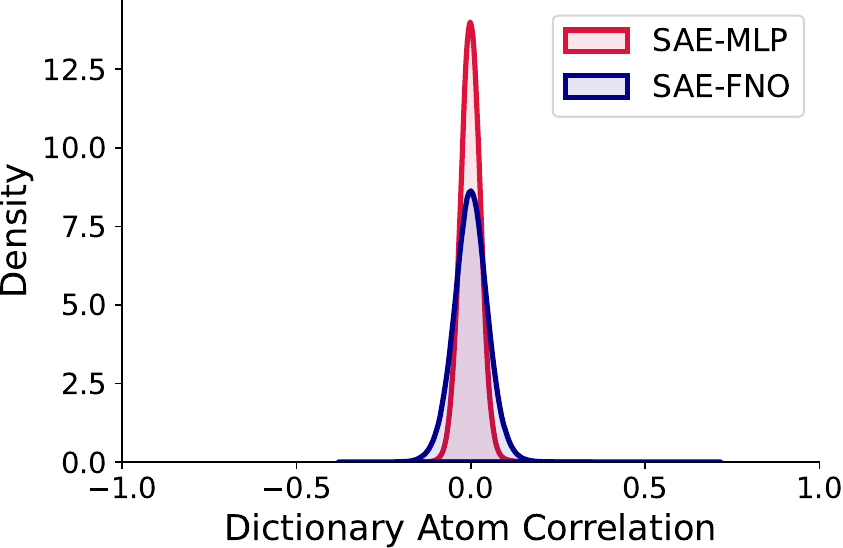}
        \caption{$k = 1000$}
    \end{subfigure}
    \caption{\textbf{SAE-FNO overcomes orthogonality bias and recovers correlated concepts in MNIST.} We compare the distribution of pairwise cosine similarities (correlations) between learned dictionary atoms for SAE-FNO (blue) and SAE-MLP (red) across varying concept sparsity levels ($k$). SAE-FNOs display a broader distribution with heavier tails, indicating a better ability to recover non-orthogonal, correlated concepts.}
    \label{fig:mnist-atom-corr}
\end{figure}

\begin{figure}
    \centering
    \begin{subfigure}{0.329\textwidth}
        \centering
        \includegraphics[width=\linewidth]{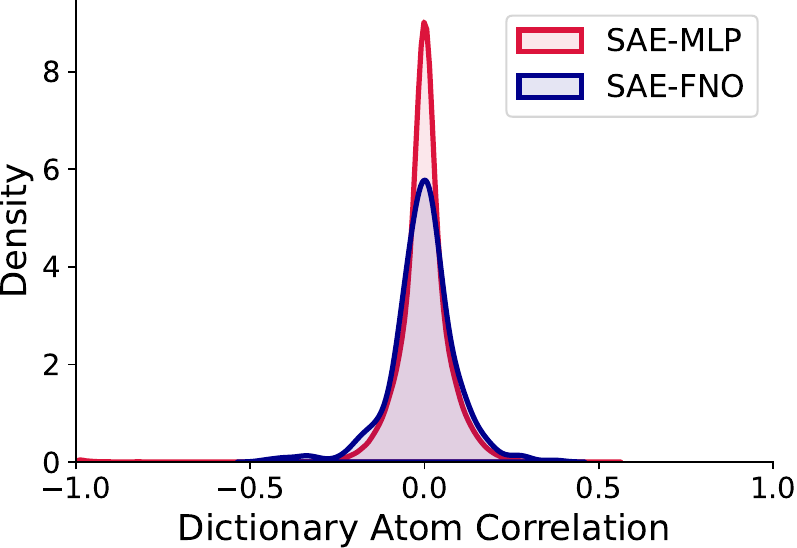}
        \caption{$k = 10$}
    \end{subfigure}
    \begin{subfigure}{0.329\textwidth}
        \centering
        \includegraphics[width=\linewidth]{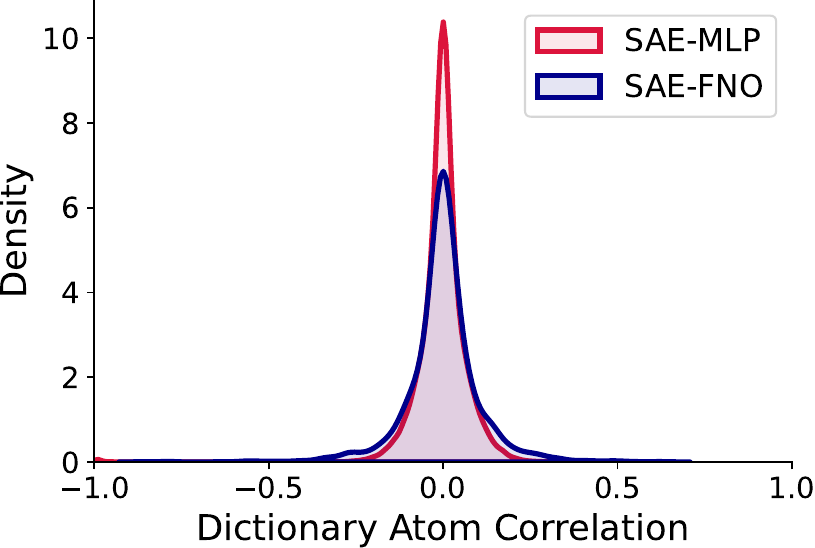}
        \caption{$k = 25$}
    \end{subfigure}
    \begin{subfigure}{0.329\textwidth}
        \centering
        \includegraphics[width=\linewidth]{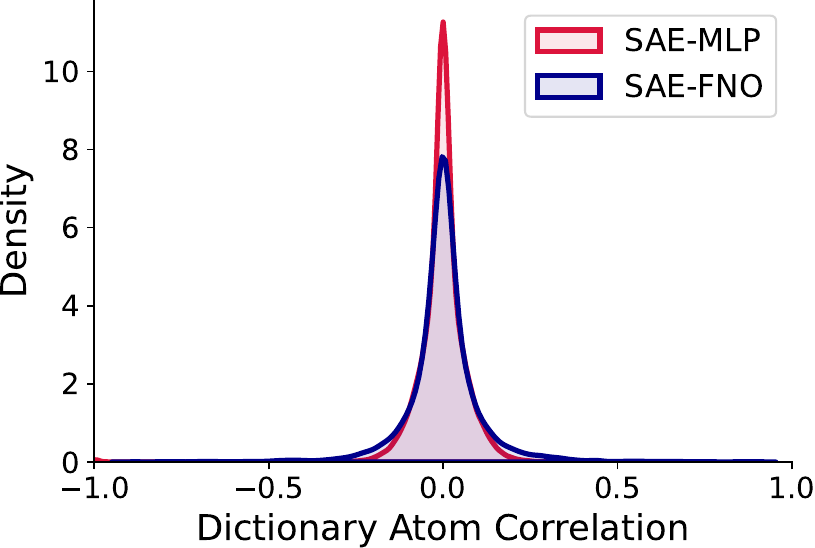}
        \caption{$k = 50$}
    \end{subfigure}
    \hfill
    \begin{subfigure}{0.329\textwidth}
        \centering
        \includegraphics[width=\linewidth]{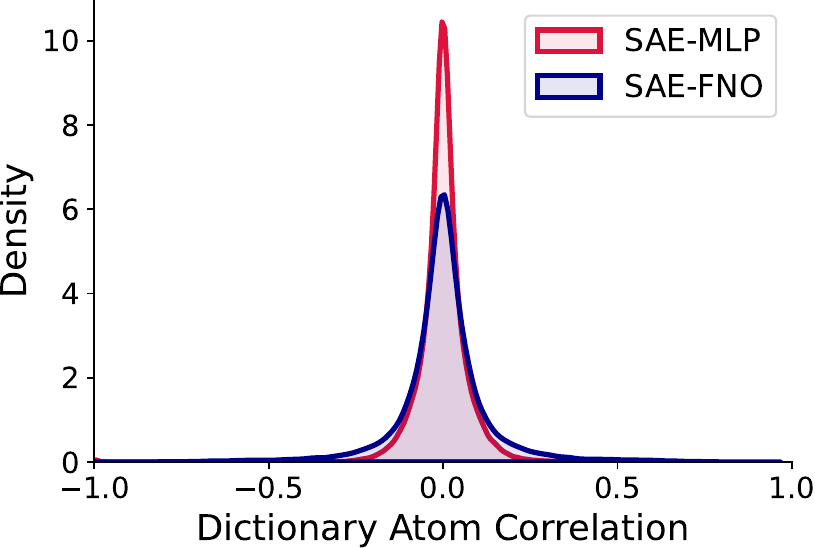}
        \caption{$k = 100$}
    \end{subfigure}
    \begin{subfigure}{0.329\textwidth}
        \centering
        \includegraphics[width=\linewidth]{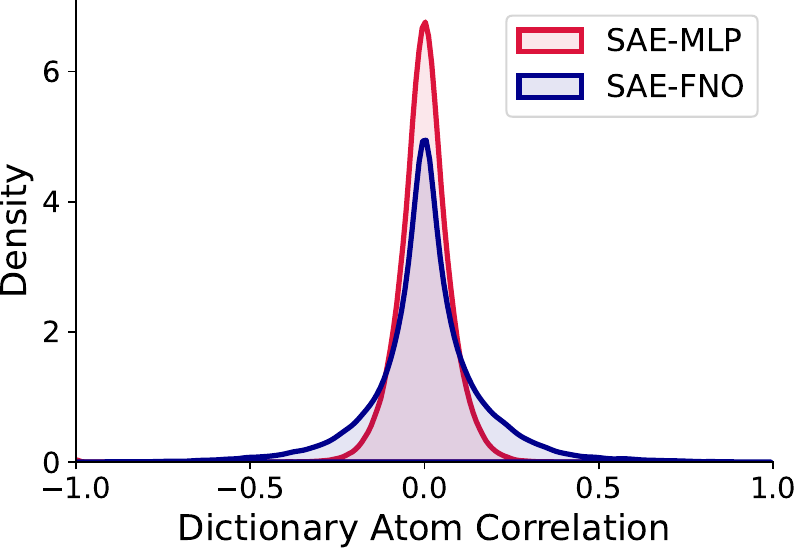}
        \caption{$k = 200$}
    \end{subfigure}
    \begin{subfigure}{0.329\textwidth}
        \centering
        \includegraphics[width=\linewidth]{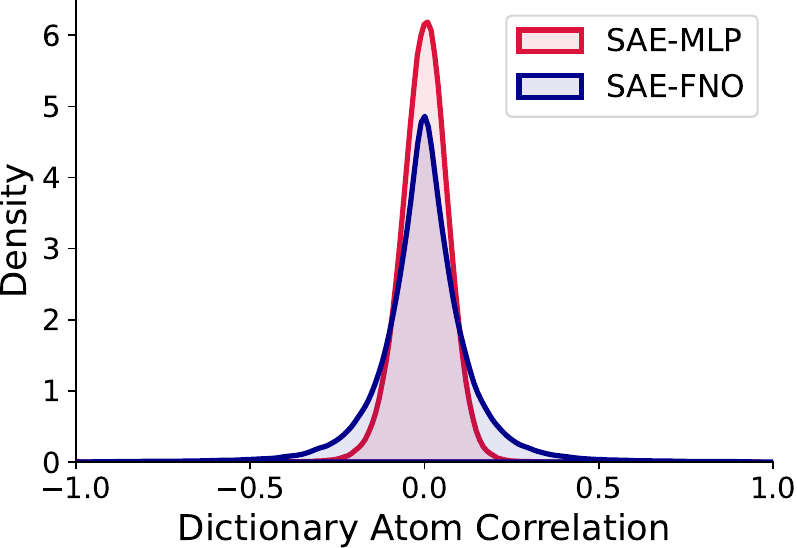}
        \caption{$k = 700$}
    \end{subfigure}
    \caption{\textbf{SAE-FNO overcomes orthogonality bias and recovers correlated concepts in CIFAR.} Similar to the MNIST results, we compare the distribution of pairwise cosine similarities between learned dictionary atoms for SAE-FNO (blue) and SAE-MLP (red) across varying concept sparsity levels ($k$). The SAE-FNO models use 6 Fourier modes and a spatial sparsity of 0.02. SAE-FNO consistently displays a broader correlation distribution with heavier tails compared to SAE-MLP.}
    \label{fig:cifar-atom-corr}
\end{figure}

\begin{figure}[t]
    \centering 
    \begin{subfigure}{0.49\linewidth}
        \centering
        \includegraphics[width=0.98\linewidth]{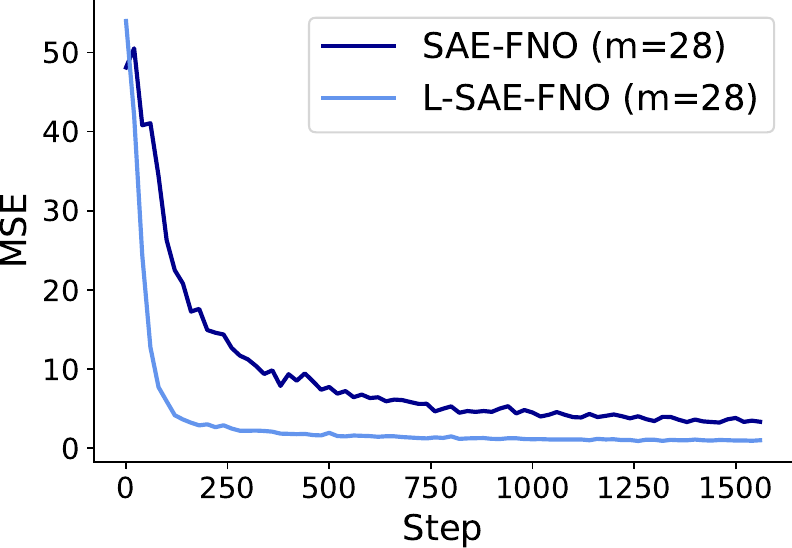}
    \end{subfigure}
    \begin{subfigure}{0.49\linewidth}
        \centering
        \includegraphics[width=0.98\linewidth]{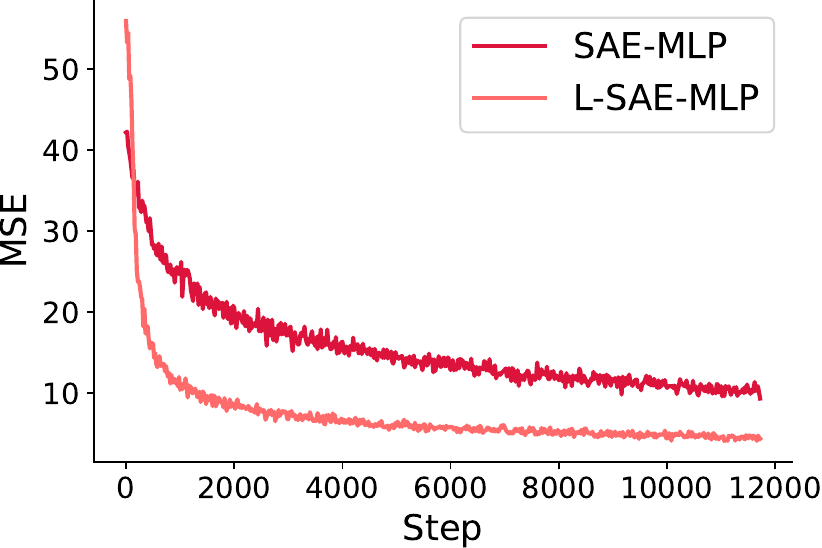}
    \end{subfigure}
    \vspace{-2mm}
    \caption{\textbf{Lifting accelerates and improves performance}. Lifting acts as a preconditioner introducing an implicit acceleration to learning. In real-world setting, this can result in better fit of data by reducing the impact of noise on learned concepts. The impact of lifting also extends to conventional SAEs (left) SAE-FNO vs. Lifted SAE-FNO. (right) SAE vs. Lifted SAE.}
    \vspace{-2mm}
    \label{fig:lifting}
\end{figure}

\begin{figure}
    \centering
    \includegraphics[width=0.4\linewidth]{figures/mnist_lifting/lift_topk_fno_m28_loss.pdf}
    \includegraphics[width=0.4\linewidth]{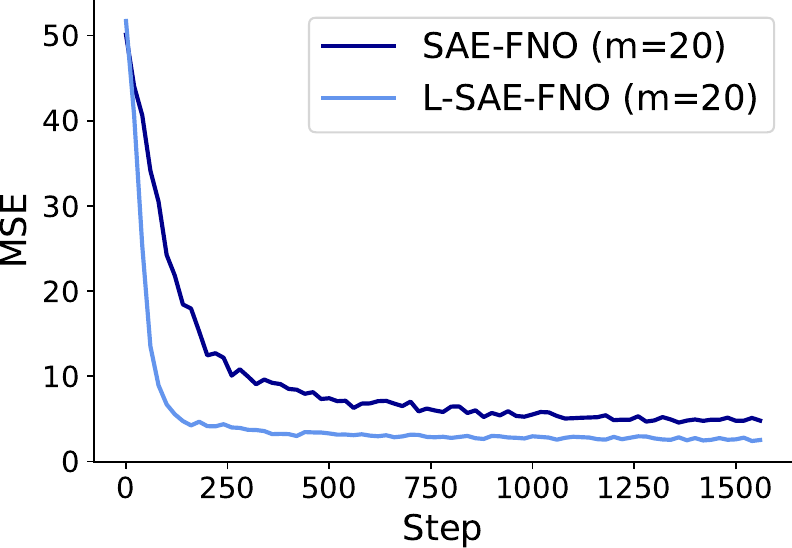}
    \hfill
    \includegraphics[width=0.4\linewidth]{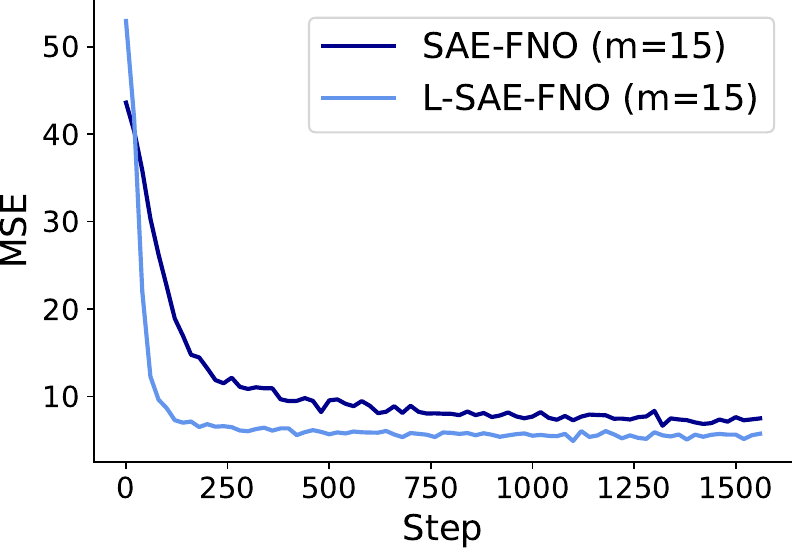}
    \includegraphics[width=0.4\linewidth]{figures/mnist_lifting/lift_topk_cnn_ker28x28_loss.pdf}
    \caption{\textbf{Lifting accelerates learning.} MNIST.}
    \label{fig:mnist-lifting-acc}
\end{figure}

\begin{figure}
    \centering
    \includegraphics[width=0.4\linewidth]{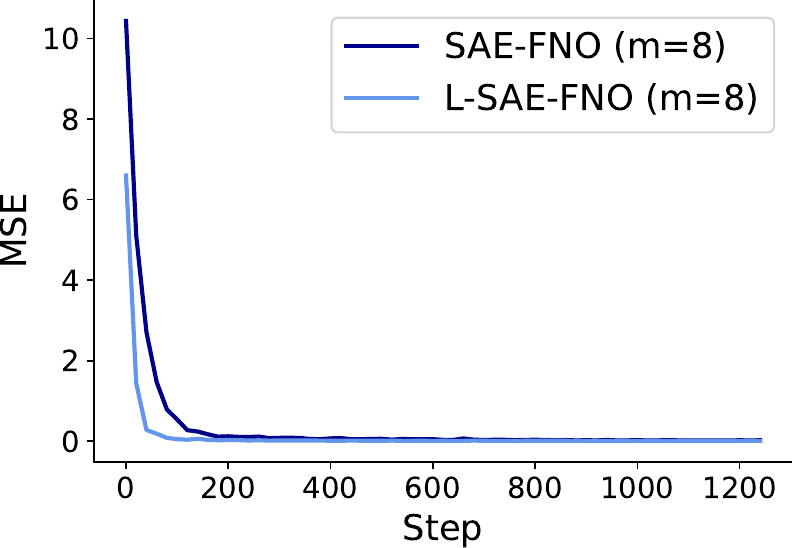}
    \includegraphics[width=0.4\linewidth]{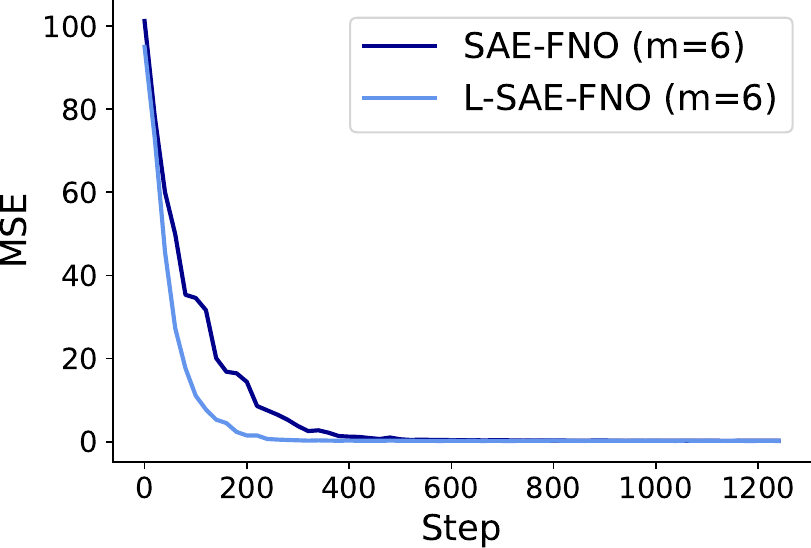}
    \hfill
    \includegraphics[width=0.4\linewidth]{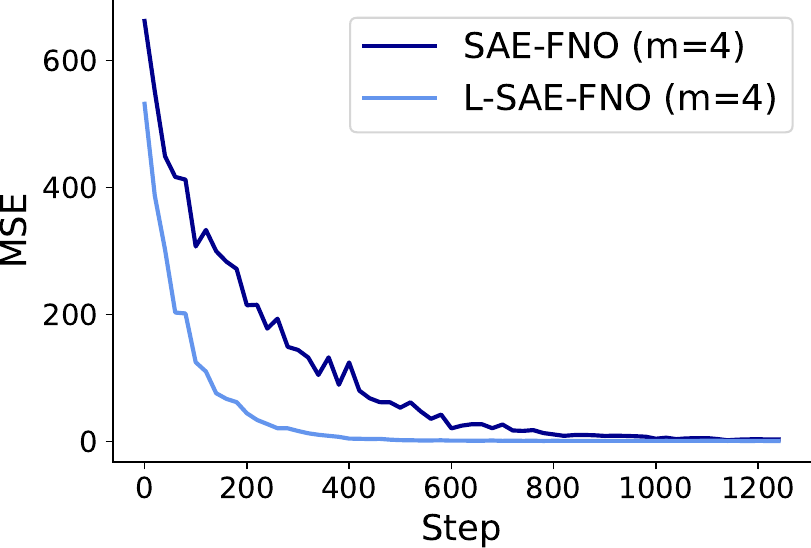}
    \includegraphics[width=0.4\linewidth]{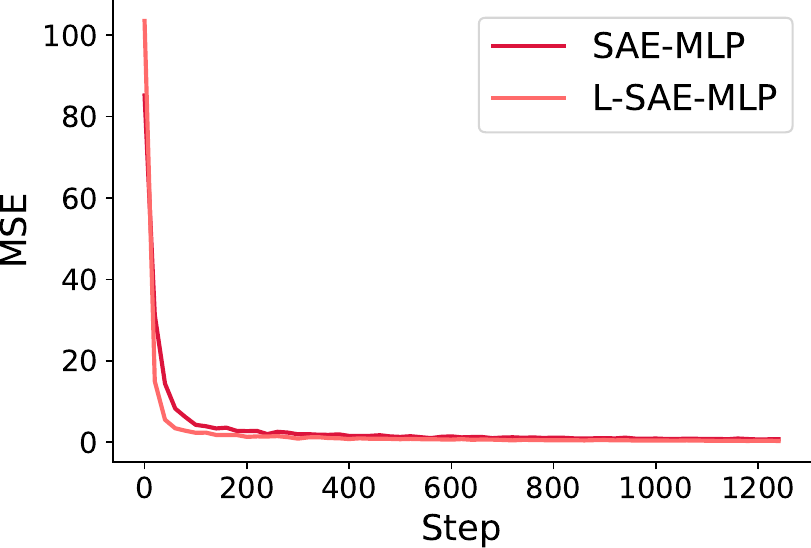}
    \caption{\textbf{Lifting accelerates learning.} CIFAR.}
    \label{fig:cifar-lifting-acc}
\end{figure}

\begin{figure*}[t]
    \centering
    \begin{subfigure}{1.0\textwidth}
        \includegraphics[width=0.32\linewidth]{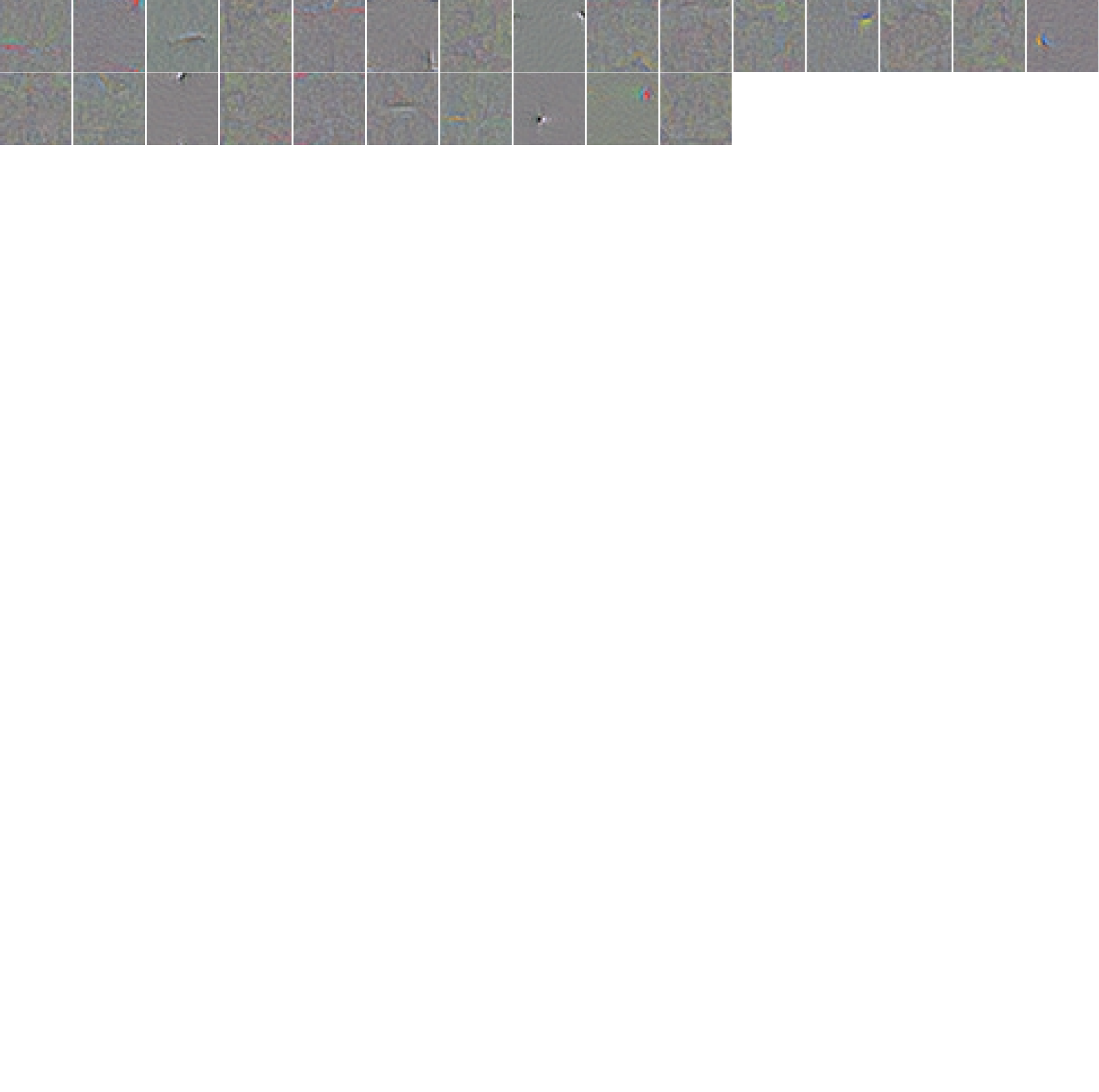}
        \includegraphics[width=0.32\linewidth]{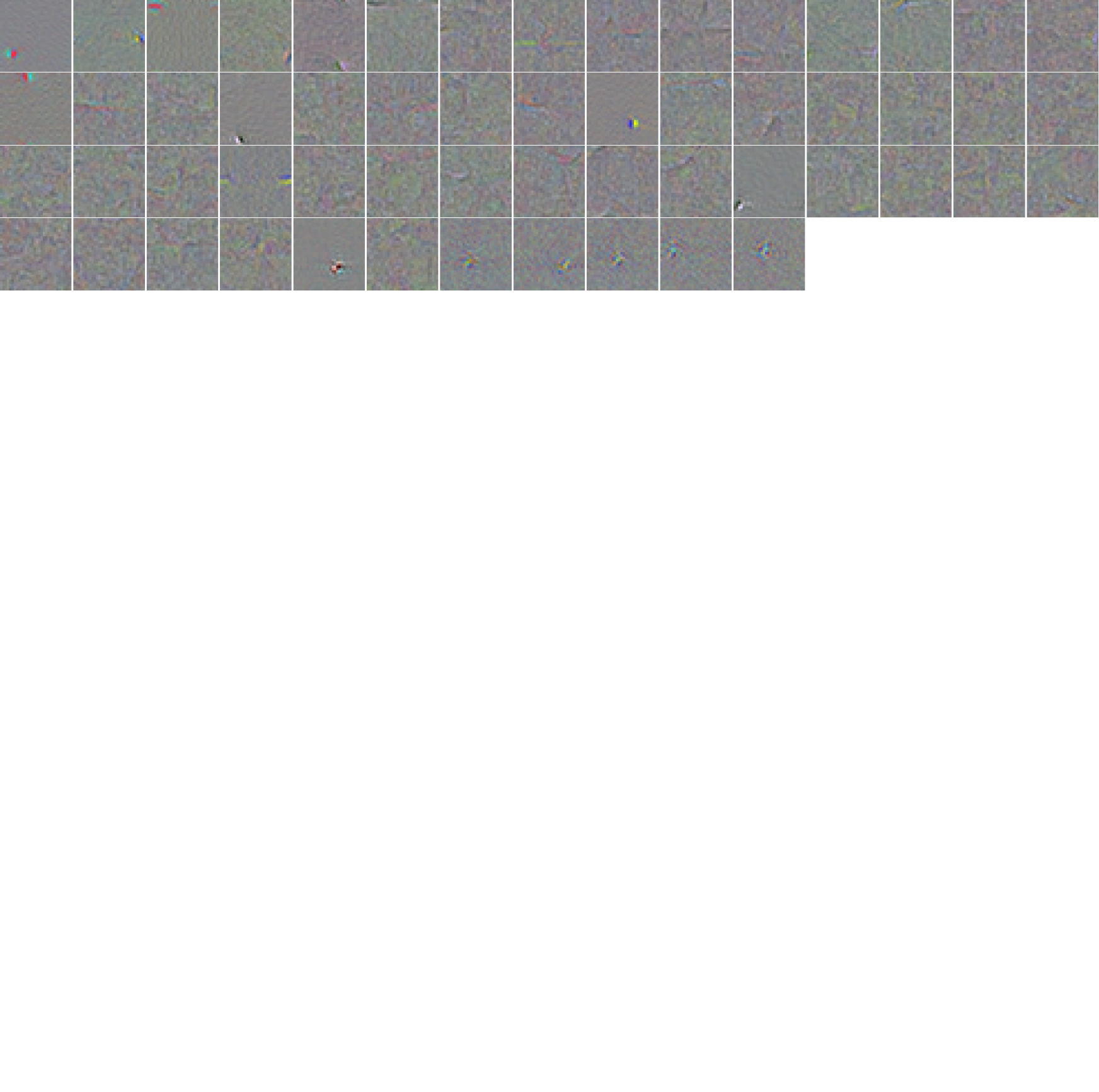}
        \includegraphics[width=0.32\linewidth]{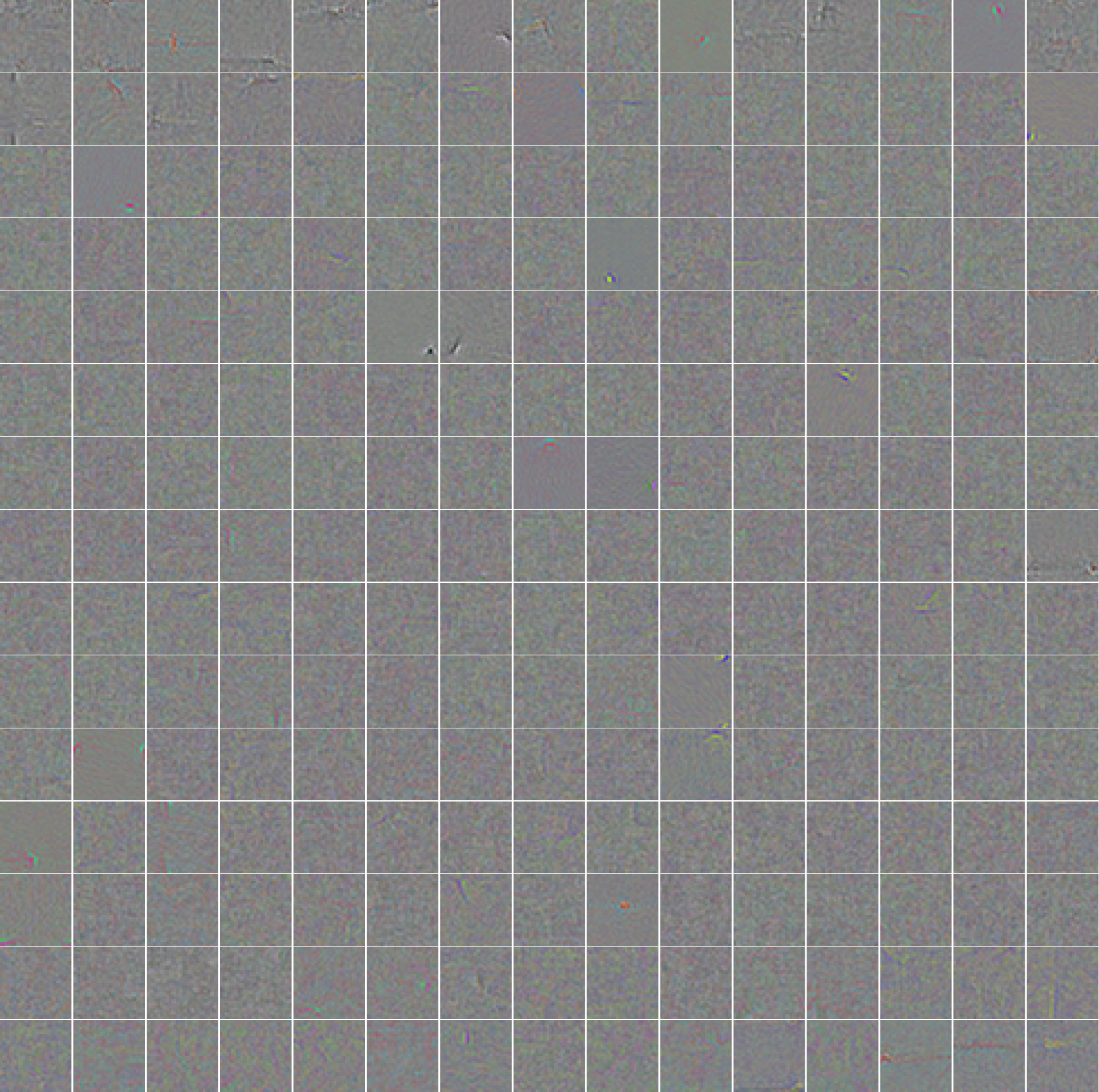}
        \caption{SAE-FNO ($32 \times 32$).}
    \end{subfigure}
    \begin{subfigure}{1.0\textwidth}
        \includegraphics[width=0.32\linewidth]{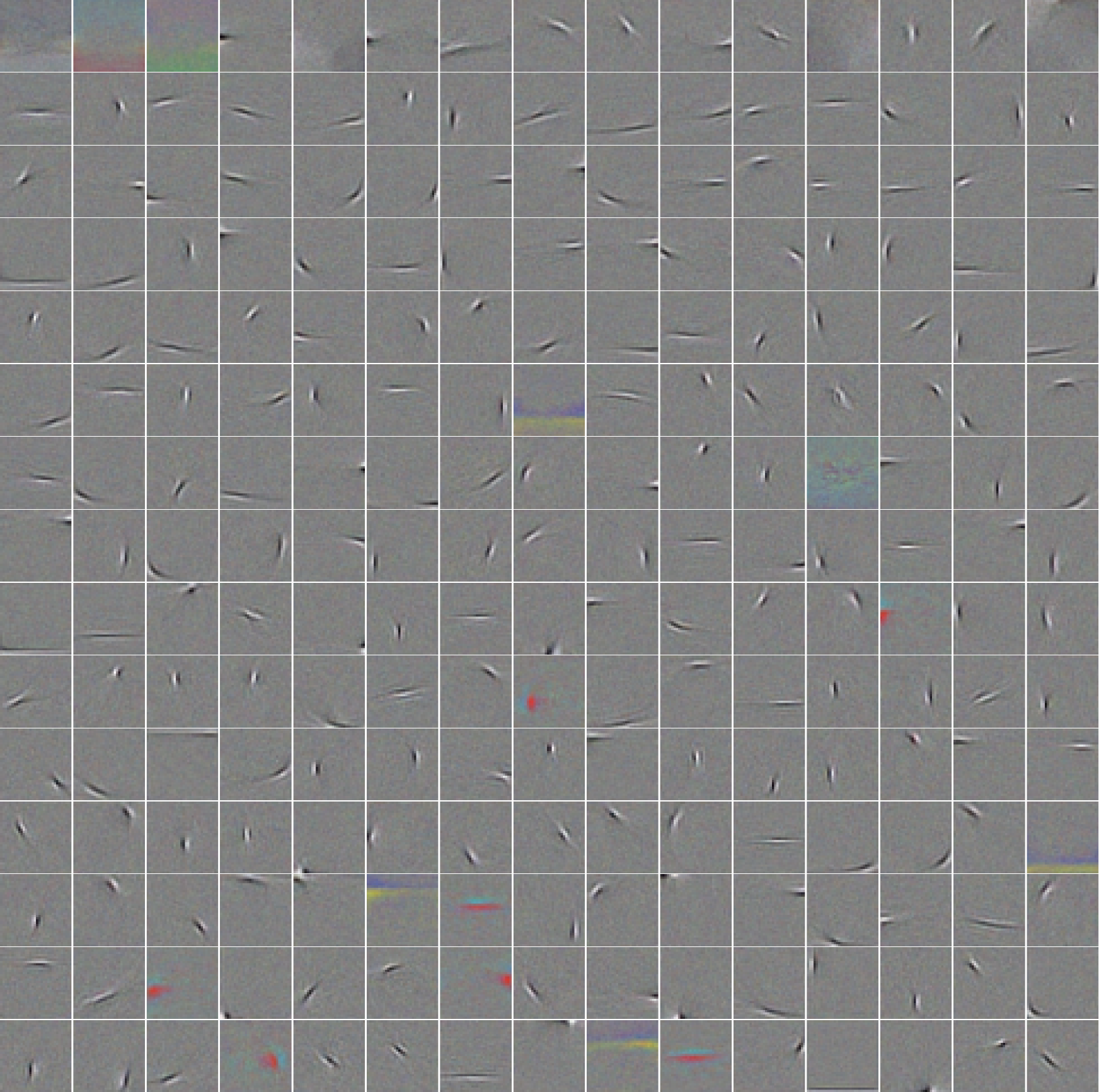}
        \includegraphics[width=0.32\linewidth]{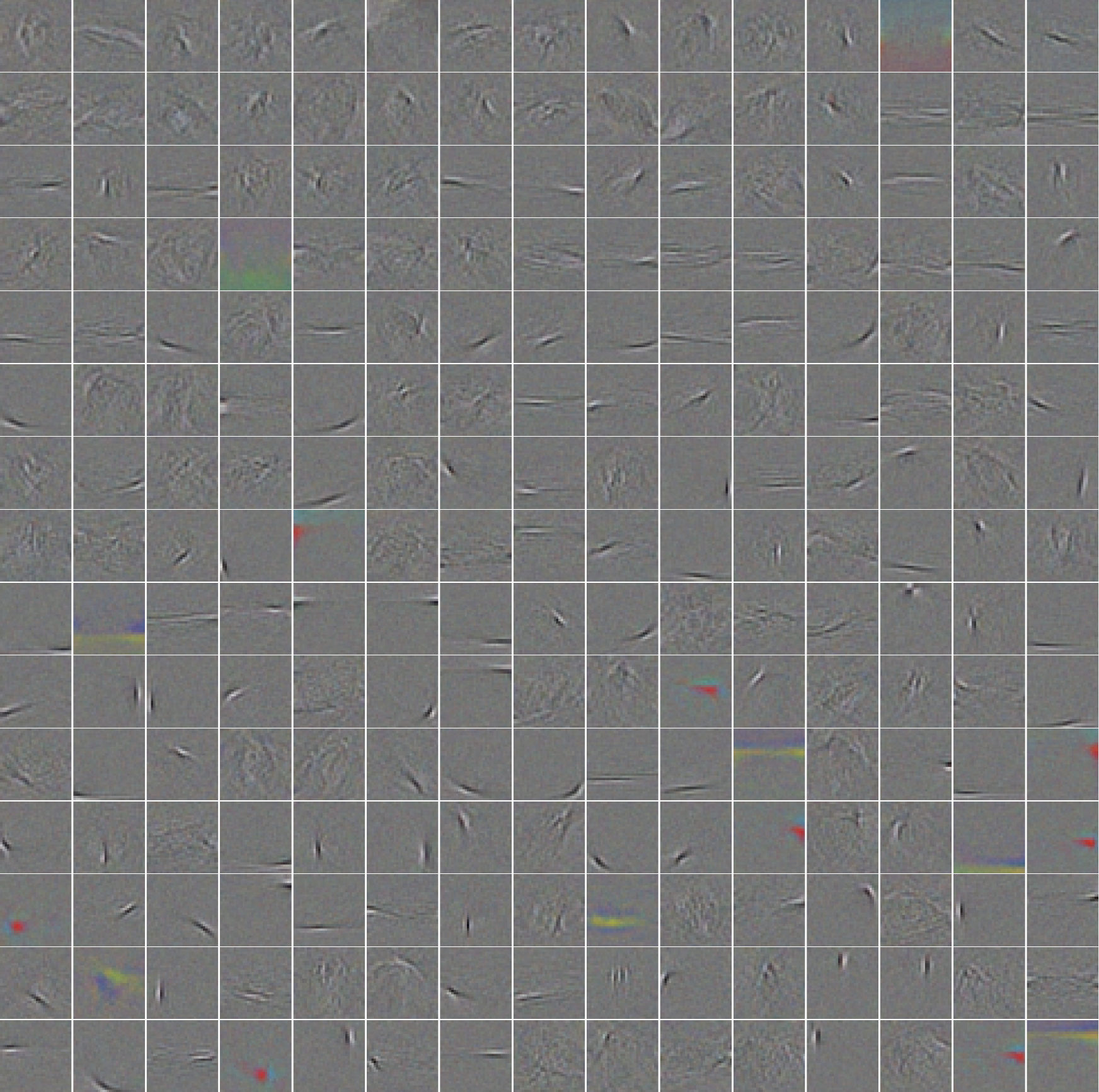}
        \includegraphics[width=0.32\linewidth]{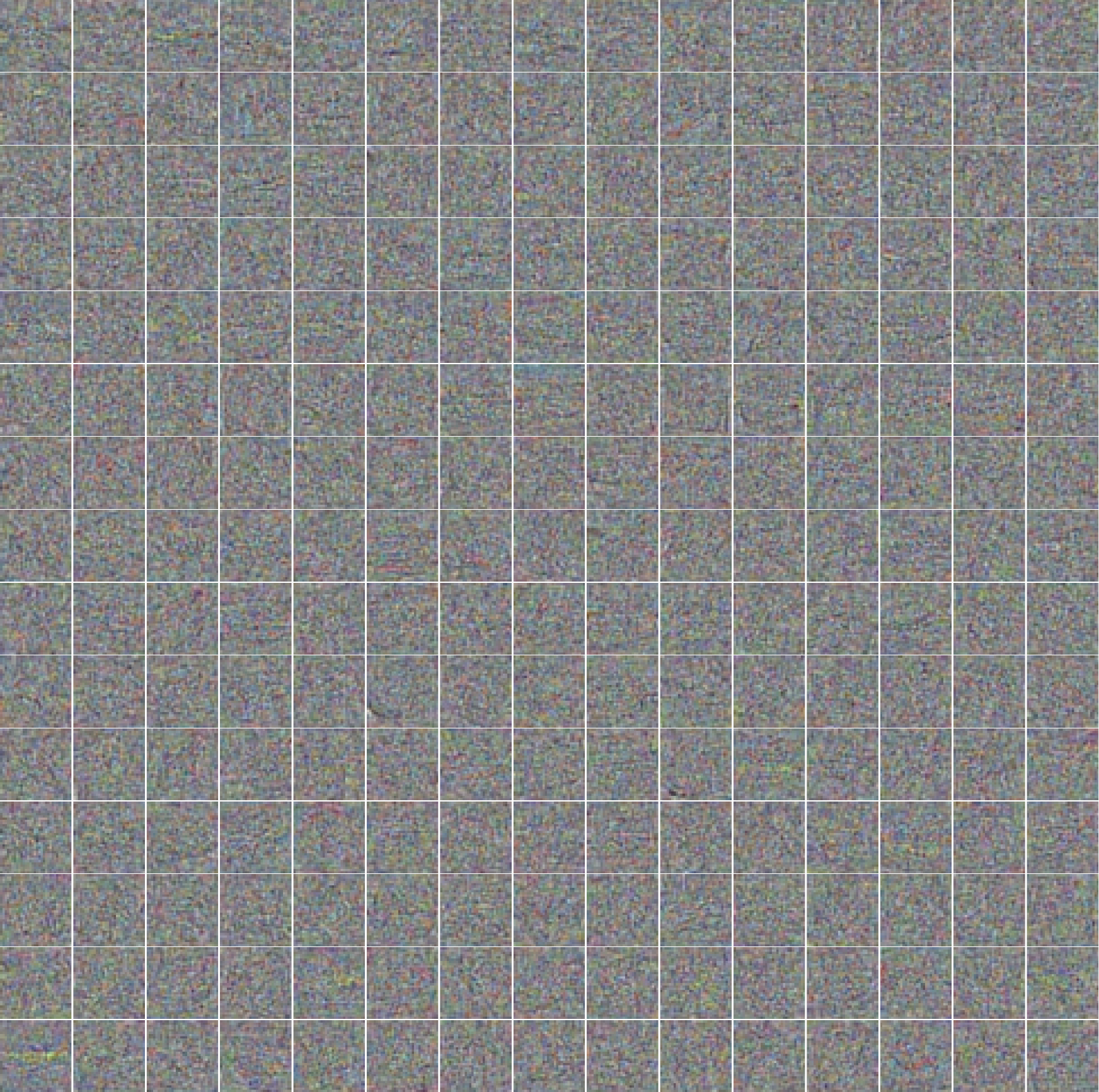}
        \caption{SAE-MLP ($32 \times 32$).}
    \end{subfigure}
    \vspace{-3mm}
    \caption{\textbf{Learned concepts across the dataset for CIFAR $32 \times 32$}. SAEs are trained on CIFAR10 image patches for SAE-MLP-TopK vs. SAE-FNO-TopK. Each square represents one concept. All models learn $p=1000$ concepts; we only visualize the most frequently used ones, where fewer displayed concepts indicate more efficient usage (see $R^2$ vs. used atoms in \Cref{fig:utilization}). From left to right, concept sparsity is $k=25, 50, 200$. (a) SAE-FNO exhibits stable concept characteristics across sparsity levels, enabled by domain sparsity and adaptability to input size. As sparsity decreases, more concepts are activated. SAE-FNO learns localized frequency structures such as edge detectors, consistent with natural image statistics~\citep{olshausen1997sparse}. (b) SAE-MLP concepts vary significantly with sparsity, losing structure and failing to capture locality. SAE-FNO requires fewer effective concepts than SAE-MLP, indicating superior expressivity for structured, large-domain inputs.}
    \label{fig:concept-visualization-32}
    \vspace{-4mm}
\end{figure*}

\begin{figure}
    \centering
    \begin{subfigure}{0.24\textwidth}
        \includegraphics[width=\linewidth]{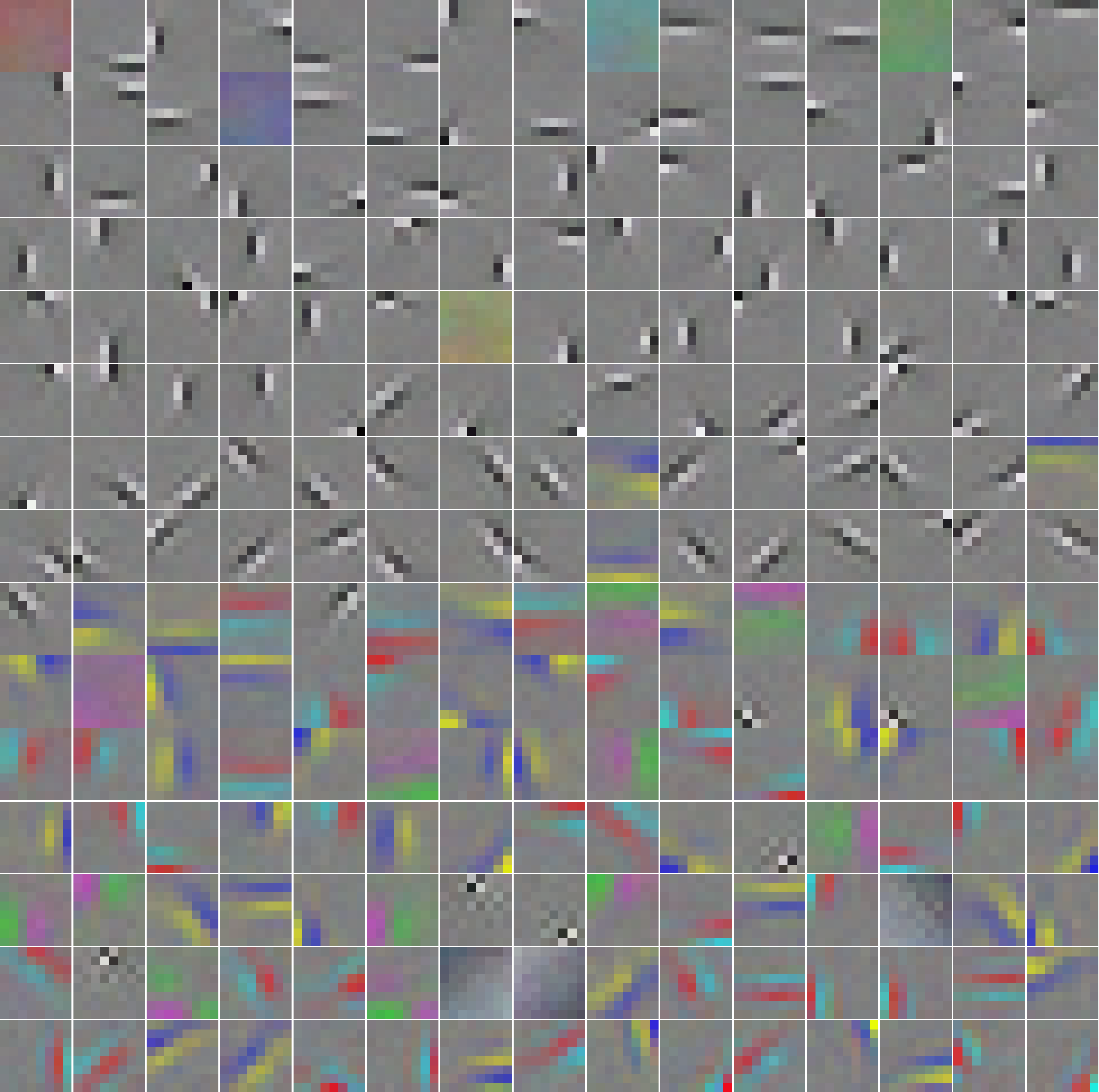}
        \caption{$k = 10$}
    \end{subfigure}
    \hfill
    \begin{subfigure}{0.24\textwidth}
        \includegraphics[width=\linewidth]{figures/cifar_concepts/CIFAR10_TopKSAEcnn2d_p1000_k25_img8x8_ker8x8_ch3_cn_cs_lreg1e-05_20260126122451_top_concepts_grid_l0.pdf}
        \caption{$k = 25$}
    \end{subfigure}
    \hfill
    \begin{subfigure}{0.24\textwidth}
        \includegraphics[width=\linewidth]{figures/cifar_concepts/CIFAR10_TopKSAEcnn2d_p1000_k50_img8x8_ker8x8_ch3_cn_cs_lreg1e-05_20260126122420_top_concepts_grid_l0.pdf}
        \caption{$k = 50$}
    \end{subfigure}
    \hfill
    \begin{subfigure}{0.24\textwidth}
        \includegraphics[width=\linewidth]{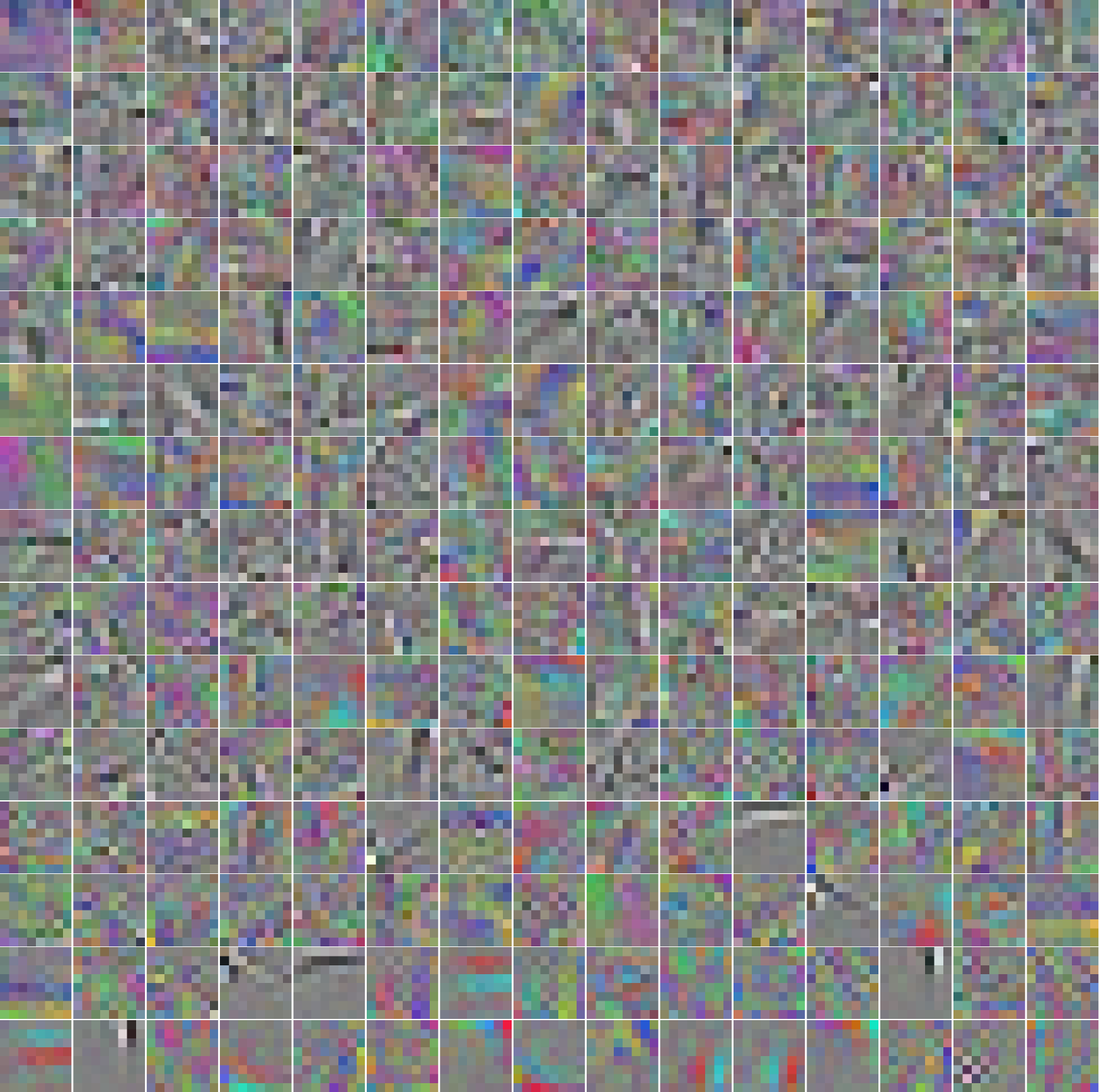}
        \caption{$k = 100$}
    \end{subfigure}
    \hfill
    \begin{subfigure}{0.24\textwidth}
        \includegraphics[width=\linewidth]{figures/cifar_concepts/CIFAR10_TopKSAEcnn2d_p1000_k200_img8x8_ker8x8_ch3_cn_cs_lreg1e-05_20260126161131_top_concepts_grid_l0.pdf}
        \caption{$k = 200$}
    \end{subfigure}
    \hfill
    \begin{subfigure}{0.24\textwidth}
        \includegraphics[width=\linewidth]{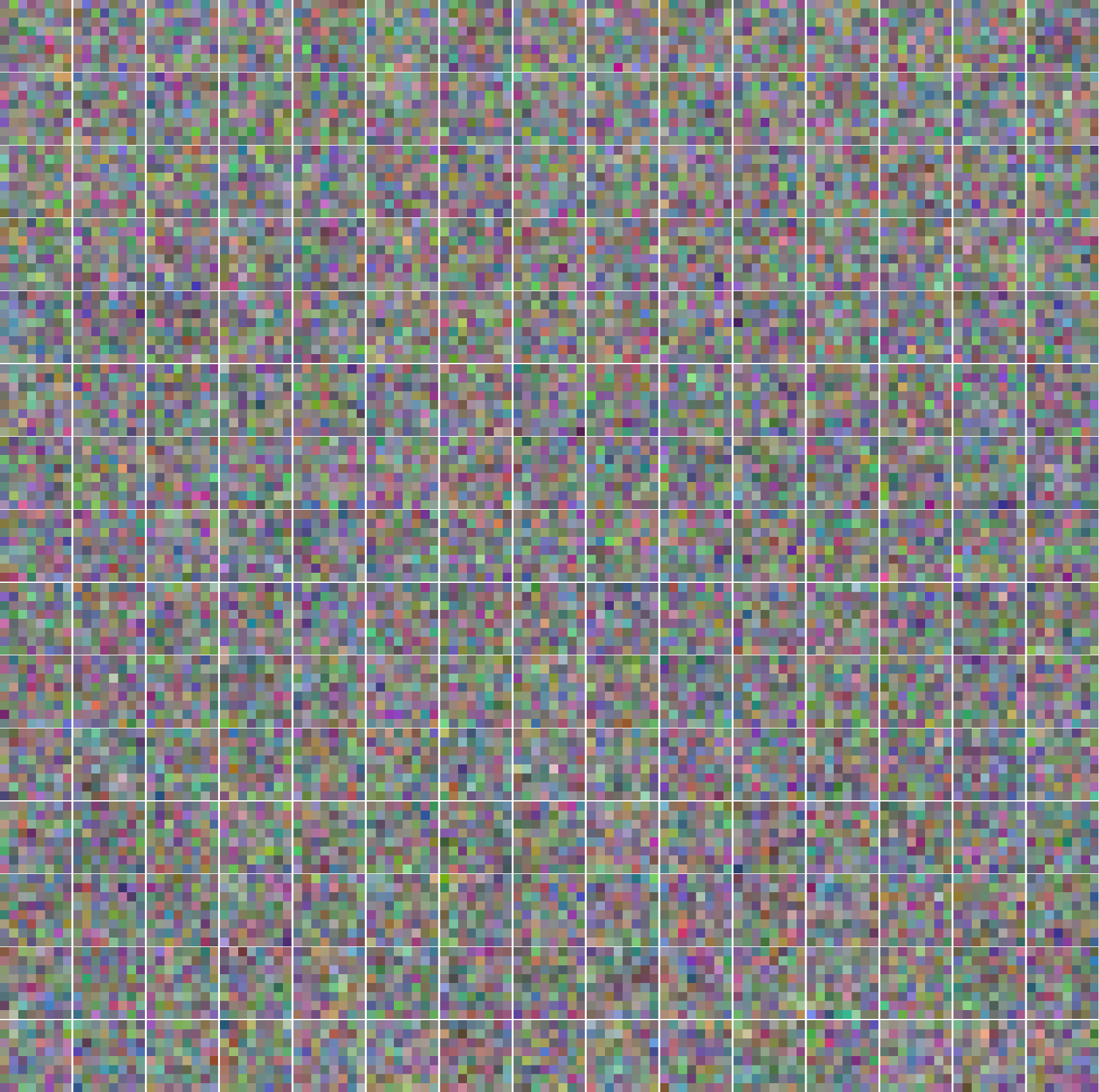}
        \caption{$k = 500$}
    \end{subfigure}
    \hfill
    \begin{subfigure}{0.24\textwidth}
        \includegraphics[width=\linewidth]{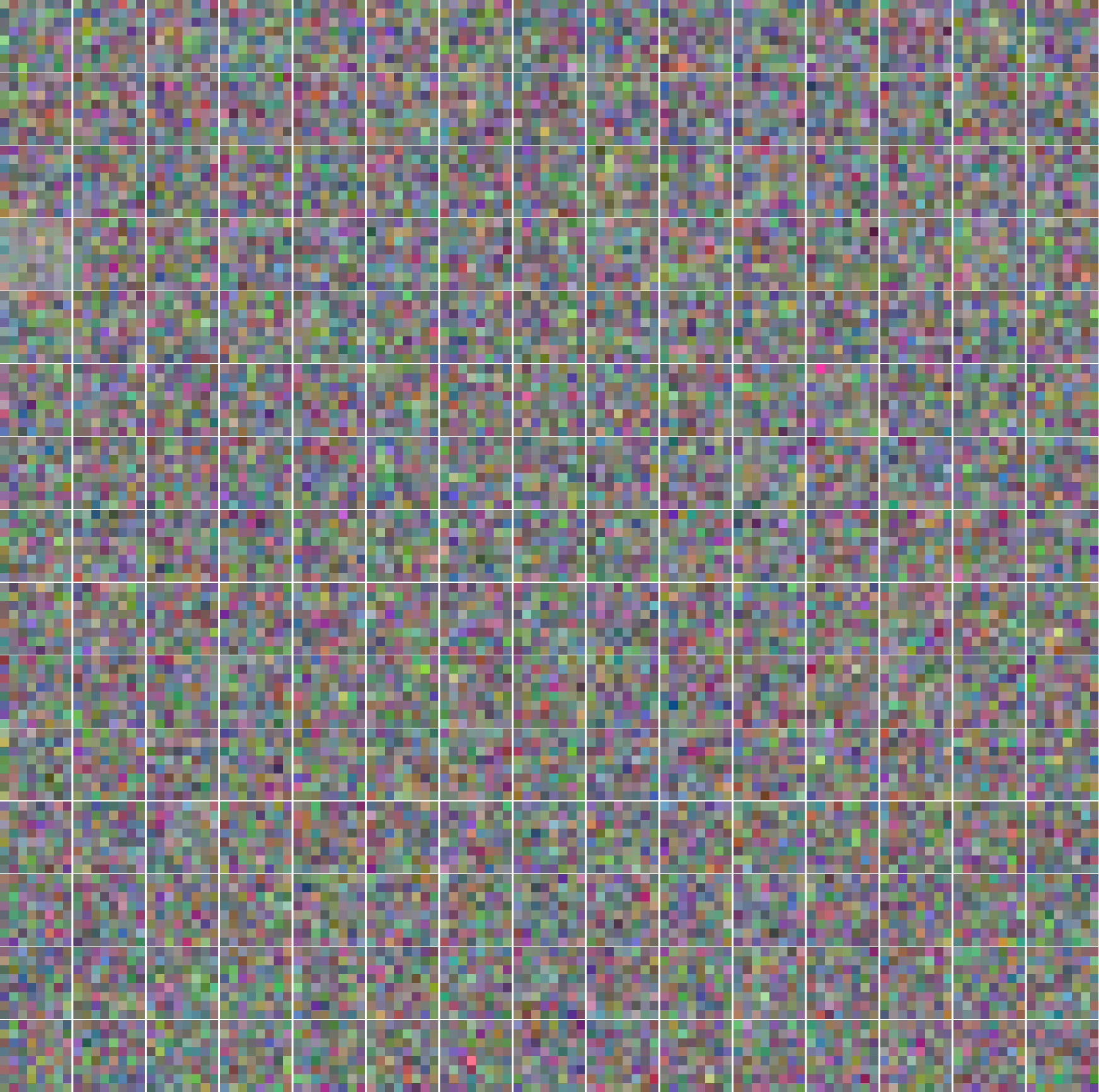}
        \caption{$k = 700$}
    \end{subfigure}
    \hfill
    \begin{subfigure}{0.24\textwidth}
        \includegraphics[width=\linewidth]{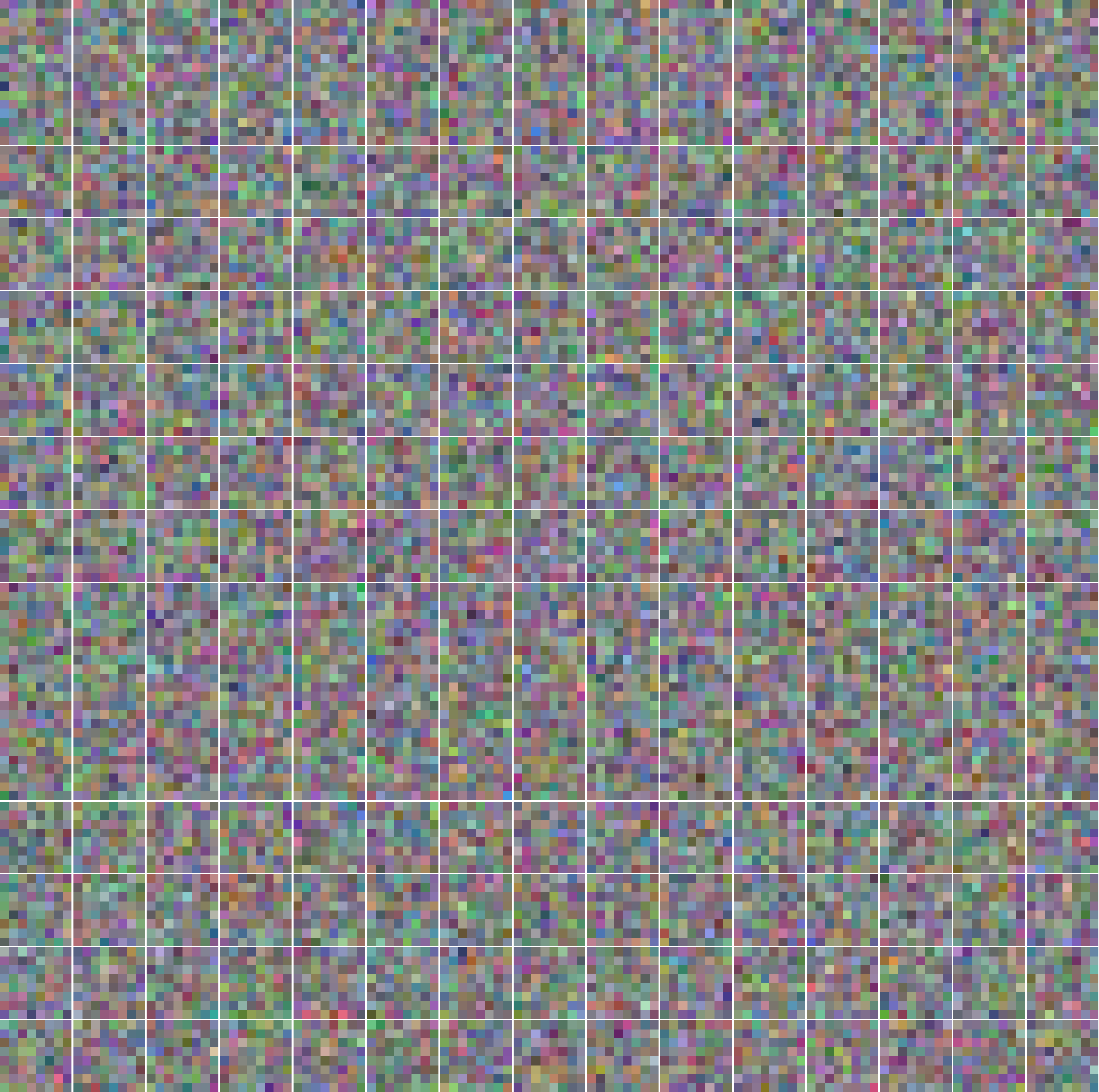}
        \caption{$k = 1000$}
    \end{subfigure}
    \caption{\textbf{Additional visualizations.} SAE-MLP learned concepts on CIFAR.}
    \label{fig:saemlp-concepts-cifar}
\end{figure}

\begin{figure}
    \centering
    \begin{subfigure}{0.24\textwidth}
        \includegraphics[width=\linewidth]{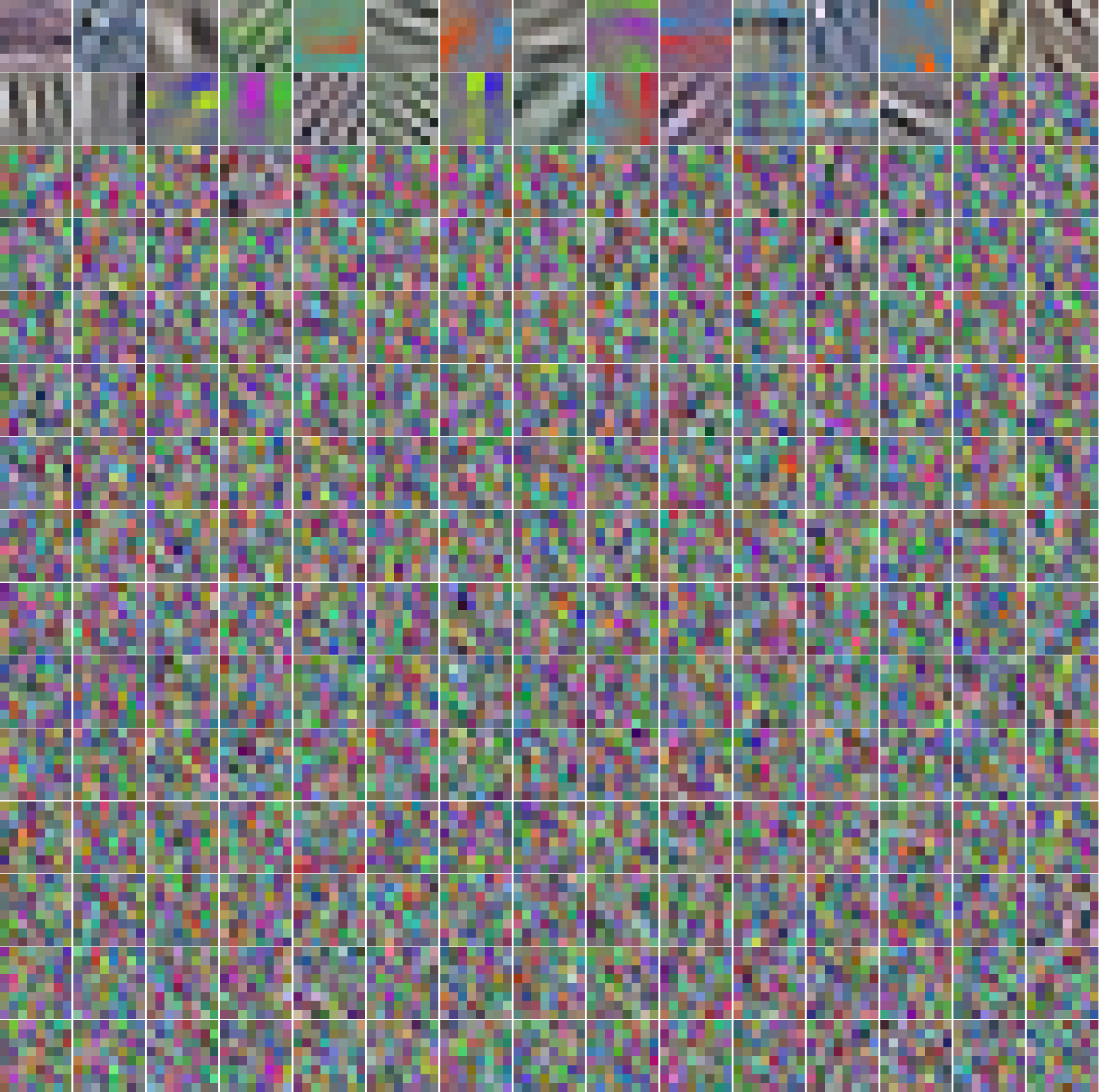}
        \caption{$k = 10$}
    \end{subfigure}
    \hfill
    \begin{subfigure}{0.24\textwidth}
        \includegraphics[width=\linewidth]{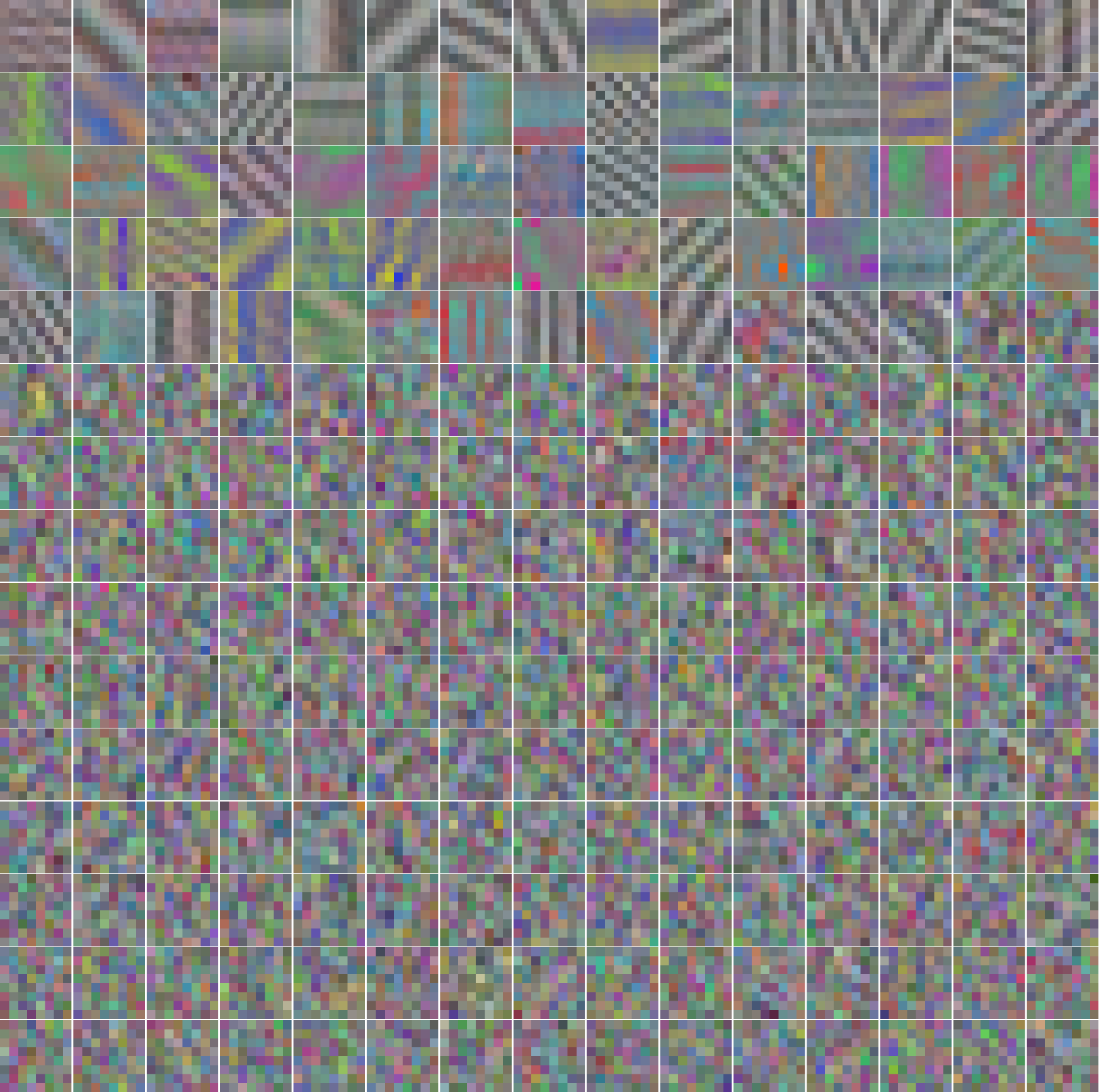}
        \caption{$k = 25$}
    \end{subfigure}
    \hfill
    \begin{subfigure}{0.24\textwidth}
        \includegraphics[width=\linewidth]{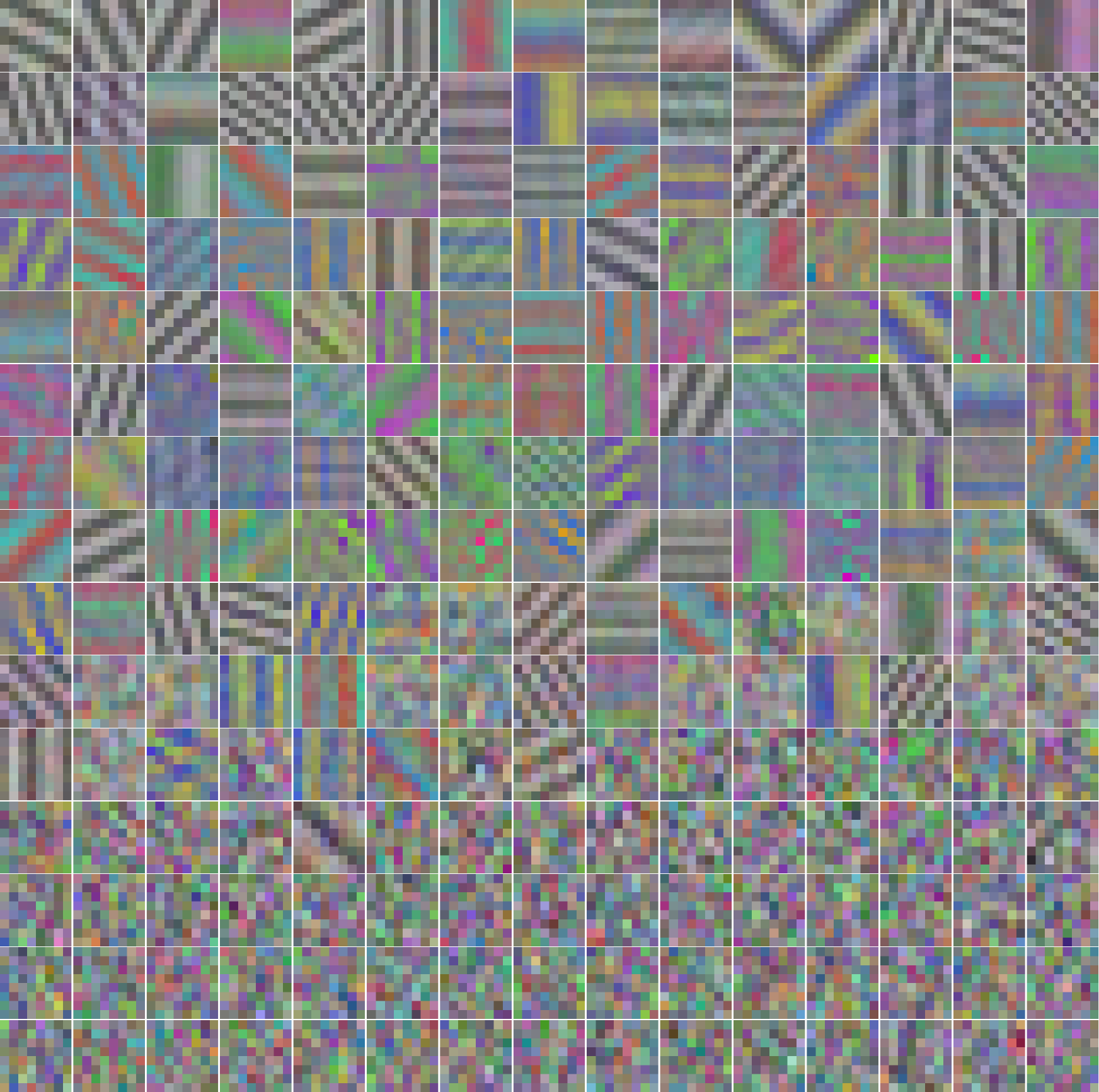}
        \caption{$k = 50$}
    \end{subfigure}
    \hfill
    \begin{subfigure}{0.24\textwidth}
        \includegraphics[width=\linewidth]{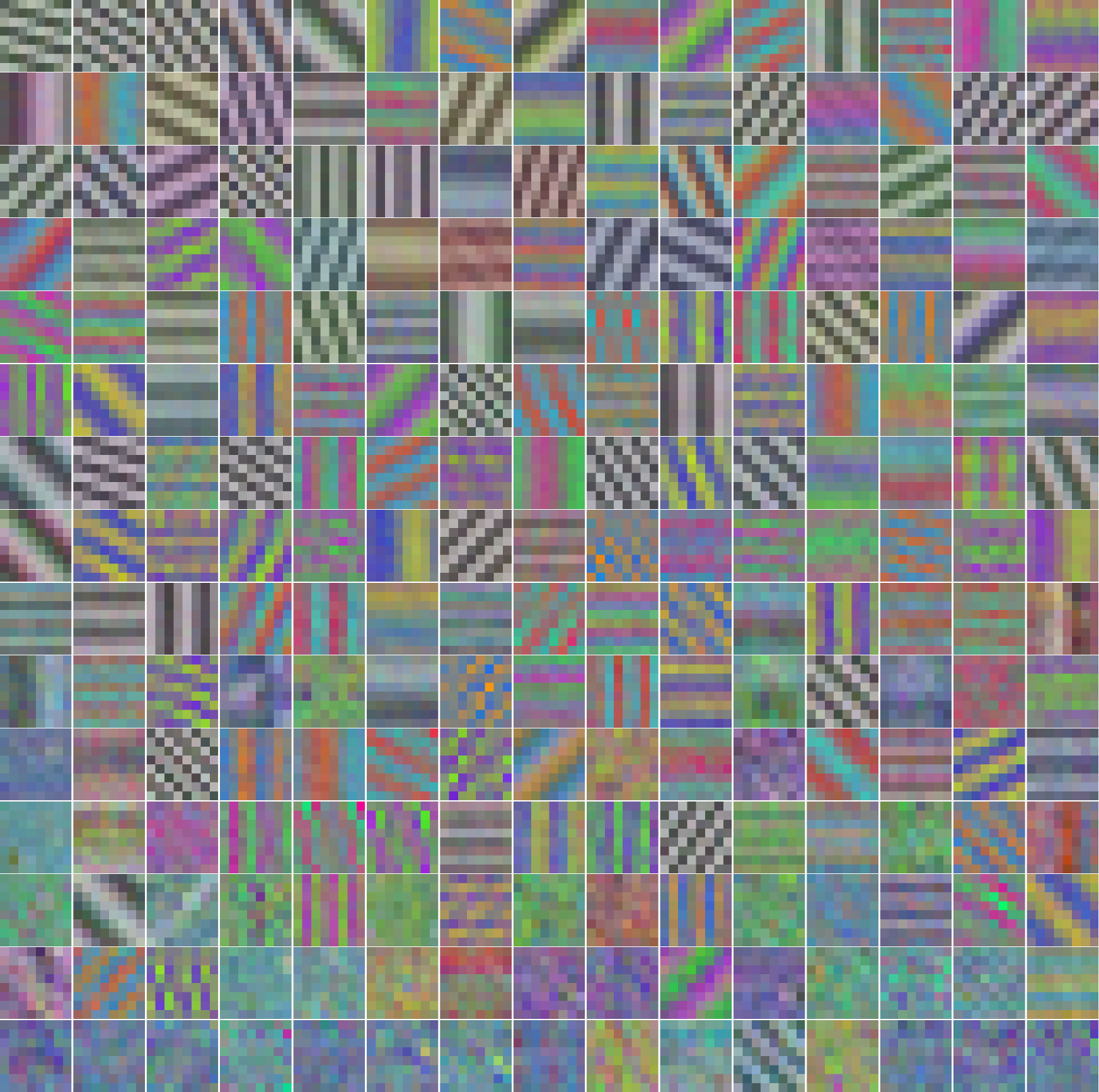}
        \caption{$k = 100$}
    \end{subfigure}
    \hfill
    \begin{subfigure}{0.24\textwidth}
        \includegraphics[width=\linewidth]{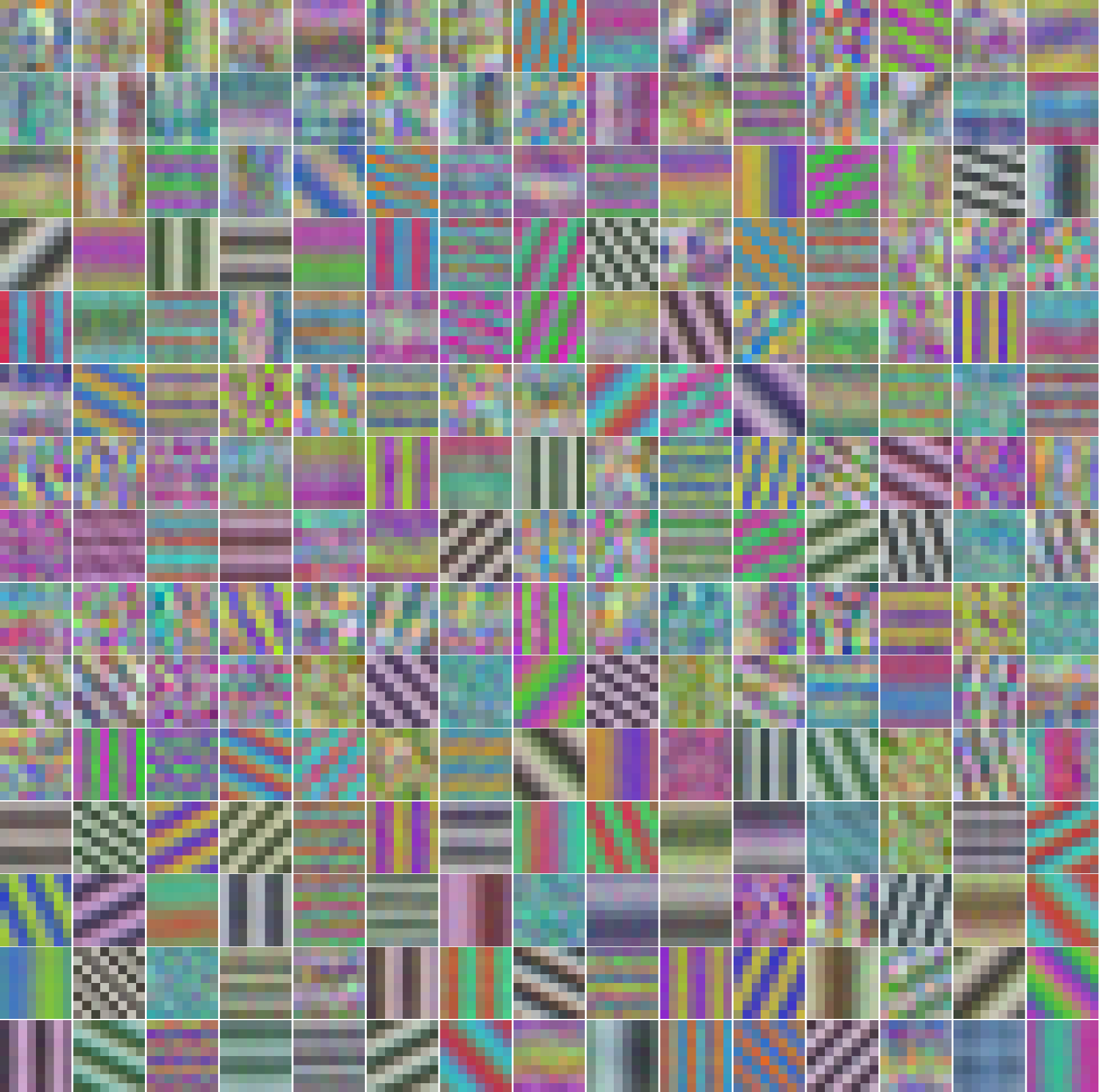}
        \caption{$k = 200$}
    \end{subfigure}
    \hfill
    \begin{subfigure}{0.24\textwidth}
        \includegraphics[width=\linewidth]{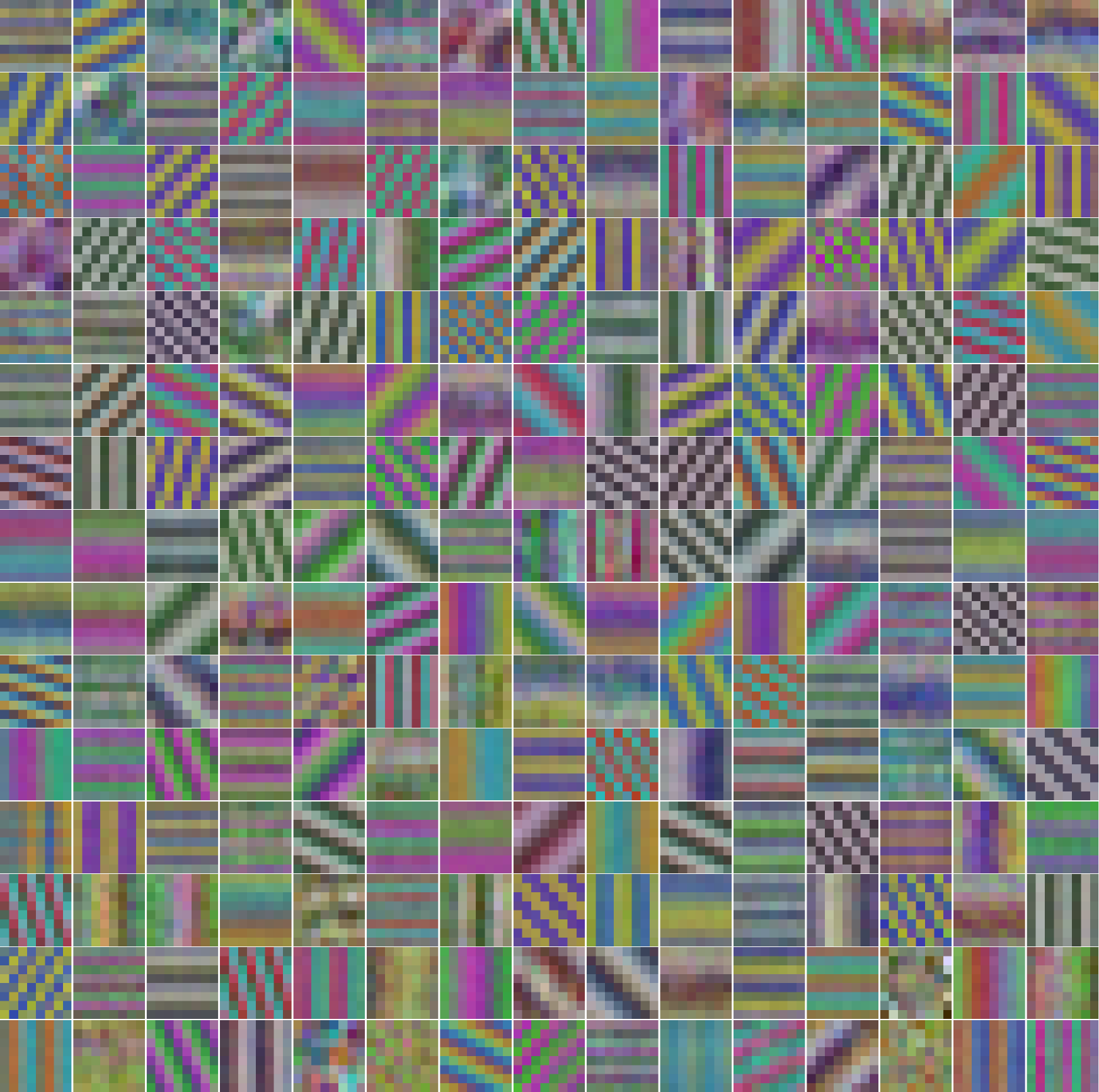}
        \caption{$k = 500$}
    \end{subfigure}
    \hfill
    \begin{subfigure}{0.24\textwidth}
        \includegraphics[width=\linewidth]{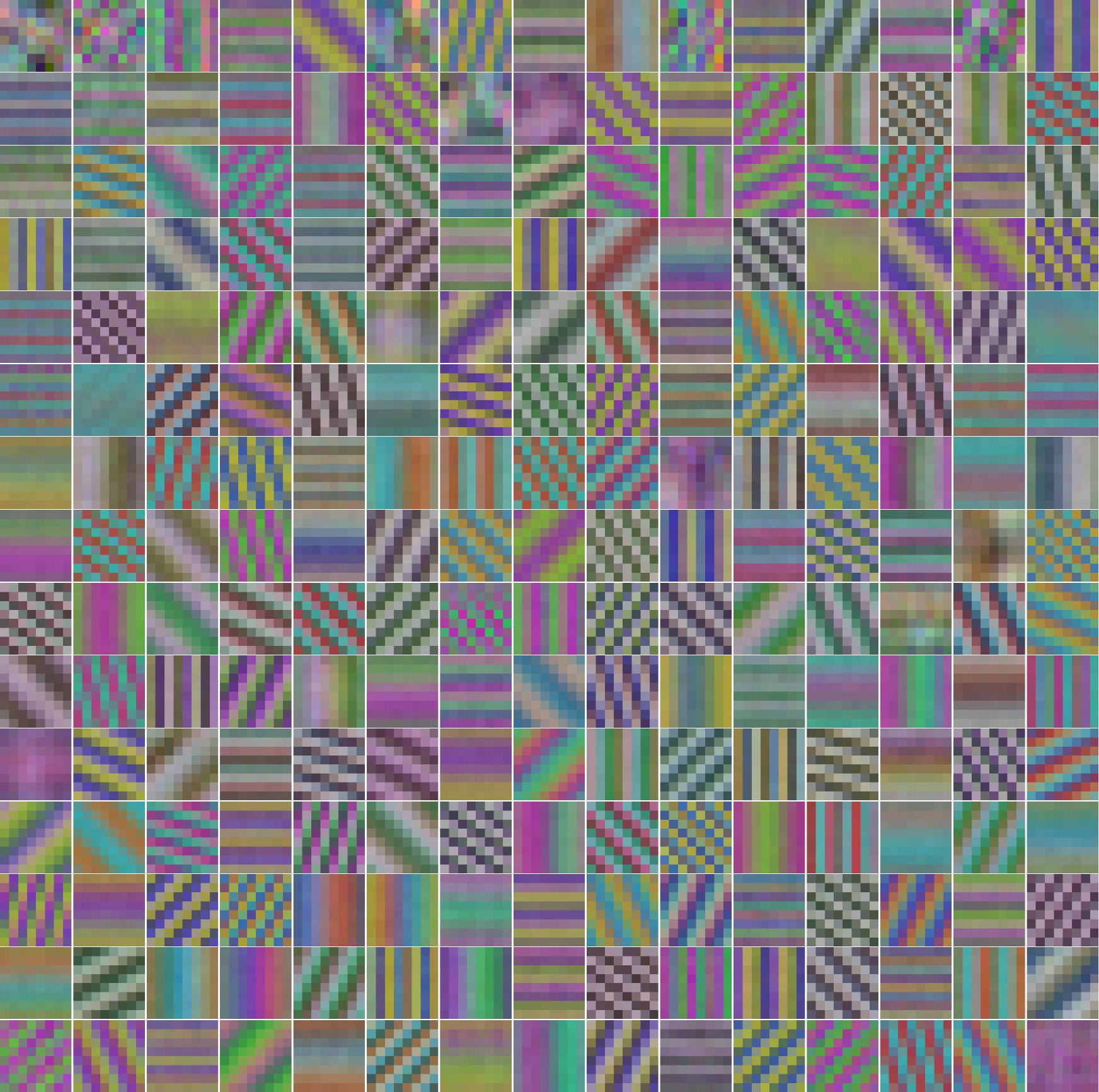}
        \caption{$k = 700$}
    \end{subfigure}
    \hfill
    \begin{subfigure}{0.24\textwidth}
        \includegraphics[width=\linewidth]{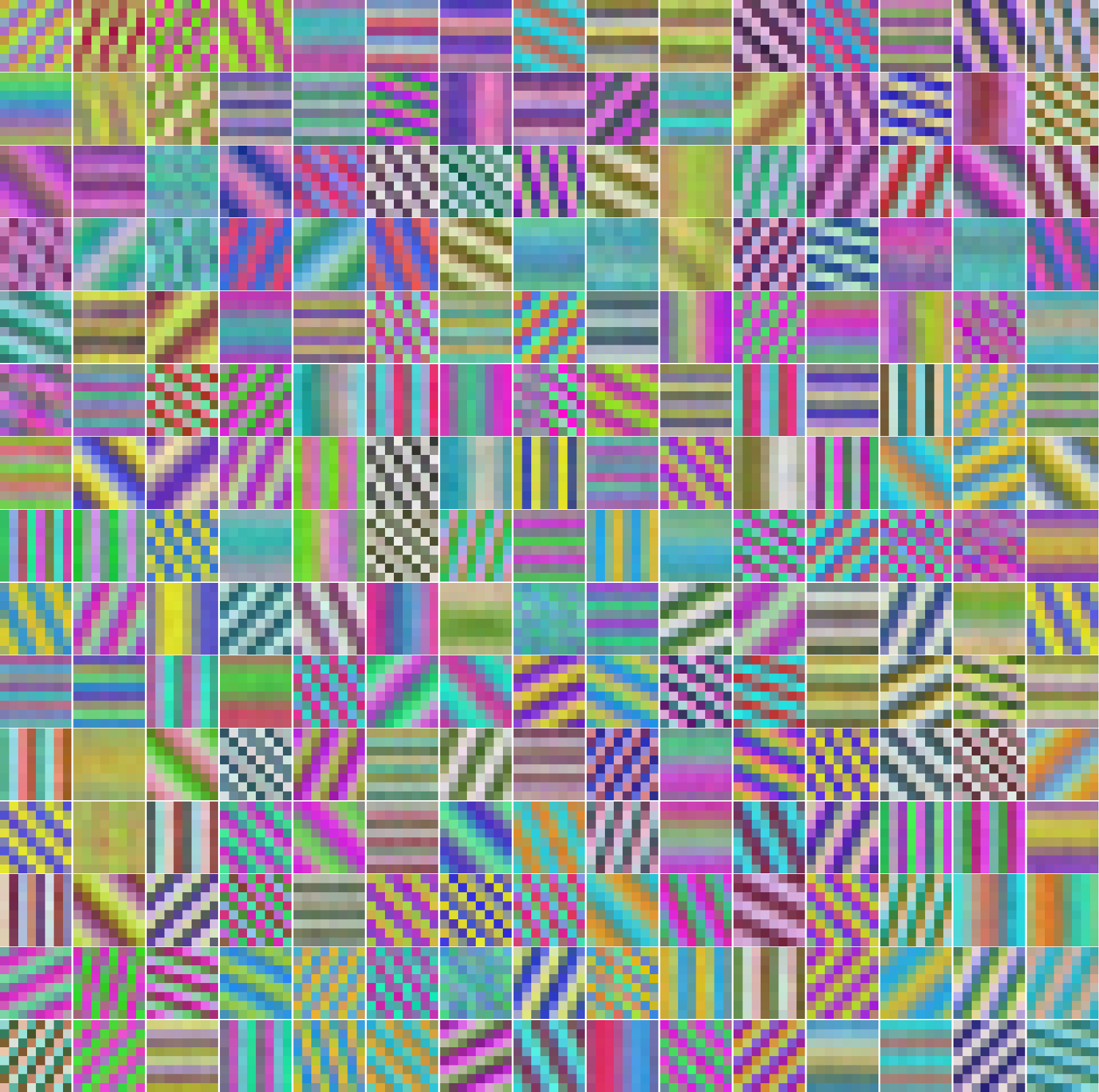}
        \caption{$k = 1000$}
    \end{subfigure}
    \caption{\textbf{Additional visualizations.} SAE-FNO (modes = 6, spatially 1-sparse) learned concepts on CIFAR.}
    \label{fig:cifar-concept-stability-mode-ss}
\end{figure}

\begin{figure}
    \centering
    \begin{subfigure}{0.24\textwidth}
        \includegraphics[width=\linewidth]{figures/cifar_concepts/CIFAR10_TopKSAEfno2d_p1000_k100_img8x8_m6_ch3_cs_ss0.02_lreg1e-05_20260126162307_top_concepts_grid_l0.pdf}
        \caption{1-sparse}
    \end{subfigure}
    \hfill
    \begin{subfigure}{0.24\textwidth}
        \includegraphics[width=\linewidth]{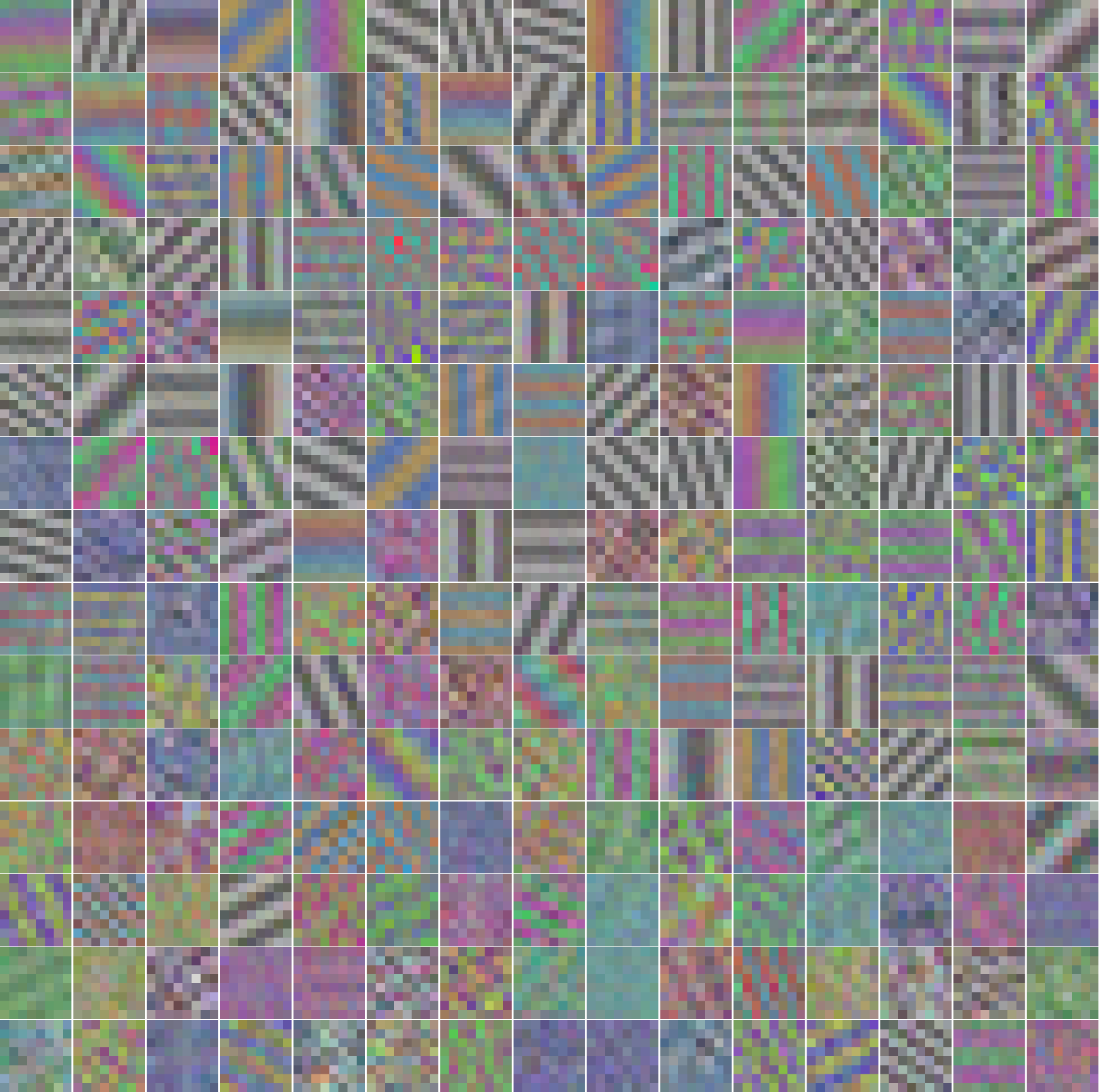}
        \caption{3-sparse}
    \end{subfigure}
    \hfill
    \begin{subfigure}{0.24\textwidth}
        \includegraphics[width=\linewidth]{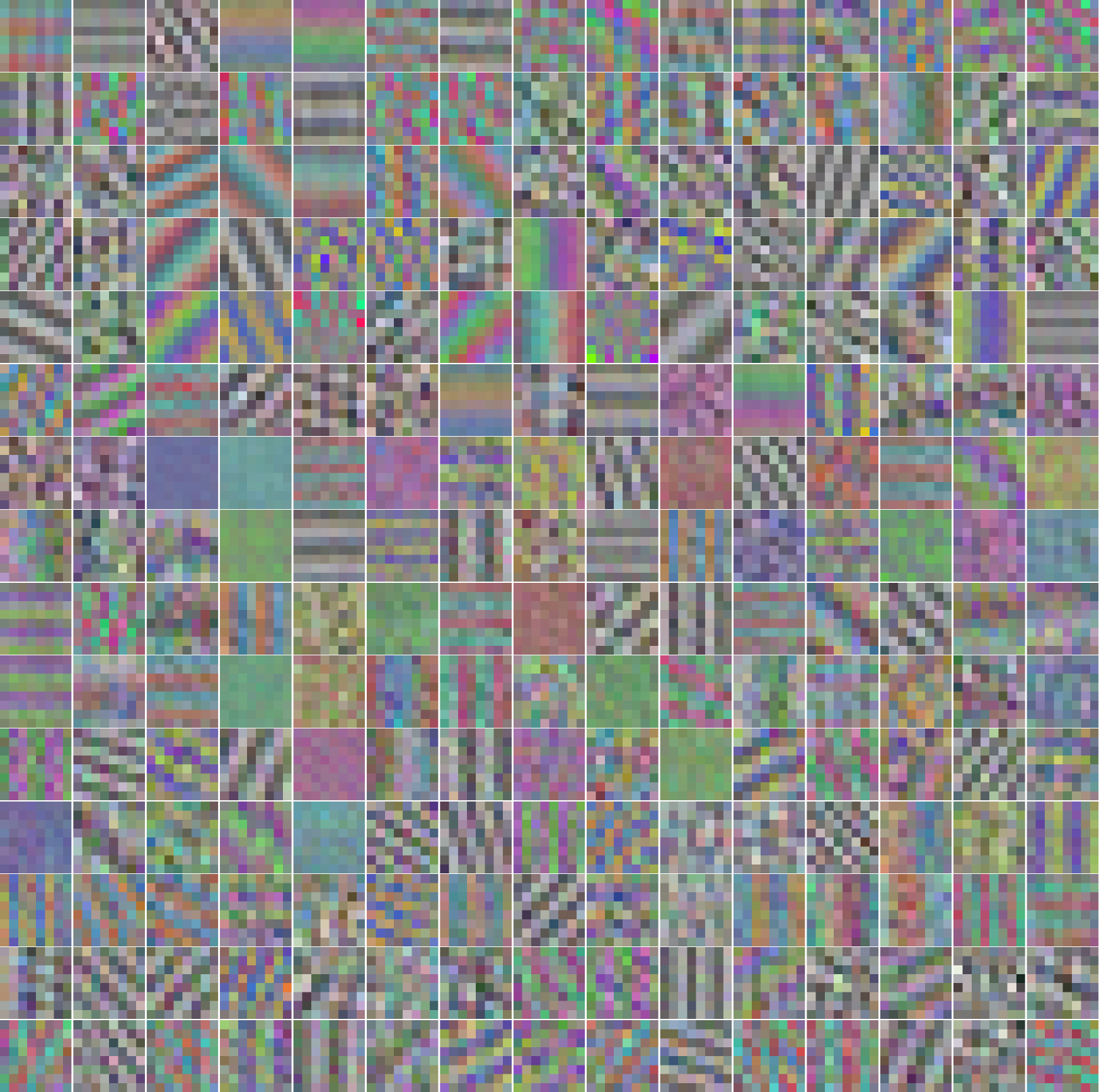}
        \caption{6-sparse}
    \end{subfigure}
    \hfill
    \begin{subfigure}{0.24\textwidth}
        \includegraphics[width=\linewidth]{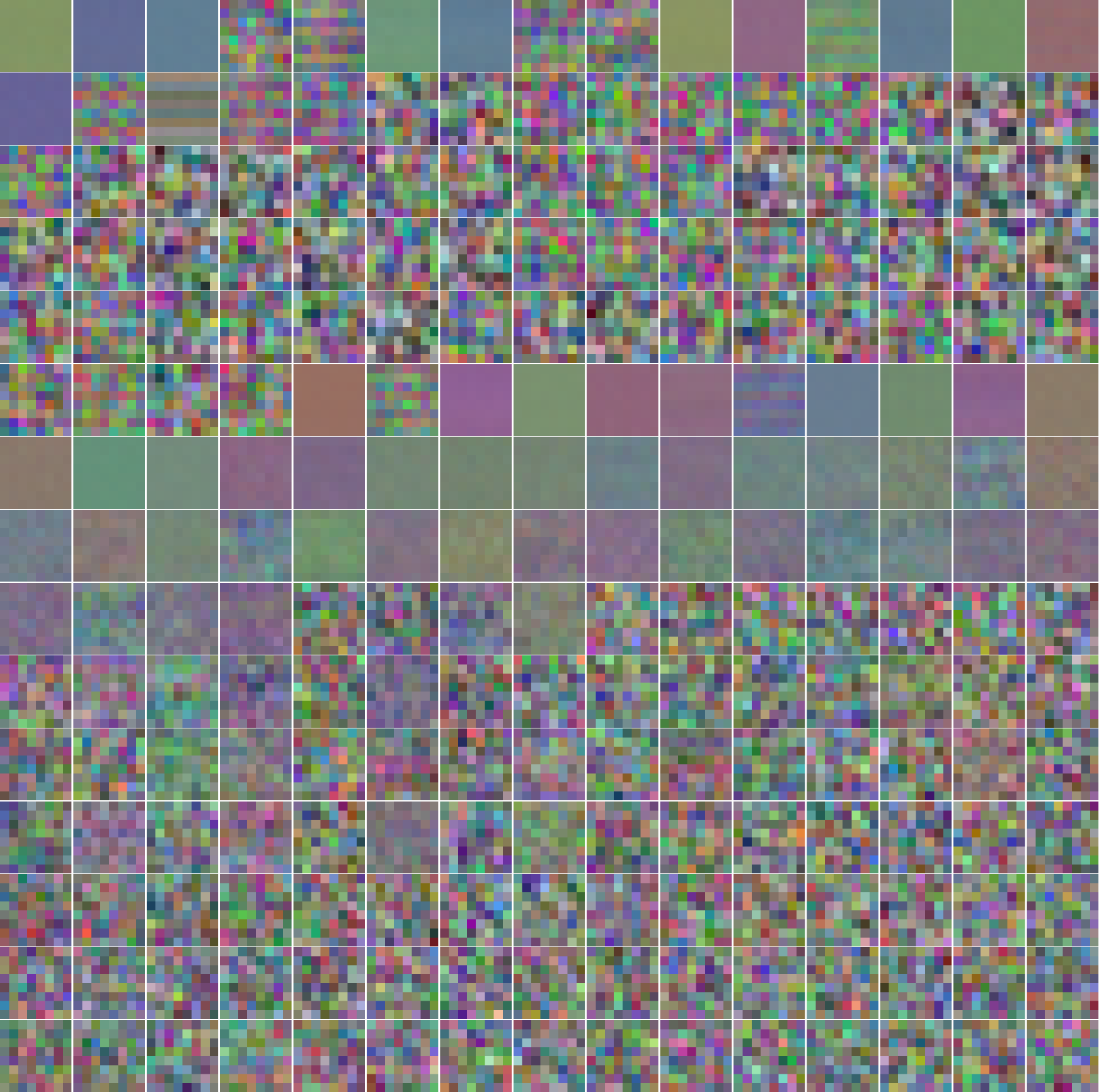}
        \caption{32-sparse}
    \end{subfigure}
    \caption{\textbf{Additional visualizations.} SAE-FNO (modes = 6) learned concepts on CIFAR as spatial sparsity changes.}
    \label{fig:change-strcuture-domain-cifar}
\end{figure}

\begin{figure}
    \centering
    \begin{subfigure}{0.24\textwidth}
        \includegraphics[width=\linewidth]{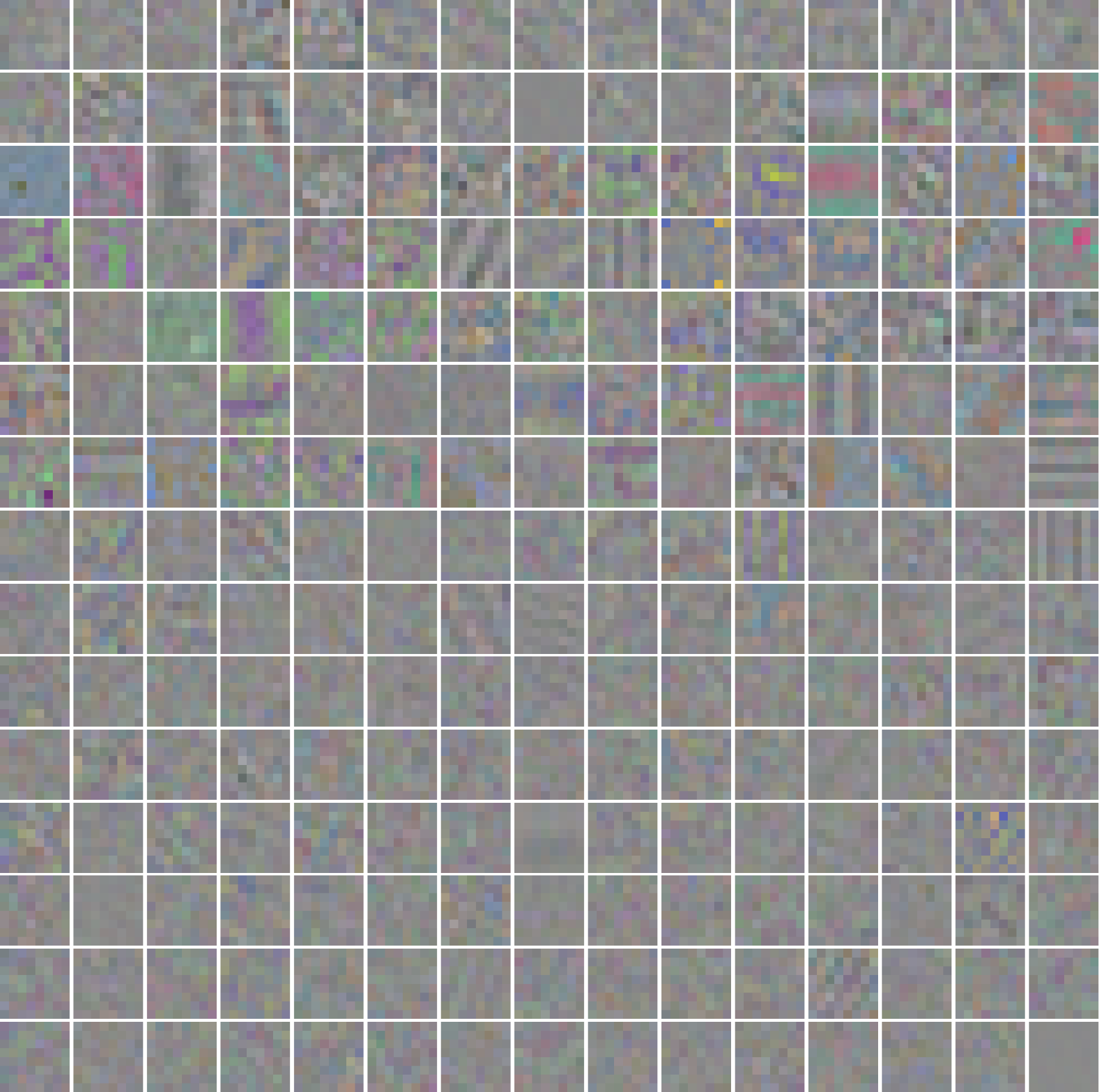}
        \caption{$k = 25$}
    \end{subfigure}
    \hfill
    \begin{subfigure}{0.24\textwidth}
        \includegraphics[width=\linewidth]{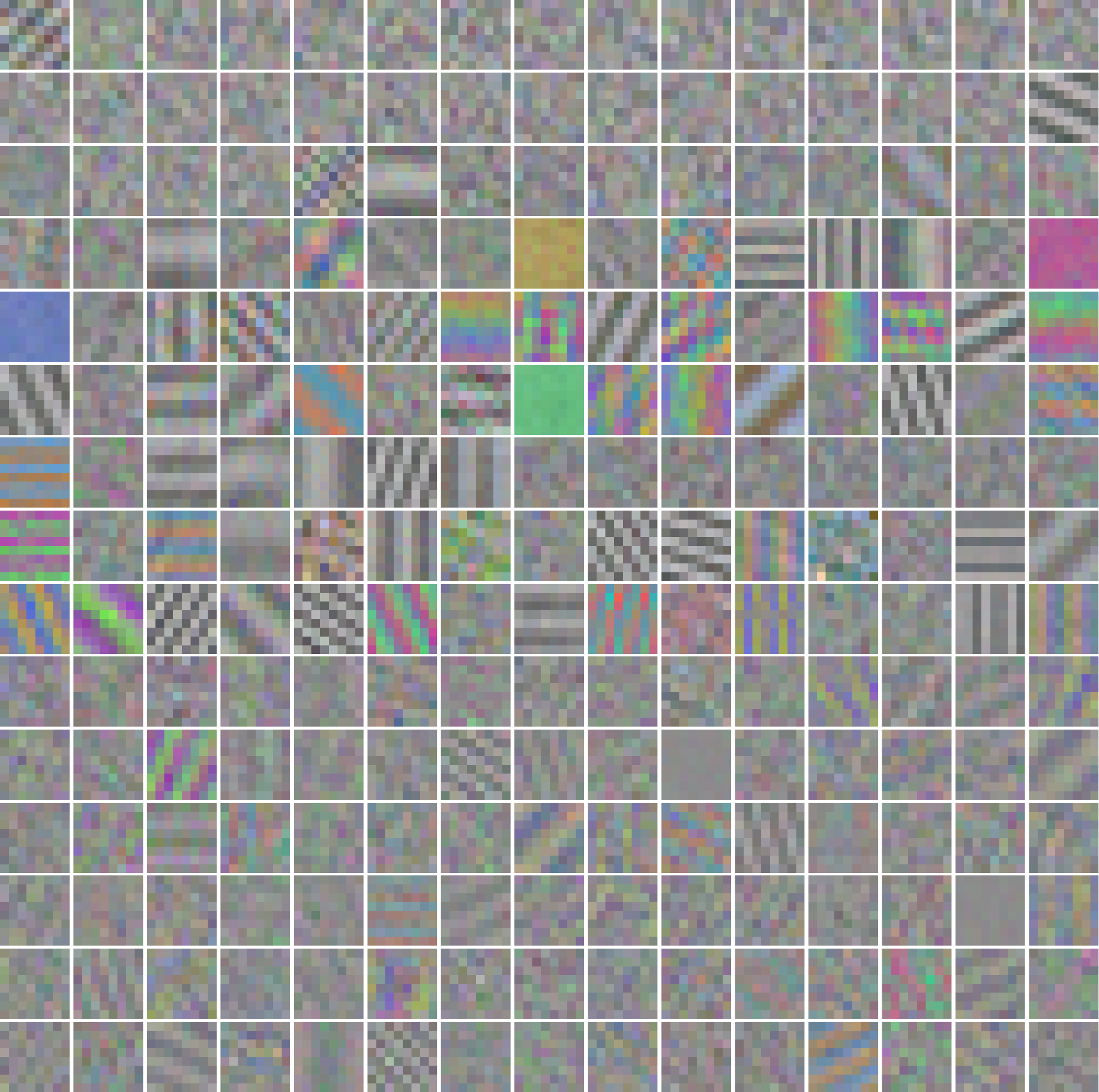}
        \caption{$k = 50$}
    \end{subfigure}
    \hfill
    \begin{subfigure}{0.24\textwidth}
        \includegraphics[width=\linewidth]{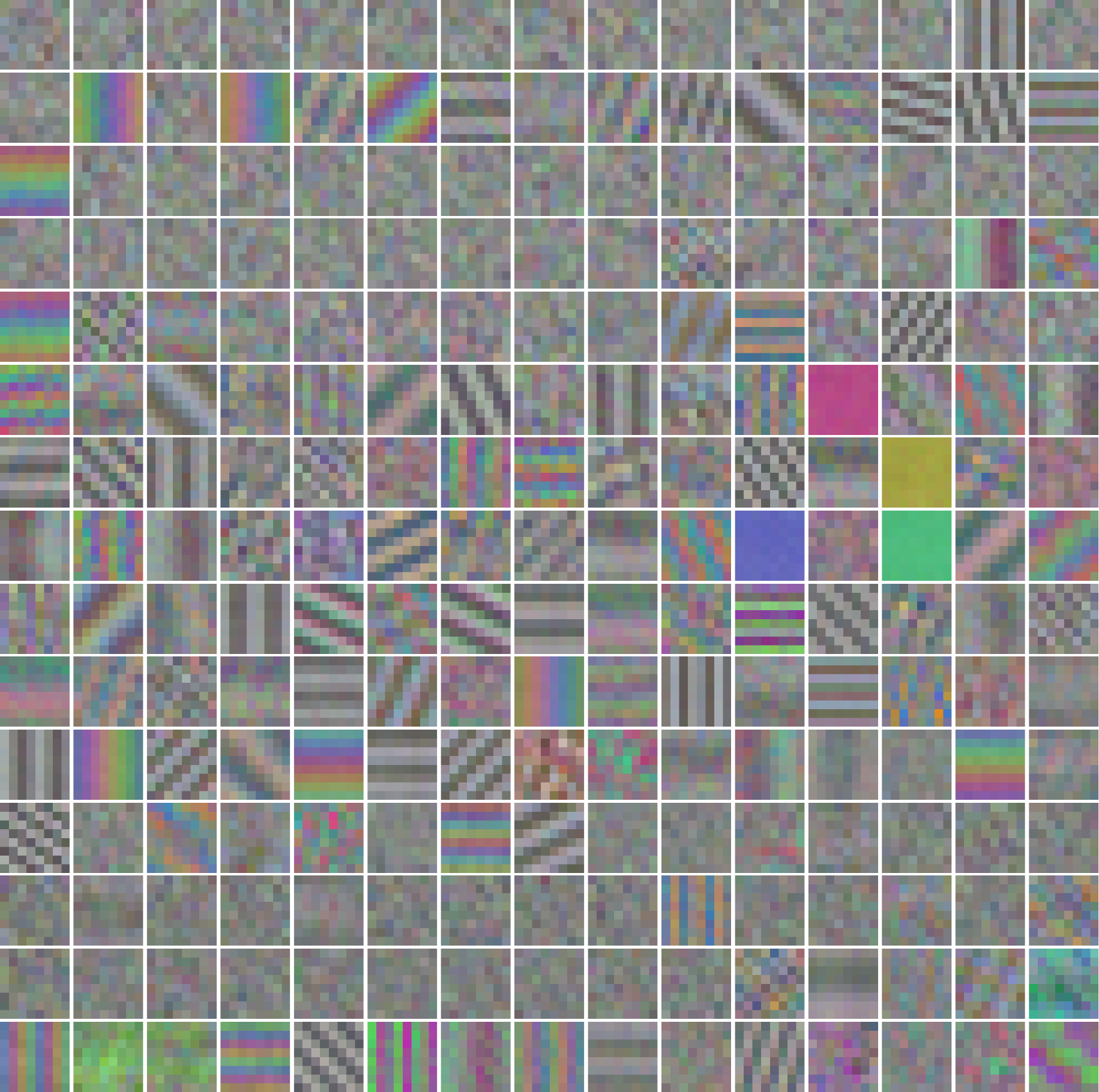}
        \caption{$k = 100$}
    \end{subfigure}
    \hfill
    \begin{subfigure}{0.24\textwidth}
        \includegraphics[width=\linewidth]{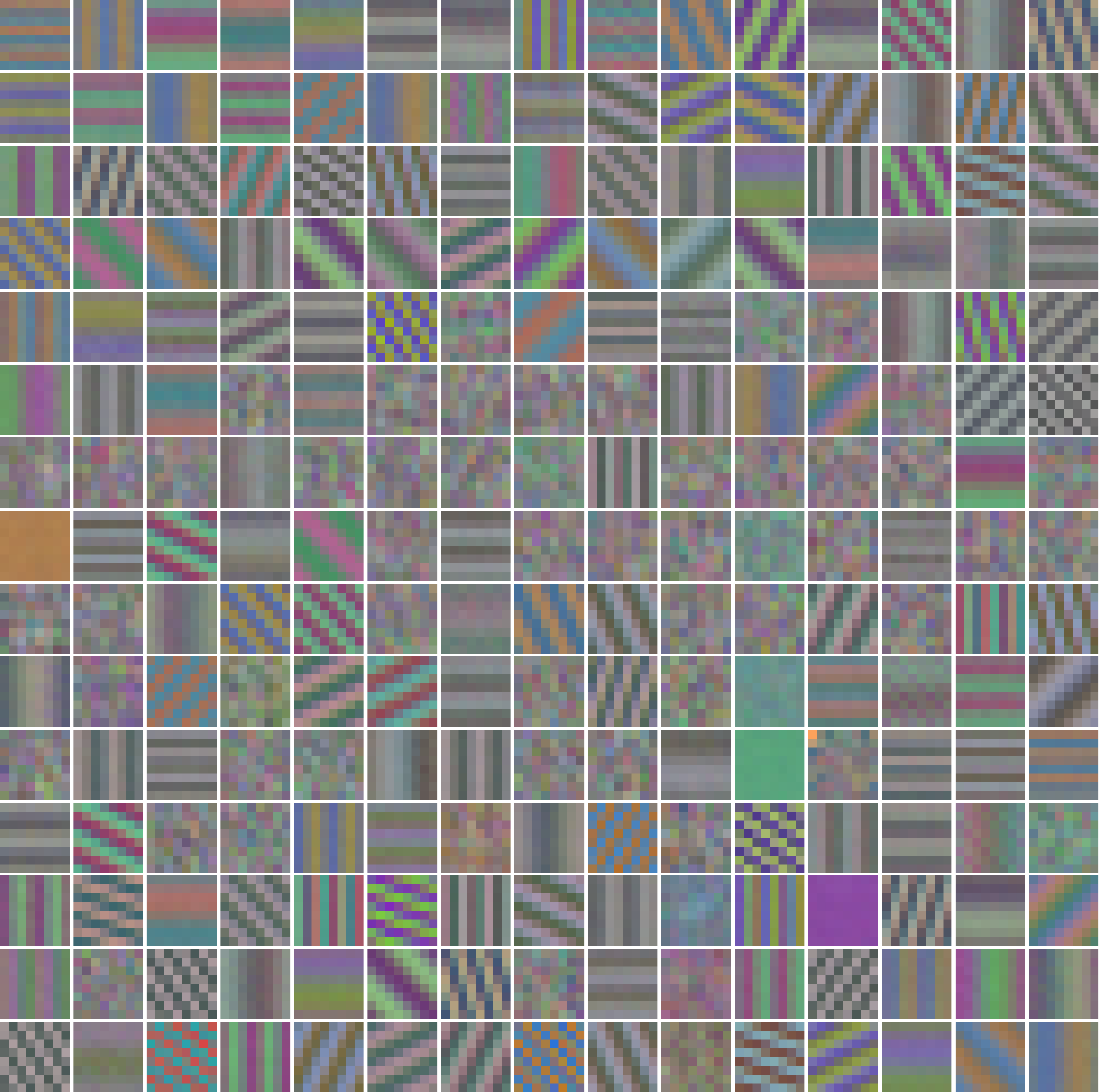}
        \caption{$k = 200$}
    \end{subfigure}
    \caption{\textbf{Additional visualizations.} Lifted concepts in the input domain (CIFAR TopK-SAE-FNO (modes = 6, spatially 3-sparse)}
    \label{fig:cifar-lifted-concepts}
\end{figure}

\begin{figure}
    \centering
    \begin{subfigure}{0.24\textwidth}
        \includegraphics[width=\linewidth]{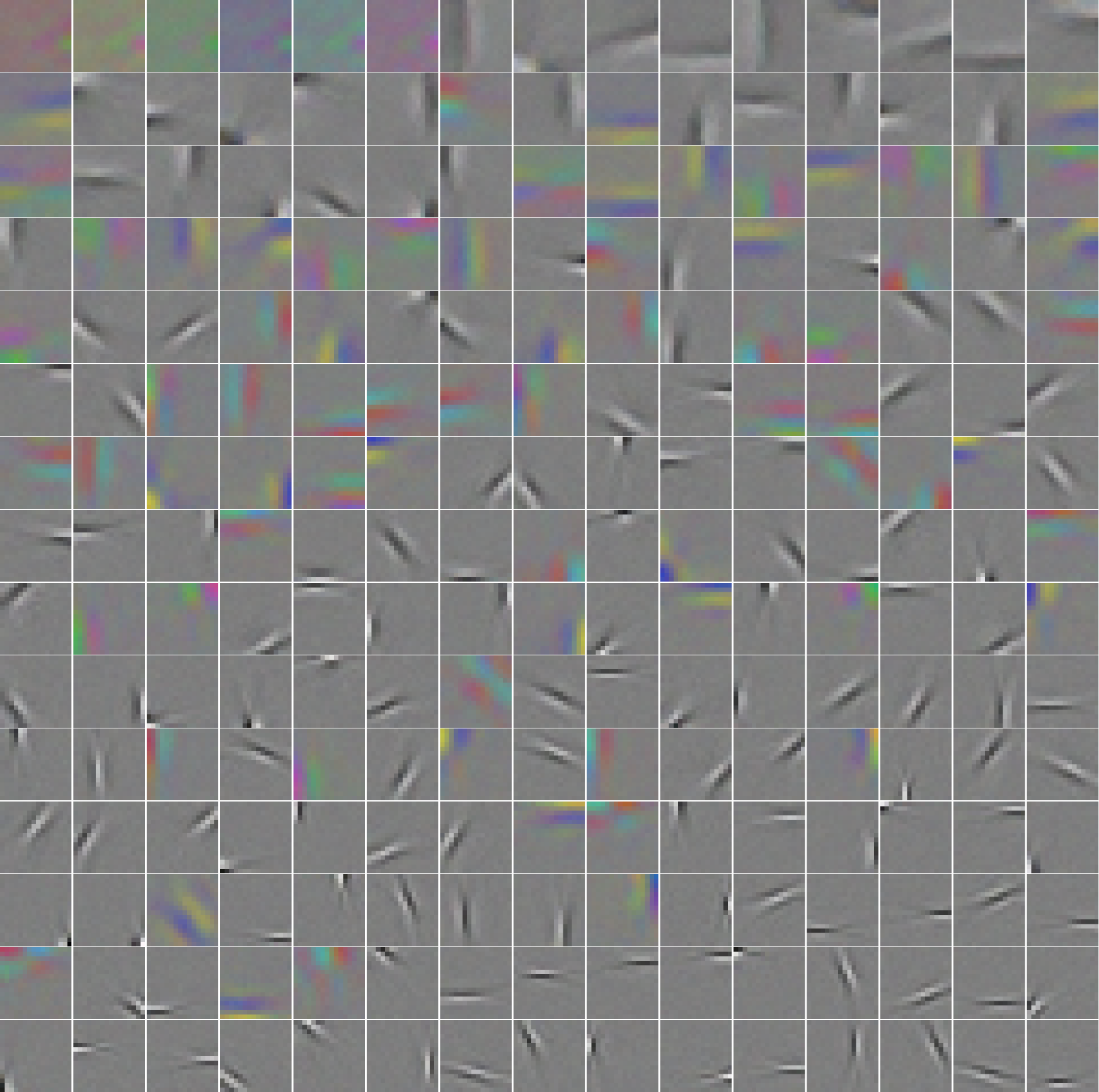}
        \caption{$k = 25$}
    \end{subfigure}
    \begin{subfigure}{0.24\textwidth}
        \includegraphics[width=\linewidth]{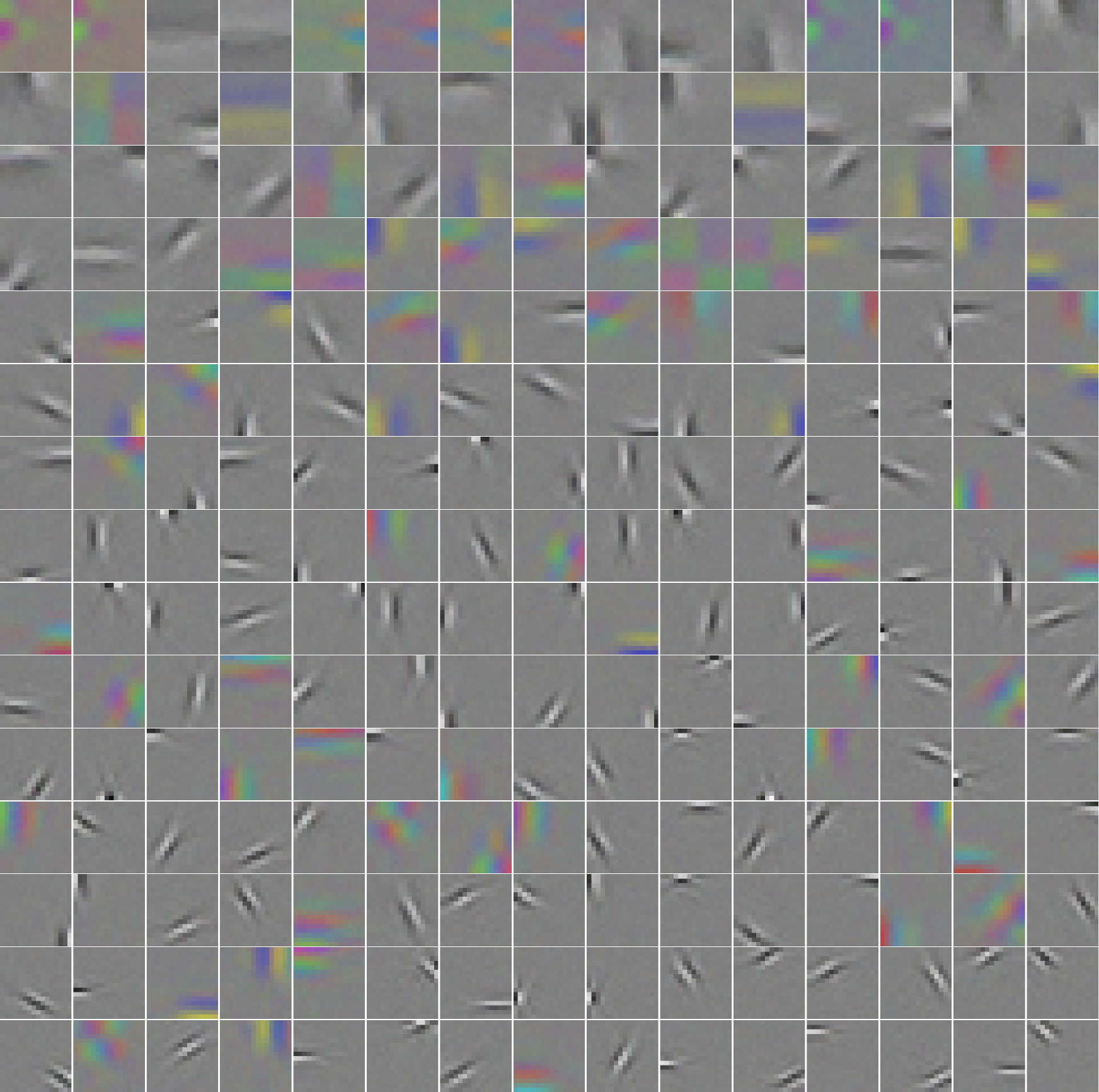}
        \caption{$k = 50$}
    \end{subfigure}
    \begin{subfigure}{0.24\textwidth}
        \includegraphics[width=\linewidth]{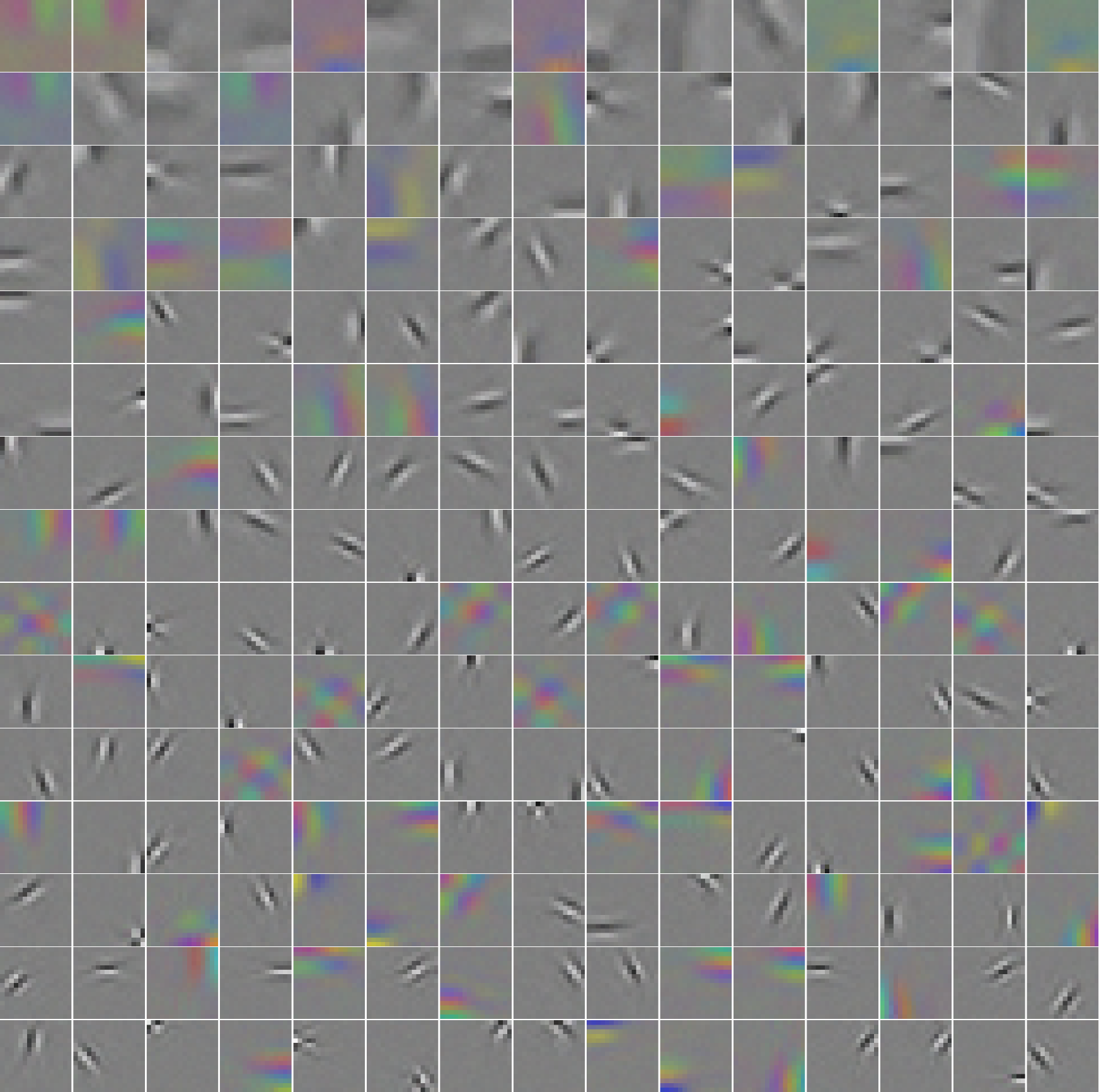}
        \caption{$k = 100$}
    \end{subfigure}
    \\
    \begin{subfigure}{0.24\textwidth}
        \includegraphics[width=\linewidth]{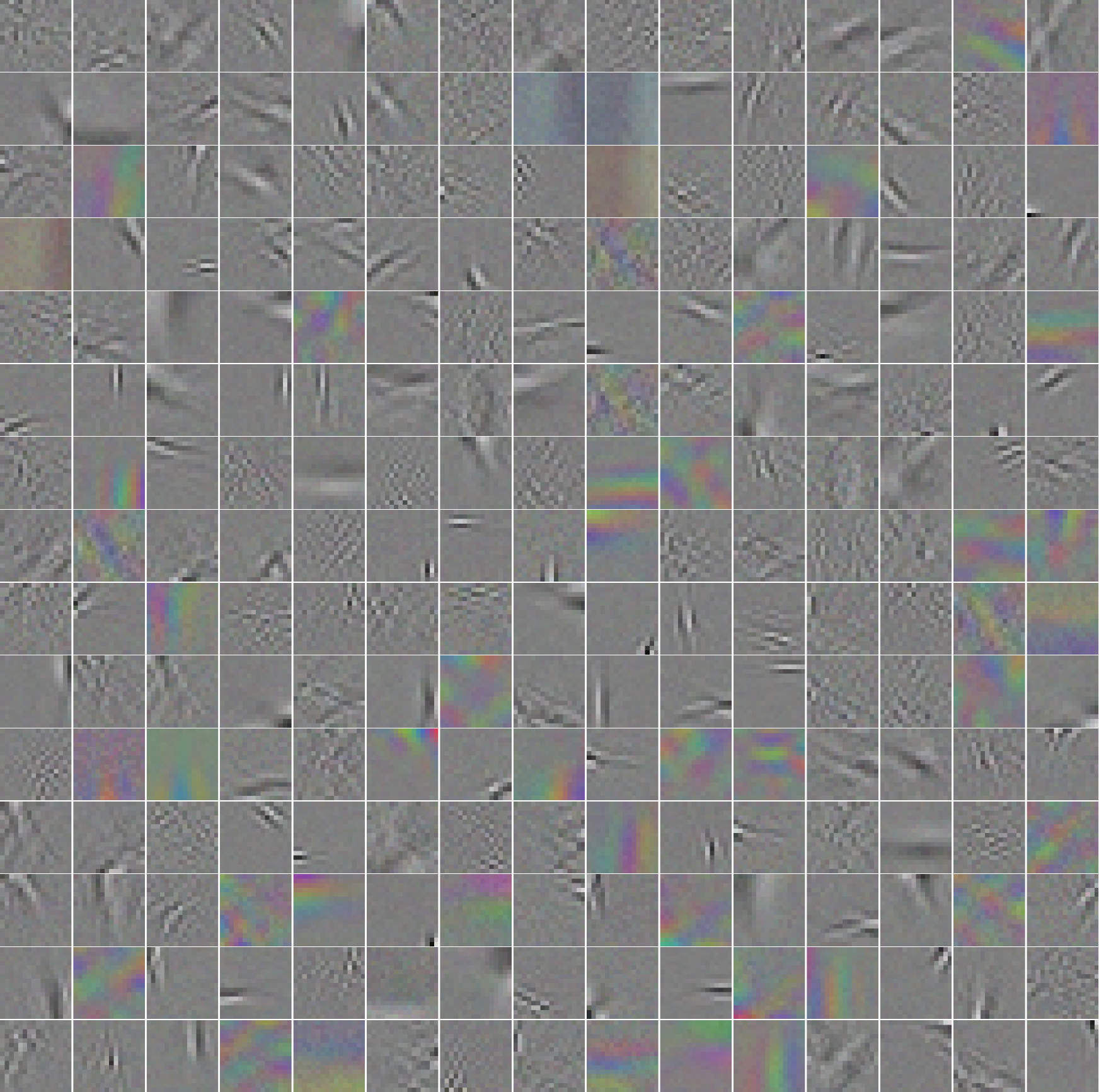}
        \caption{$k = 200$}
    \end{subfigure}
    \begin{subfigure}{0.24\textwidth}
        \includegraphics[width=\linewidth]{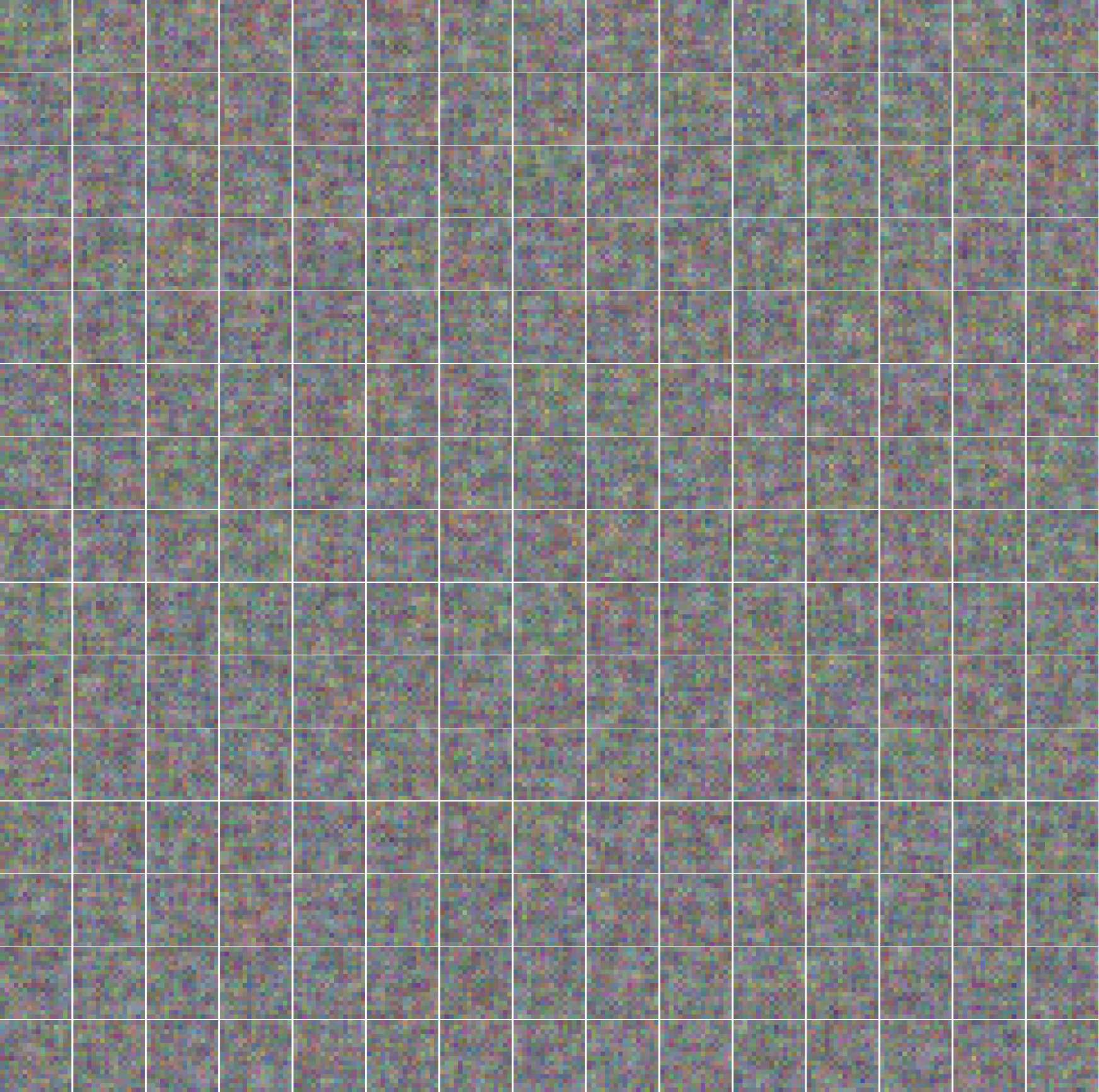}
        \caption{$k = 500$}
    \end{subfigure}
    \begin{subfigure}{0.24\textwidth}
        \includegraphics[width=\linewidth]{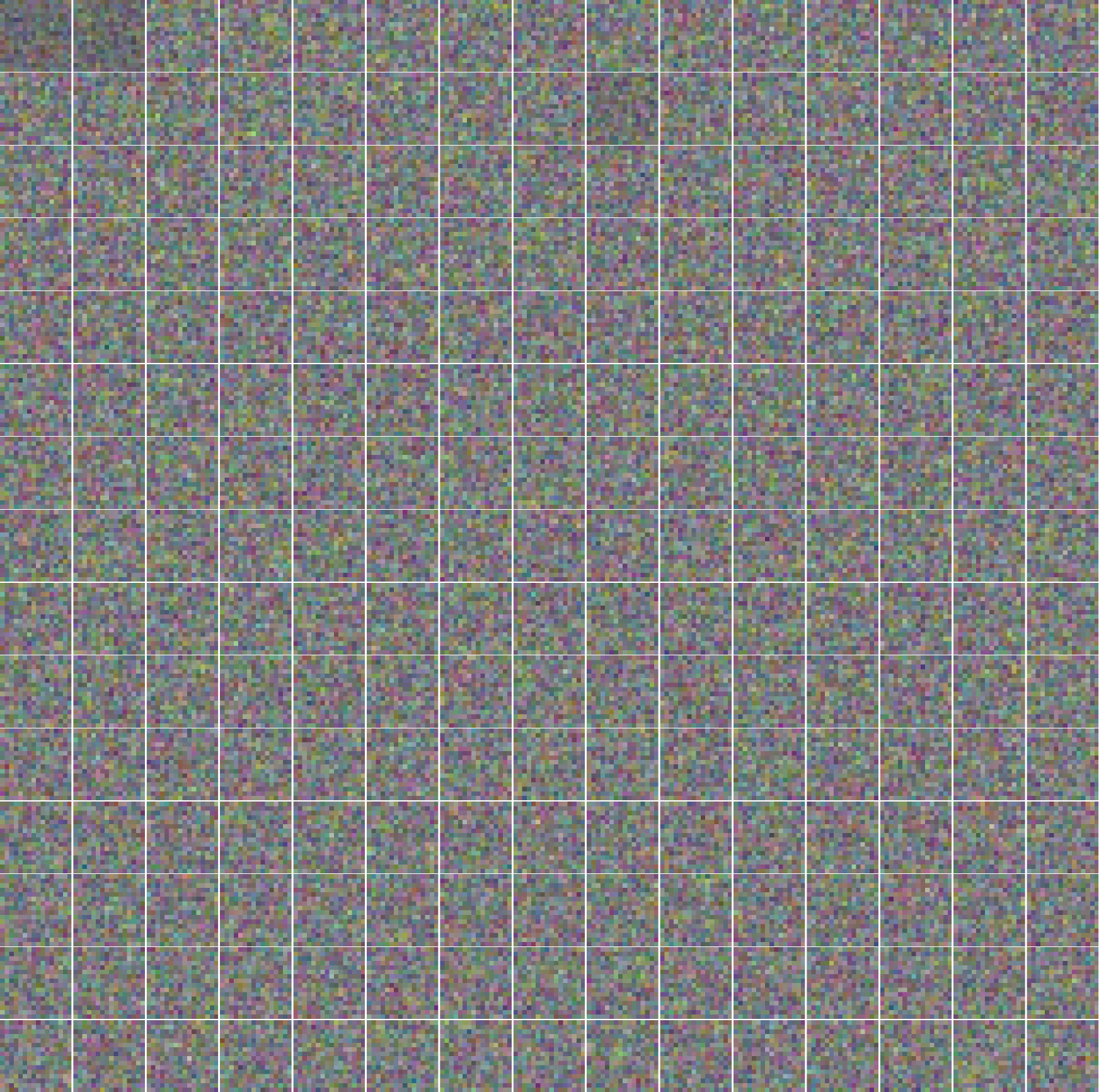}
        \caption{$k = 700$}
    \end{subfigure}
    \caption{\textbf{Additional visualizations.} SAE-MLP-TopK learned concepts on ImageNet patches.}
    \label{fig:imagenet-conceptstable-mlp}
\end{figure}

\begin{figure}
    \centering
    \begin{subfigure}{0.24\textwidth}
        \includegraphics[width=\linewidth]{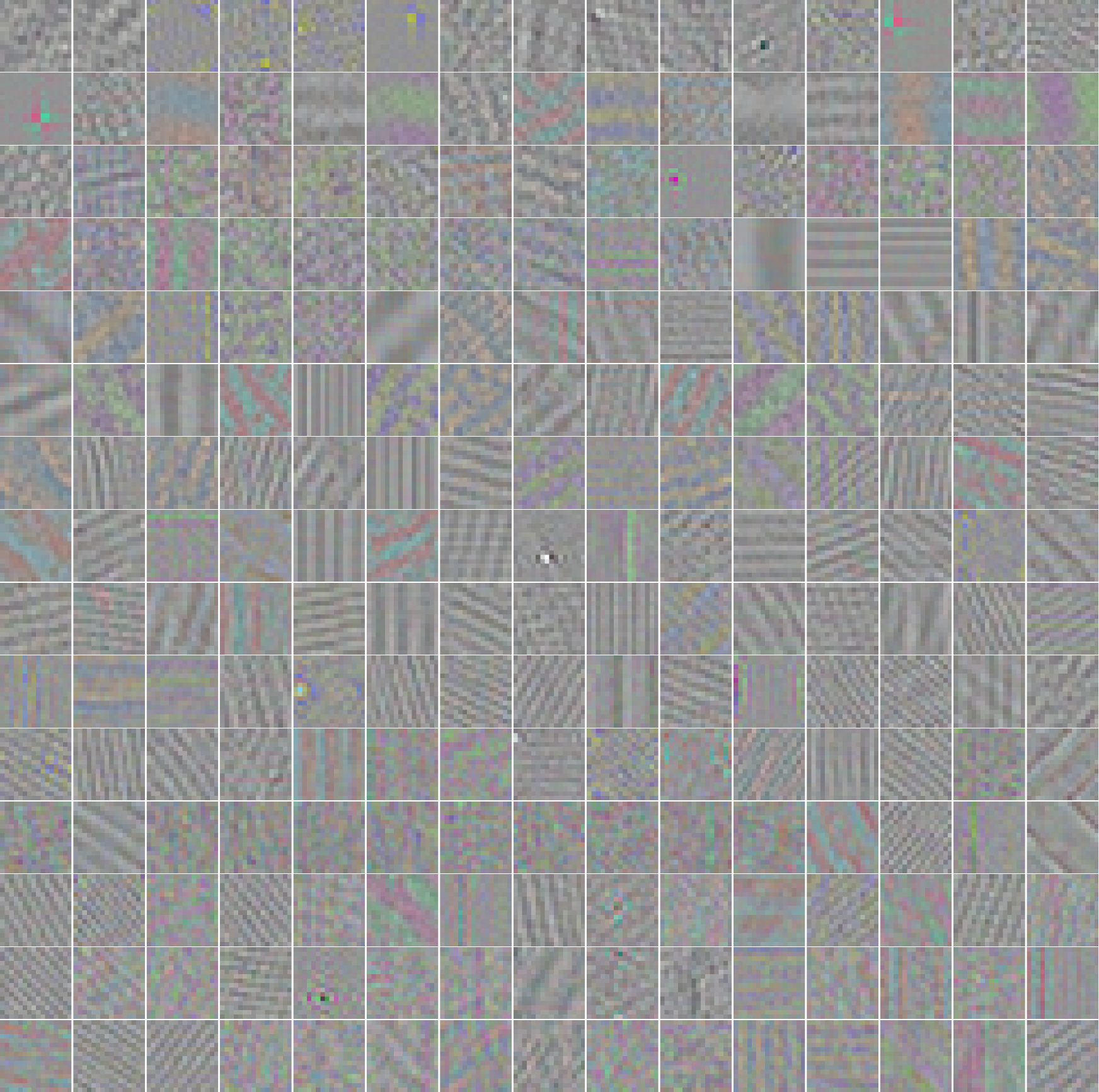}
        \caption{$k = 25$}
    \end{subfigure}
    \begin{subfigure}{0.24\textwidth}
        \includegraphics[width=\linewidth]{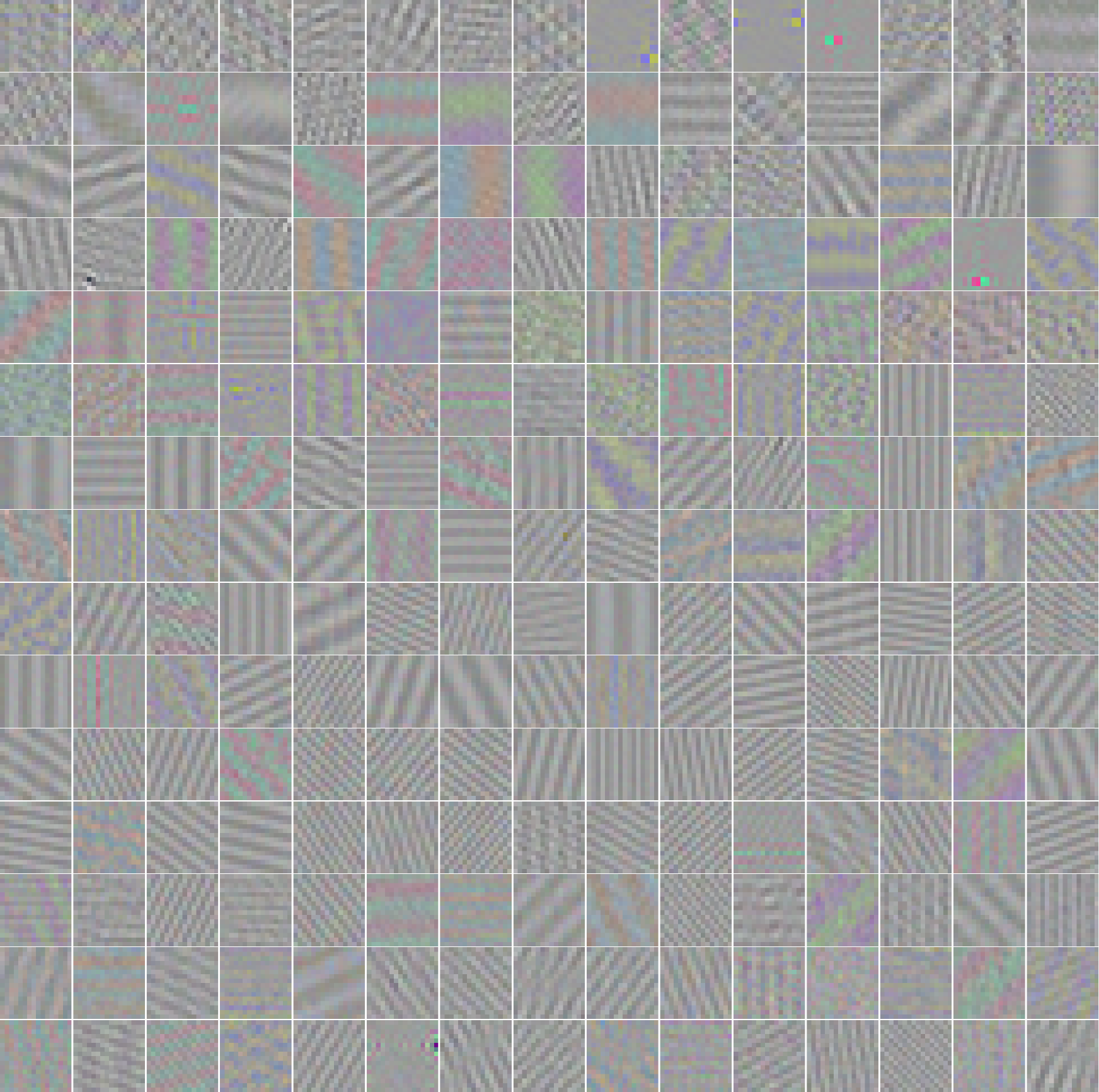}
        \caption{$k = 50$}
    \end{subfigure}
    \begin{subfigure}{0.24\textwidth}
        \includegraphics[width=\linewidth]{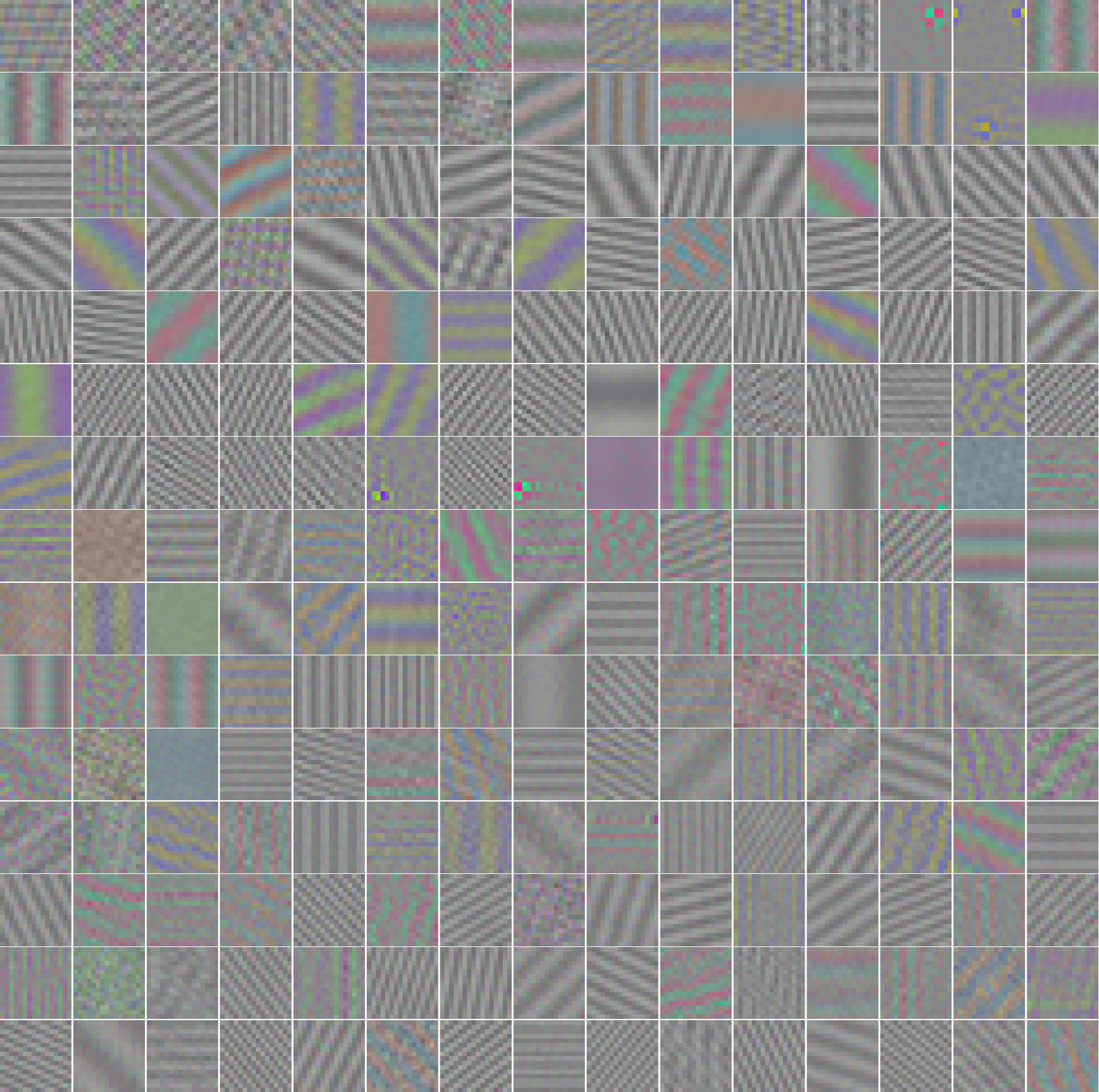}
        \caption{$k = 100$}
    \end{subfigure}
    \\
    \begin{subfigure}{0.24\textwidth}
        \includegraphics[width=\linewidth]{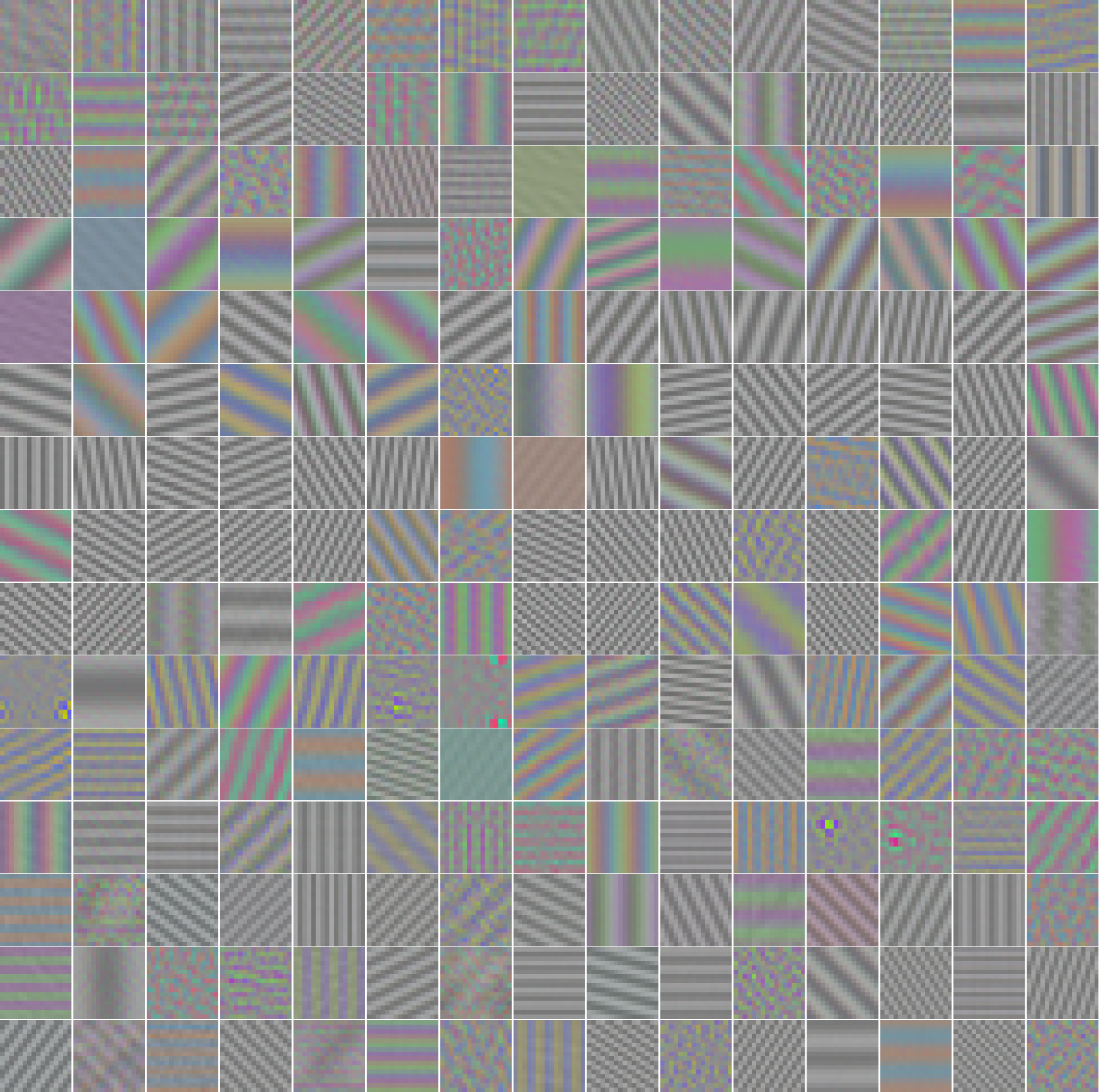}
        \caption{$k = 200$}
    \end{subfigure}
    \begin{subfigure}{0.24\textwidth}
        \includegraphics[width=\linewidth]{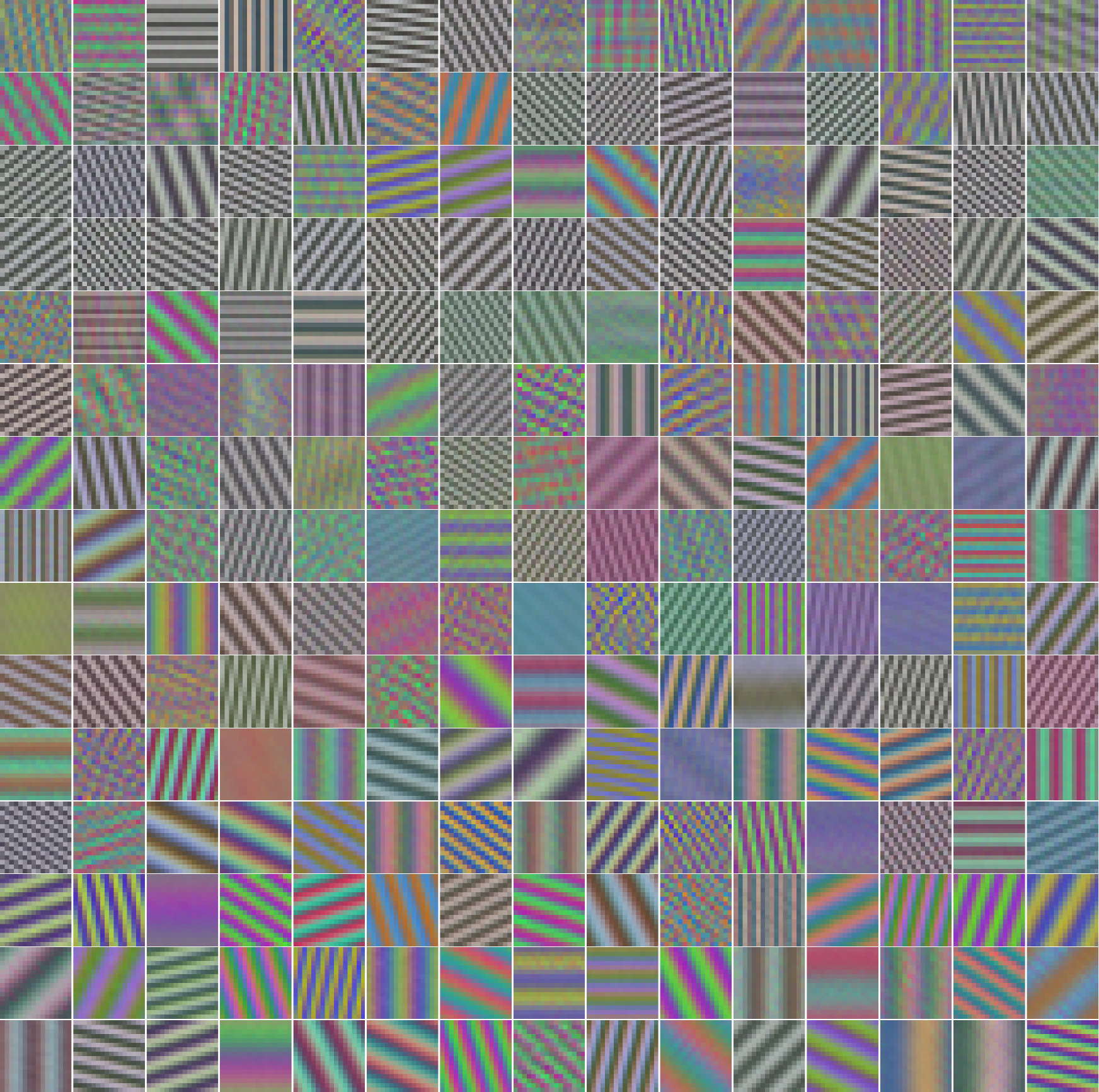}
        \caption{$k = 500$}
    \end{subfigure}
    \begin{subfigure}{0.24\textwidth}
        \includegraphics[width=\linewidth]{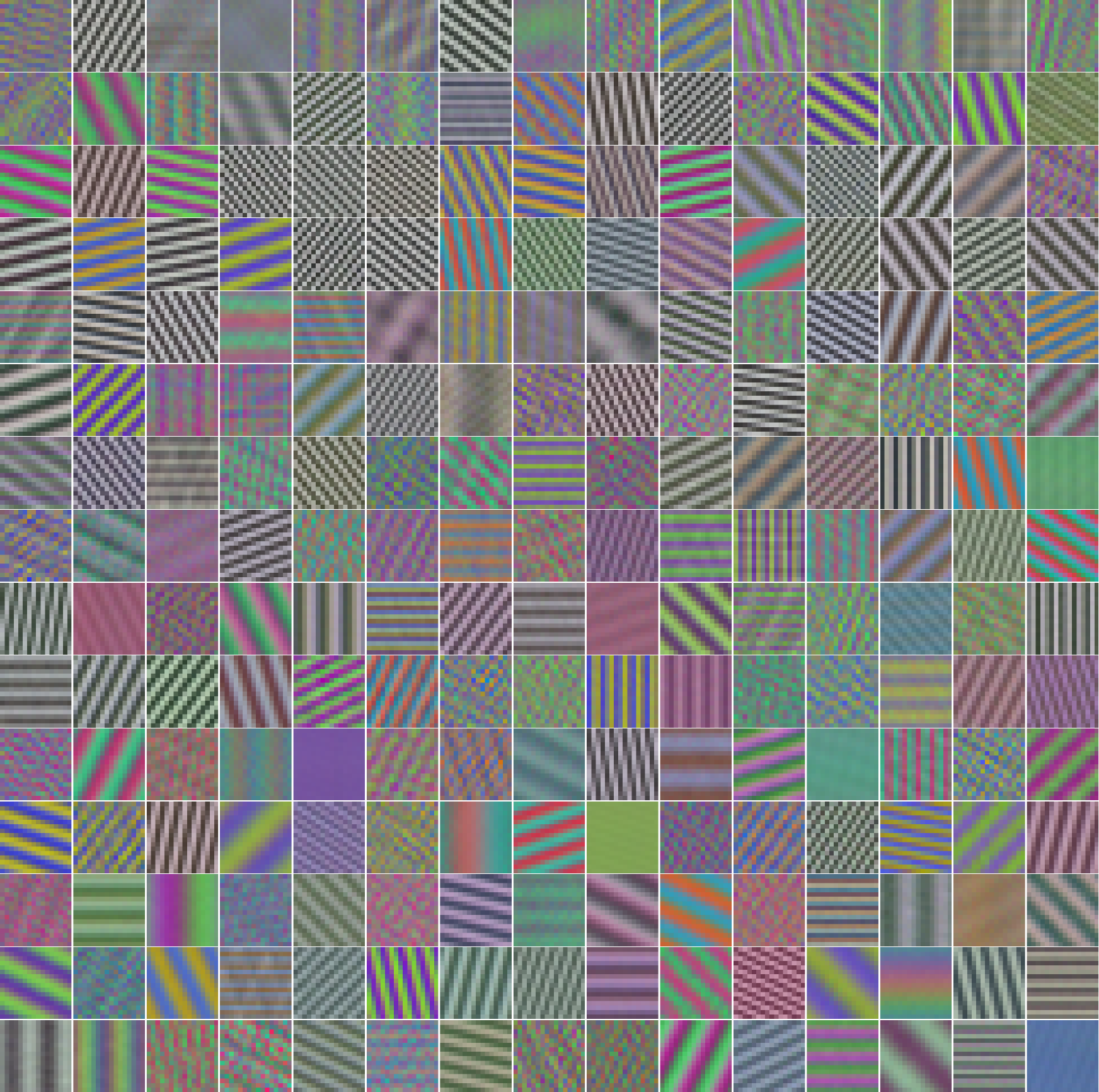}
        \caption{$k = 700$}
    \end{subfigure}
    \caption{\textbf{Additional visualizations.} SAE-FNO-TopK (modes = 12, spatially 5-sparse) learned concepts on ImageNet patches.}
    \label{fig:imagenet-conceptstable-fno}
\end{figure}


\begin{figure}
    \centering
    \begin{subfigure}{0.24\textwidth}
        \includegraphics[width=\linewidth]{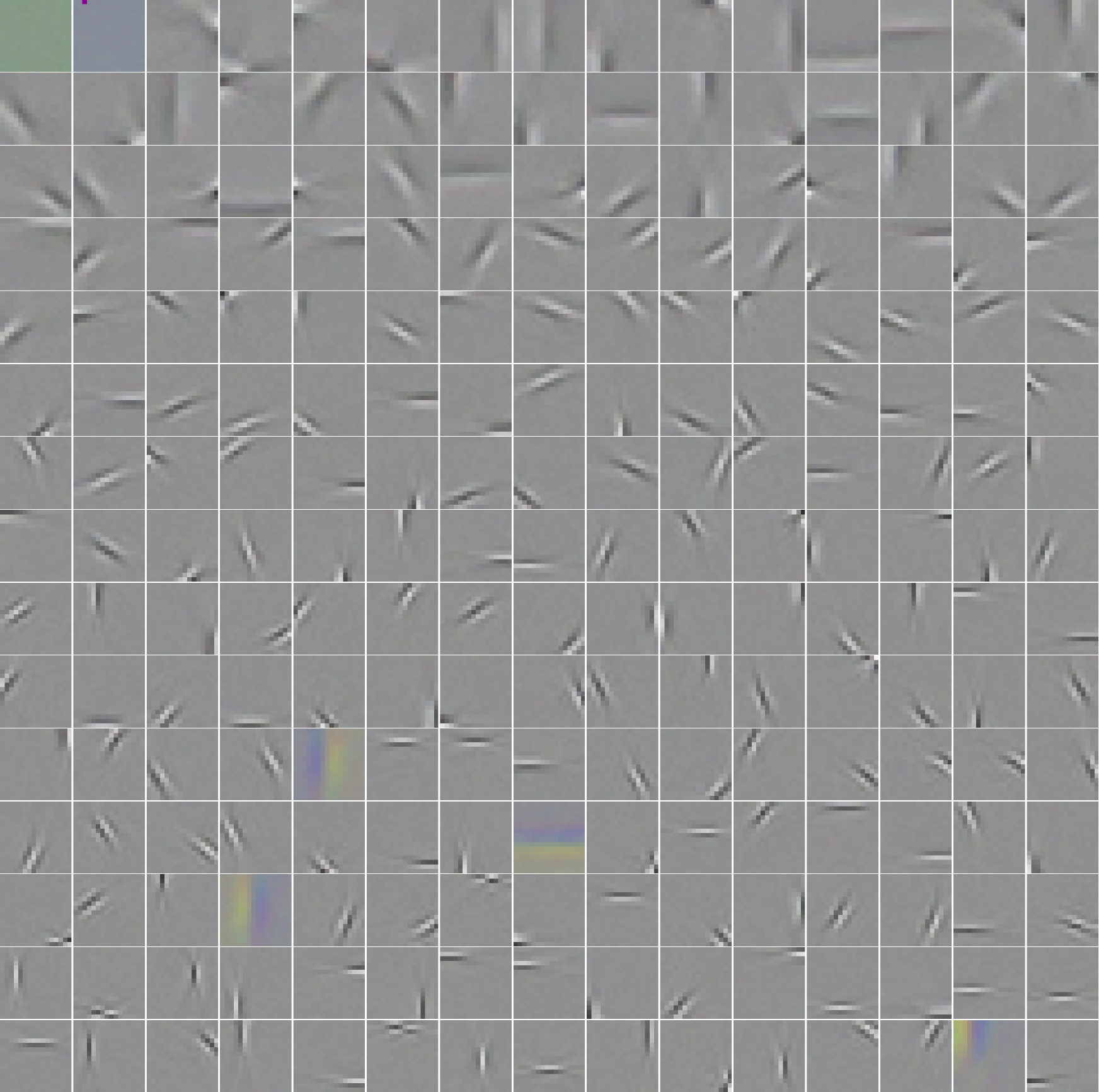}
        \caption{$k = 25$}
    \end{subfigure}
    \begin{subfigure}{0.24\textwidth}
        \includegraphics[width=\linewidth]{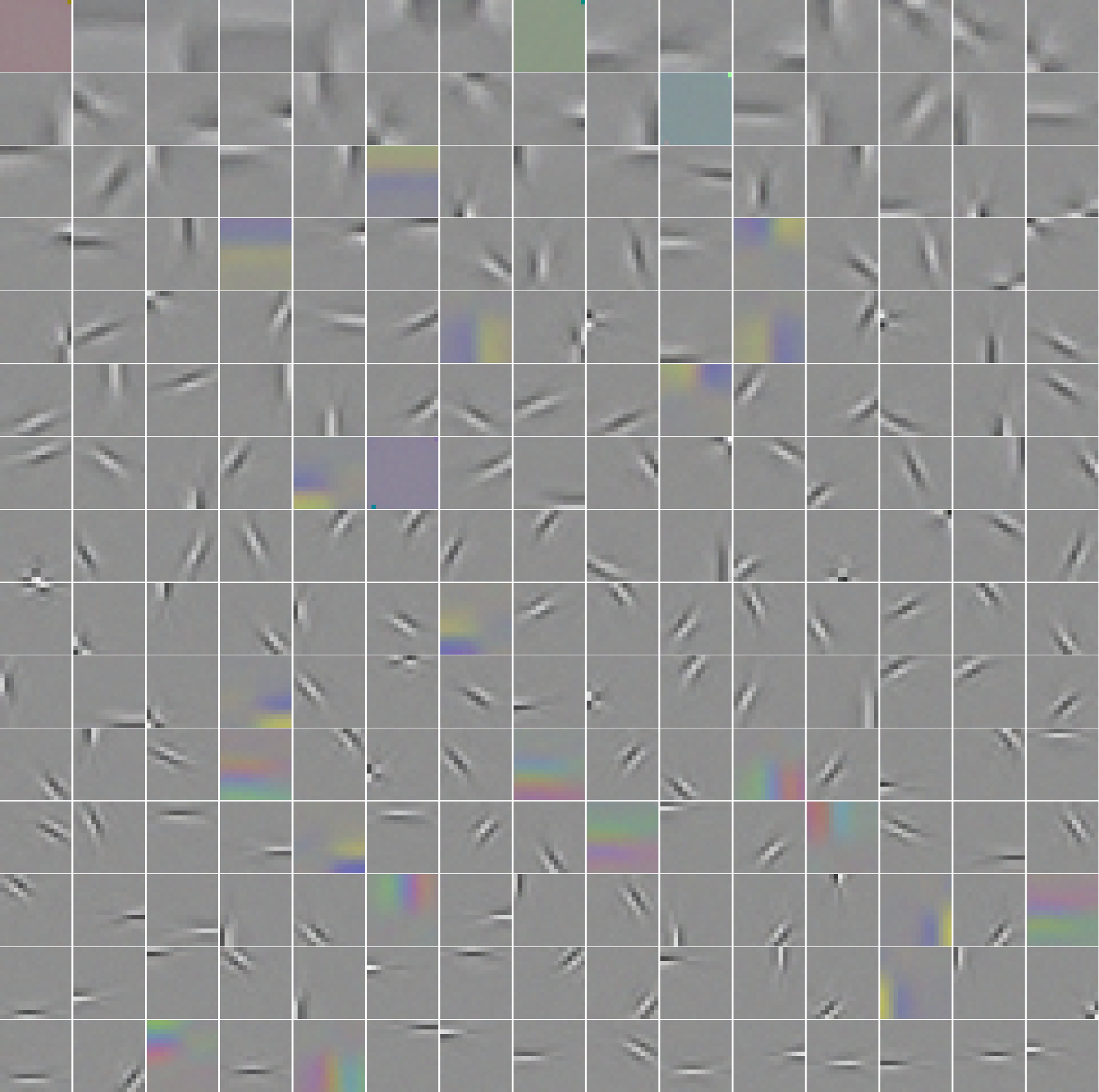}
        \caption{$k = 50$}
    \end{subfigure}
    \begin{subfigure}{0.24\textwidth}
        \includegraphics[width=\linewidth]{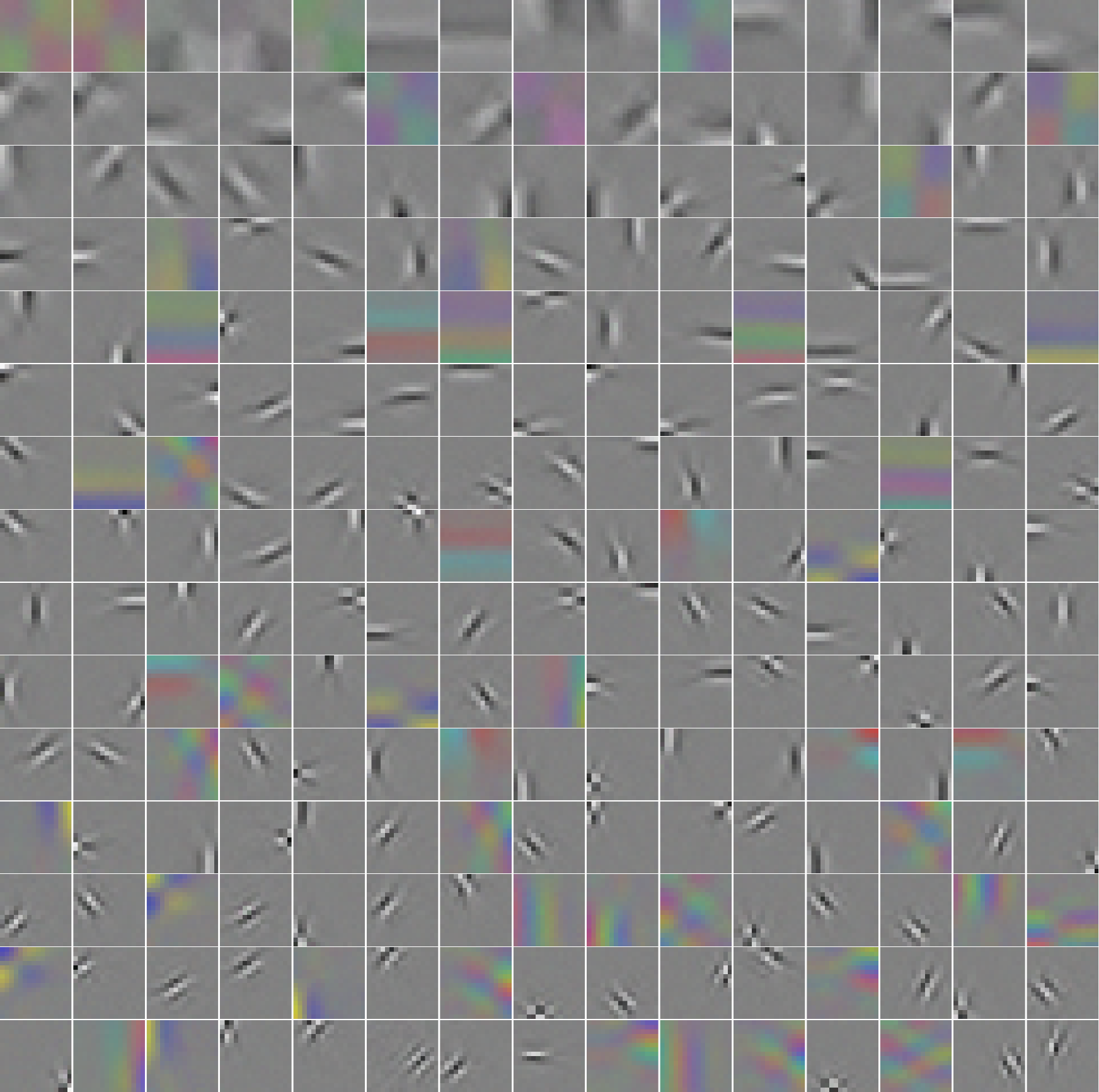}
        \caption{$k = 100$}
    \end{subfigure}
    \\
    \begin{subfigure}{0.24\textwidth}
        \includegraphics[width=\linewidth]{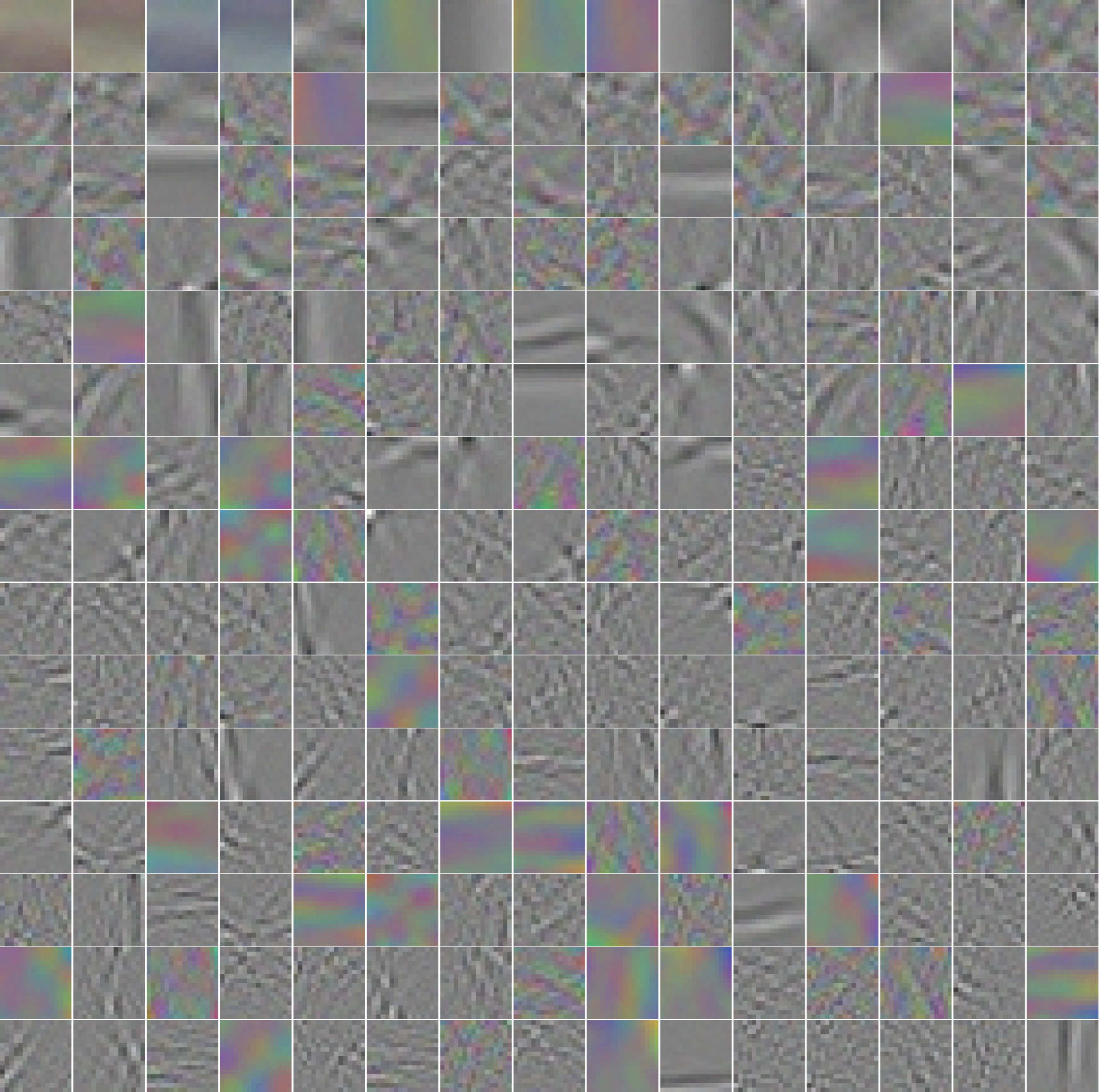}
        \caption{$k = 200$}
    \end{subfigure}
    \begin{subfigure}{0.24\textwidth}
        \includegraphics[width=\linewidth]{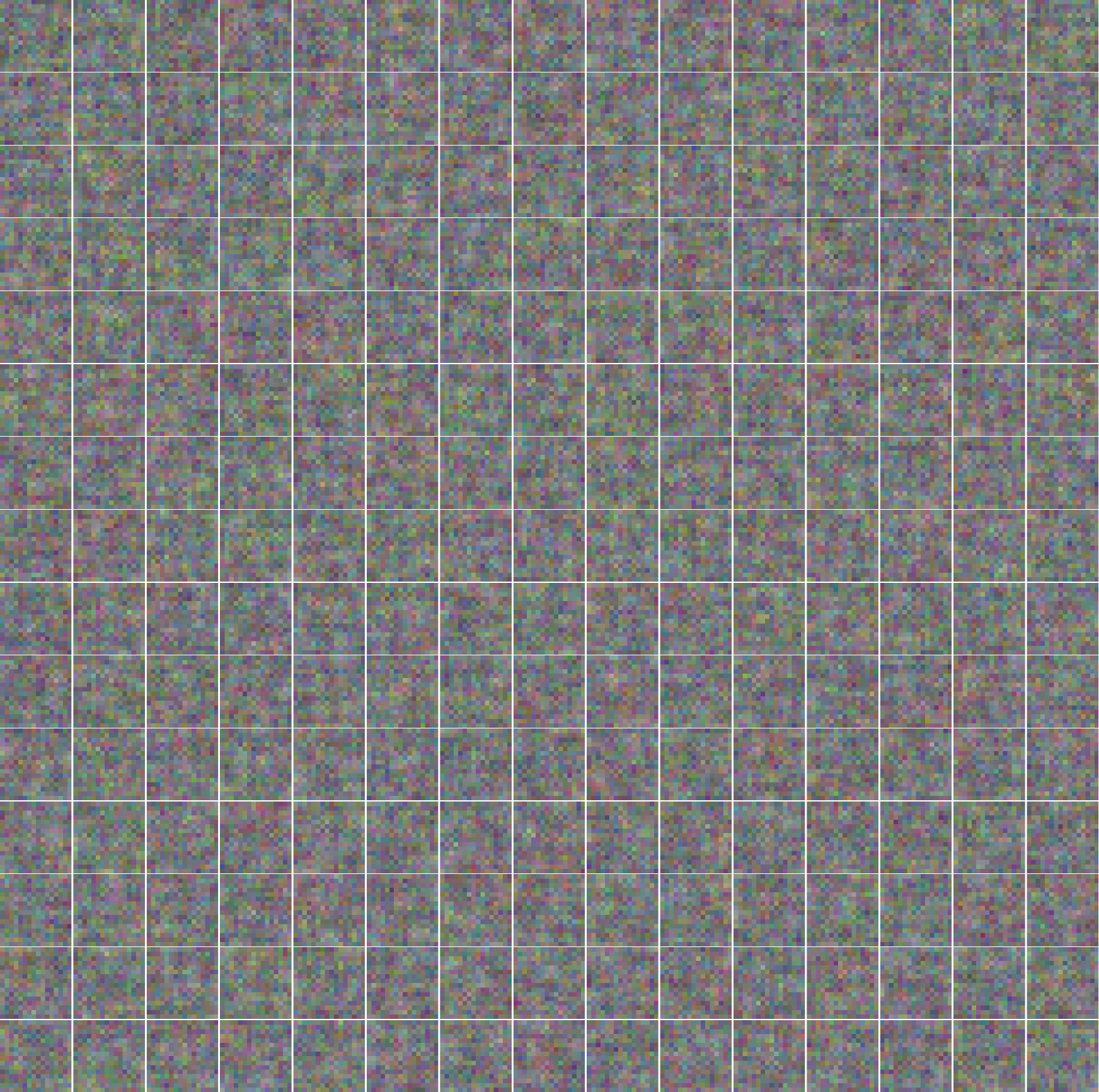}
        \caption{$k = 500$}
    \end{subfigure}
    \begin{subfigure}{0.24\textwidth}
        \includegraphics[width=\linewidth]{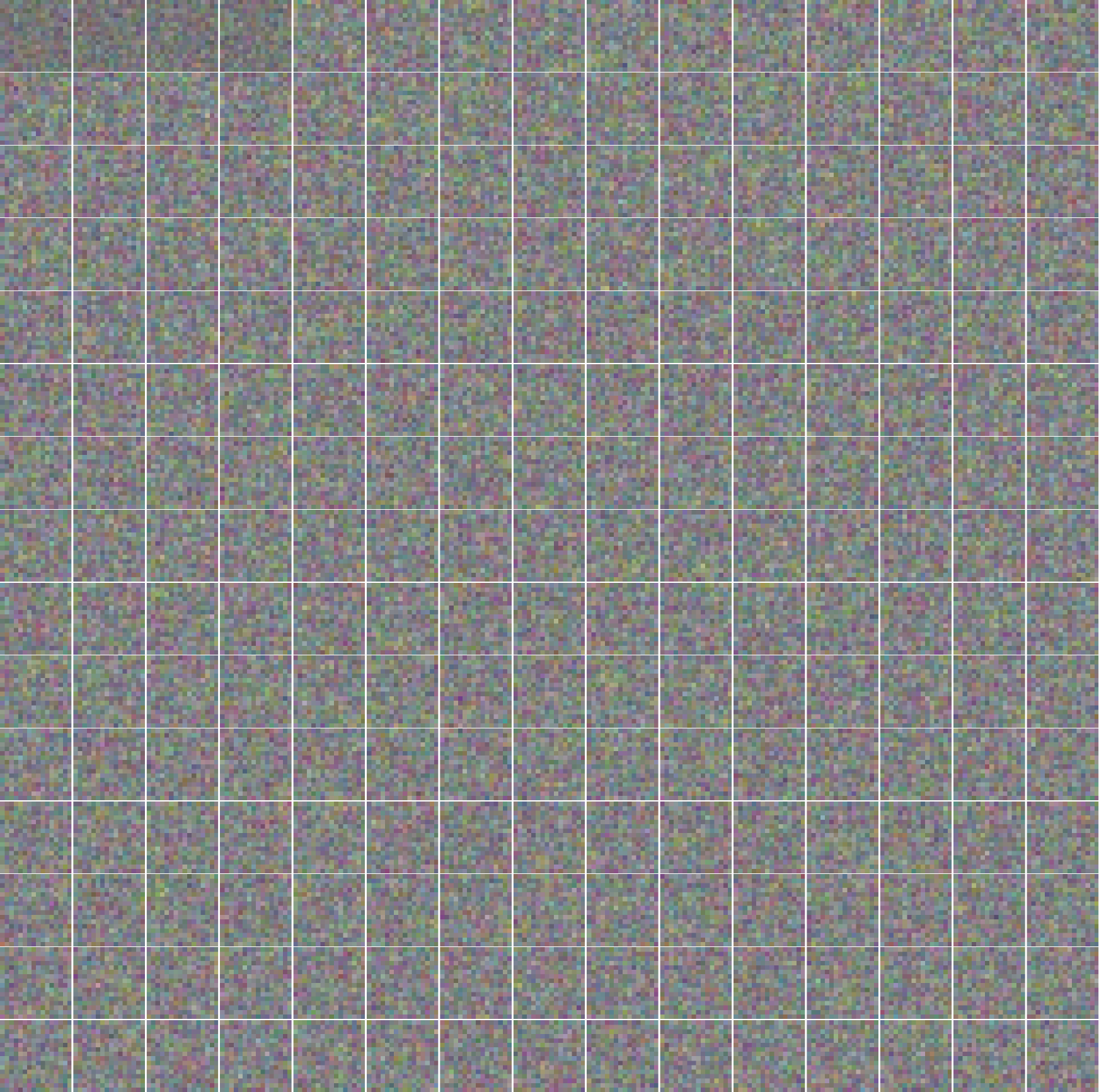}
        \caption{$k = 700$}
    \end{subfigure}
    \caption{\textbf{Additional visualizations.} SAE-MLP-BatchTopK learned concepts on ImageNet patches.}
    \label{fig:imagenet-conceptstable-mlp-batchtopk}
\end{figure}

\begin{figure}
    \centering
    \begin{subfigure}{0.24\textwidth}
        \includegraphics[width=\linewidth]{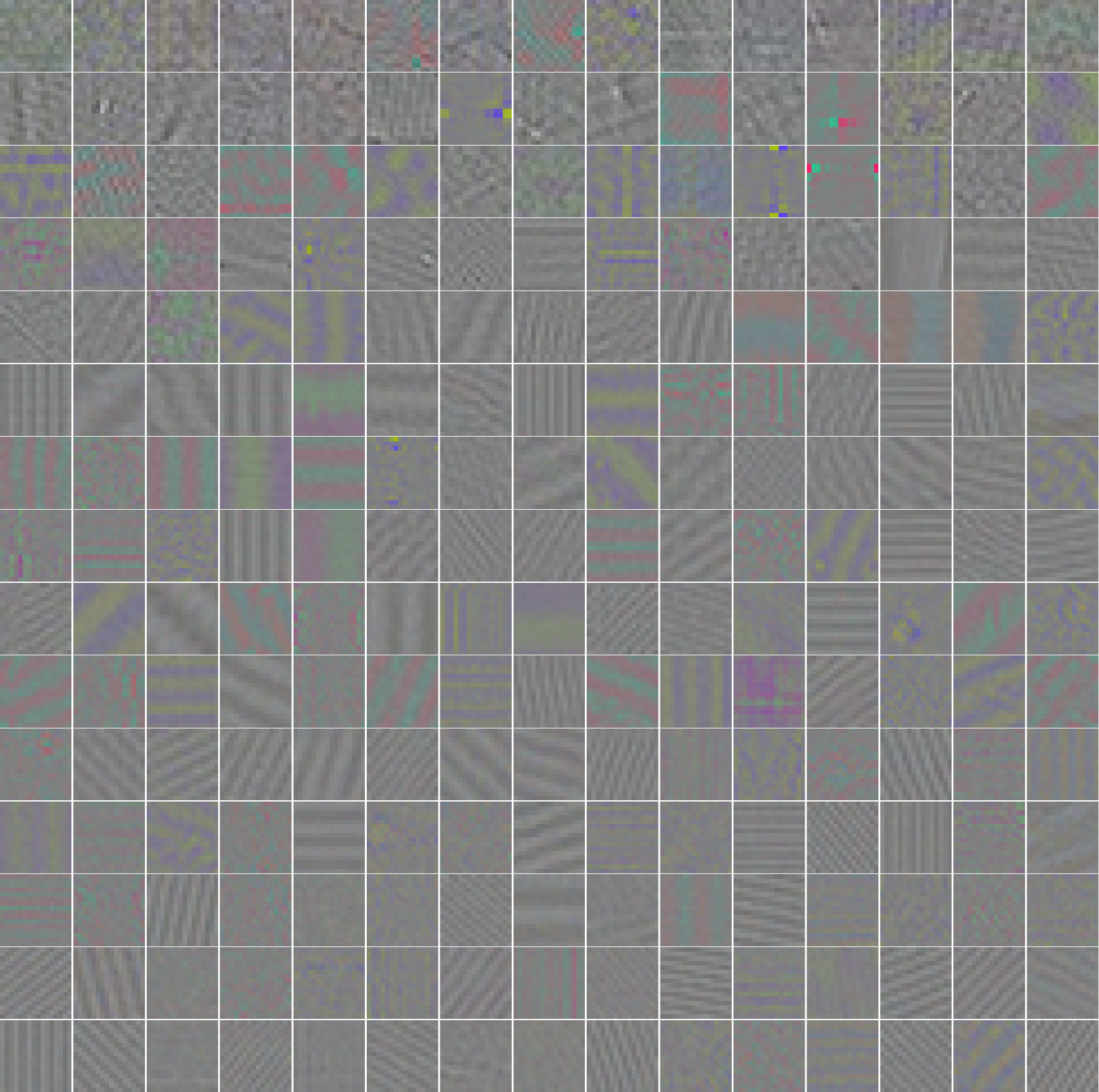}
        \caption{$k = 25$}
    \end{subfigure}
    \begin{subfigure}{0.24\textwidth}
        \includegraphics[width=\linewidth]{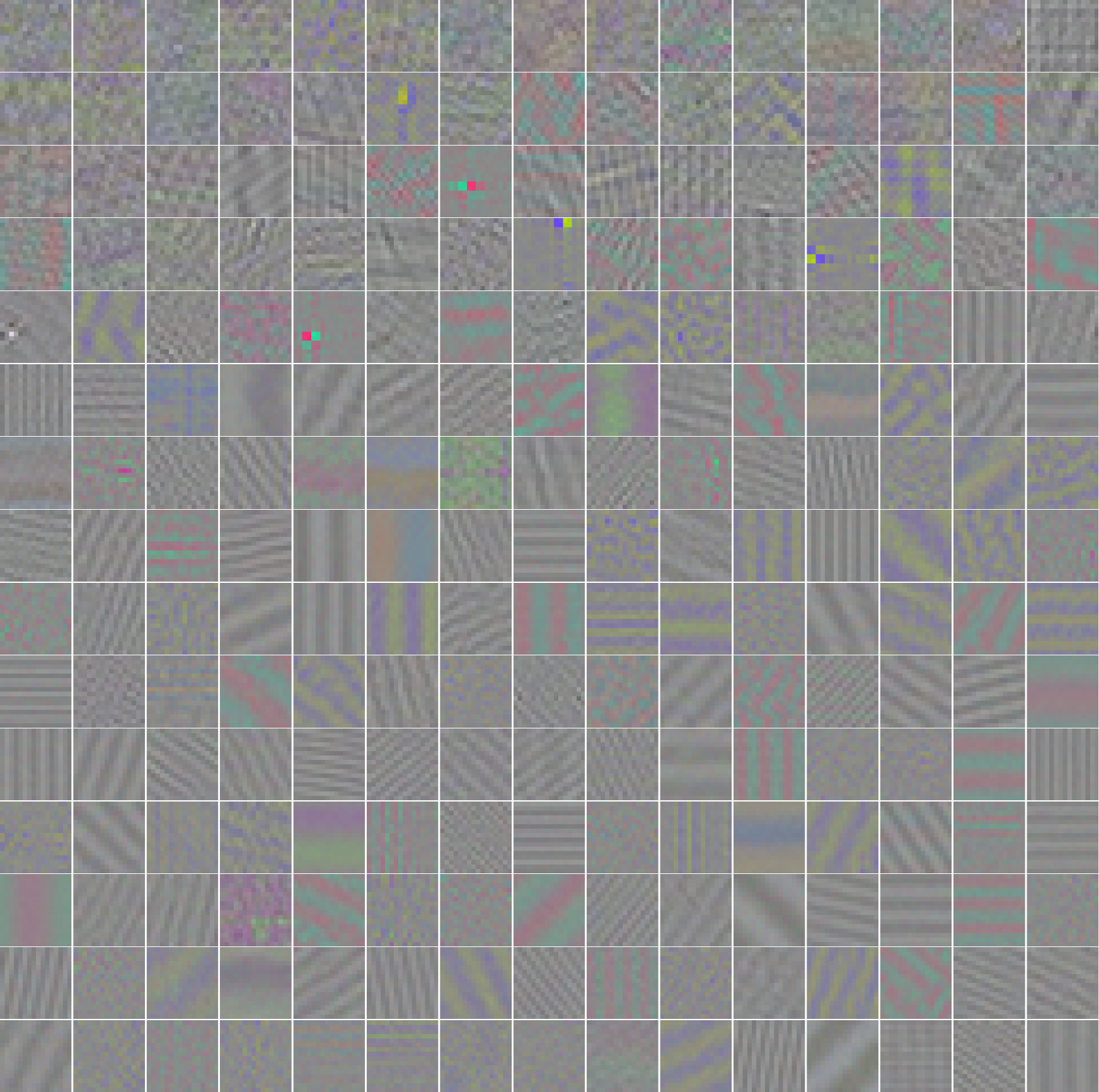}
        \caption{$k = 50$}
    \end{subfigure}
    \begin{subfigure}{0.24\textwidth}
        \includegraphics[width=\linewidth]{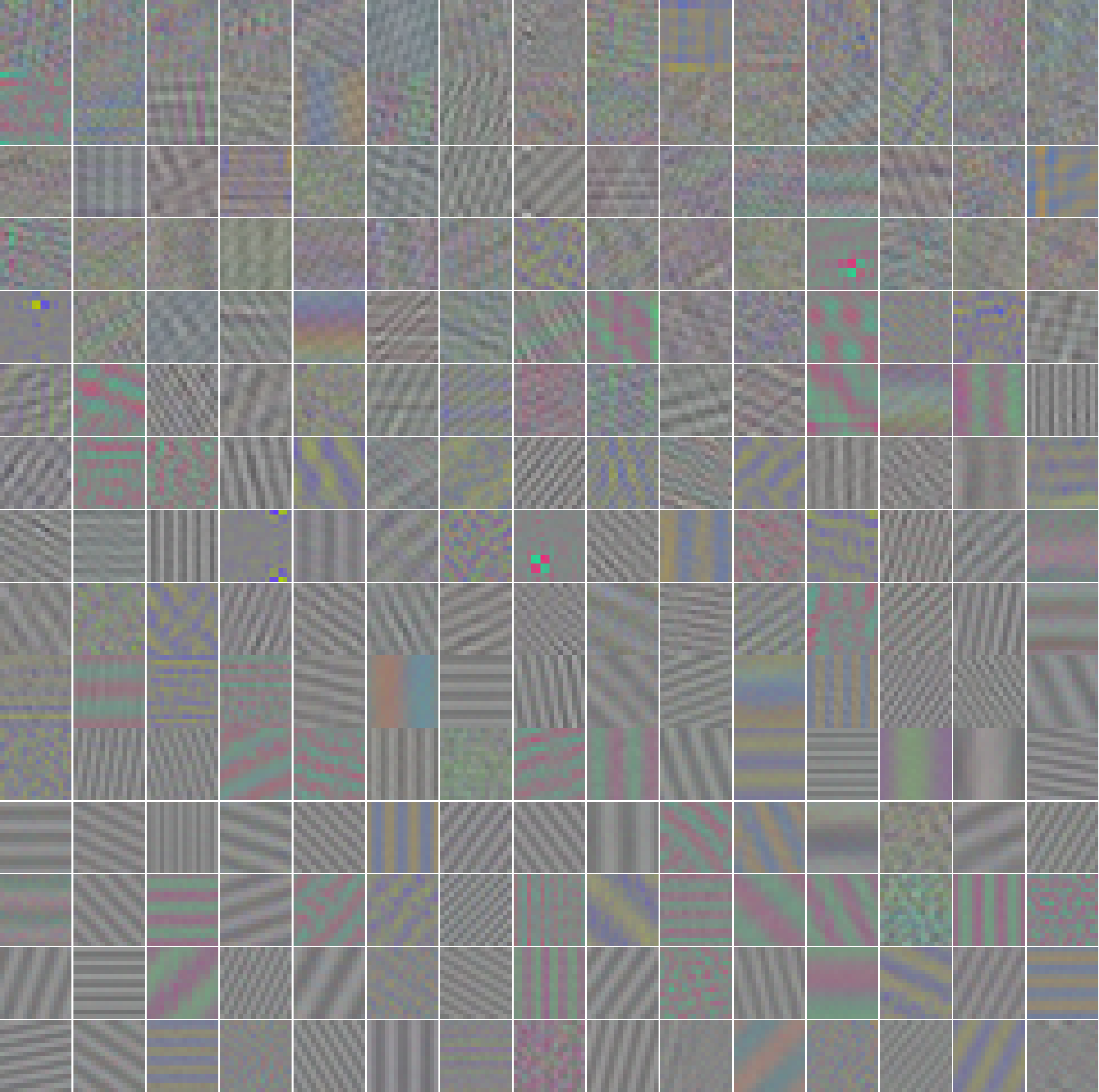}
        \caption{$k = 100$}
    \end{subfigure}
    \\
    \begin{subfigure}{0.24\textwidth}
        \includegraphics[width=\linewidth]{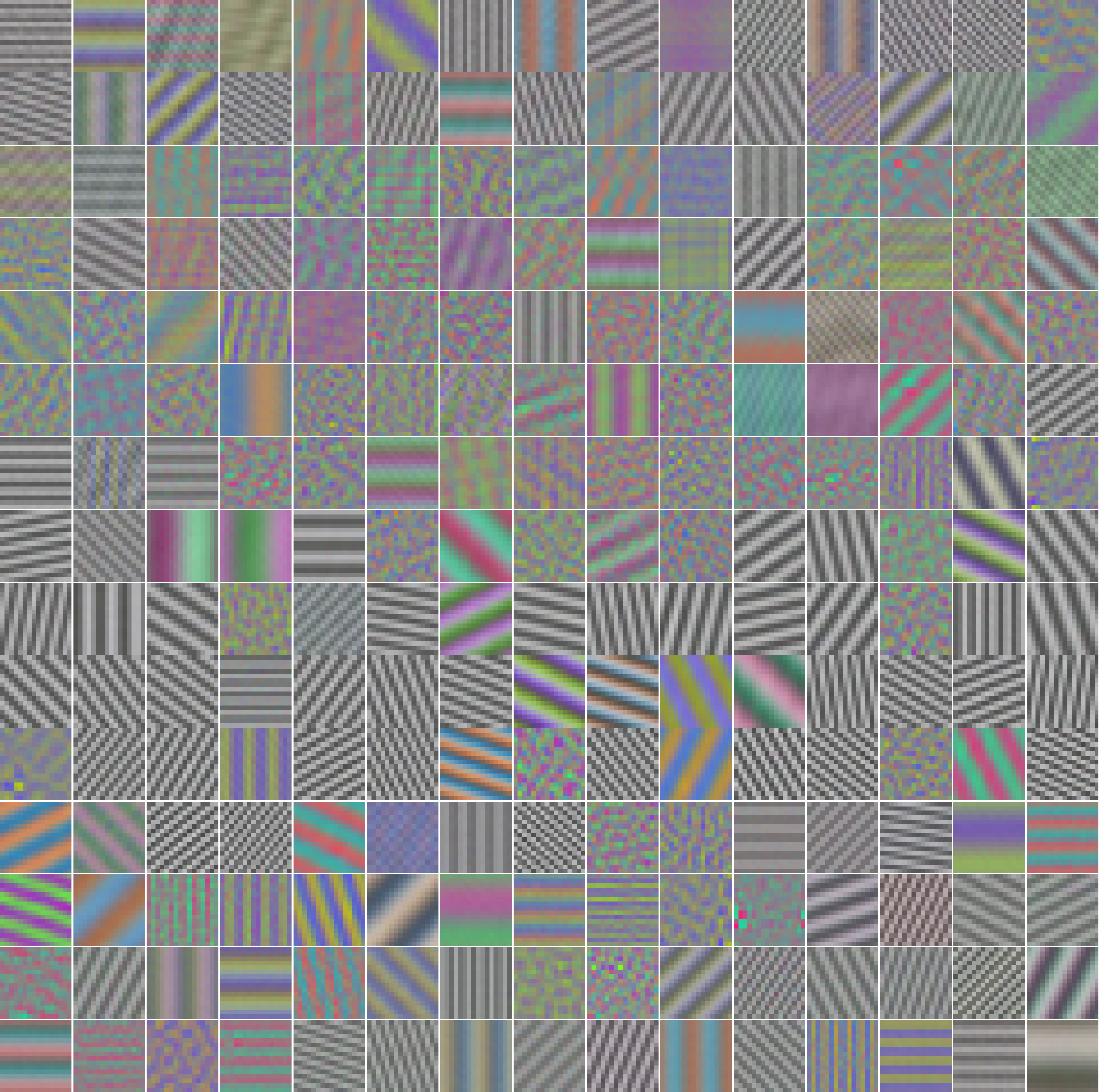}
        \caption{$k = 200$}
    \end{subfigure}
    \begin{subfigure}{0.24\textwidth}
        \includegraphics[width=\linewidth]{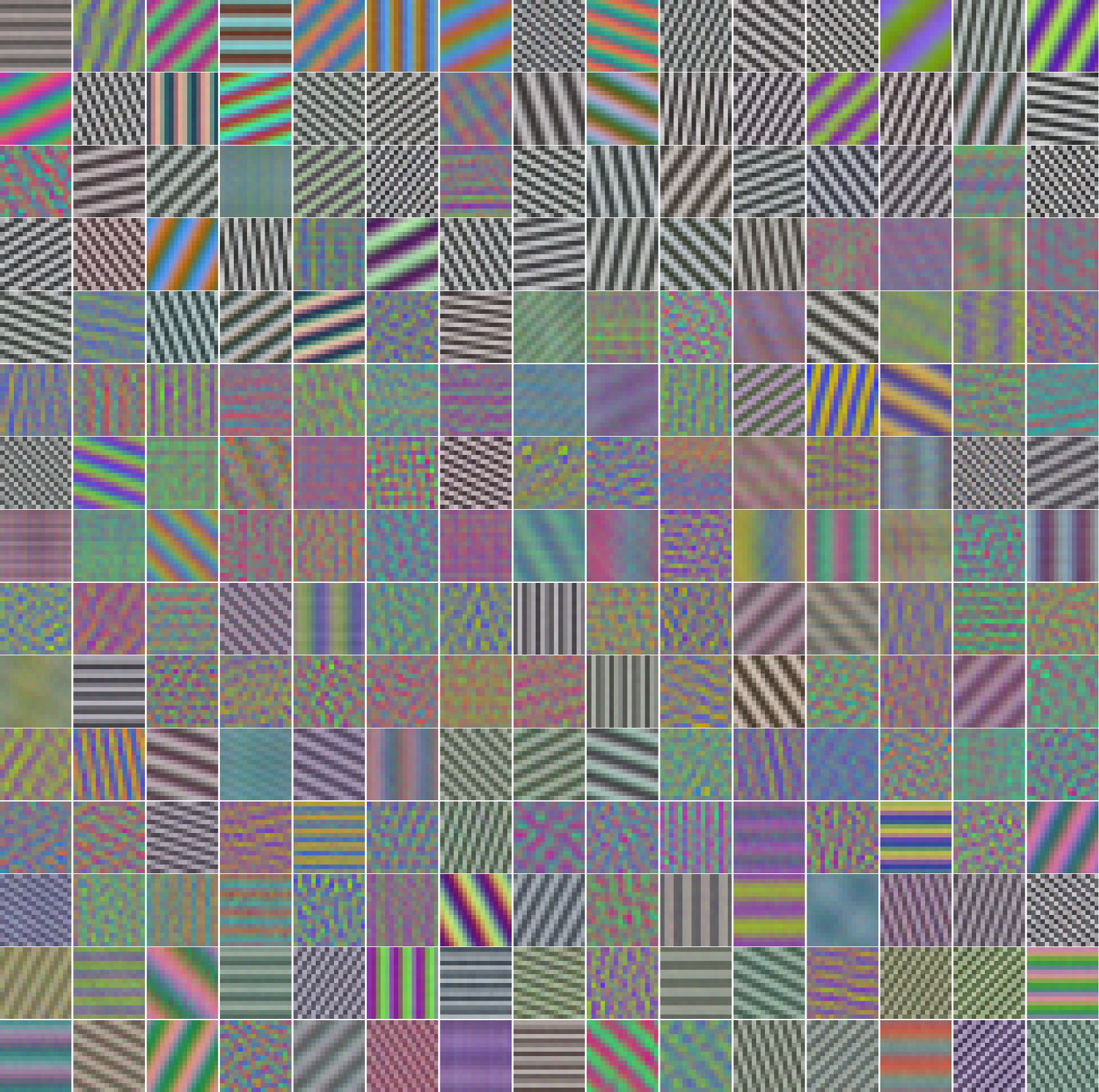}
        \caption{$k = 500$}
    \end{subfigure}
    \begin{subfigure}{0.24\textwidth}
        \includegraphics[width=\linewidth]{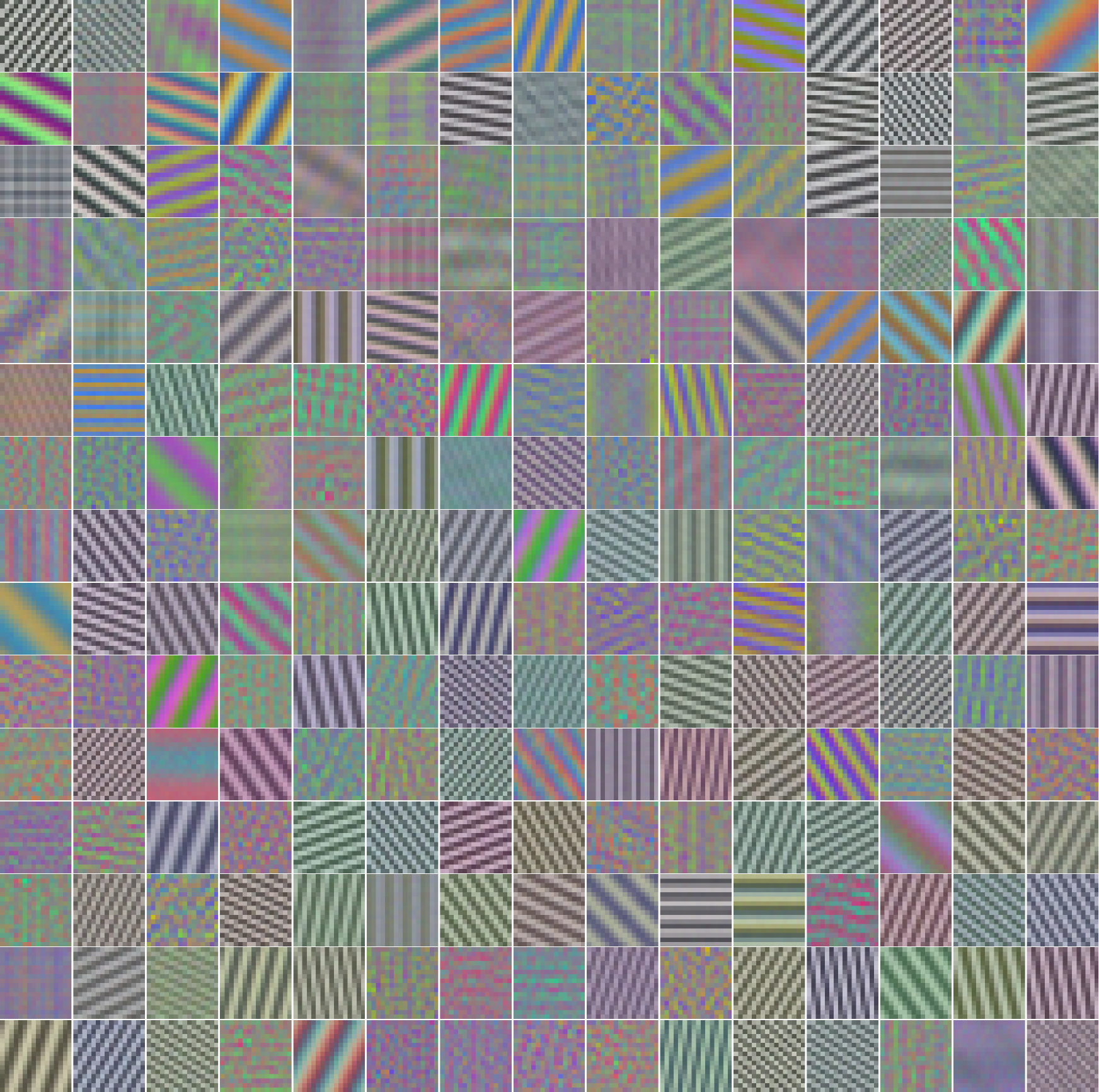}
        \caption{$k = 700$}
    \end{subfigure}
    \caption{\textbf{Additional visualizations.} SAE-FNO-BatchTopK (modes = 12, spatially 5-sparse) learned concepts on Imagenet patches.}
    \label{fig:imagenet-conceptstable-fno-batchtopk}
\end{figure}


\begin{figure}
    \centering
    \begin{subfigure}{0.24\textwidth}
        \includegraphics[width=\linewidth]{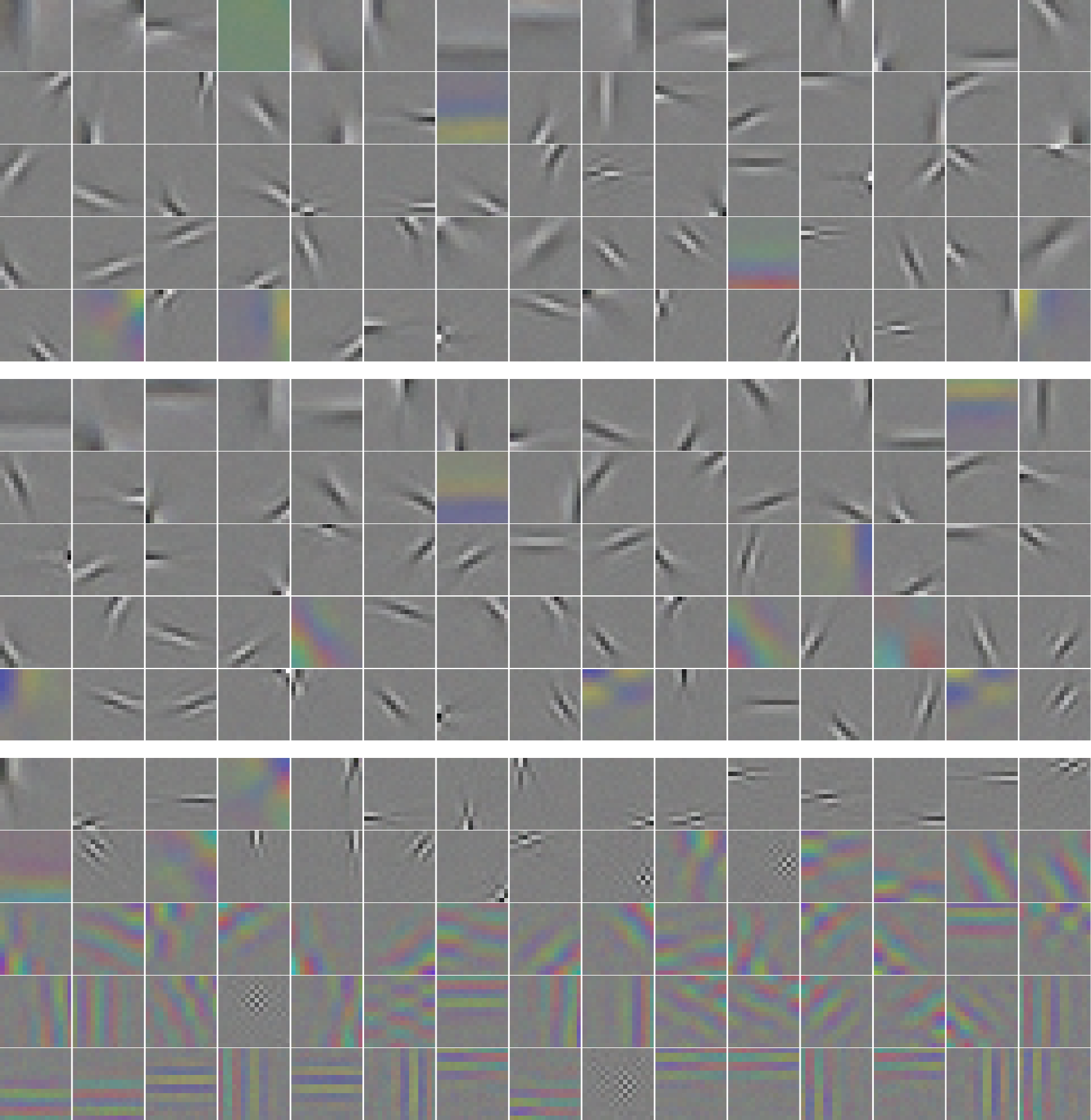}
        \caption{$k = 25$}
    \end{subfigure}
    \begin{subfigure}{0.24\textwidth}
        \includegraphics[width=\linewidth]{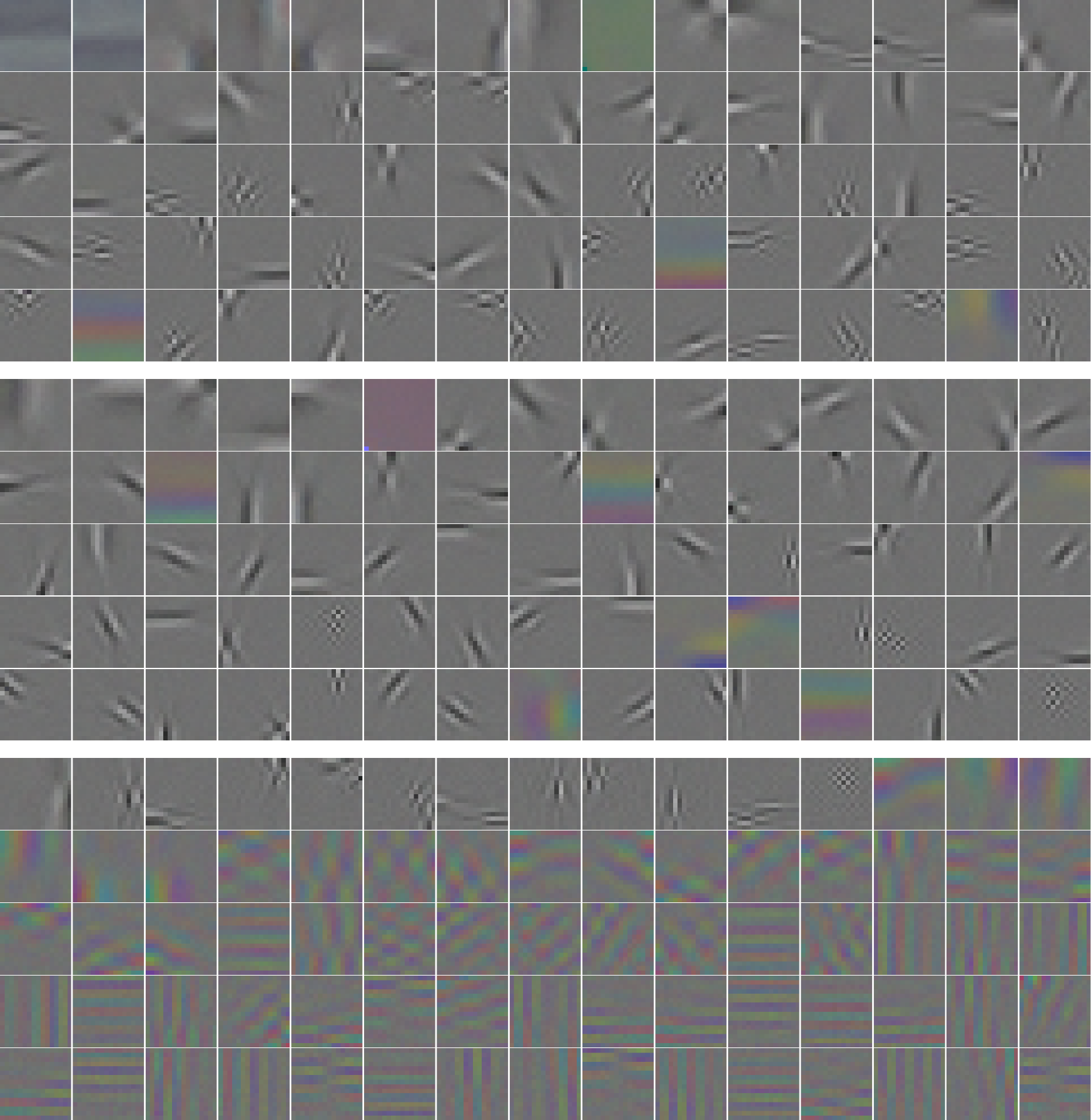}
        \caption{$k = 50$}
    \end{subfigure}
    \begin{subfigure}{0.24\textwidth}
        \includegraphics[width=\linewidth]{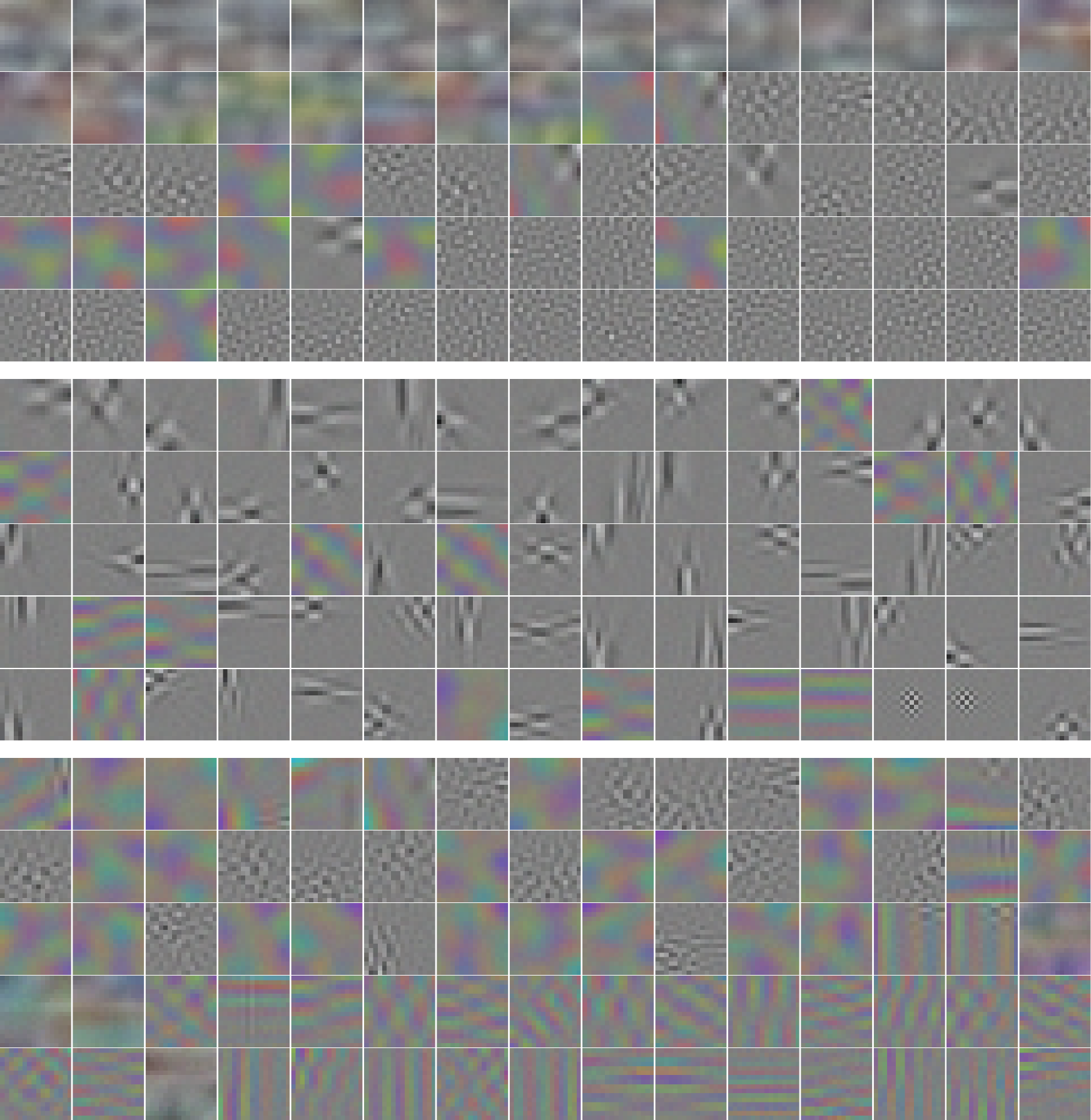}
        \caption{$k = 100$}
    \end{subfigure}
    \\
    \begin{subfigure}{0.24\textwidth}
        \includegraphics[width=\linewidth]{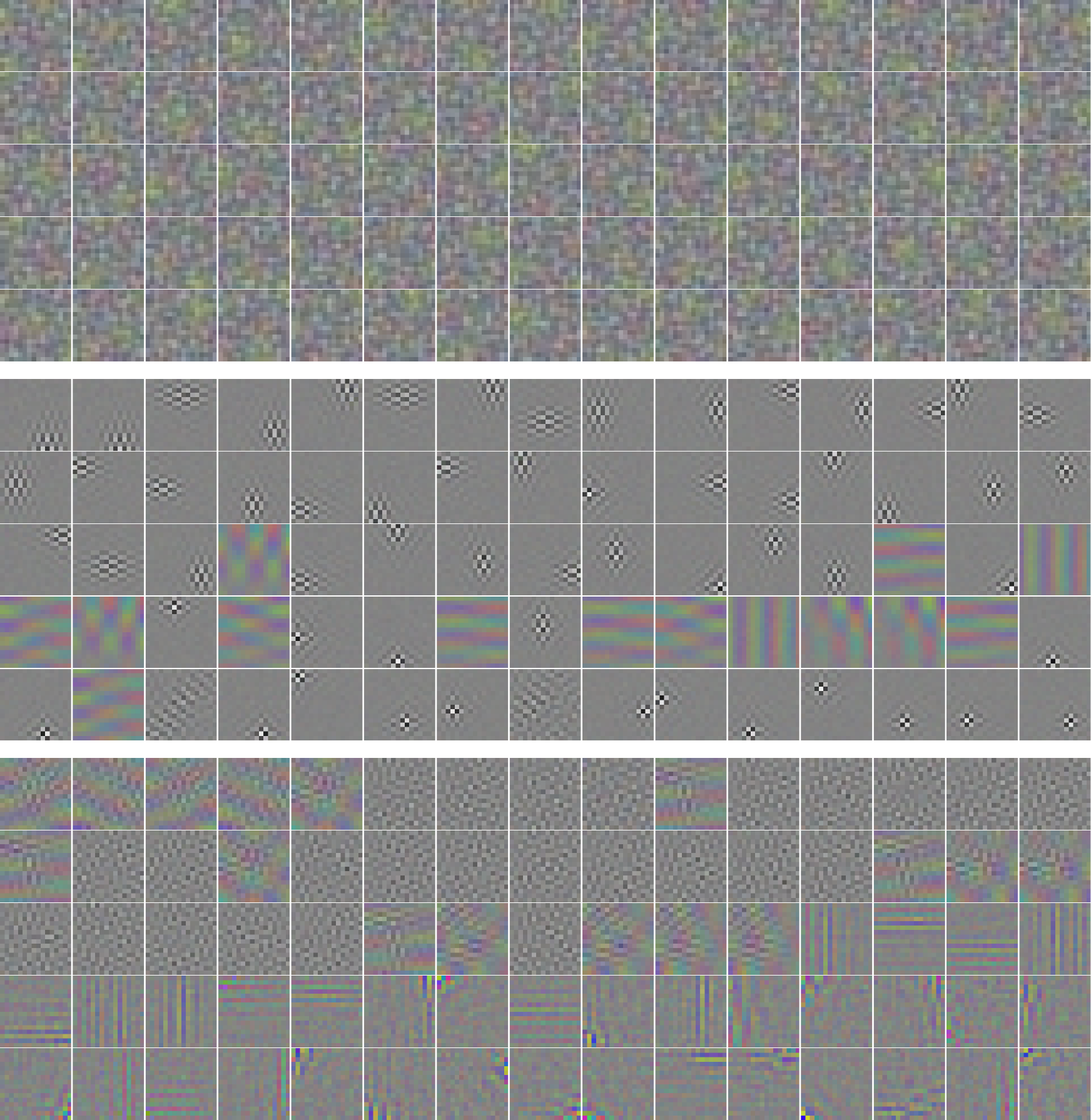}
        \caption{$k = 200$}
    \end{subfigure}
    \begin{subfigure}{0.24\textwidth}
        \includegraphics[width=\linewidth]{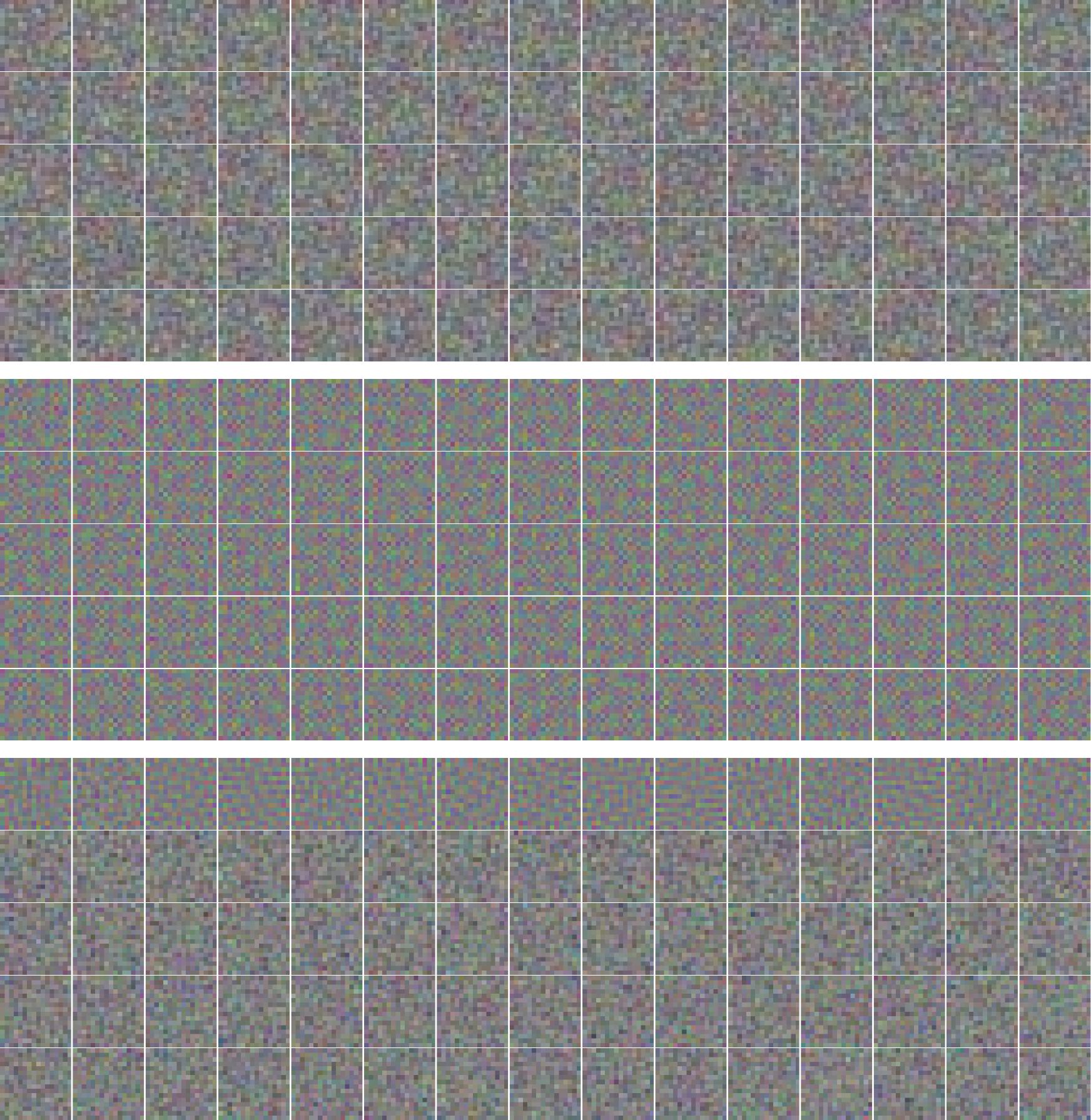}
        \caption{$k = 500$}
    \end{subfigure}
    \begin{subfigure}{0.24\textwidth}
        \includegraphics[width=\linewidth]{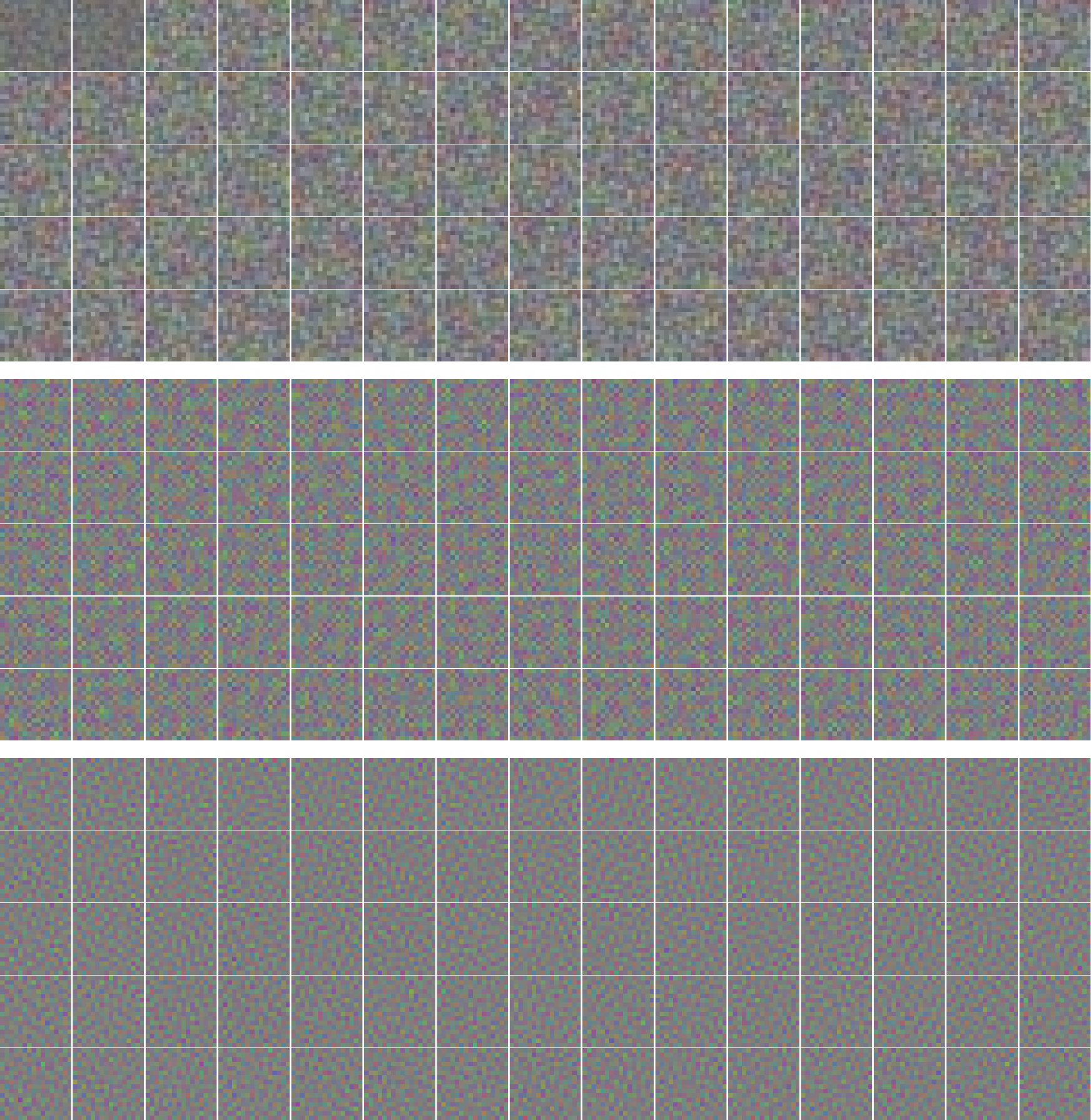}
        \caption{$k = 700$}
    \end{subfigure}
    \caption{\textbf{Additional visualizations.} Matryoshka-SAE-MLP learned concepts on ImageNet patches.}
    \label{fig:imagenet-conceptstable-mlp-matryoshka}
\end{figure}

\begin{figure}
    \centering
    \begin{subfigure}{0.24\textwidth}
        \includegraphics[width=\linewidth]{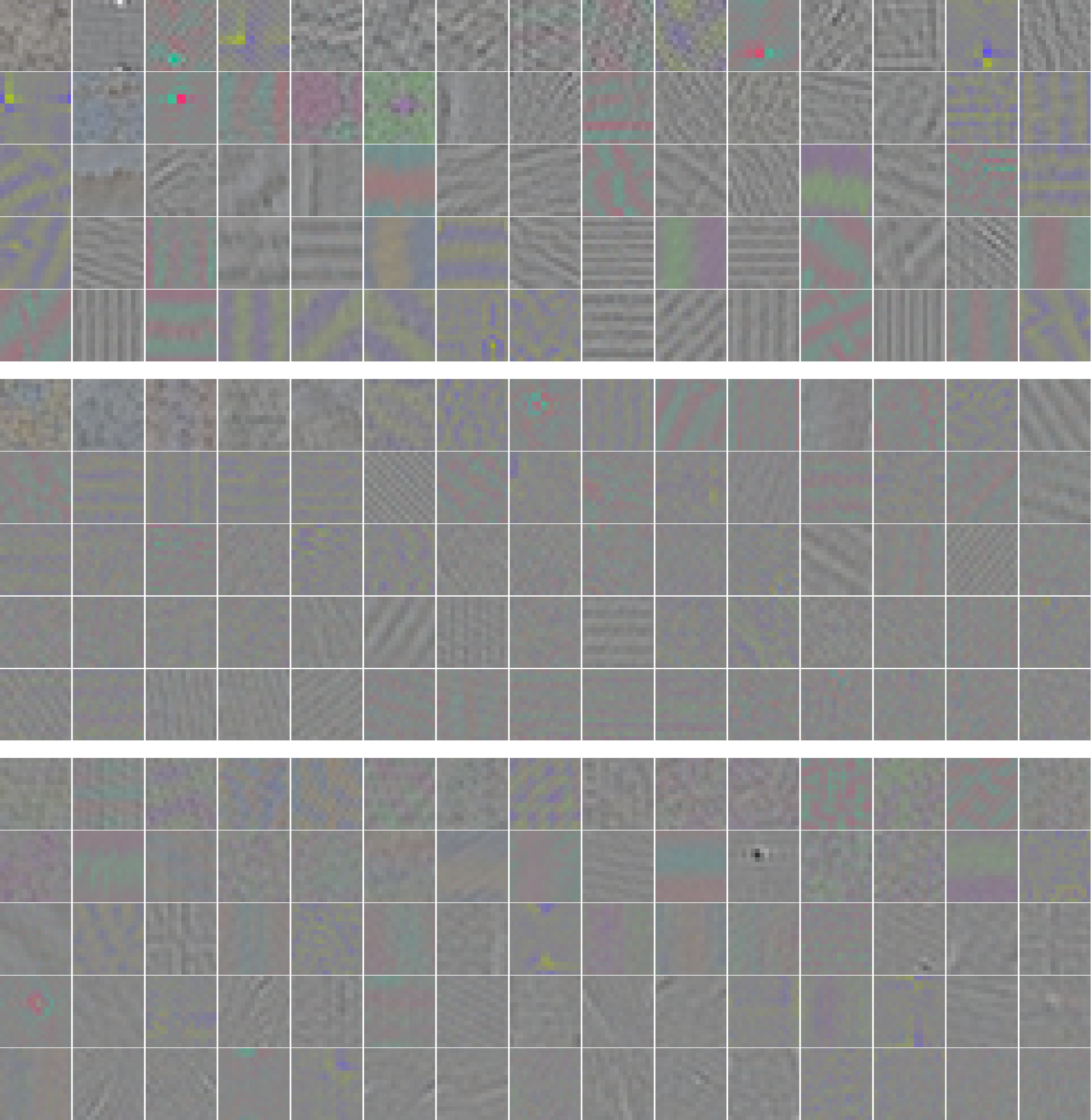}
        \caption{$k = 25$}
    \end{subfigure}
    \begin{subfigure}{0.24\textwidth}
        \includegraphics[width=\linewidth]{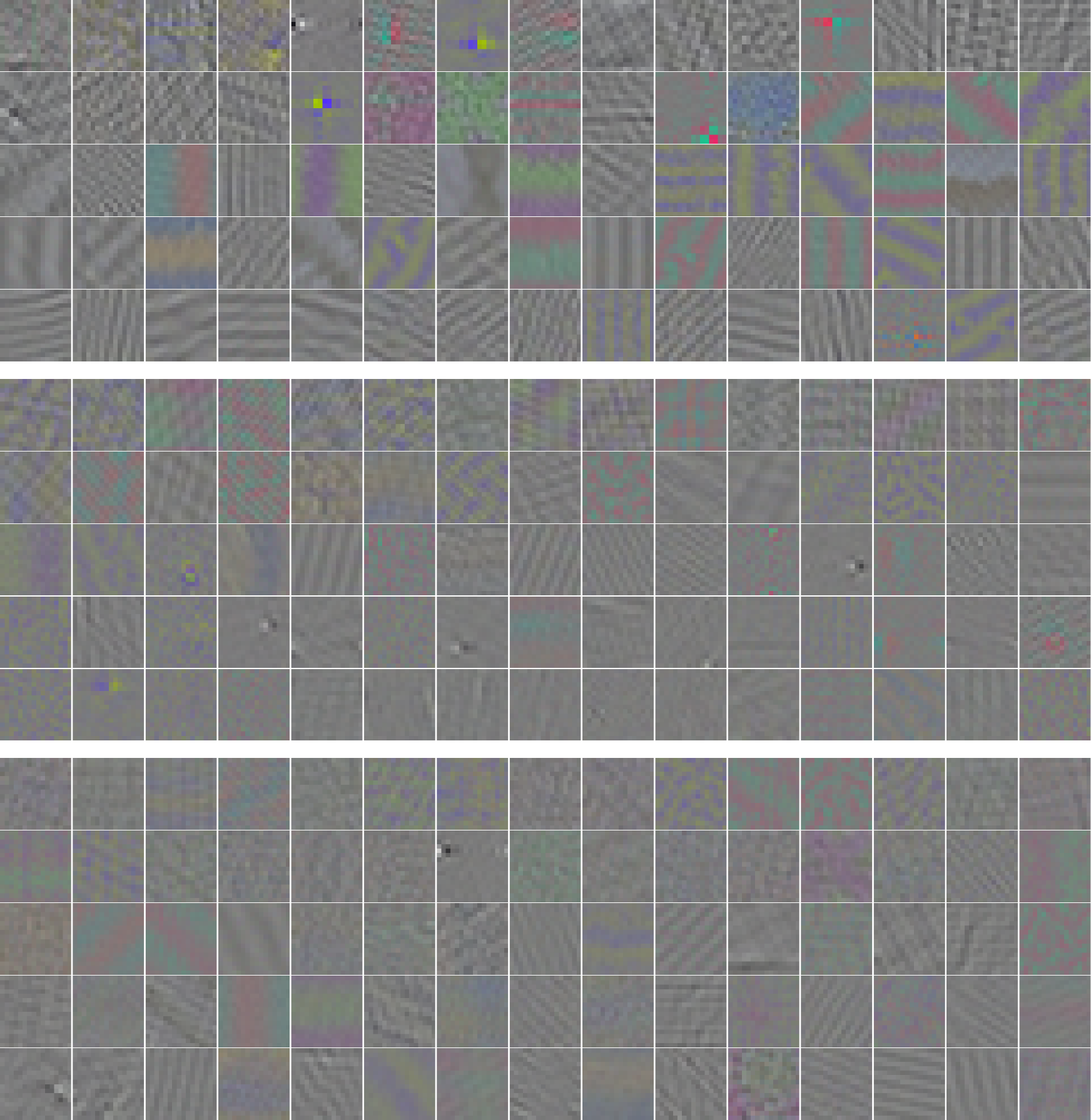}
        \caption{$k = 50$}
    \end{subfigure}
    \begin{subfigure}{0.24\textwidth}
        \includegraphics[width=\linewidth]{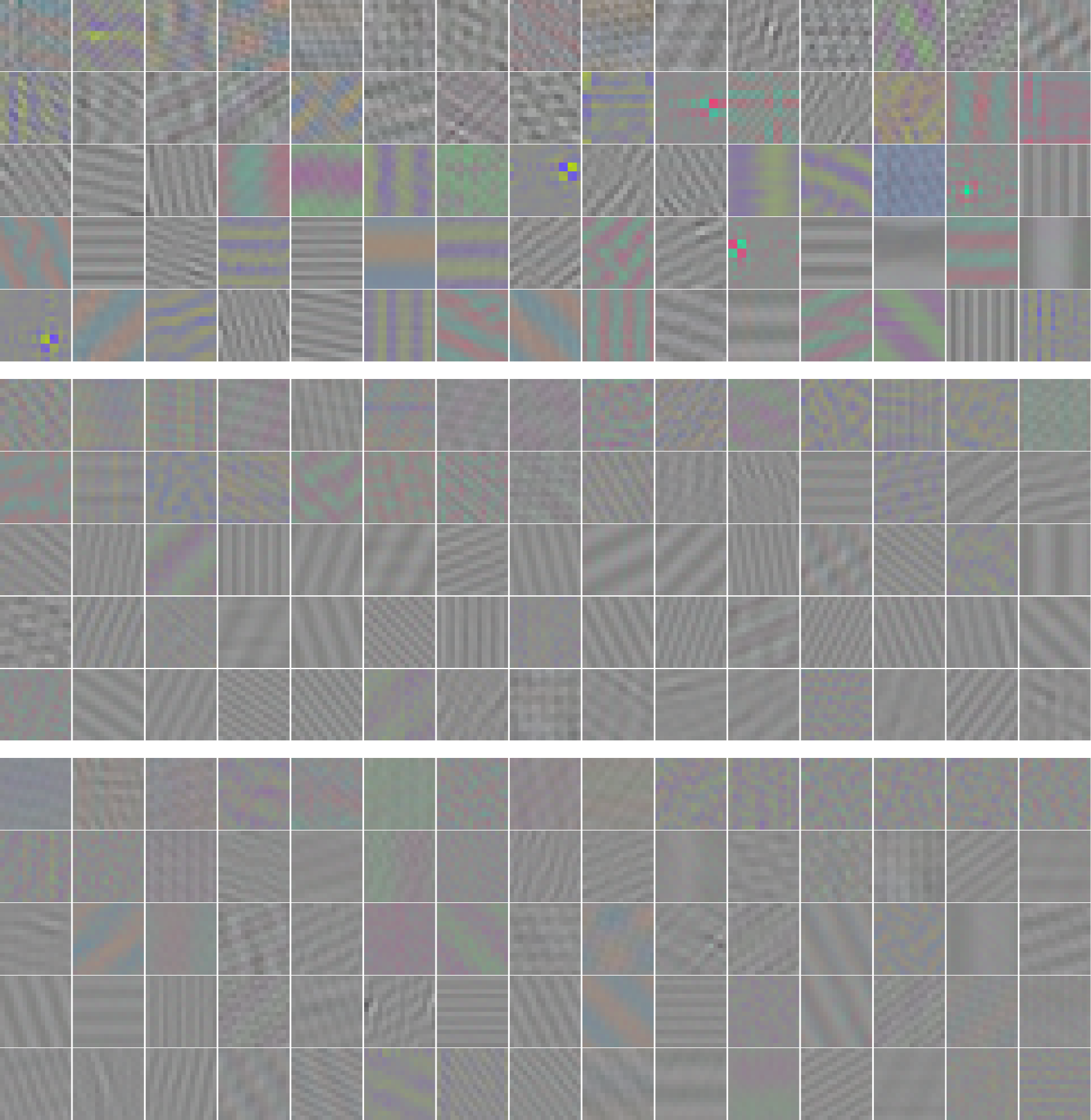}
        \caption{$k = 100$}
    \end{subfigure}
    \\
    \begin{subfigure}{0.24\textwidth}
        \includegraphics[width=\linewidth]{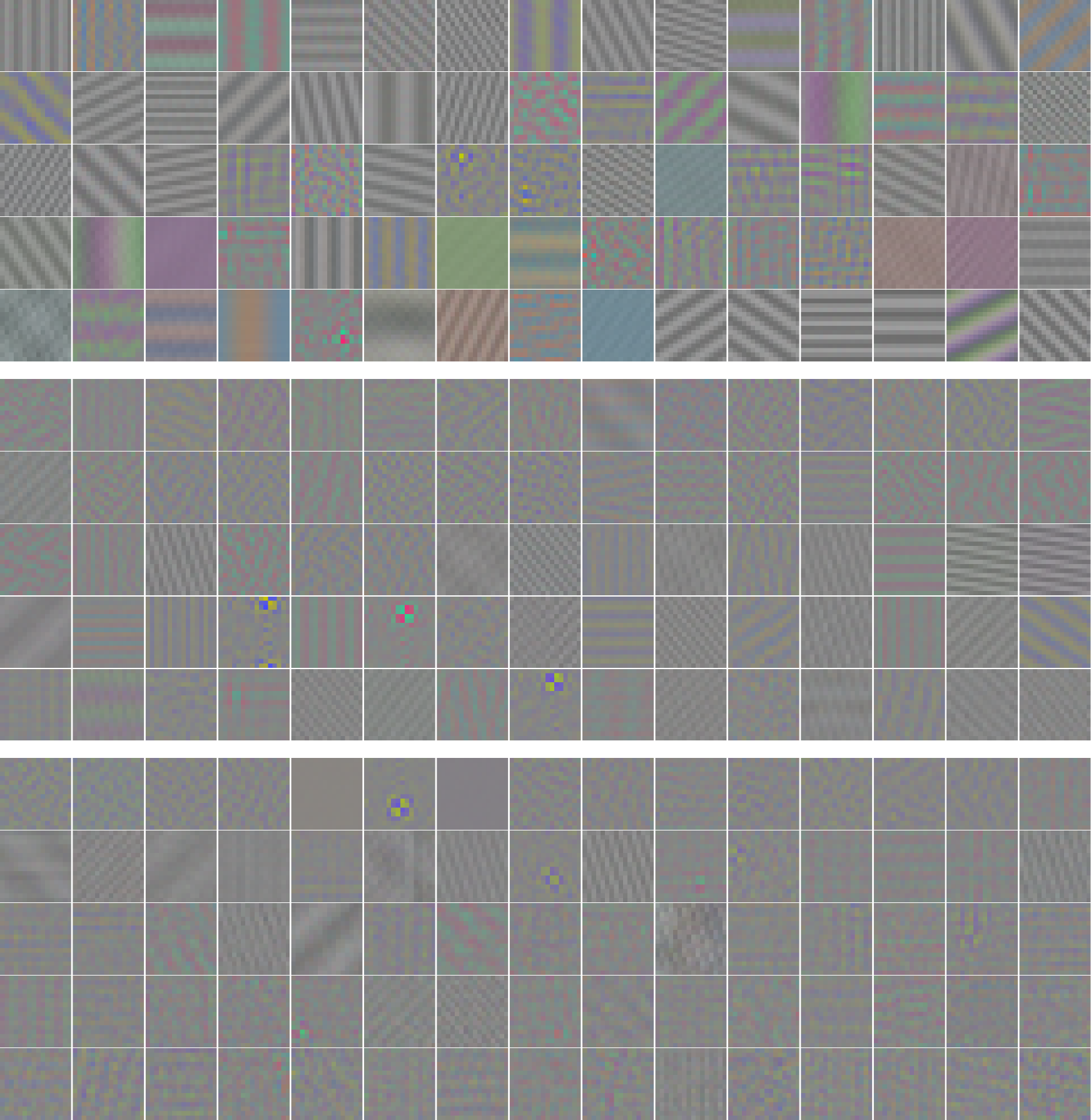}
        \caption{$k = 200$}
    \end{subfigure}
    \begin{subfigure}{0.24\textwidth}
        \includegraphics[width=\linewidth]{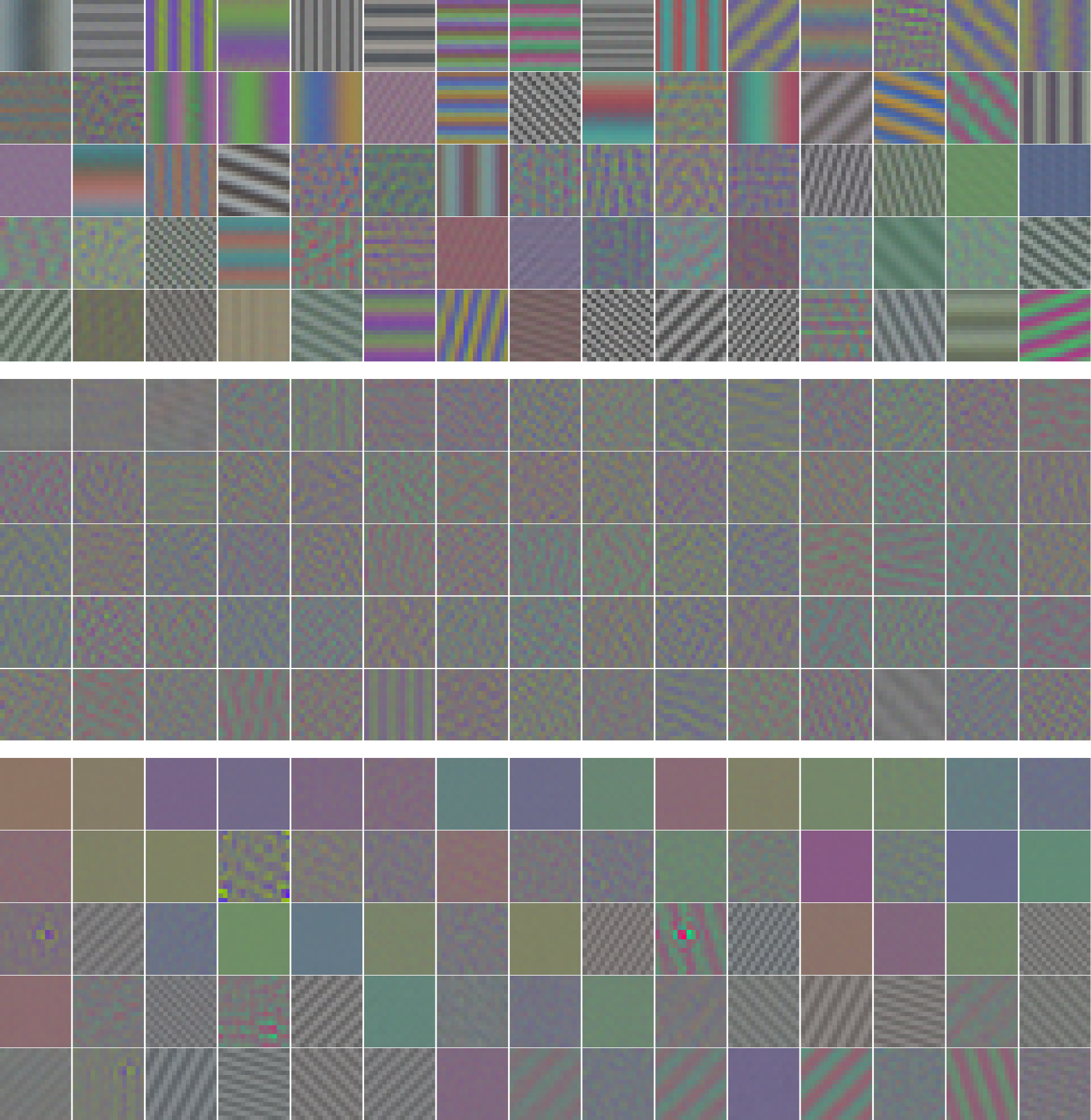}
        \caption{$k = 500$}
    \end{subfigure}
    \begin{subfigure}{0.24\textwidth}
        \includegraphics[width=\linewidth]{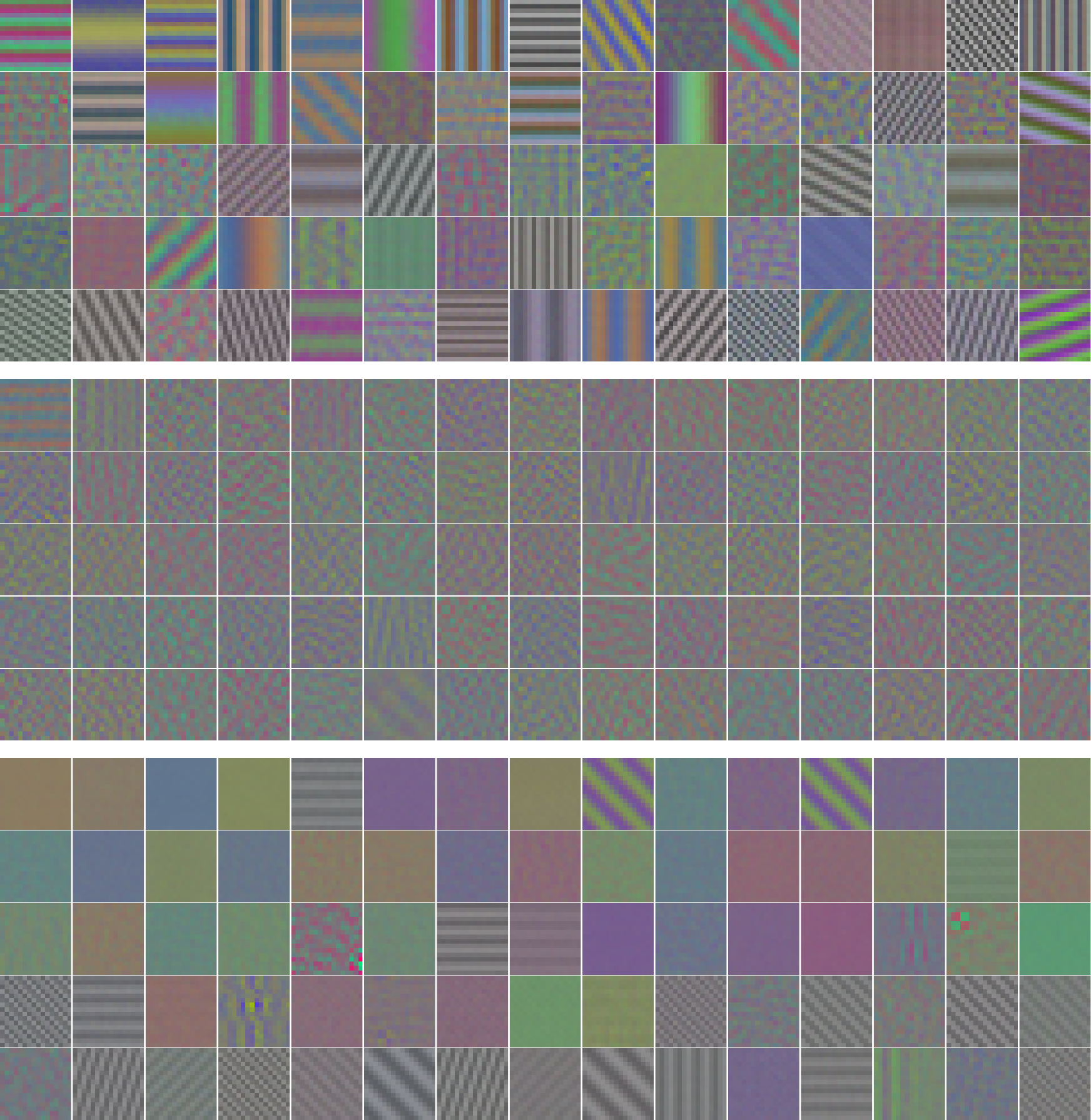}
        \caption{$k = 700$}
    \end{subfigure}
    \caption{\textbf{Additional visualizations.} Matryoshka-SAE-FNO (modes = 12, spatially 5-sparse) learned concepts on ImageNet patches.}
    \label{fig:imagenet-conceptstable-fno-matryoshka}
\end{figure}

\begin{figure}
    \centering
    \begin{subfigure}{0.32\textwidth}
        \includegraphics[width=\linewidth]{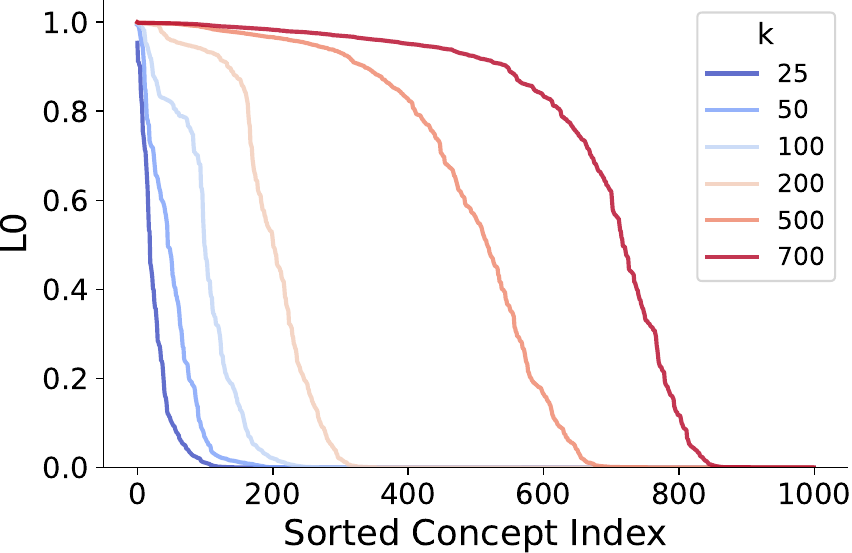}
        \caption{SAE-FNO-TopK}
    \end{subfigure}
    \begin{subfigure}{0.32\textwidth}
        \includegraphics[width=\linewidth]{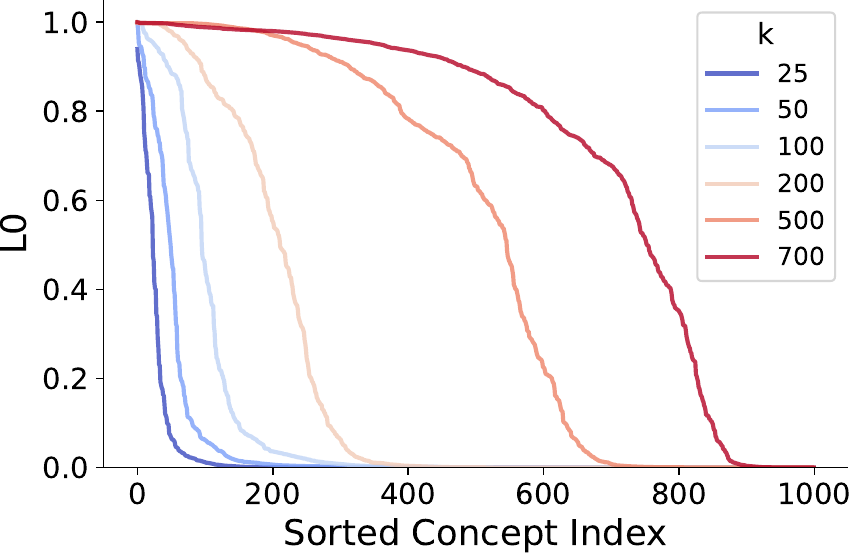}
        \caption{SAE-FNO-BatchTopK}
    \end{subfigure}
    \begin{subfigure}{0.32\textwidth}
        \includegraphics[width=\linewidth]{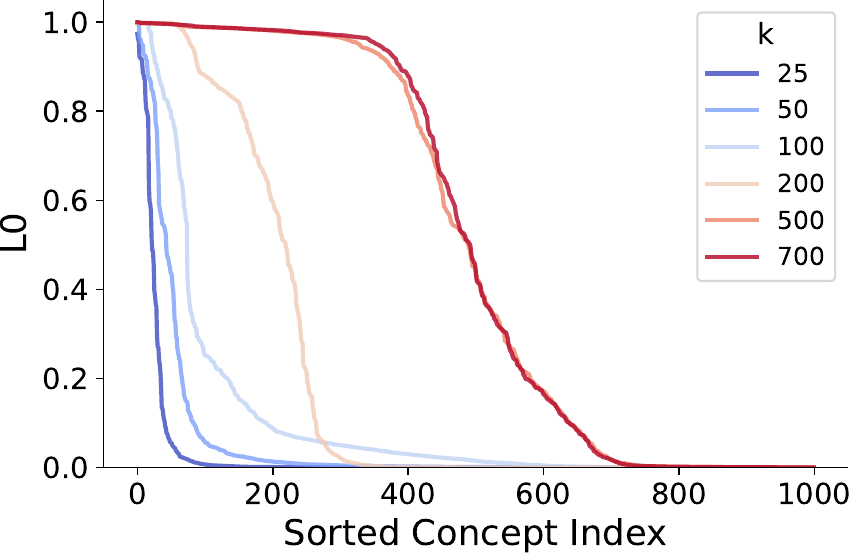}
        \caption{Matryoshka-SAE-FNO}
    \end{subfigure}
    \\
    \begin{subfigure}{0.32\textwidth}
        \includegraphics[width=\linewidth]{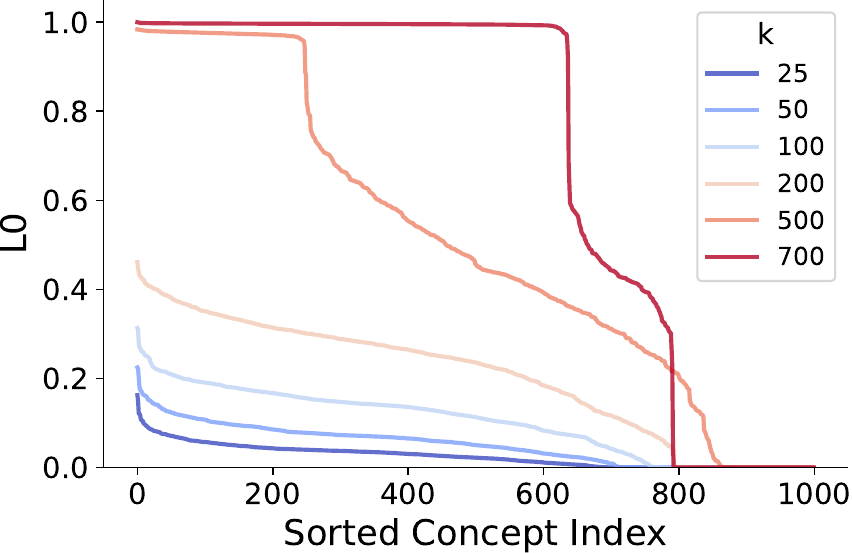}
        \caption{SAE-MLP-TopK}
    \end{subfigure}
    \begin{subfigure}{0.32\textwidth}
        \includegraphics[width=\linewidth]{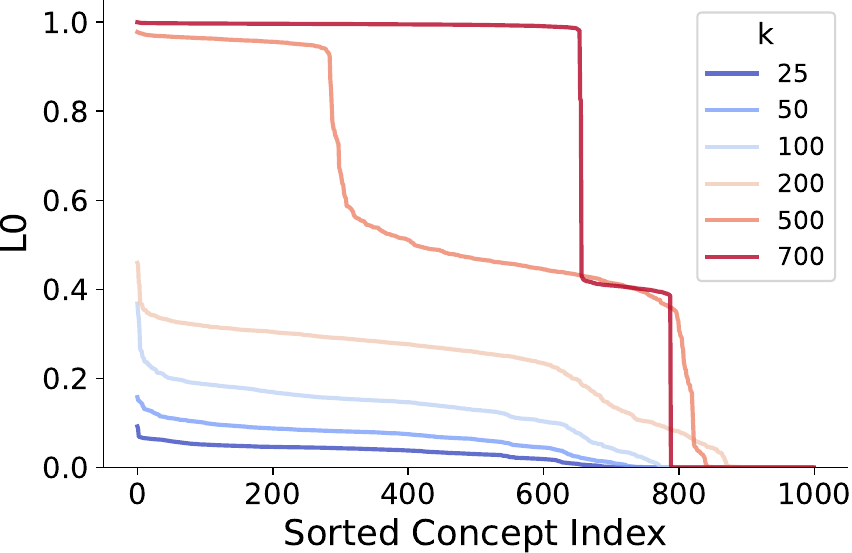}
        \caption{SAE-MLP-BatchTopK}
    \end{subfigure}
    \begin{subfigure}{0.32\textwidth}
        \includegraphics[width=\linewidth]{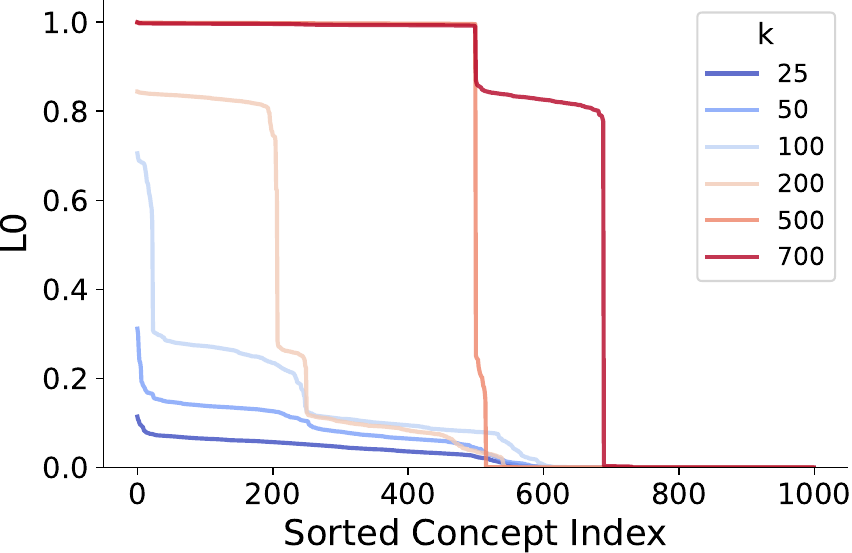}
        \caption{Matryoshka-SAE-MLP}
    \end{subfigure}
    \caption{\textbf{Concept utilization and efficiency on ImageNet patches ($16 \times 16$).} Y-axis quantifies the normalized l0-norm of codes corresponding to each concept used across the dataset. Reaching zero corresponds to the effective total number of concepts used to explain the dataset. Across TopK, BatchTopK, and Matryoshka variants, SAE-FNO requires a much smaller active dictionary to represent the data compared to SAE-MLP.}
    \label{fig:imagenet-concept-utilization}
\end{figure}

\begin{figure}
    \centering
    \begin{subfigure}{0.24\textwidth}
        \includegraphics[width=\linewidth]{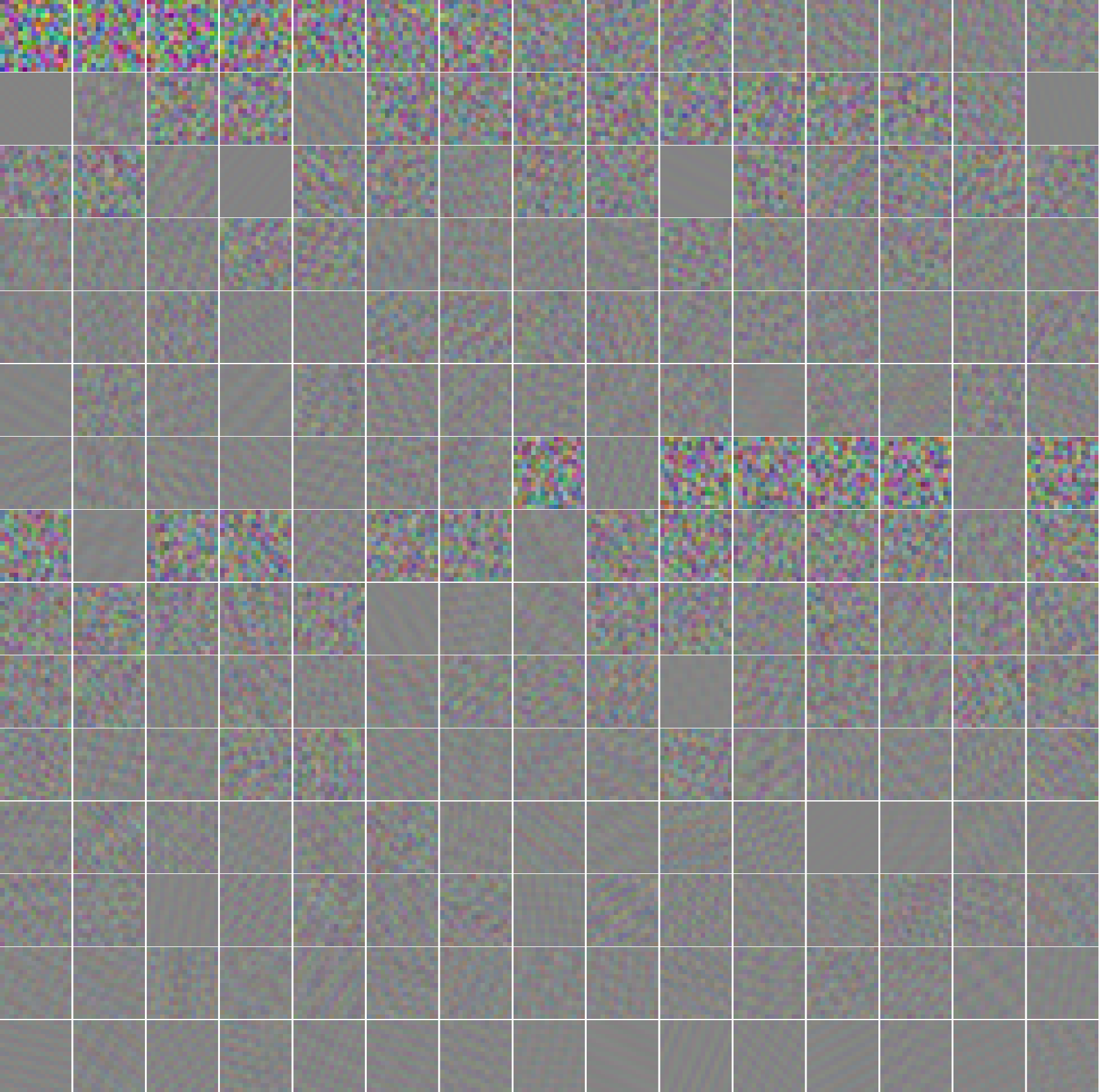}
        \caption{AE-FNO}
    \end{subfigure}
    \\
    \begin{subfigure}{\textwidth}
        \includegraphics[width=0.24\linewidth]{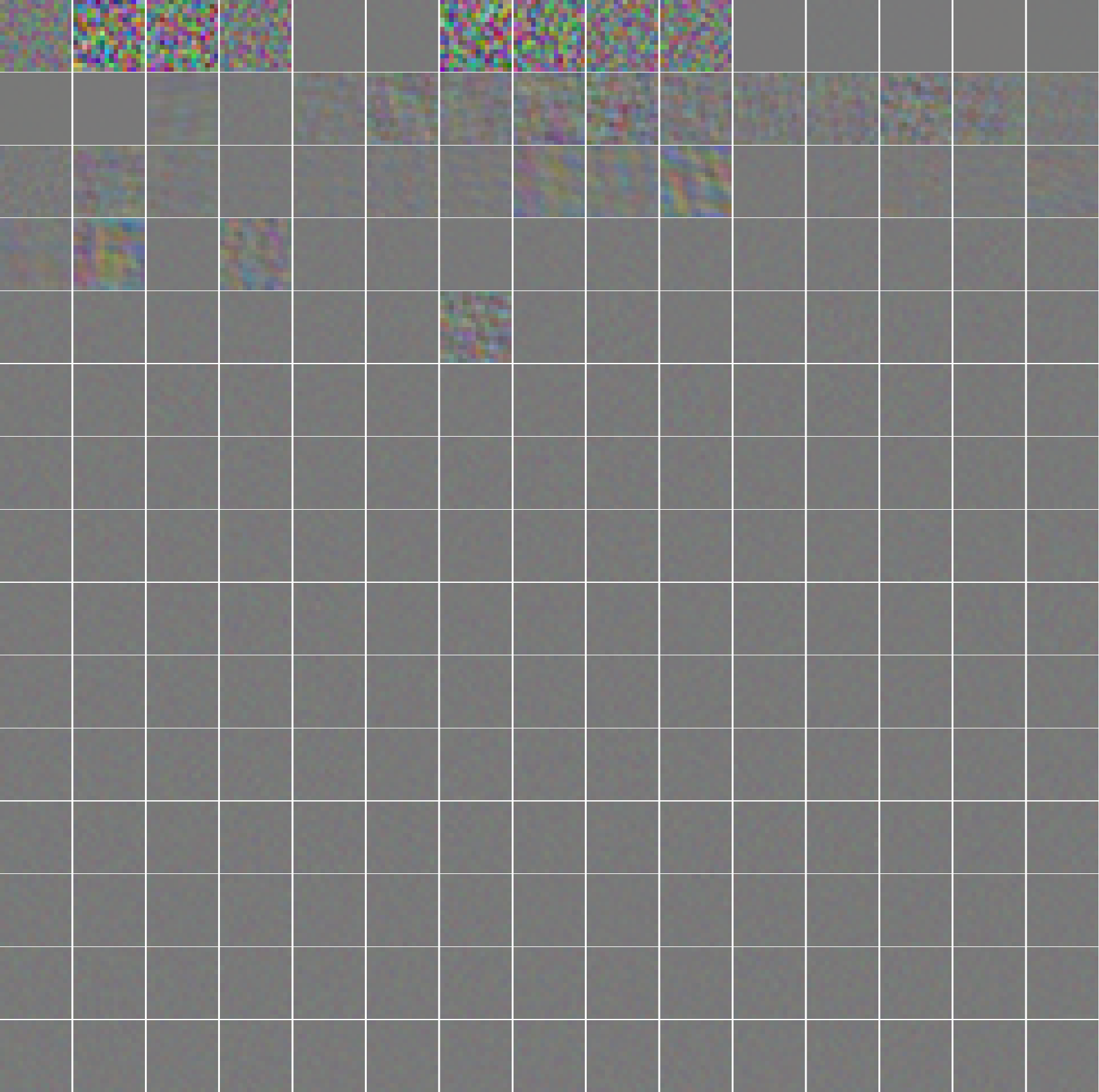}
        \includegraphics[width=0.24\linewidth]{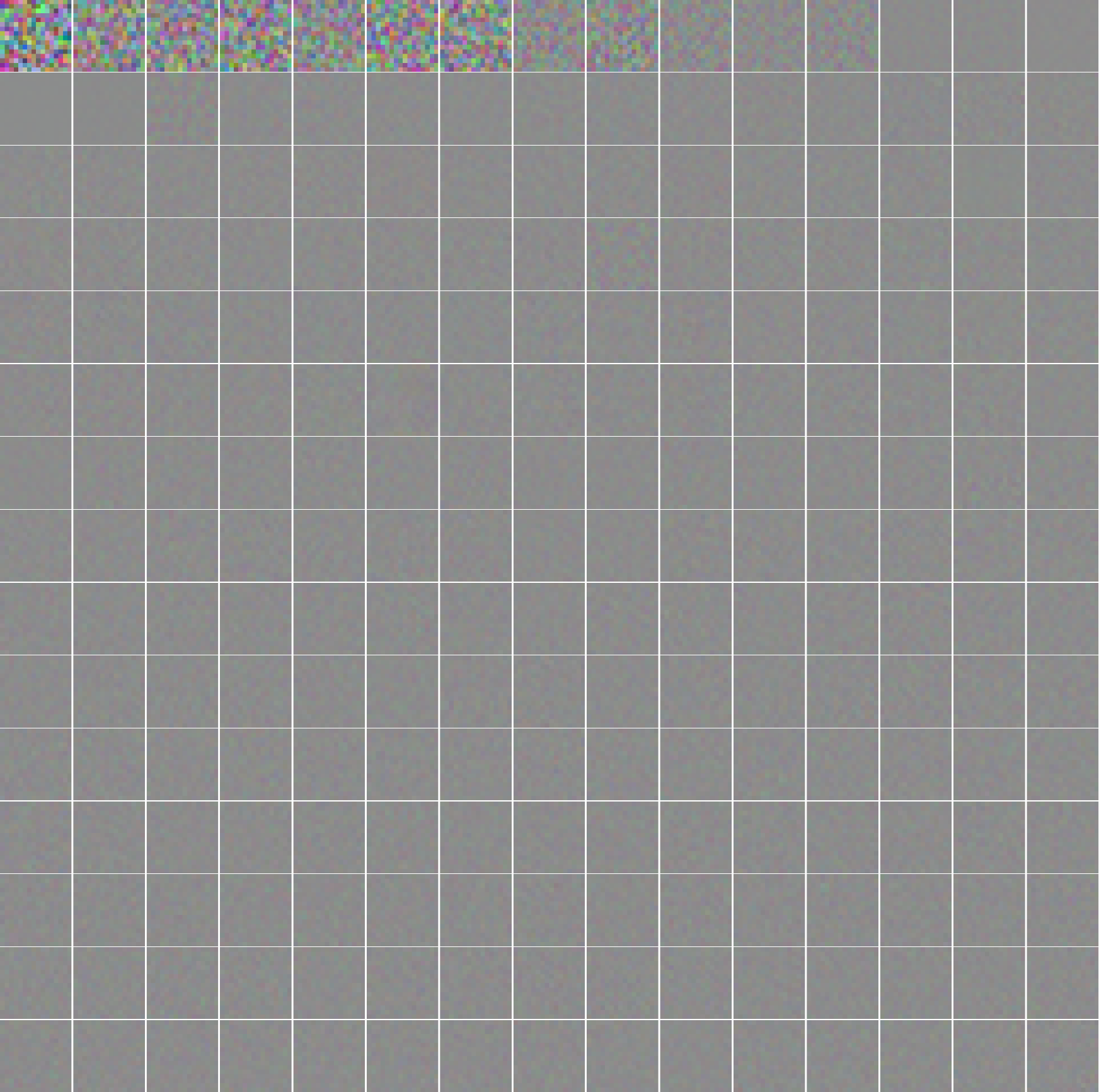}
        \includegraphics[width=0.24\linewidth]{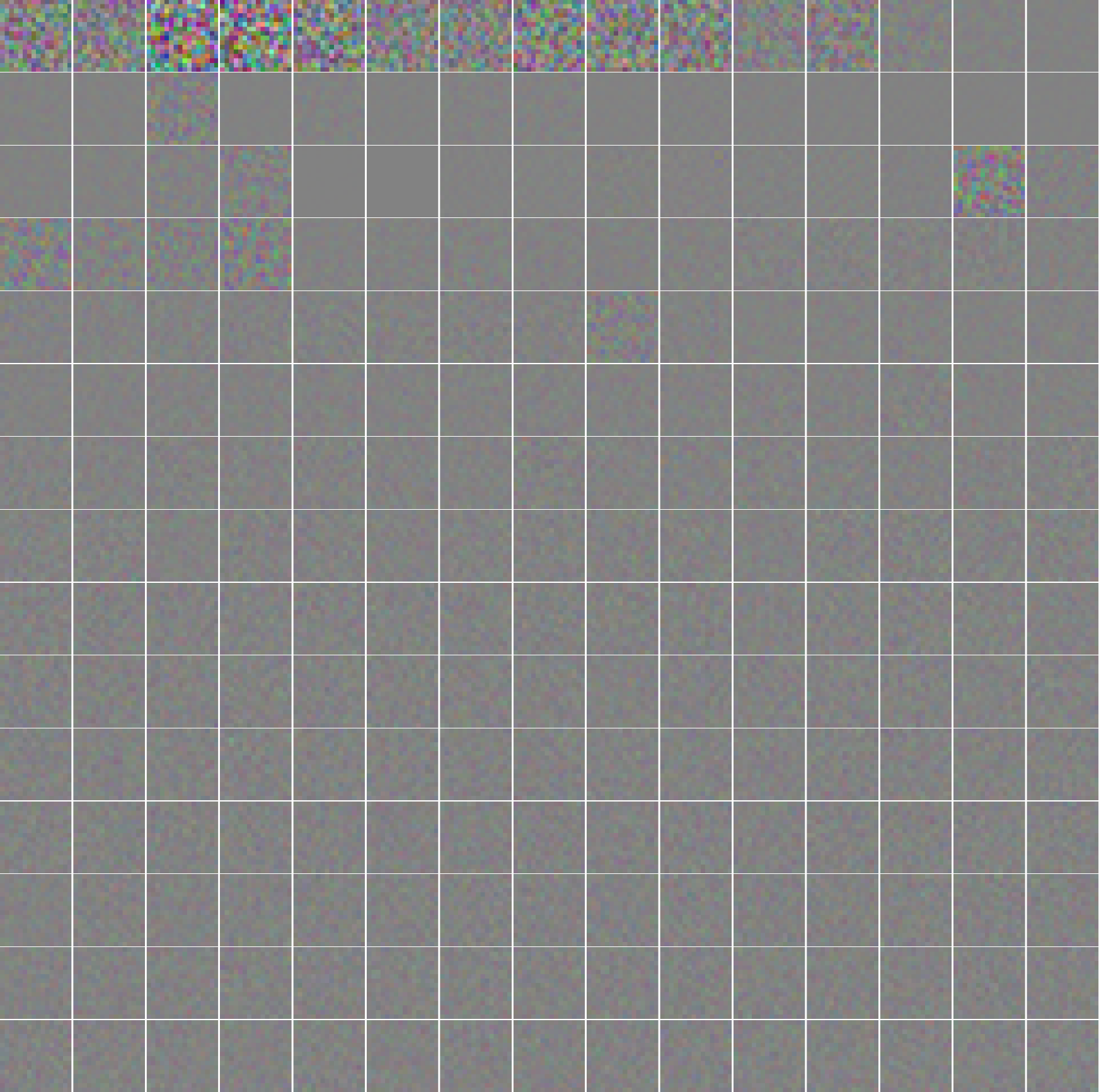}
        \includegraphics[width=0.24\linewidth]{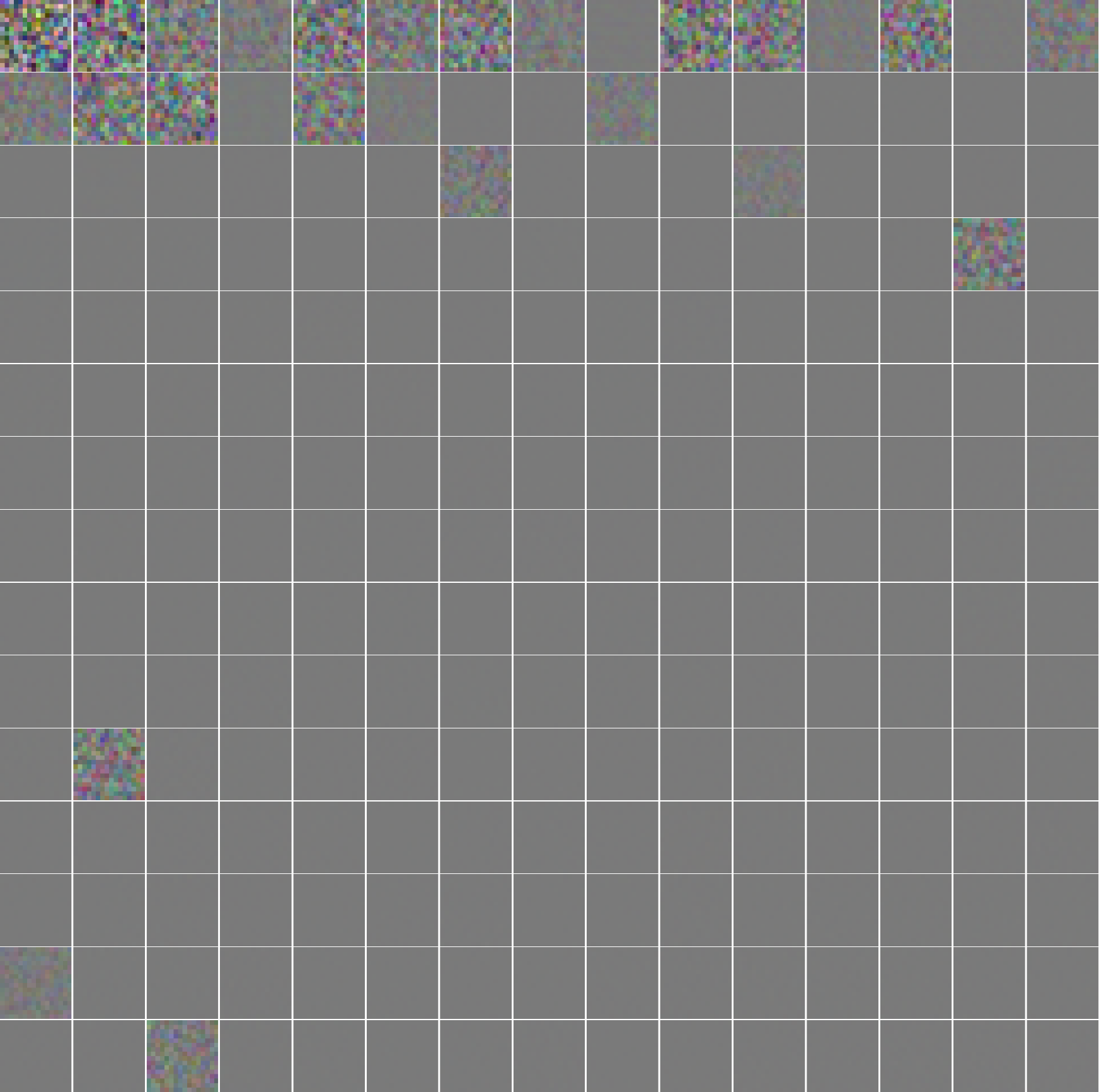}
        \caption{TopK-SAE-FNO (from left to right: $k = 25, 50, 200, 500$)}
    \end{subfigure}
    \caption{\textbf{Ablation of domain sparsity.} Concepts learned by a standard AE-FNO (a) or an SAE-FNO lacking domain sparsity (b) fail to produce localized, structured features. This confirms that the observed gains in mechanistic interpretability are specifically from the combination of functional sparse coding and domain sparsity, rather than the FNO architecture alone.}
    \label{fig:aefno-saefno-no-ss}
\end{figure}

\begin{figure}
    \centering
    \begin{subfigure}{0.49\textwidth}
        \includegraphics[width=\linewidth]{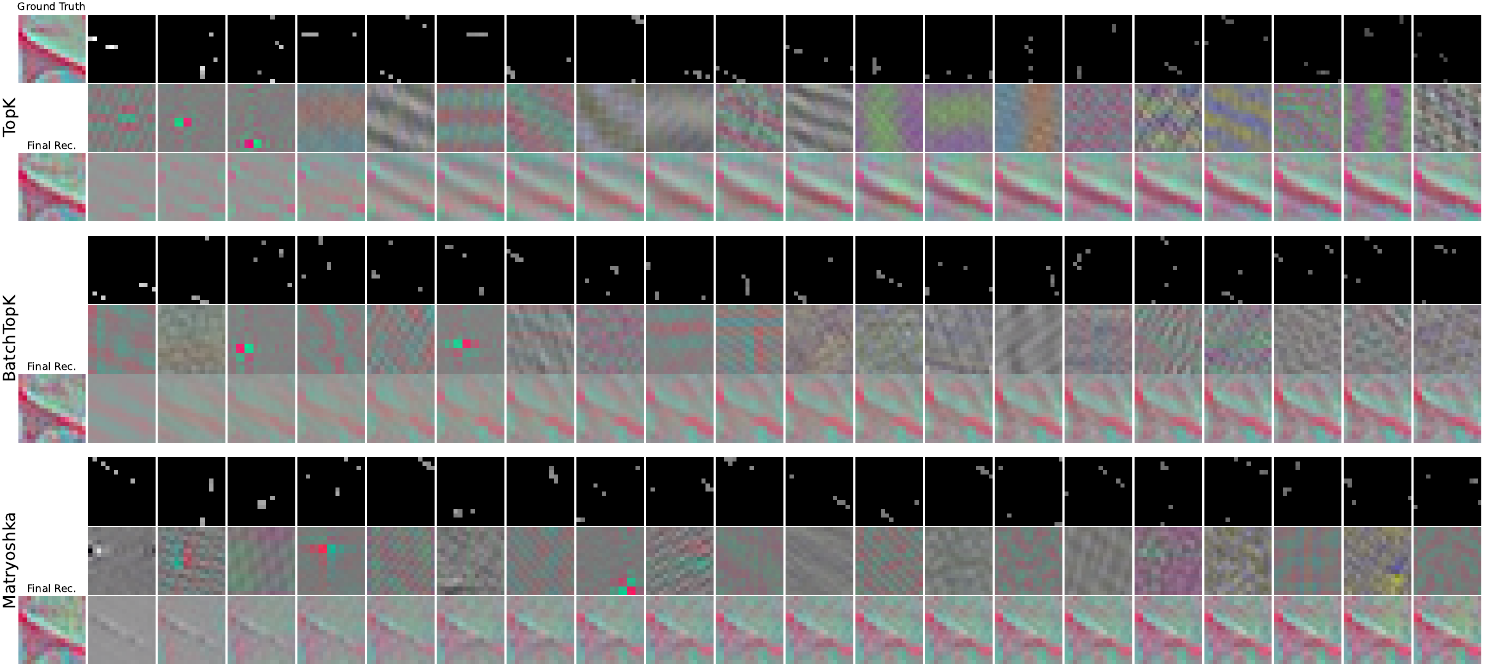}
        \caption{SAE-FNO ($k = 50$)}
    \end{subfigure}
    \hfill
    \begin{subfigure}{0.49\textwidth}
        \includegraphics[width=\linewidth]{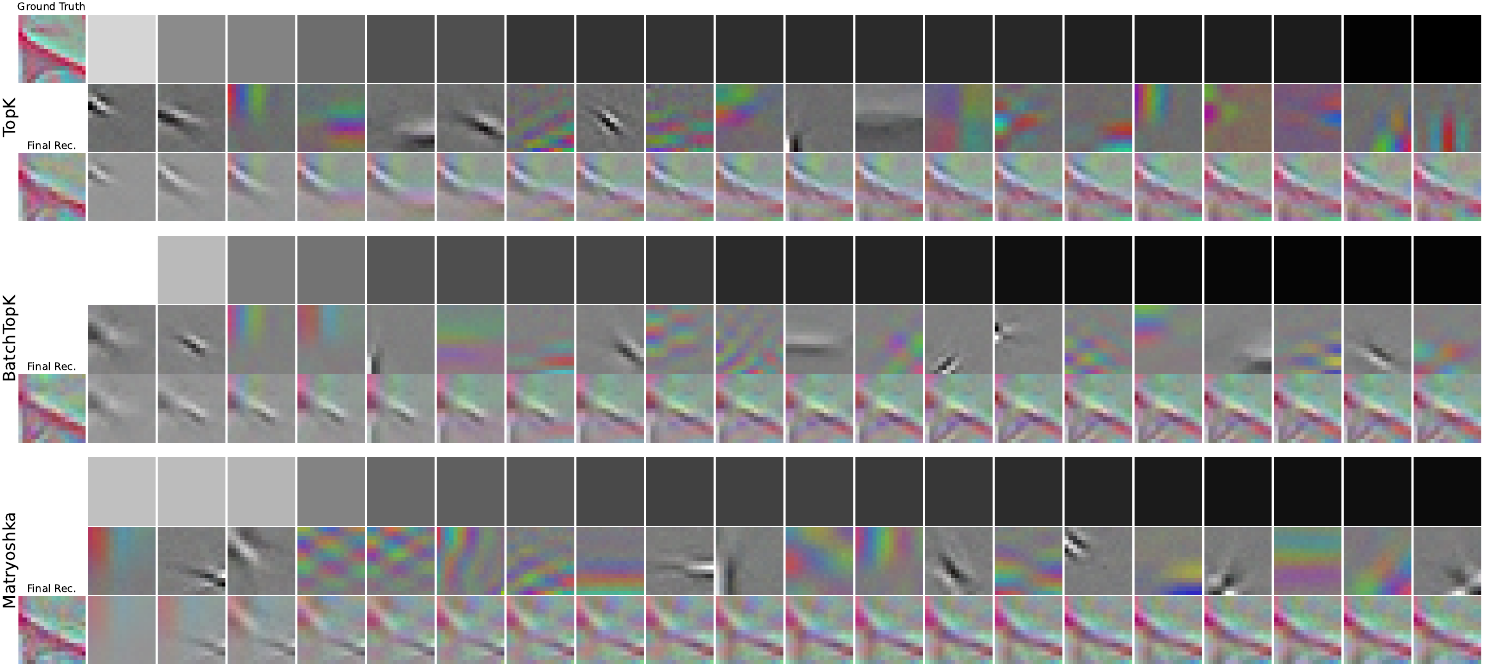}
        \caption{SAE-MLP ($k = 50$)}
    \end{subfigure}
    \hfill
    \begin{subfigure}{0.49\textwidth}
        \includegraphics[width=\linewidth]{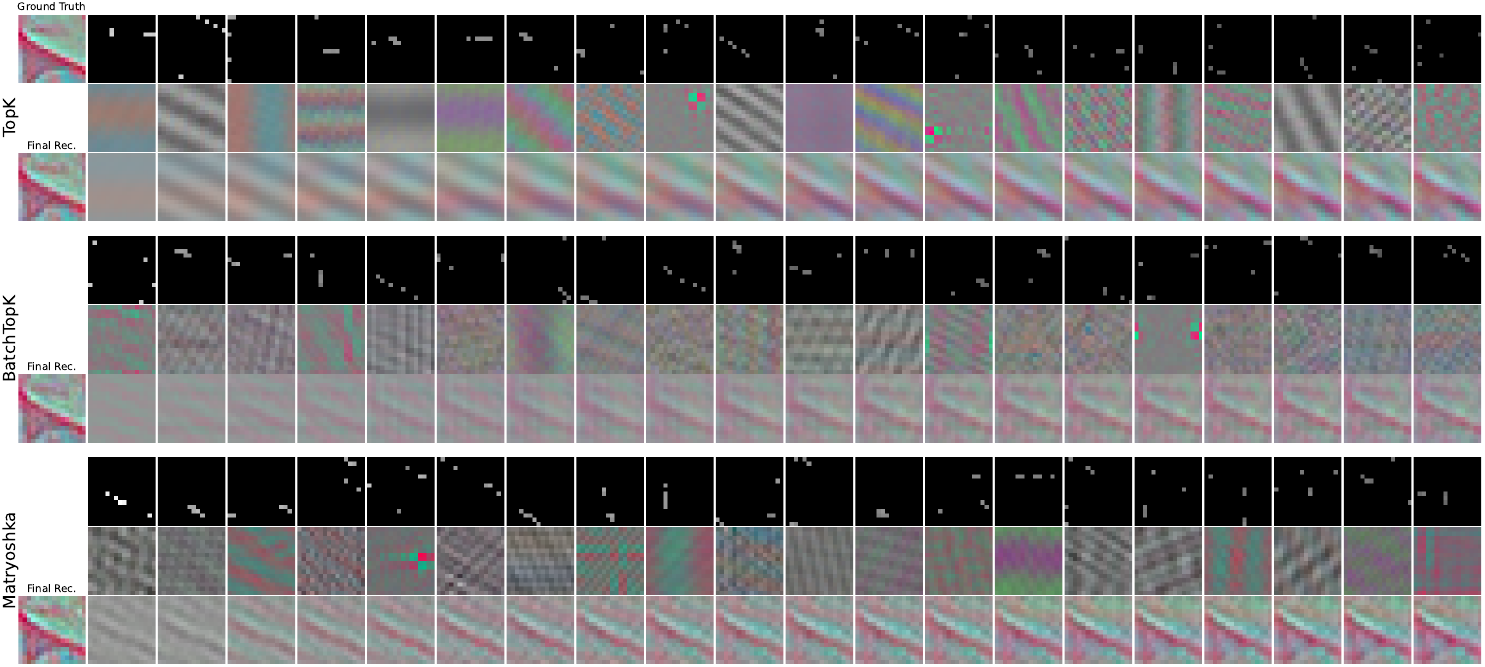}
        \caption{SAE-FNO ($k = 100$)}
    \end{subfigure}
    \hfill
    \begin{subfigure}{0.49\textwidth}
        \includegraphics[width=\linewidth]{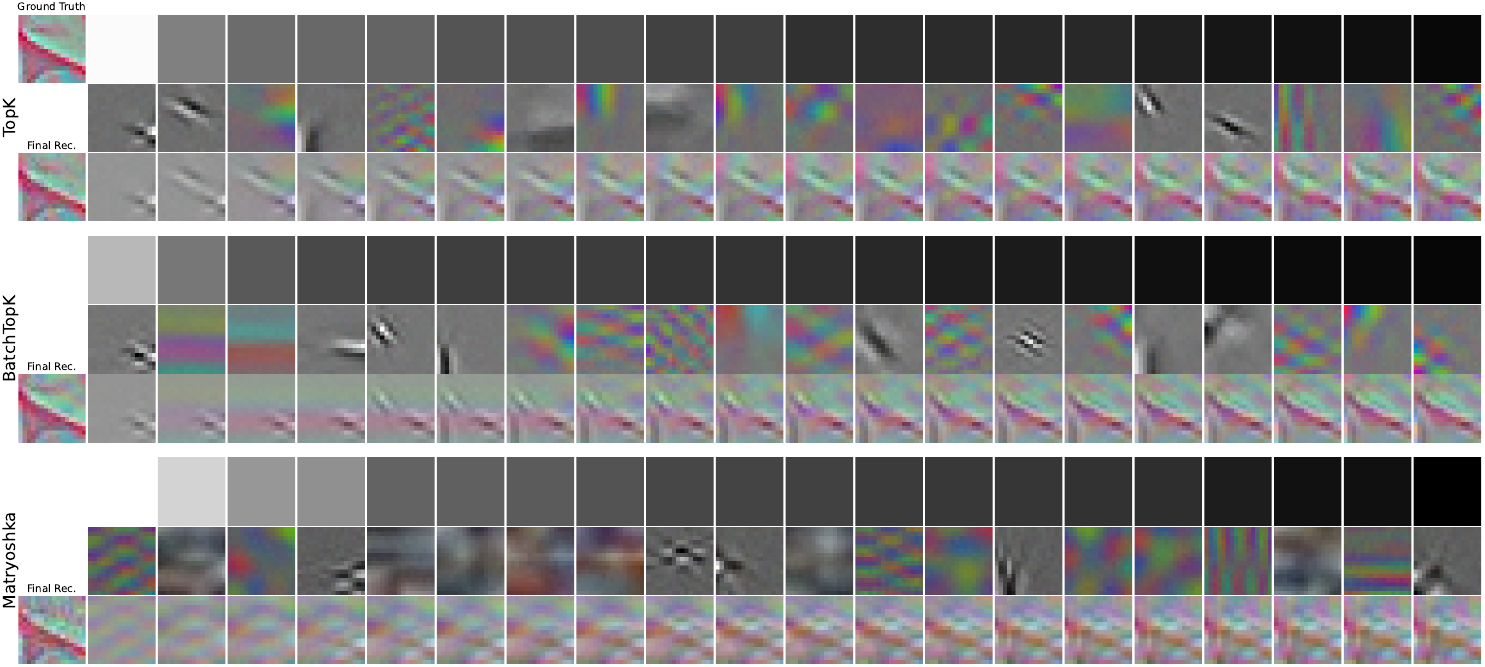}
        \caption{SAE-MLP ($k = 100$)}
    \end{subfigure}
    \hfill
    \begin{subfigure}{0.49\textwidth}
        \includegraphics[width=\linewidth]{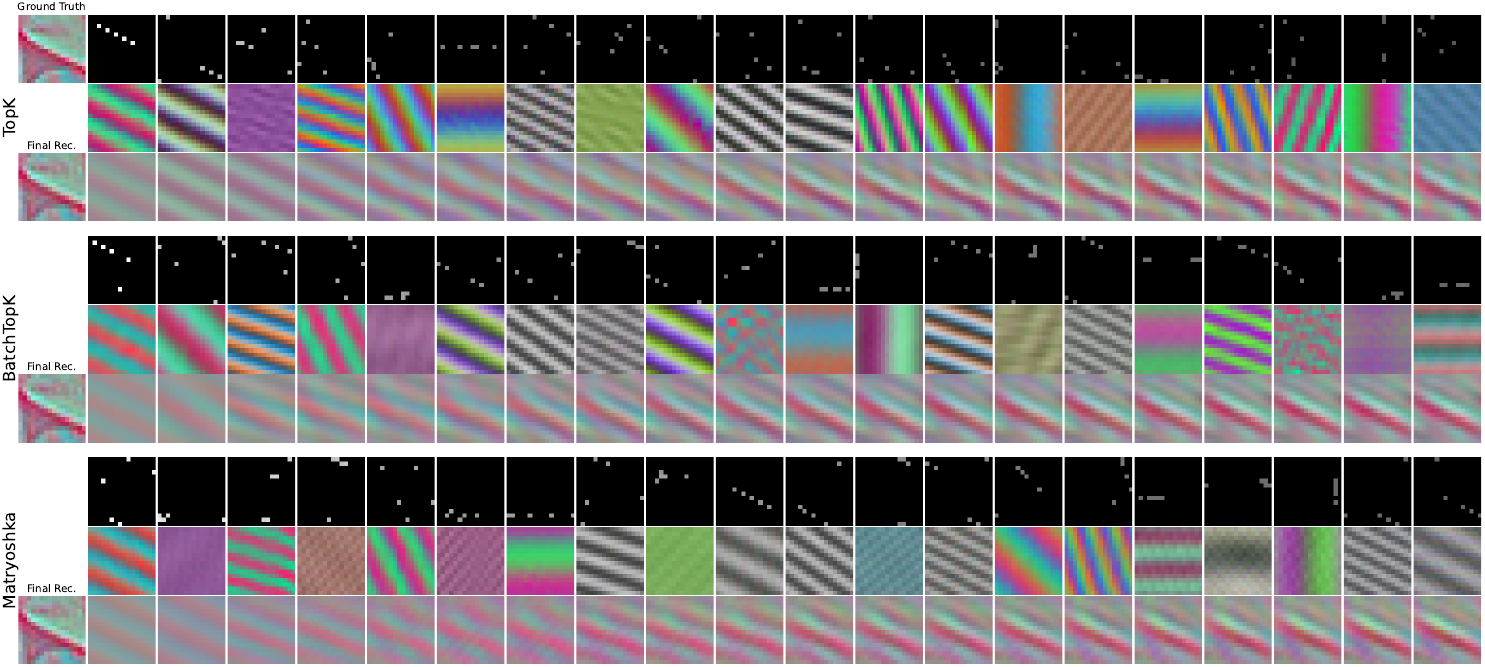}
        \caption{SAE-FNO ($k = 200$)}
    \end{subfigure}
    \hfill
    \begin{subfigure}{0.49\textwidth}
        \includegraphics[width=\linewidth]{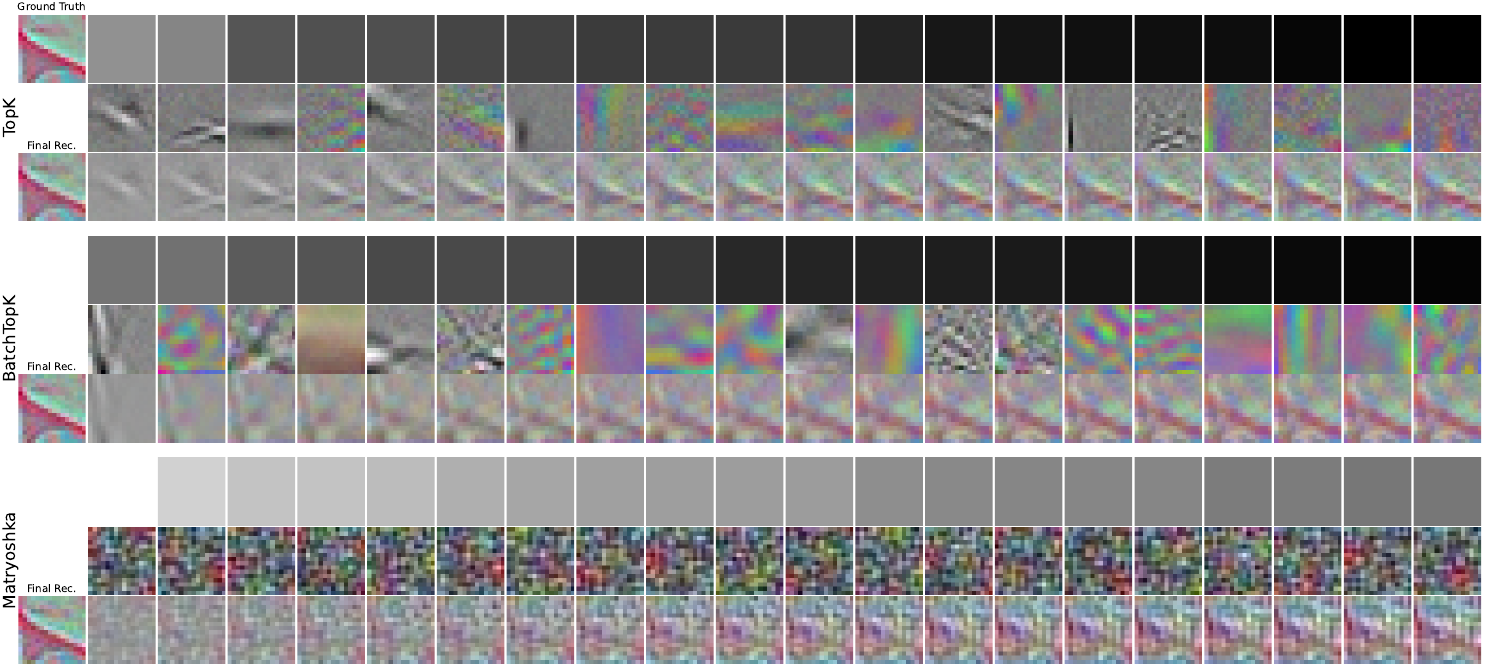}
        \caption{SAE-MLP ($k = 200$)}
    \end{subfigure}
    \hfill
    \begin{subfigure}{0.49\textwidth}
        \includegraphics[width=\linewidth]{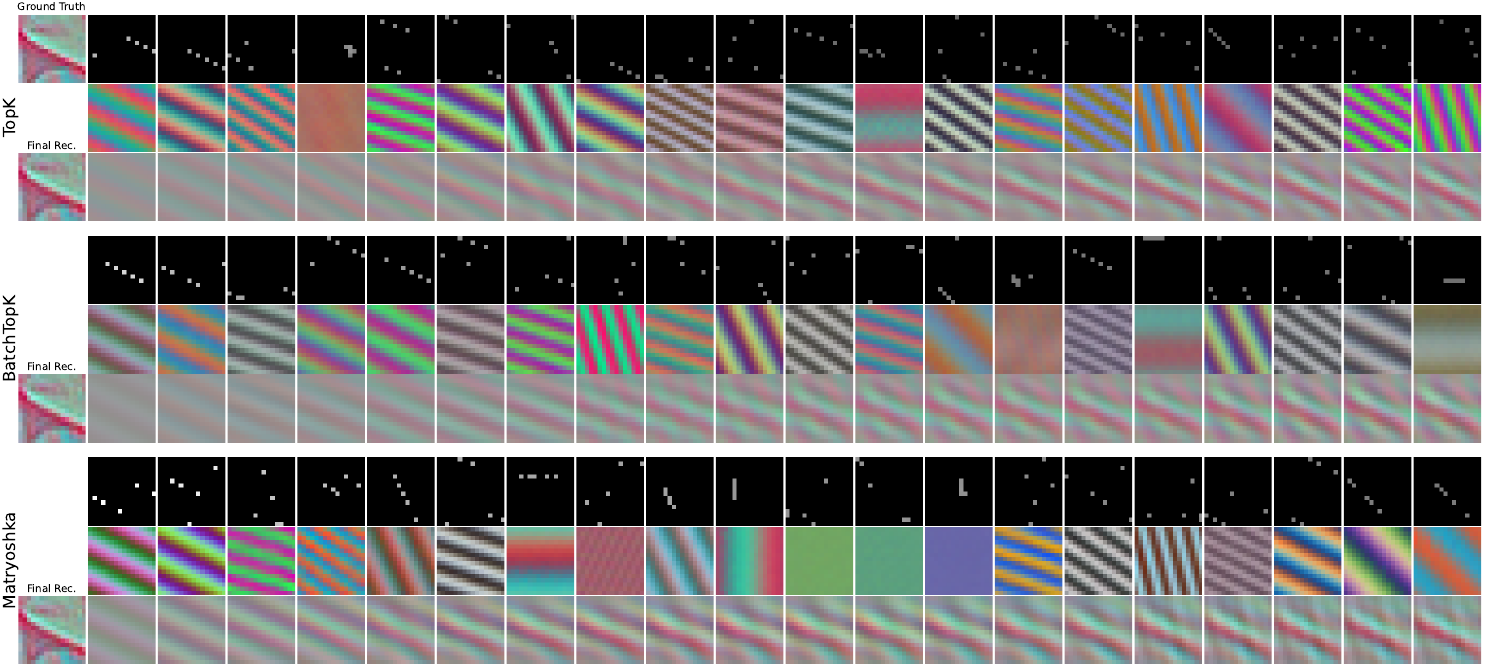}
        \caption{SAE-FNO ($k = 500$)}
    \end{subfigure}
    \hfill
    \begin{subfigure}{0.49\textwidth}
        \includegraphics[width=\linewidth]{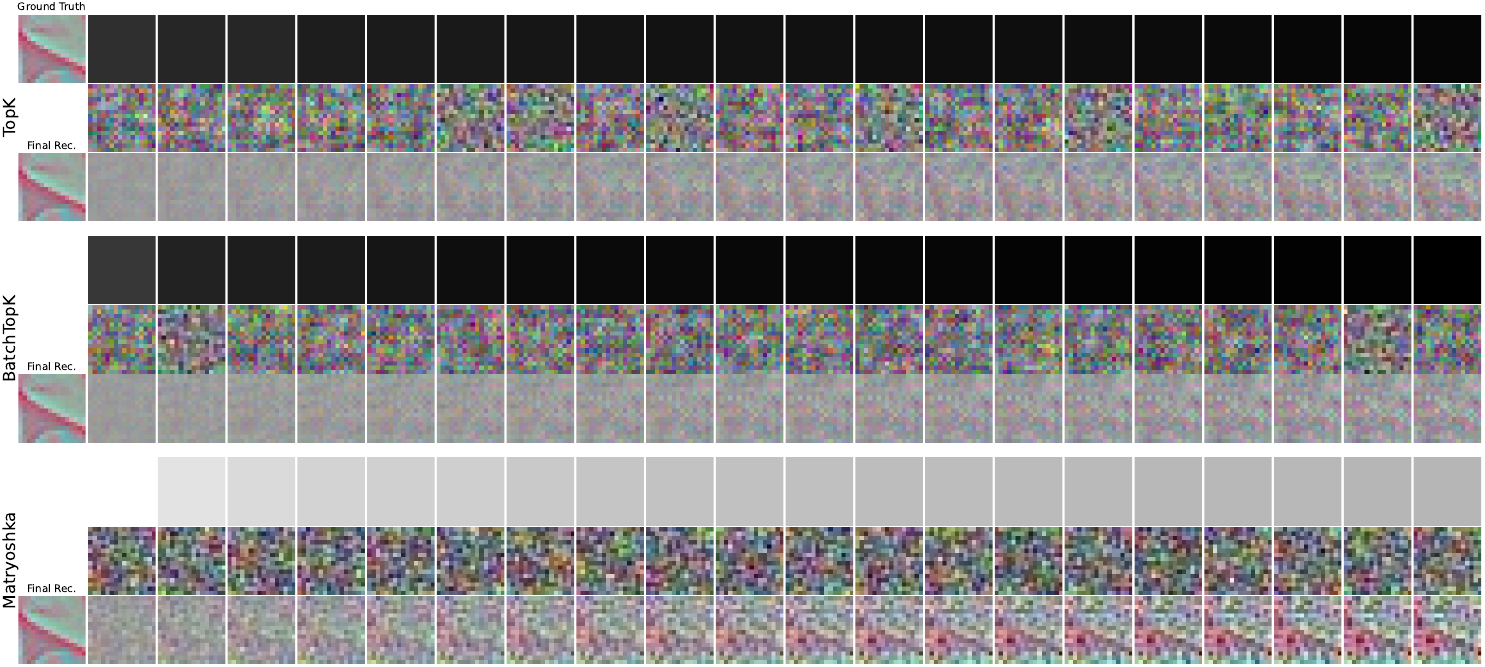}
        \caption{SAE-MLP ($k = 500$)}
    \end{subfigure}
    \caption{\textbf{Sequential visualization of active sparse codes and image reconstruction for TopK, BatchTopK, and Matryoshka SAE variants.} From the second column,  the visualization is organized into three rows: the top row shows the spatial sparse codes, the middle row shows the individual learned concepts, and the bottom row shows the cumulative reconstruction of the input (summing the top $i$ most active concepts up to $i=20$).}
    \label{fig:imagenet-rec-seq}
\end{figure}

\begin{figure}
    \centering
    \includegraphics[width=0.5\linewidth]{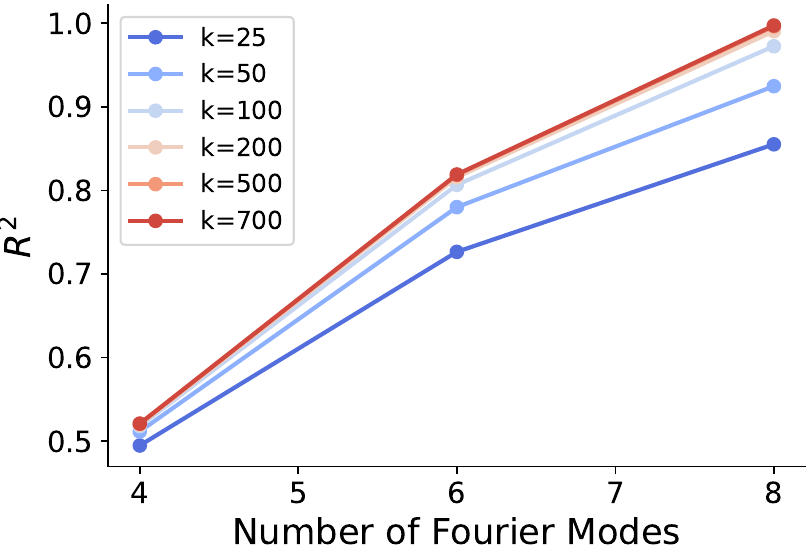}
    \caption{\textbf{Expressivity of SAE as a function of Fourier modes.} The $R^2$ of SAE-FNO on CIFAR patches steadily increases with the number of Fourier modes. The frequency bandwidth directly controls the resolution and detail of the learned concepts.}
    \label{fig:fouriermode_ablation}
\end{figure}

\begin{figure}
    \centering
    \includegraphics[width=0.5\linewidth]{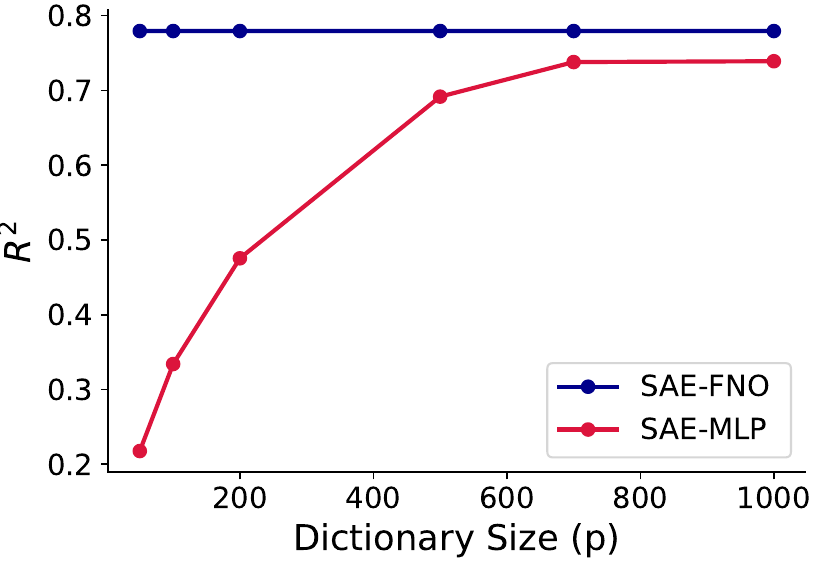}
    \caption{\textbf{Expressivity of SAE as a function of dictionary size ($p$).} Evaluating $R^2$ as the total dictionary size $p$ increases (at fixed $k=50$ on ImageNet $16 \times 16$ patches). SAE-FNO with 12 modes is used.}
    \label{fig:imagenet-r2}
\end{figure}

\begin{figure}
    \centering
    \begin{subfigure}{0.49\textwidth}
        \includegraphics[width=\linewidth]{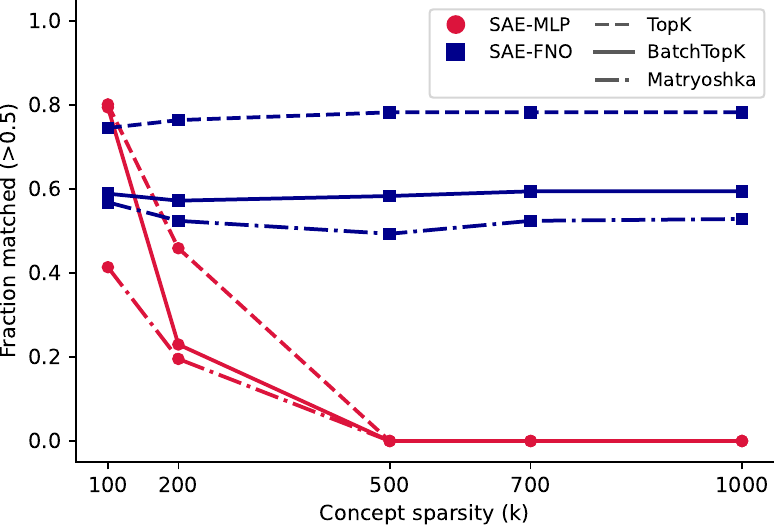}
        \caption{Threshold = 0.5}
    \end{subfigure}
    \hfill
    \begin{subfigure}{0.49\textwidth}
        \includegraphics[width=\linewidth]{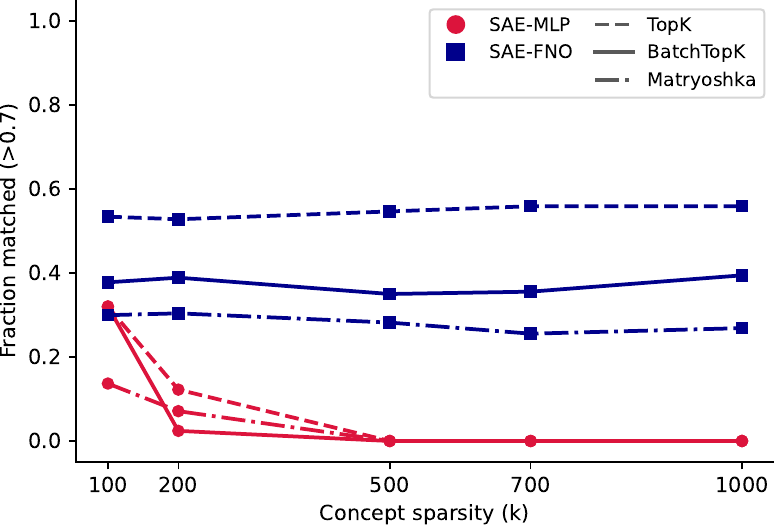}
        \caption{Threshold = 0.7}
    \end{subfigure}
    \caption{\textbf{Identity consistency.} New metric for consistency of concepts across k. Fraction of matched features across runs of increasing concept sparsity using k=50 as anchor.}
    \label{fig:identity-consistency}
\end{figure}

\begin{figure}
    \centering
    \begin{subfigure}{0.32\textwidth}
        \includegraphics[width=\linewidth]{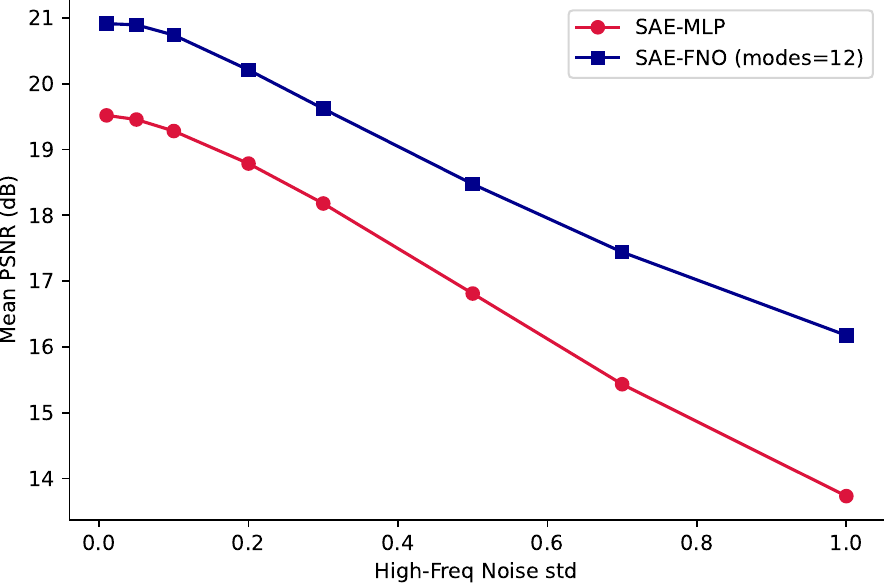}
        \caption{$k = 25$}
    \end{subfigure}
    \begin{subfigure}{0.32\textwidth}
        \includegraphics[width=\linewidth]{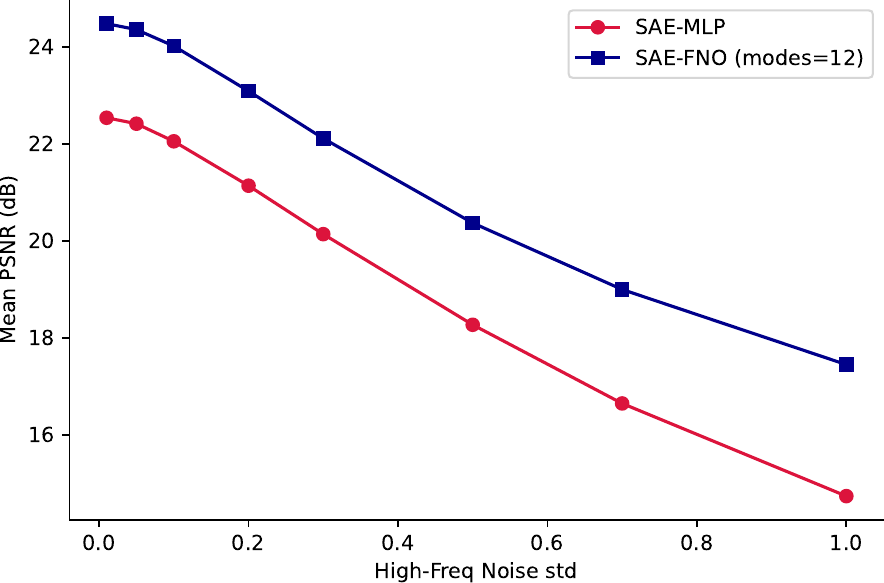}
        \caption{$k = 50$}
    \end{subfigure}
    \begin{subfigure}{0.32\textwidth}
        \includegraphics[width=\linewidth]{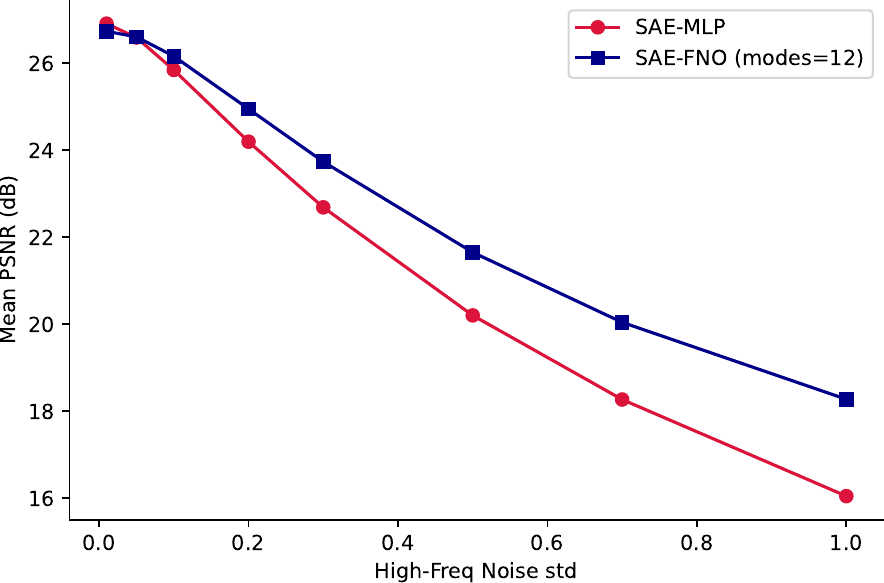}
        \caption{$k = 100$}
    \end{subfigure}
    \\
    \begin{subfigure}{0.32\textwidth}
        \includegraphics[width=\linewidth]{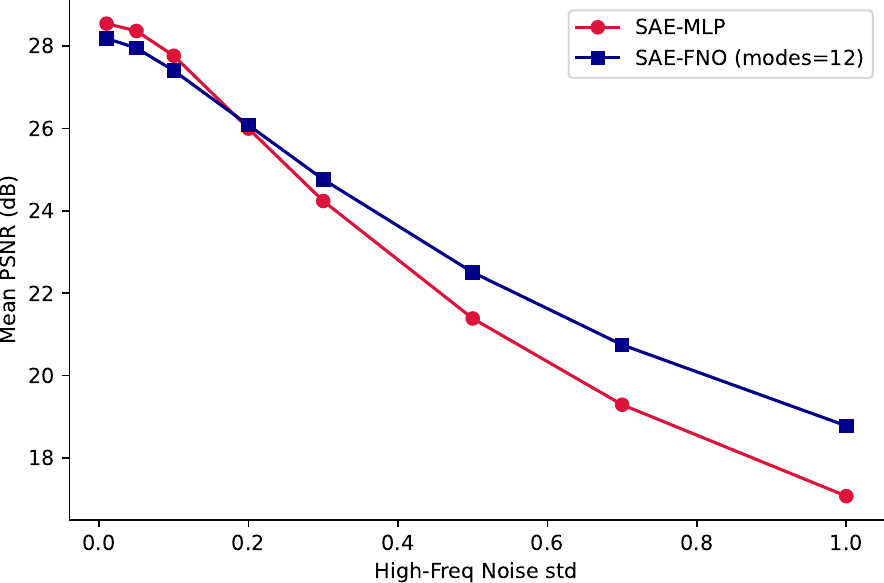}
        \caption{$k = 200$}
    \end{subfigure}
    \begin{subfigure}{0.32\textwidth}
        \includegraphics[width=\linewidth]{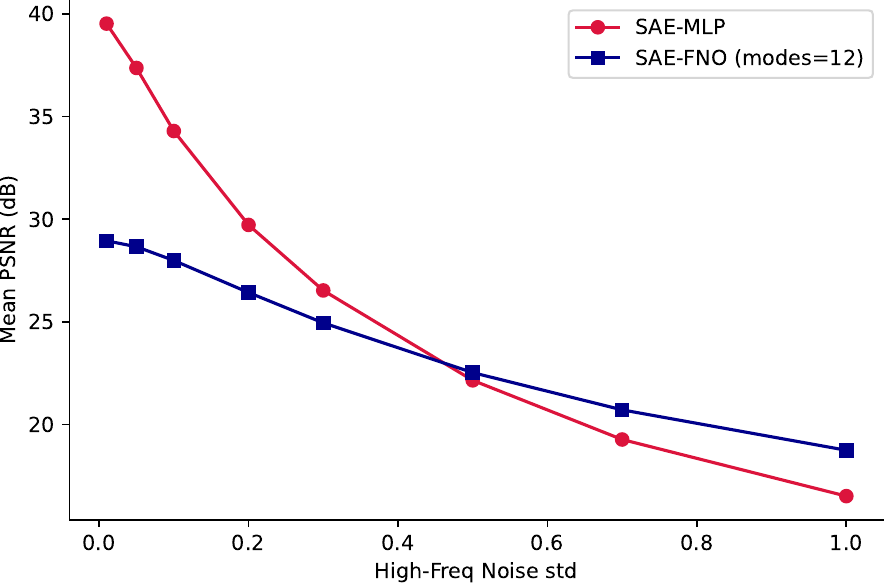}
        \caption{$k = 500$}
    \end{subfigure}
    \begin{subfigure}{0.32\textwidth}
        \includegraphics[width=\linewidth]{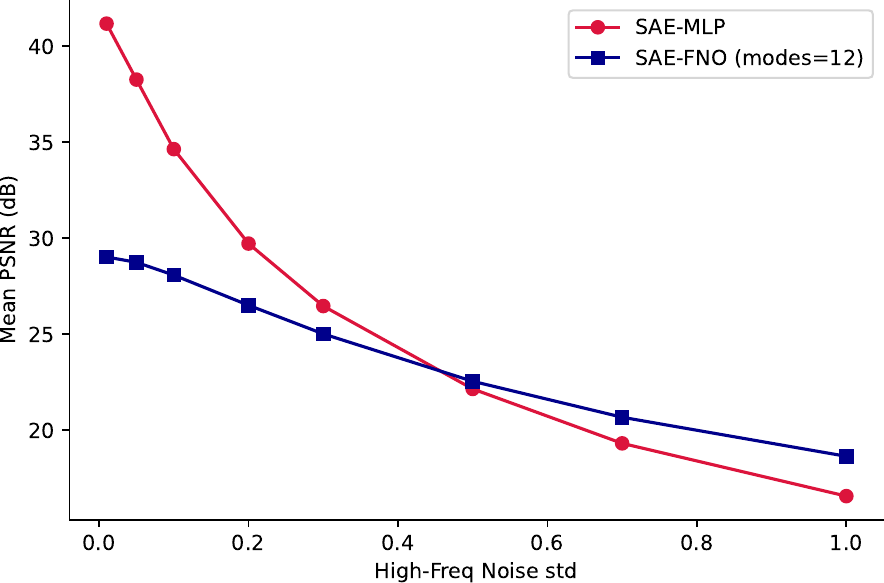}
        \caption{$k = 700$}
    \end{subfigure}
    \caption{\textbf{High-frequency noise robustness} (ImageNet $16 \times 16$, $p=1000$, reporting PSNR). SAE-FNO shows smaller degradation under distribution shift compared to SAE-MLP, indicating more generalizable concepts.}
    \label{fig:imagenet-highfreq-noise}
\end{figure}

\begin{figure}
    \centering
    \begin{subfigure}{\textwidth}
        \includegraphics[width=\linewidth]{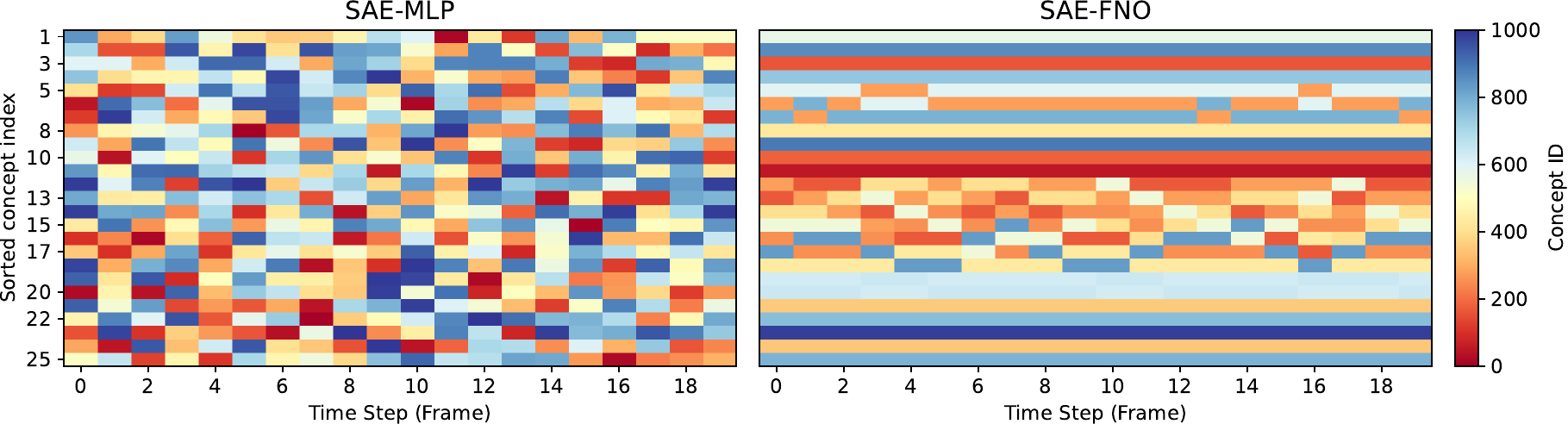}
        \caption{$k = 25$}
    \end{subfigure}
    \begin{subfigure}{\textwidth}
        \includegraphics[width=\linewidth]{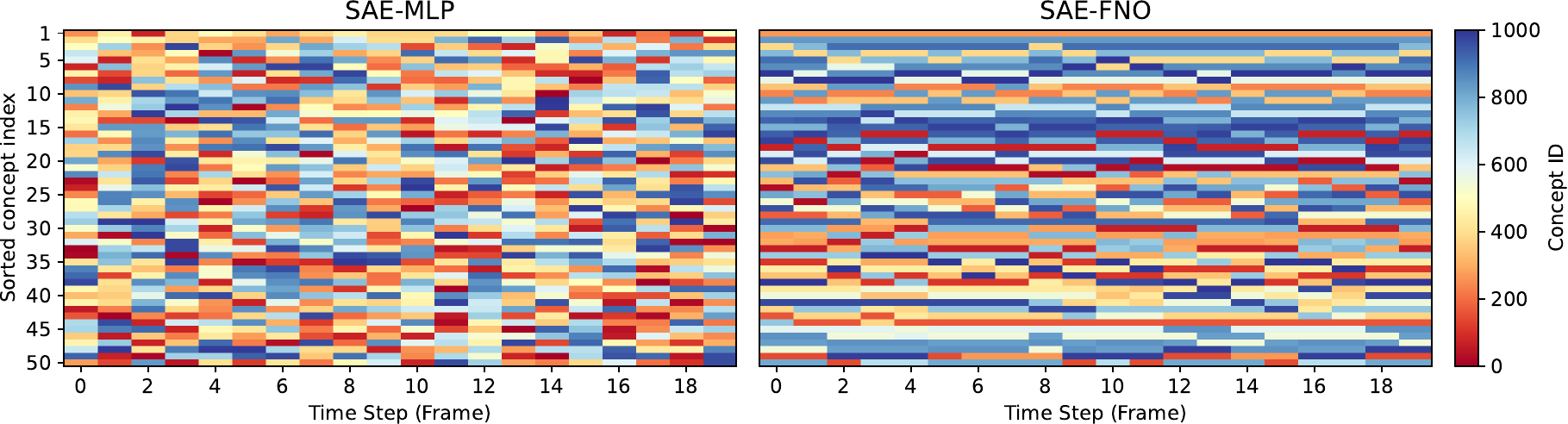}
        \caption{$k = 50$}
    \end{subfigure}
    \caption{\textbf{Top concept activations over time for higher concept sparsity levels.} Extended visualizations of \Cref{fig:video}b for (a) $k=25$ and (b) $k=50$. Each cell's color represents the concept ID active at a given time step. Consistent with lower sparsity levels, SAE-FNO maintains stable concept identities over time (horizontal bands), using domain sparsity to handle spatial translations. In contrast, SAE-MLP constantly activates new, distinct concepts to represent the moving digit.}
    \label{fig:video-concepts-over-time}
\end{figure}

\begin{figure}
    \centering
    \begin{subfigure}{0.32\textwidth}
        \includegraphics[width=\linewidth]{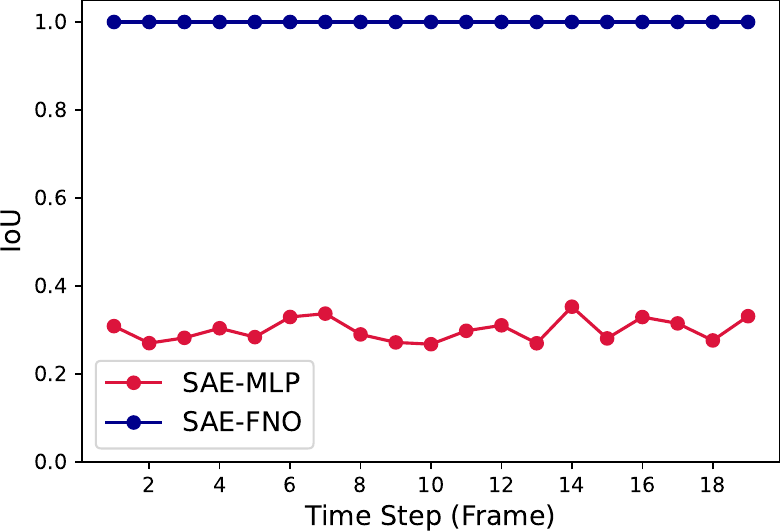}
        \caption{$k = 10$}
    \end{subfigure}
    \begin{subfigure}{0.32\textwidth}
        \includegraphics[width=\linewidth]{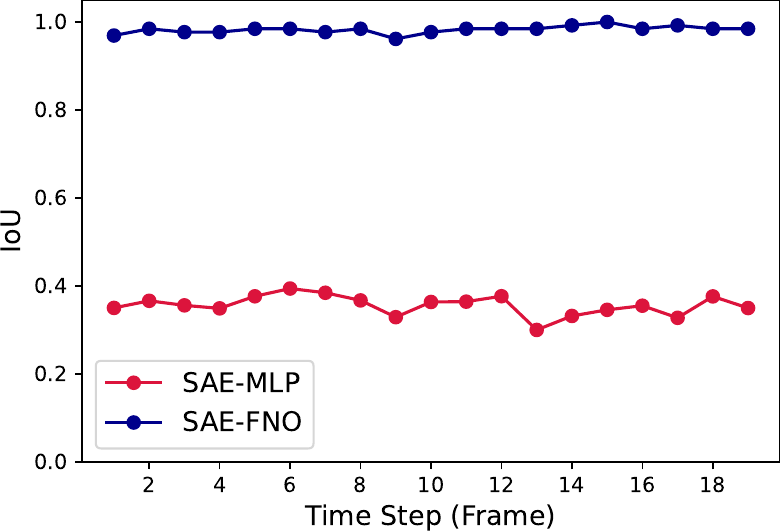}
        \caption{$k = 25$}
    \end{subfigure}
    \begin{subfigure}{0.32\textwidth}
        \includegraphics[width=\linewidth]{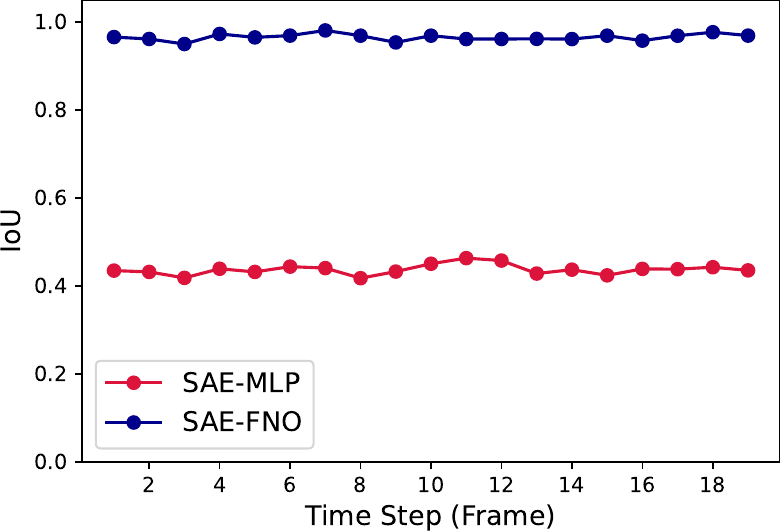}
        \caption{$k = 50$}
    \end{subfigure}
    \caption{\textbf{SAE-FNO concept stability across sparsity levels.} Intersection over Union (IoU) of the active concept sets between consecutive frames in translating MNIST digits (see \Cref{fig:video}abc). At each time step $t$, we compute the IoU between the top $k$ concepts (ranked by their spatial $L_2$ norm) activated in frame $t$ and those activated in frame $t+1$, averaged across batch. Higher IoU indicates stronger concept stability and consistency, meaning the model tracks the moving digit using a consistent subset of concepts. While SAE-MLP (red) rapidly swaps its active concepts between frames, SAE-FNO (blue) maintains a highly stable set of active concepts ($\text{IoU} \approx 1$).}
    \label{fig:video-iou}
\end{figure}

\newpage

\begin{table}[t]
    \centering
    \caption{\textbf{Computational cost as a function of data dimension.} Training and inference throughput (images/sec, L40S GPU, batch size 256, 32 workers) as a function of data dimensionality ($k = 100, p = 1000$, FNO modes=half). Higher is better.}
    \label{tab:throughput_datadim}
    \begin{tabular}{llcc}
        \toprule
        Architecture & Data Dim & Train Throughput & Inference Throughput \\ \midrule
        SAE-MLP & $16 \times 16$ & 13,893 img/s & 9,680 img/s \\
        SAE-MLP & $32 \times 32$ & 10,551 img/s & 8,648 img/s \\
        SAE-FNO & $16 \times 16$ & 3,838 img/s & 12,843 img/s \\
        SAE-FNO & $32 \times 32$ & 1,024 img/s & 3,960 img/s \\ 
        \bottomrule
    \end{tabular}
\end{table}

\begin{table}[H]
    \centering
    \caption{\textbf{Computational cost as a function of model size $p$.} Training and inference throughput (images/sec, L40S GPU, batch size 256, 32 workers) as a function of model size $p$ ($k = 200$, $16 \times 16$ patches). Higher is better.}
    \label{tab:throughput_p}
    \begin{tabular}{llcc}
        \toprule
        Architecture & $p$ (Concepts) & Train Throughput & Inference Throughput \\ \midrule
        SAE-MLP & 1K & 17,847 img/s & 12,987 img/s \\
        SAE-FNO & 1K & 3,777 img/s & 12,254 img/s \\
        SAE-MLP & 5K & 17,744 img/s & 14,947 img/s \\
        SAE-FNO & 5K & 744 img/s & 3,111 img/s \\
        SAE-MLP & 10K & 11,308 img/s & 15,082 img/s \\
        SAE-FNO & 10K & 373 img/s & 1,598 img/s \\
        \bottomrule
    \end{tabular}
\end{table}

\begin{table}[H]
    \centering
    \caption{\textbf{Useful inductive bias of SAE-FNO in recovering frequency oriented concepts.} Synthetic experiment recovering 10 single-frequency concepts ($k=3$, domain sparsity 2). Values represent cross-correlation recovery scores, reported as (mean, standard deviation) across 5 runs.}
    \label{tab:synthetic-recovery}
    \begin{tabular}{lcccc}
        \toprule
         & Init & SAE-MLP & SAE-FNO ($m=$ full) & SAE-FNO ($m=151$) \\
        \midrule
        Clean & (0.329, 0.210) & (0.868, 0.303) & (0.863, 0.324) & (0.882, 0.612) \\
        Noisy & (0.329, 0.210) & (0.798, 0.534) & (0.853, 0.627) & (0.890, 0.658) \\
        \bottomrule
    \end{tabular}
\end{table}

\clearpage
\section{Appendix - Theoretical Analysis}

This section provides the mathematical derivations for the training dynamics of the architectures discussed in the main text. We derive the gradient descent update rules for a single data sample, which are used to learn the dictionary parameters. These derivations consider the \emph{analytic gradient}, i.e., the dictionary gradient is computed given the codes~\citep{malezieux2022understanding, tolooshams2022stable}.

\begin{proposition}
    [Architectural Inference Equivalence of Lifting in SAE]
    \label{prop:lift-arch-equiv}
    The architectural inference of a SAE is equivalent to a lifted-SAE if the projection is tied to the lifting operator ($\proj = \lift^\top$), and the lifting operator is orthogonal ($\lift^\top\lift = \eye$), i.e., under the same learned model $\D = \proj \D_L$, the representation dynamics and output of the networks are equivalent.
\end{proposition}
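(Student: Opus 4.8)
The plan is to prove the two claimed equivalences separately: that the unrolled encoder of the L-SAE produces the \emph{same} sparse code as that of the SAE at every layer $t=0,\dots,T$, and that the two decoders then produce the same reconstruction $\hat\x$. Throughout I would fix the natural correspondence between the two models induced by the hypotheses: given the SAE dictionary $\D$, the lifted dictionary is $\D_{\lift}=\lift\D$, whose atoms lie in the range of $\lift$. This is consistent with the stated relation, since $\proj\D_{\lift}=\lift^\top\lift\D=\D$ using $\lift^\top\lift=\eye$; equivalently, $\D=\proj\D_{\lift}$ together with a range condition pins down $\D_{\lift}=\lift\D$.

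The key observation is that orthogonality collapses every lifted matched-filter step back to the original-space step. Using $\lift^\top\lift=\eye$ I would first record the two identities $\D_{\lift}^\top\lift=\D^\top\lift^\top\lift=\D^\top$ and $\D_{\lift}^\top\D_{\lift}=\D^\top\lift^\top\lift\D=\D^\top\D$. With $\y=\lift\x$, the L-SAE encoder's layer-$t$ pre-activation is $\z_{t-1}-\alpha\D_{\lift}^\top(\D_{\lift}\z_{t-1}-\y)$, and substituting $\D_{\lift}=\lift\D$, $\y=\lift\x$ gives $\D_{\lift}^\top(\D_{\lift}\z_{t-1}-\y)=\D^\top\lift^\top\lift(\D\z_{t-1}-\x)=\D^\top(\D\z_{t-1}-\x)$, which is exactly the SAE pre-activation. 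Since JumpReLU is a fixed elementwise map and both recurrences initialize at $\z_0=\zero$, a straightforward induction on $t$ shows the codes coincide at every layer, hence $\z_T$ is identical for both architectures.

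For the outputs, I would compute the L-SAE reconstruction directly: $\hat\x=\proj\hat\y=\proj\D_{\lift}\z_T=\lift^\top\lift\D\z_T=\D\z_T$, which is the SAE decoder output on the shared code. I expect the main obstacle to be conceptual rather than computational: it lies in the gap between the stated hypothesis $\D=\proj\D_{\lift}$ and the stronger relation $\D_{\lift}=\lift\D$ actually needed for the \emph{representation} dynamics to match. The output equivalence holds from $\D=\proj\D_{\lift}$ alone, because projection annihilates any component of $\D_{\lift}$ lying in $\ker\proj=(\mathrm{range}\,\lift)^\perp$; but the encoder's $\D_{\lift}^\top\D_{\lift}$ term is sensitive to exactly that component, since $\D_{\lift}^\top\D_{\lift}-\D^\top\D=\D_{\lift}^\top(\eye-\lift\lift^\top)\D_{\lift}$ vanishes only when the lifted atoms carry no mass orthogonal to $\mathrm{range}\,\lift$. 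The careful step is therefore to justify restricting the lifted model class to atoms in $\mathrm{range}\,\lift$ (equivalently, to identify $\D_{\lift}=\lift\D$ as the representative of the class $\{\D_{\lift}:\proj\D_{\lift}=\D\}$ that preserves inference), after which the induction and output computation go through verbatim.
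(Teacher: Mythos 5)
Your proof is correct and follows essentially the same route as the paper's: substitute the lifted model into the unrolled proximal-gradient encoder, use orthogonality of $\lift$ to collapse the lifted matched-filter step to the original-space step, and conclude layerwise equality of the codes plus equality of the decoder outputs. Where you differ is in rigor, and the difference matters. The paper substitutes $\D = \proj\D_L$ into the SAE iteration and then asserts, ``using the assumption on the lifting-projection,'' that $\D_L^\top\lift\,(\lift^\top\D_L\z_t - \x)$ equals $\D_L^\top(\D_L\z_t - \lift\x)$; this step requires $\D_L^\top\lift\lift^\top\D_L = \D_L^\top\D_L$, which does \emph{not} follow from $\lift^\top\lift = \eye$ alone when the lifted dimension exceeds $m$, since there $\lift\lift^\top$ is only the orthogonal projector onto $\mathrm{range}(\lift)$, not the identity. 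Your diagnosis that the stated hypothesis $\D = \proj\D_L$ must be strengthened to the range condition $\D_L = \lift\D$, equivalently $(\eye - \lift\lift^\top)\D_L = \zero$, is exactly the condition the paper's step tacitly uses, and fixing the representative $\D_L = \lift\D$ of the class $\{\D_L : \proj\D_L = \D\}$ repairs it cleanly. You also make explicit two steps the paper leaves implicit: the induction over layers from the shared initialization $\z_0 = \zero$ (equality of a single pre-activation only propagates forward because the nonlinearity is a fixed elementwise map), and the decoder computation $\hat\x = \proj\D_L\z_T = \lift^\top\lift\D\z_T = \D\z_T$, which the paper omits entirely even though the proposition claims output equivalence. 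In short, your argument proves the proposition under the range-condition reading and, beyond matching the paper's approach, pinpoints precisely where the paper's own derivation is loose.
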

\begin{proof}
We provide the proof for the sparse generative model case when the encoder implements a proximal gradient descent type of operation in an unrolled learning setting (e.g.,~\ref{eq:jumpdeepsae}). For the SAE, the representation refinement at every encoder layer follows:
\begin{equation}
    \z_{t + 1} = \mathcal{R}(\z_t  - \alpha \D^\top (\D\z_t - \x))
\end{equation}
By construction, we incorporate the underlying model from the lifted-SAE to the standard SAE: $\D = \proj \D_L$, and re-write the encoder as:
\begin{equation}
    \z_{t + 1} = \mathcal{R}(\z_t  - \alpha (\proj\D_L)^\top (\proj\D_L\z_t - \x))= \mathcal{R}(\z_t  - \alpha \D_L^\top\proj^\top (\proj\D_L\z_t - \x))
\end{equation}
Using the assumption on the lifting-projection,
\begin{equation}
    \z_{t + 1} = \mathcal{R}\left(\z_t  - \alpha (\D_L)^\top(\D_L\z_t - \lift \x)\right)
\end{equation}
This final expression is the iterative encoder update for $\z$ in the lifted space, where the dictionary is $\D_L$ and the input is lifted by $\lift$. This completes the proof of architectural inference equivalence on the standard and lifted architectures.
\end{proof}


%
\begin{restatable}[Training Dynamics of Lifting]{proposition}{lifttrainequiv}\label{prop:lift_train_equiv}
\vspace{-2mm}
    The training dynamics of the lifted-SAE (L-SAE) $\D_L^{(k+1)}=\D_L^{(k)} + \eta \proj^T (\x - \proj\D_L^{(k)}\z)\z^{T}$, with lifting $\lift$ and projection $\proj$, has the effective update in the original space, expressed as: $\D^{(k+1)} = \D^{(k)} + \eta_L (\lift^\top \lift)(\x - \D^{(k)}\z)\z^\top$, where $\lift^\top \lift$ acts as a preconditioner, potentially accelerating learning by inducing a more isotropic update. If they satisfy the equivalent architectural inference (\cref{prop:lift-arch-equiv}), then the dynamics are equivalent to those of an SAE: $\D^{(k+1)} = \D^{(k)} + \eta (\x - \D^{(k)}\z)\z^T$.  
\end{restatable}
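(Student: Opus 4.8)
The plan is to read the given L-SAE rule as ordinary gradient descent on the outer reconstruction objective and then push it forward to the original space through the \emph{effective} dictionary $\D \coloneqq \proj\D_L$. First I would fix the inferred code $\z$ (the inner-level minimizer) and differentiate the outer loss $\tfrac{1}{2}\|\x-\proj\D_L\z\|_2^2$, which in the matrix/CNN case is justified by $\hat\x=\proj\,\mathcal{G}_{\D_L}(\z)=\proj\D_L\z$ from \cref{def:lifted-opsae}. The chain rule gives $\nabla_{\D_L}=-\proj^\top(\x-\proj\D_L\z)\z^\top$, so a step of size $\eta$ is exactly $\D_L^{(k+1)}=\D_L^{(k)}+\eta\,\proj^\top(\x-\proj\D_L^{(k)}\z)\z^\top$, recovering the stated recursion.

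Next I would transport this recursion to the original space. Defining $\D^{(k)}\coloneqq\proj\D_L^{(k)}$ so that the reconstruction reads $\hat\x=\D^{(k)}\z$, I left-multiply the $\D_L$-recursion by $\proj$ and substitute $\proj\D_L^{(k)}=\D^{(k)}$ to obtain $\D^{(k+1)}=\D^{(k)}+\eta\,\proj\proj^\top(\x-\D^{(k)}\z)\z^\top$. The induction hypothesis $\D^{(k)}=\proj\D_L^{(k)}$ is preserved at every iterate because the linear map $\D_L\mapsto\proj\D_L$ commutes with the affine recursion. Imposing the tied-weights condition $\proj=\lift^\top$ then gives $\proj\proj^\top=\lift^\top\lift$ and hence the claimed effective update with $\eta_L=\eta$. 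Since $\lift^\top\lift\succeq 0$, it rescales and rotates the raw gradient; reading off its spectrum yields the interpretive ``preconditioner'' statement, whereby a better-conditioned $\lift^\top\lift$ produces a more isotropic update.

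For the reduction to the SAE dynamics I would invoke \cref{prop:lift-arch-equiv}: when lifting and projection are tied and orthogonal, $\lift^\top\lift=\eye$, so the preconditioner collapses and the update becomes $\D^{(k+1)}=\D^{(k)}+\eta(\x-\D^{(k)}\z)\z^\top$, the standard SAE rule.

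The step I expect to require the most care is consistency of the inferred code $\z$ across the two architectures \emph{throughout} training, not merely at a single update: the effective dynamics on $\D$ coincide with the SAE dynamics only if both encoders return the same $\z$ at every iterate, which is precisely what the architectural-inference equivalence of \cref{prop:lift-arch-equiv} must supply (and which in turn relies on orthogonality so that lifting does not distort the inner problem). A secondary subtlety is that $\proj$ is generally non-injective, so $\D_L\mapsto\D$ loses information; I would stress that this is harmless for the forward dynamics, since the recursion for $\D$ is self-contained and the non-uniqueness of a preimage $\D_L$ never enters, while noting that the isotropy claim is interpretive rather than a sharp quantitative bound.
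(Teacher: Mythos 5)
Your proposal is correct and follows essentially the same route as the paper's proof: take the analytic gradient of the outer loss $\tfrac{1}{2}\|\x-\proj\D_L\z\|_2^2$ with the code $\z$ held fixed, left-multiply the $\D_L$ recursion by $\proj$ to obtain the effective dynamics of $\D=\proj\D_L$ with preconditioner $\proj\proj^\top$, substitute the tied-weights condition $\proj=\lift^\top$ to get $\lift^\top\lift$, and collapse to the standard SAE update under orthogonality via \cref{prop:lift-arch-equiv}. Your added observations---that code consistency across the two architectures throughout training is precisely what the architectural-inference equivalence supplies, and that non-injectivity of $\proj$ is harmless because the $\D$-recursion is self-contained---are correct refinements that the paper leaves implicit.
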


\begin{proof}
For the SAE, we learn a dictionary $\D$ by minimizing the reconstruction loss for the data sample $\x$. The loss function is:
\begin{equation}
    \mathcal{L}(\D) = \dfrac{1}{2} \left\lVert \x - \D\z \right\lVert_2^2
\end{equation}
The gradient of the loss for $\D$ is:
\begin{equation}
    \dfrac{\partial\mathcal{L}}{\partial \D} = -(\x - \D\z)\z^\top
\end{equation}
The analytic gradient update, given $\z$, for $\D$ at iteration $k$ with a learning rate $\eta$ is therefore:
\begin{equation}
    \D^{(k + 1)} = \D^{(k)} - \eta  \dfrac{\partial\mathcal{L}}{\partial \D^{(k)}} = \D^{(k)} + \eta (\x - \D^{(k)}\z)\z^\top
\end{equation}
For lifted-SAE, we learn a dictionary $\D_L \in \R^{h \times p}$ in a higher-dimensional lifted space. The input $\x$ is lifted by $\lift \in \R^{h \times m}$ and the reconstruction is projected back by $\proj \in \R^{m \times h}$.

The loss function is defined in the original data space:
\begin{equation}
    \mathcal{L}(\D) = \dfrac{1}{2} \left\lVert \x - \proj\D_L\z \right\lVert_2^2
\end{equation}

The analytic gradient given $\z$ of the loss with respect to $\D_L$ is:
\begin{equation}
     \dfrac{\partial\mathcal{L}}{\partial \D_L} = -\proj^\top(\x - \proj\D_L\z)\z^\top
\end{equation}

The gradient update rule for the lifted dictionary $\D_L$ is:
\begin{equation}
    \D_L^{(k + 1)} = \D_L^{(k)} - \eta_L  \dfrac{\partial\mathcal{L}}{\partial \D_L^{(k)}} = \D_L^{(k)} + \eta_L \proj^\top (\x - \proj\D_L\z)(\z)^\top
\end{equation}

To understand the learning dynamics in the original space, we first incorporate the underlying model from L-SAEs into SAE using the relation $\D' = \proj \D_L$. Then, we derive the update rule for the effective dictionary, $\D'$. Left-multiplying the update for $\D_L^{(k + 1)}$ by $\proj$, we have:
\begin{equation}
    \D'^{(k + 1)} = \D'^{(k)} + \eta_L (\proj\proj^\top)(\x - \D'^{(k)}\z)(\z)^\top
\end{equation}

In our specific architecture, we constrain the projection matrix to be the transpose of the lifting matrix such that $\proj = \lift^\top$. Substituting this into the update rules yields:
\begin{equation}
    \D_L^{(k + 1)} = \D_L^{(k)} + \eta_L \lift (\x - \lift^\top\D_L\z)(\z)^\top
\end{equation}
\begin{equation}
    \D'^{(k + 1)} = \D'^{(k)} + \eta_L (\lift^\top\lift)(\x - \D'^{(k)}\z)(\z)^\top
\end{equation}
Hence, when $\lift^\top\lift = \eye$, the lifted-SAE shows equivalent learning dynamics as the standard SAE for model recovery.
\end{proof}

\begin{proposition}
    [Architectural Inference Equivalence of Lifting in SAE-CNN]
    \label{prop:conv-lift-arch-equiv}
    The architectural inference of a SAE-CNN is equivalent to a lifted SAE-CNN (L-SAE-CNN) if the projection is tied to the lifting operator ($\proj = \lift^\top$), and the lifting operator is orthogonal ($\lift^\top\lift = \eye$), i.e., under the same learned model $\D_c = \proj \D_{L, c}$, the representation dynamics and output of the networks are equivalent.
\end{proposition}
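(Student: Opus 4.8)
The plan is to reproduce the argument of \cref{prop:lift-arch-equiv} in the convolutional setting, replacing every matrix--vector product by its spatial analogue ($*$ for convolution, $\star$ for correlation) and tracking how the purely pointwise channel operators $\lift$ and $\proj$ interact with those spatial operations. I would start from the unrolled SAE-CNN encoder update (cf.~\cref{eq:jumpdeepsaeconv}),
\[
\z_{c,t+1} = \mathcal{R}\Big(\z_{c,t} - \alpha\, \D_c \star \big({\textstyle\sum_{i=1}^C} \D_i * \z_{i,t} - \x\big)\Big),
\]
and substitute the shared model relation $\D_c = \proj\,\D_{L,c}$, so that every kernel is expressed through a lifted kernel and the projection. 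The goal is to show this update collapses to the L-SAE-CNN encoder acting on the lifted input.

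The key new ingredient, absent in the Euclidean case, is that $\lift$ and $\proj$ act only on the channel index while $*$ and $\star$ act only on the spatial index, so the two families of operators live on \emph{orthogonal tensor modes}. I would first establish two identities: (i) $(\proj\,\D_{L,i}) * \z_i = \proj(\D_{L,i} * \z_i)$, since applying a pointwise channel map before or after a spatial convolution yields the same result; and (ii) $(\proj\,\D_{L,c}) \star \r = \D_{L,c} \star (\proj^\top \r)$, i.e.\ the channel projection passes through the channel-summing correlation as its channel-adjoint. Using (i), the reconstruction term becomes $\sum_i \D_i * \z_{i,t} - \x = \proj\,\hat{\y}_t - \x$ with lifted reconstruction $\hat{\y}_t = \sum_i \D_{L,i} * \z_{i,t}$; using (ii), the correlation step becomes $\D_{L,c} \star \proj^\top(\proj\,\hat{\y}_t - \x)$.

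It then remains to invoke the hypotheses $\proj = \lift^\top$ and $\lift^\top\lift = \eye$. Since $\proj^\top = \lift$, the data term collapses to $\proj^\top \x = \lift\,\x = \y$, the lifted input, while $\proj^\top\proj\,\hat{\y}_t = \lift\lift^\top\hat{\y}_t$ reduces to $\hat{\y}_t$ exactly as in the reduction of \cref{prop:lift-arch-equiv}. The encoder update then becomes $\z_{c,t+1} = \mathcal{R}\big(\z_{c,t} - \alpha\,\D_{L,c}\star(\sum_i \D_{L,i} * \z_{i,t} - \y)\big)$, which is the L-SAE-CNN encoder acting on $\y = \lift\x$ with dictionary $\{\D_{L,c}\}$. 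Because both networks initialise at $\z_{c,0} = \zero$ and share identical per-layer maps, their entire code trajectories coincide, and the decoder outputs agree through $\D_c = \proj\,\D_{L,c}$, giving architectural inference equivalence.

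I expect the main obstacle to be rigorously justifying the commutation/adjoint identities (i)--(ii) at the level of multi-channel convolutions: one must spell out that $\proj$ mixes channels linearly and therefore commutes with the spatial sum defining $*$, and that correlation sums over exactly the output-channel index on which $\proj$ acts, so $\proj^\top$ factors out. A related subtlety, inherited directly from \cref{prop:lift-arch-equiv}, is that $\lift\lift^\top$ equals the identity only on $\mathrm{range}(\lift)$ (it cannot be full identity when $h>m$), so the reduction $\lift\lift^\top\hat{\y}_t = \hat{\y}_t$ requires arguing that the lifted iterates remain in that range. Making the ``orthogonal modes'' intuition precise is the only genuinely new content beyond the Euclidean proof.
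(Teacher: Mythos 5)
Your proposal follows essentially the same route as the paper's proof: substitute $\D_c = \proj\,\D_{L,c}$ into the unrolled SAE-CNN encoder, pass $\proj^\top$ through the correlation via the adjoint property (your identity (ii)) and through the convolution via channel/spatial commutation (your identity (i)), and invoke $\proj = \lift^\top$, $\lift^\top\lift = \eye$ to collapse to the L-SAE-CNN update on the lifted input $\y = \lift\x$. Your closing caveat — that $\lift\lift^\top$ is only a projector onto $\mathrm{range}(\lift)$ when $h > m$, so the reduction requires the lifted kernels (equivalently the lifted reconstruction) to lie in that range, e.g.\ $\D_{L,c} = \lift\D_c$ — is a genuine subtlety that the paper's proof also elides when it replaces $(\proj^\top\proj)\D_{L,i}$ by $\D_{L,i}$, so flagging it makes your argument slightly more careful than the original.
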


\begin{proof}
We provide the proof for the sparse convolutional generative model case when the encoder implements a proximal gradient descent type of operation in an unrolled learning setting (e.g., \ref{eq:jumpdeepsaeconv}). For the SAE-CNN, the representation refinement for the $c$-th feature map at every encoder layer follows:
\begin{equation}
    \z_{c, t + 1} = \mathcal{R} \left(\z_{c, t} - \alpha \left( \sum\limits_{i = 1}^p \D_i * \z_{i, t} - \x\right) \star \D_c \right)
\end{equation}

By construction, we incorporate the underlying model from the lifted-SAE-CNN to the standard SAE-CNN: $\D_c^{(k)} = \proj\D_{L, c}^{(k)}$, and re-write the encoder using the adjoint property:
\begin{equation}
    \z_{c, t + 1} = \mathcal{R} \left(\z_{c, t} - \alpha \left( \proj^\top \left[ \left( \sum\limits_{i = 1}^p \proj\D_{L, i} * \z_{i, t} \right) - \x\right] \star \D_{L, c}  \right) \right)
\end{equation}
\begin{equation}
    \z_{c, t + 1} = \mathcal{R} \left(\z_{c, t} - \alpha \left( \sum\limits_{i = 1}^p (\proj^\top\proj)\D_{L, i} * \z_{i, t} - \proj^\top\x \right)  \star \D_{L, c}  \right) 
\end{equation}

Using the assumption on the lifting-projection,
\begin{equation}
    \z_{c, t + 1} = \mathcal{R} \left(\z_{c, t} - \alpha \left( \sum\limits_{i = 1}^p \D_{L, i} * \z_{i, t} - \lift\x \right)  \star \D_{L, c} \right) 
\end{equation}

This final expression is the iterative encoder update for $\z_c$ in the lifted space, where the dictionary is $\D_L$ and the input is lifted by $\lift$. This completes the proof of architectural inference equivalence for SAE-CNN and L-SAE-CNN.
\end{proof}

\begin{proposition}
    [Training Dynamics of Lifted SAE-CNN]\label{prop:conv_lift_train_equiv}
     The training dynamics of the lifted-SAE-CNN (L-SAE-CNN) $\D_{L, c}^{(k + 1)} = \D_{L, c}^{(k)} + \eta_L \lift\left(\x - \lift^\top \left(\sum\limits_{i = 1}^p \D_{L, i}^{(k)} * \z_{i, t} \right) \right) \star \z_{c, t}$, with a lifting operator $\lift$ and projection $\proj$, has the effective update in the original space, expressed as: $\D_c^{(k + 1)} = \D_c^{(k)} + \eta_L (\lift^\top\lift)\left(\x - \sum\limits_{i = 1}^p \D_i^{(k)} * \z_{i, t} \right)\star \z_{c, t}$, where $\lift^\top \lift$ acts as a preconditioner, potentially accelerating learning by inducing a more isotropic update. If the architectures satisfy the architectural inference equivalence condition (\cref{prop:conv-lift-arch-equiv}), then the dynamics become equivalent to that of a SAE-CNN: $\D_c^{(k + 1)} = \D_c^{(k)} + \eta \left(\x - \sum\limits_{i = 1}^p \D_i^{(k)} * \z_{i, t}\right) \star \z_{c, t}$.
\end{proposition}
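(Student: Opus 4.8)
The plan is to mirror the derivation of \cref{prop:lift_train_equiv} (the dense case), adapting each step to the convolutional structure. First I would write the L-SAE-CNN reconstruction loss in the original data space, $\mathcal{L}(\D_L) = \tfrac{1}{2}\|\x - \proj(\sum_{i=1}^C \D_{L,i} * \z_{i,t})\|_2^2$, and compute the analytic gradient with respect to a single kernel $\D_{L,c}$ while holding the codes fixed. The chain rule passes through the projection $\proj$ (whose adjoint is $\proj^\top$) and through the convolution $\D_{L,c} * \z_{c,t}$ (whose adjoint, as a linear map in the kernel argument, is correlation with $\z_{c,t}$, matching the convention used in \cref{eq:jumpdeepsaeconv}). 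This yields $\partial\mathcal{L}/\partial\D_{L,c} = -(\proj^\top(\x - \proj\sum_i \D_{L,i} * \z_{i,t})) \star \z_{c,t}$, and hence the gradient step $\D_{L,c}^{(k+1)} = \D_{L,c}^{(k)} + \eta_L(\proj^\top r^{(k)}) \star \z_{c,t}$ with residual $r^{(k)} = \x - \proj\sum_i \D_{L,i}^{(k)} * \z_{i,t}$. Substituting the tie $\proj = \lift^\top$ (so $\proj^\top = \lift$) recovers the stated L-SAE-CNN update.

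Next I would pass to the effective update in the original space by defining $\D_c^{(k)} := \proj\D_{L,c}^{(k)}$ and applying $\proj$ to the $\D_{L,c}$ update. The key observation — and the only place this differs substantively from the dense case — is that $\lift$ and $\proj$ act only on the channel index while $*$ and $\star$ act only on the spatial index, so $\proj$ commutes with both $*\z_{i,t}$ and $\star\z_{c,t}$. This lets me pull $\proj$ through the sum, turning $\proj(\sum_i \D_{L,i}^{(k)} * \z_{i,t})$ into $\sum_i \D_i^{(k)} * \z_{i,t}$, and turning $\proj((\proj^\top r^{(k)}) \star \z_{c,t})$ into $(\proj\proj^\top r^{(k)}) \star \z_{c,t} = ((\lift^\top\lift)\, r^{(k)}) \star \z_{c,t}$, where the residual is now $r^{(k)} = \x - \sum_i \D_i^{(k)} * \z_{i,t}$. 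This produces exactly the claimed effective update with the preconditioner $\lift^\top\lift$ acting on the residual.

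Finally, invoking \cref{prop:conv-lift-arch-equiv}, under the architectural inference equivalence condition we have $\proj = \lift^\top$ and $\lift^\top\lift = \eye$; substituting $\lift^\top\lift = \eye$ collapses the preconditioner and reduces the dynamics to the SAE-CNN update, identifying $\eta = \eta_L$. The main obstacle I anticipate is making the commutativity argument rigorous: I must state explicitly that $\lift$ and $\proj$ are applied pointwise across the spatial domain (equivalently, act as $1\times 1$ convolutions on the channel dimension), so that they genuinely commute with the spatial operators $*$ and $\star$; once this is secured, the remaining steps are the same bookkeeping as in \cref{prop:lift_train_equiv}. A secondary point to verify carefully is the adjoint relationship between convolution and correlation in the kernel argument, including any flip convention, so that the gradient is expressed consistently with the $\star$ notation used throughout.
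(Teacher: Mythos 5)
Your proposal is correct and follows essentially the same route as the paper's proof: write the L-SAE-CNN loss in the original space, take the analytic gradient (correlation as the adjoint of convolution in the kernel argument), left-multiply the $\D_{L,c}$ update by $\proj$ to get the effective update with preconditioner $\proj\proj^\top$, then substitute $\proj = \lift^\top$ and $\lift^\top\lift = \eye$. Your one addition --- explicitly flagging that $\lift$ and $\proj$ act channel-wise and hence commute with the spatial operators $*$ and $\star$ --- is a point the paper's proof uses implicitly, so it is a welcome clarification rather than a departure.
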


\begin{proof}
For the SAE-CNN, we learn a dictionary $\D$ by minimizing the reconstruction loss for the data sample $\x$. The loss function is:
\begin{equation}
    \mathcal{L}(\D) = \dfrac{1}{2} \left\lVert \x - \sum\limits_{c = 1}^p \D_c * \z_c \right\lVert_2^2
\end{equation}

The gradient of the loss with respect to the $c$-th kernel $\D_c$ is:
\begin{equation}
    \dfrac{\partial\mathcal{L}}{\partial \D_c} = -\left(\x - \sum\limits_{i = 1}^p \D_i * \z_i \right) \star \z_c
\end{equation}
where $\star$ denotes cross-correlation.

The analytic gradient update, given $\z$, for $\D_c$ at iteration $k$ with a learning rate $\eta$ is:
\begin{equation}
    \D_c^{(k + 1)} = \D_c^{(k)} + \eta \left(\x - \sum\limits_{i = 1}^p \D_i^{(k)} * \z_{i, t}\right) \star \z_{c, t}
\end{equation}

For L-SAE-CNN, we learn a dictionary $\D_L = \{\D_{L, c}\}_{c = 1}^p$ that operates in a higher-dimensional lifted space. The input signal $\x$ is first lifted by $\lift$ (acts on the channel dimension) to $\tilde{x}$ and the reconstruction $\hat{\x}$ is obtained by projecting the lifted reconstruction $\hat{\tilde{\x}} = \sum_c \D_{L, c} * \z_c$ back by $\proj$.

The loss function is defined in the original data space: 
\begin{equation}
    \mathcal{L}(\D) = \dfrac{1}{2} \left\lVert \x - \proj \left(\sum\limits_{c = 1}^p \D_{L, c} * \z_c \right) \right\lVert_2^2
\end{equation}

The analytic gradient of the loss, given $\z$,  with respect the $c$-th lifted kernel $\D_{L, c}$ is:
\begin{equation}
     \dfrac{\partial\mathcal{L}}{\partial \D_{L, c}} = - \proj^\top\left(\x - \proj \left(\sum\limits_{i = 1}^p \D_{L, i} * \z_i \right) \right) \star \z_c
\end{equation}

The gradient update rule for $\D_{L, c}$ is:
\begin{equation}
    \D_{L, c}^{(k + 1)} = \D_{L, c}^{(k)} + \eta_L \proj^\top\left(\x - \proj \left(\sum\limits_{i = 1}^p \D_{L, i}^{(k)} * \z_{i, t} \right) \right) \star \z_{c, t}
\end{equation}

To understand the learning dynamics in the original space, we derive the update rule for the effective dictionary $\D'_c = \proj \D_{L, c}$, i.e., the case where both frameworks contain the same underlying model but in a different form. Left-multiplying the update for $\D_{L, c}^{(k + 1)}$ update rule by $\proj$, we get:
\begin{equation}
    \D_c'^{(k + 1)} = \D_c'^{(k)} + \eta_L (\proj\proj^\top)\left(\x - \sum\limits_{i = 1}^p \D_i'^{(k)} * \z_{i, t} \right)\star \z_{c, t}
\end{equation}

In our specific architecture, we constrain the projection matrix to be the transpose of the lifting matrix such that $\proj = \lift^\top$. Substituting this into the update rules yields:
\begin{equation}
    \D_{L, c}^{(k + 1)} = \D_{L, c}^{(k)} + \eta_L \lift\left(\x - \lift^\top \left(\sum\limits_{i = 1}^p \D_{L, i}^{(k)} * \z_{i, t} \right) \right) \star \z_{c, t}
\end{equation}
\begin{equation}
    \D_c'^{(k + 1)} = \D_c'^{(k)} + \eta_L (\lift^\top\lift)\left(\x - \sum\limits_{i = 1}^p \D_i'^{(k)} * \z_{i, t} \right)\star \z_{c, t}
\end{equation}

Hence, when $\lift^\top\lift = \eye$, the lifted-SAE-CNN shows equivalent learning dynamics as the standard SAE-CNN for model recovery.
\end{proof}

\liftfnotrain*

\begin{proof}
For the SAE-FNO, we learn frequency-domain weights $\W_c = \mathcal{F}\D_c$ by minimizing the reconstruction loss for the data sample $\x$. Taking normalization into account, the reconstruction operator is $\hat{\x} = \mathcal{G}_{\W}(\z) = \frac{1}{\sqrt{M}} \mathcal{F}^{-1} \big(\sum_{c=1}^p \W_c \cdot (\mathcal{F} \z_{c})\big)$, where $\W_c = \mathcal{F} \D_c$, where $M$ is the number of modes. The loss function is:
\begin{equation}
    \mathcal{L}(\W) = \dfrac{1}{2} \mathbb{E}_{\z \sim \mu} \left[\left\lVert \x - \mathcal{G}_{\W}(\z) \right\lVert_{\mathcal{X}}^2 \right]
\end{equation}
where $\x = \mathcal{G}_{\D^{*}}(\z)$ is the ground-truth signal.

The gradient of the loss with respect to the $c$-th  frequency-domain kernel $\W_c$ is:
\begin{equation}
    \dfrac{\partial\mathcal{L}}{\partial \W_c} = - \dfrac{1}{\sqrt{M}} \mathcal{F}\left(\x - \hat{\x} \right) \odot \mathcal{F}(\z_c)
\end{equation}

The analytic gradient update, given $\z$, for $\W_c$ at iteration $k$ with a learning rate $\eta$ is:
\begin{equation}
    \W_c^{(k + 1)} = \W_c^{(k)} + \dfrac{\eta}{\sqrt{M}}\mathcal{F}\left(\x - \hat{\x}^{(k)}\right) \odot \mathcal{F}(\z_{c, k})
\end{equation}

Applying the inverse Fourier transform and the cross-correlation theorem, we obtain the effective update rule for the spatial dictionary $\D_c'$:
\begin{equation}
    \D_c'^{(k + 1)} = \D_c'^{(k)} + \dfrac{\eta}{M}\left(\x - \hat{\x}^{(k)} \right) \star \z_{c, t} = \D_c'^{(k)} + \dfrac{\eta}{M}\left(\x - \dfrac{1}{\sqrt{M}} \mathcal{F}^{-1}  \sum\limits_{i = 1}^p \mathcal{F}\D_i^{(k)} \odot \mathcal{F}\z_{i, k} \right) \star \z_{c, t}
\end{equation}

In the lifted scenario, we learn a dictionary $\W_{L} = \{\W_{L, c}\}^p_{c = 1}$ that operates in a higher -dimensional lifted frequency domain. The input signal $\x$ is lifted by $\lift$, and the reconstruction is projected back by $\proj$. The reconstruction operator is $\hat{\x} = \proj \left(\mathcal{G}_{\W_L}(\z) \right) = \proj \left( \frac{1}{\sqrt{M}} \mathcal{F}^{-1} \left( \sum_{c = 1}^p \W_{L, c} \odot \mathcal{F}(\z_c) \right)\right)$.

The loss function is defined in the original spatial space:
\begin{equation}
    \mathcal{L}(\W) = \dfrac{1}{2} \mathbb{E}_{\z \sim \mu} \left[\left\lVert \x - \proj \left(\mathcal{G}_{\W_L}(\z) \right)\right\lVert_{\mathcal{X}}^2 \right]
\end{equation}

The gradient of the loss with respect to $\W_{L, c}$, given $\z$, is:
\begin{equation}
    \dfrac{\partial\mathcal{L}}{\partial \W_{L, c}} = - \dfrac{1}{\sqrt{M}} \mathcal{F} \left( \proj^\top \left( \x - \hat{\x}^{(k)}\right)\right) \odot \mathcal{F}(\z_c)
\end{equation}

The analytic gradient update update for $\W_{L, c}$ at iteration $k$ with a learning rate $\eta_L$ is:
\begin{equation}
    \W_{L, c}^{(k + 1)} = \W_{L, c}^{(k)} + \dfrac{\eta_L}{\sqrt{M}} \mathcal{F} \left( \proj^\top \left( \x - \hat{\x}^{(k)} \right)\right) \odot \mathcal{F}(\z_{c, t})
\end{equation}

To understand the learning dynamics in the original spatial-domain, we derive the update rule for the effective lifted spatial dictionary $\D'_{L, c}$.  Taking the inverse Fourier, we get:
\begin{equation}
    \D_{L, c}'^{(k + 1)} = \D_{L, c}'^{(k)} + \dfrac{\eta_L}{M} \left( \proj^\top \left( \x - \hat{\x}^{(k)} \right)\right) \star \z_{c, t}
\end{equation}

We also derive the update rule for the effective spatial dictionary in the original space $\D_c'$. Left-multiplying the update for $\D_{L, c}'$ by $\proj$, we have:
\begin{equation}
    \D_c'^{(k + 1)} = \D_c'^{(k)} + \dfrac{\eta_L}{M} \left( (\proj\proj^\top) \left( \x - \hat{\x}^{(k)} \right)\right) \star \z_{c, t}
\end{equation}

In our specific architecture, we constrain the projection matrix to be the transpose of the lifting matrix such that $\proj = \lift^\top$. Substituting this into the update rules yields:
\begin{equation}
    \W_{L, c}^{(k + 1)} = \W_{L, c}^{(k)} + \dfrac{\eta_L}{\sqrt{M}} \mathcal{F} \left( \lift \left( \x - \hat{\x}^{(k)} \right)\right) \odot \mathcal{F}(\z_{c, t})
\end{equation}
\begin{equation}
    \D_{L, c}'^{(k + 1)} = \D_{L, c}'^{(k)} + \dfrac{\eta_L}{M} \left( \lift \left( \x - \hat{\x}^{(k)} \right)\right) \star \z_{c, t}
\end{equation}
\begin{equation}
    \D_c'^{(k + 1)} = \D_c'^{(k)} + \dfrac{\eta_L}{M} \left( (\lift^\top\lift) \left( \x - \hat{\x}^{(k)} \right)\right) \star \z_{c, t}
\end{equation}

Substituting for $\hat{\x}^{(k)}$, the full effective spatial-domain update is:
\begin{equation}
    \D_c'^{(k + 1)} = \D_c'^{(k)} + \dfrac{\eta_L}{M} \left( (\lift^\top\lift) \left( \x - \dfrac{1}{\sqrt{M}} \mathcal{F}^{-1} \left( \sum\limits_{i = 1}^p \mathcal{F}(\D_i'^{(k)}) \odot \mathcal{F}(\z_{i, t}) \right) \right)\right) \star \z_{c, t}
\end{equation}

Hence, when $\lift^\top\lift = \eye$, the lifted-SAE-FNO shows equivalent learning dynamics as the standard SAE-FNO for model recovery. `
\end{proof}

\begin{proposition}
    [Architectural Inference Equivalence of SAE-CNN and SAE-FNO]
    \label{prop:conv-operator-sae-arch-equiv}
    The architectural inference of an SAE-CNN is equivalent to a sparse autoencoder neural operator (SAE-NO) if the integral operator of SAE-FNO and parameters of SAEs are defined as a convolution where the SAE-FNO's frequency-domain weights $\W_c$ are the Fourier transform of SAE-CNN's spatial domain kernels: $\W_c = \mathcal{F}\D_c$.
\end{proposition}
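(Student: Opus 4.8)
The plan is to prove the equivalence operation-by-operation: under the identification $\W_c = \mathcal{F}\D_c$, I will show that every linear map appearing in the SAE-FNO encoder iteration and decoder coincides with the corresponding map of the SAE-CNN, so that the two architectures produce identical latent iterates $\z_{c,t}$ at every unrolled layer and identical reconstructions $\hat\x$. The single engine behind this is the convolution theorem together with its adjoint (correlation) counterpart; following the template of \cref{prop:conv-lift-arch-equiv}, ``architectural inference equivalence'' then amounts to matching both the representation dynamics and the output.

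First I would treat the decoder. The SAE-CNN reconstructs $\hat\x = \sum_{c=1}^C \D_c * \z_c$, while the SAE-FNO's Fourier integral operator (\cref{def:fno}) gives $\hat\x = \tfrac{1}{\sqrt M}\mathcal{F}^{-1}\big(\sum_{c=1}^C \W_c \odot \mathcal{F}\z_c\big)$. Substituting $\W_c = \mathcal{F}\D_c$ and invoking the convolution theorem $\mathcal{F}^{-1}(\mathcal{F}\D_c \odot \mathcal{F}\z_c) = \D_c * \z_c$ termwise shows the two decoder outputs agree, where I keep the unitary normalization $1/\sqrt M$ explicit so the forward and inverse DFT pair is consistent.

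Second, and this is the crux, I would handle the encoder. Both architectures realize the unrolled proximal map of \cref{eq:jumpdeepsaeconv}, namely $\z_{c,t+1} = \mathcal{R}\big(\z_{c,t} - \alpha(\sum_i \D_i * \z_{i,t} - \x)\star \D_c\big)$. This update contains two linear operators: the forward operator $\z \mapsto \sum_i \D_i * \z_i$ and its adjoint, the correlation $r \mapsto r \star \D_c$. The forward operator matches by the decoder computation above. For the adjoint, I would use that the adjoint of a convolution is a correlation, which in the frequency domain is multiplication by the complex conjugate: $\mathcal{F}(r \star \D_c) = \overline{\mathcal{F}\D_c} \odot \mathcal{F}r = \overline{\W_c}\odot \mathcal{F}r$. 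Hence the SAE-FNO encoder, whose transposed integral step multiplies the residual's transform by $\overline{\W_c}$, reproduces the SAE-CNN correlation term exactly; since the pointwise nonlinearity $\mathcal{R}$ and the step size $\alpha$ are shared, each iterate map is identical.

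Finally I would close by induction on the layer index $t$: both encoders initialize at $\z_{c,0}=\zero$, and if the iterates $\z_{c,t}$ agree across architectures, then the identity of the forward operator, of the adjoint operator, of the residual $\sum_i\D_i*\z_{i,t}-\x$, and of $\mathcal{R}$ forces $\z_{c,t+1}$ to agree; hence $\z_{c,T}$ and therefore $\hat\x=\sum_c\D_c*\z_{c,T}$ coincide, which is exactly architectural inference equivalence. I expect the main obstacle to be Fourier-domain bookkeeping rather than conceptual difficulty: I must verify that the conjugation $\overline{\W_c}$ emitted by the transposed Fourier integral operator corresponds precisely to the flipped kernel underlying $\star$, and that the unitary DFT normalization constants are applied consistently on the forward and adjoint passes so that no spurious factor of $M$ or $\sqrt M$ survives.
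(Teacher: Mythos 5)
Your proposal is correct and follows essentially the same route as the paper's proof: identifying $\W_c = \mathcal{F}\D_c$ and using the convolution and correlation theorems to rewrite the unrolled proximal-gradient encoder step $\z_{c,t+1} = \mathcal{R}\bigl(\z_{c,t} - \alpha(\sum_i \D_i * \z_{i,t} - \x)\star \D_c\bigr)$ as the SAE-FNO's spectral update with the conjugated weights $(\W_c)^H$ playing the role of your $\overline{\W_c}$. Your explicit decoder check, the induction on the layer index from $\z_{c,0}=\zero$, and the flagged $1/\sqrt{M}$ normalization bookkeeping are sound additions that the paper leaves implicit, but they do not constitute a different argument.
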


\begin{proof}
We provide the proof for an encoder that implements a proximal gradient descent type of operation in an unrolled learning setting. For the SAE-CNN, the representation refinement for the $c$-th feature map at every encoder layer follows:

\begin{equation}
    \label{eq:l-convsae-code-update}
    \z_{c, t + 1} = \mathcal{R}\left(\z_{c, t}  - \alpha \left(\sum\limits_{i = 1}^p\D_i * \z_{i, t} - \x\right) \star \D_c \right)
\end{equation}

By construction, we incorporate the underlying model from the SAE-FNO to the SAE-CNN: $\W_c = \mathcal{F}\D_c$, and re-write the gradient of the encoder using Correlation and Convolution Theorems:
\begin{equation}
    \mathcal{F}^{-1} \left(\mathcal{F} \left(\sum\limits_{i = 1}^p\D_i * \z_{i, t} - \x \right) \odot \mathcal{F}\D_c^H \right)= \mathcal{F}^{-1} \left(\sum\limits_{i = 1}^p\W_i \odot \mathcal{F}\z_{i, t} - \mathcal{F}\x\right) \odot (\W_c)^H 
\end{equation}

Substituting this back into \cref{eq:l-convsae-code-update} gives:
\begin{equation}
     \z_{c, t + 1} = \mathcal{R}\left(\z_{c, t}  - \alpha \mathcal{F}^{-1} \left(\left(\sum\limits_{i = 1}^p\W_i \odot \mathcal{F}\z_{i, t} \right) - \mathcal{F}\x\right) \odot (\W_c)^H\right)
\end{equation}

This final expression is the SAE-FNO's iterative encoder update for $\z_c$ in the spectral domain, where the dictionary is $\W$. This completes the proof of architectural inference equivalence between SAE-CNN and SAE-FNO.
\end{proof}

\begin{proposition}
    [Training Dynamics of SAE-CNN and SAE-FNO]
    \label{prop:conv-operator-equiv}
    The training dynamics of the SAE-FNO's frequency-domain weights $\W_{c}^{(k + 1)} = \W_{c}^{(k)} + \dfrac{\eta_L}{\sqrt{M}} \mathcal{F} \left( \x - \hat{\x}^{(k)} \right) \odot \mathcal{F}(\z_{c, t})$ has an effective update in the original space, expressed as: $\D_c'^{(k + 1)} = \D_c'^{(k)} + \dfrac{\eta}{M}\left(\x - \hat{\x}^{(k)} \right) \star \z_{c, t}$. If the architectures satisfy the architectural inference equivalence condition (\cref{prop:conv-operator-sae-arch-equiv}), then the dynamics is equivalent to that of a SAE-CNN: $\D_c^{(k + 1)} = \D_c^{(k)} + \eta \left(\x - \sum\limits_{i = 1}^p \D_i^{(k)} * \z_{i, t}\right) \star \z_{c, t}$. 
\end{proposition}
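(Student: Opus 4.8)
The plan is to mirror the structure of the preceding appendix derivations (in particular the proof of the Training Dynamics of Lifted SAE-FNO specialized to $\lift = \proj = \eye$): first compute the analytic gradient of the reconstruction loss with respect to the frequency-domain weights $\W_c$, then push the resulting update back to the spatial domain using the unitarity of the Fourier transform together with the cross-correlation theorem, and finally invoke \cref{prop:conv-operator-sae-arch-equiv} to collapse the spatial update onto the SAE-CNN update.

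First I would write the reconstruction operator as $\hat{\x} = \mathcal{G}_{\W}(\z) = \tfrac{1}{\sqrt{M}}\mathcal{F}^{-1}\big(\sum_{c=1}^C \W_c \odot \mathcal{F}\z_c\big)$ with $\W_c = \mathcal{F}\D_c$, and the loss $\mathcal{L}(\W) = \tfrac{1}{2}\,\mathbb{E}_{\z\sim\mu}\|\x - \mathcal{G}_{\W}(\z)\|_{\mathcal{X}}^2$. Differentiating through the linear map $\W_c \mapsto \hat{\x}$ and using Plancherel to move the residual $\x - \hat{\x}$ into the frequency domain gives $\partial\mathcal{L}/\partial\W_c = -\tfrac{1}{\sqrt{M}}\,\mathcal{F}(\x - \hat{\x}) \odot \mathcal{F}(\z_c)$, so that a gradient step with rate $\eta_L$ yields $\W_c^{(k+1)} = \W_c^{(k)} + \tfrac{\eta_L}{\sqrt{M}}\,\mathcal{F}(\x - \hat{\x}^{(k)}) \odot \mathcal{F}(\z_{c,t})$, matching the stated frequency-domain update.

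Next I would apply $\mathcal{F}^{-1}$ to both sides. Since $\W_c = \mathcal{F}\D_c'$, the left side becomes $\D_c'^{(k+1)}$, and on the right the cross-correlation theorem converts the element-wise product of $\mathcal{F}(\x-\hat{\x}^{(k)})$ with $\mathcal{F}(\z_{c,t})$ into the spatial correlation $(\x-\hat{\x}^{(k)}) \star \z_{c,t}$; the two $\tfrac{1}{\sqrt{M}}$ normalizations (one from the gradient, one from the inverse transform) combine to $\tfrac{1}{M}$, producing the effective spatial update $\D_c'^{(k+1)} = \D_c'^{(k)} + \tfrac{\eta}{M}(\x-\hat{\x}^{(k)}) \star \z_{c,t}$. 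Substituting $\hat{\x}^{(k)} = \tfrac{1}{\sqrt{M}}\mathcal{F}^{-1}\big(\sum_i \mathcal{F}\D_i^{(k)} \odot \mathcal{F}\z_{i,t}\big)$ then recovers the explicit form claimed in the statement.

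Finally, under \cref{prop:conv-operator-sae-arch-equiv} the weights are tied by $\W_c = \mathcal{F}\D_c$, so the effective spatial dictionary $\D_c'$ coincides with the SAE-CNN kernel $\D_c$; moreover, when the filters span the full frequency range (no mode truncation, so that $\mathcal{F}^{-1}\mathcal{F}$ acts as the identity) and share the SAE-CNN's spatial support, we have $\hat{\x}^{(k)} = \sum_i \D_i^{(k)} * \z_{i,t}$ and the correlation is restricted to that same support. Absorbing $\tfrac{1}{M}$ into the learning rate then reduces the update to the SAE-CNN update $\D_c^{(k+1)} = \D_c^{(k)} + \eta(\x - \sum_i \D_i^{(k)} * \z_{i,t}) \star \z_{c,t}$. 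The main obstacle I expect is the frequency-domain differentiation together with careful bookkeeping of the $1/\sqrt{M}$ factors and of the conjugation that distinguishes convolution from correlation; the full-mode, matched-support hypothesis is precisely what guarantees that the mode truncation normally present in an FNO does not break the exactness of $\mathcal{F}^{-1}\mathcal{F}$, so that the spatial update is genuinely identical to the SAE-CNN update rather than merely a band-limited approximation of it.
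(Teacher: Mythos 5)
Your proposal is correct and follows essentially the same route as the paper's own proof: compute the analytic gradient $\partial\mathcal{L}/\partial \W_c = -\tfrac{1}{\sqrt{M}}\mathcal{F}(\x - \hat{\x})\odot\mathcal{F}(\z_c)$, push the update to the spatial domain via the inverse Fourier transform and the cross-correlation theorem so the two $\tfrac{1}{\sqrt{M}}$ factors combine into $\tfrac{1}{M}$, and identify the result with the SAE-CNN update under the weight tying $\W_c = \mathcal{F}\D_c$ of \cref{prop:conv-operator-sae-arch-equiv}, absorbing the $\tfrac{1}{M}$ into the learning rate exactly as the paper notes. Your added remarks on full-mode filters and matched spatial support are consistent with the statement of \cref{prop:sno_equiv} and, if anything, make the final reduction slightly more explicit than the paper's terse closing sentence.
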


\begin{proof}
For a standard SAE-CNN, we learn a dictionary $\D$ by minimizing the reconstruction loss for the data sample $\x$. The loss function is:
\begin{equation}
    \mathcal{L}(\D) = \dfrac{1}{2} \left\lVert \x - \sum\limits_{c = 1}^p \D_c * \z_c \right\lVert_2^2
\end{equation}

The gradient of the loss with respect to the $c$-th kernel $\D_c$ is:
\begin{equation}
    \dfrac{\partial\mathcal{L}}{\partial \D_c} = -\left(\x - \sum\limits_{i = 1}^p \D_i * \z_i \right) \star \z_c
\end{equation}
where $\star$ denotes cross-correlation.

The analytic gradient update, given $\z$, for $\D_c$ at iteration $k$ with a learning rate $\eta$ is:
\begin{equation}
    \D_c^{(k + 1)} = \D_c^{(k)} + \eta \left(\x - \sum\limits_{i = 1}^p \D_i^{(k)} * \z_{i, t}\right) \star \z_{c, t}
\end{equation}

For a standard SAE-FNO, we learn frequency-domain weights $\W_c = \mathcal{F}\D_c$ by minimizing the reconstruction loss for the data sample $\x$. Taking normalization into account, the reconstruction operator is $\hat{\x} = \mathcal{G}_{\W}(\z) = \frac{1}{\sqrt{M}} \mathcal{F}^{-1} \big(\sum_{c=1}^p \W_c \cdot (\mathcal{F} \z_{c})\big)$, where $\W_c = \mathcal{F} \D_c$, where $M$ is the number of modes. The loss function is:
\begin{equation}
    \mathcal{L}(\W) = \dfrac{1}{2} \mathbb{E}_{\z \sim \mu} \left[\left\lVert \x - \mathcal{G}_{\W}(\z) \right\lVert_{\mathcal{X}}^2 \right]
\end{equation}
where $\x = \mathcal{G}_{\D^{*}}(\z)$ is the ground-truth signal. 

The gradient of the loss with respect to the $c$-th frequency-domain kernel $\W_c$ is:
\begin{equation}
    \dfrac{\partial\mathcal{L}}{\partial \W_c} = - \dfrac{1}{\sqrt{M}} \mathcal{F}\left(\x - \hat{\x} \right) \odot \mathcal{F}(\z_c)
\end{equation}

The analytic gradient update, given $\z$, for $\W_c$ at iteration $k$ with a learning rate $\eta$ is:
\begin{equation}
    \W_c^{(k + 1)} = \W_c^{(k)} + \dfrac{\eta}{\sqrt{M}}\mathcal{F}\left(\x - \hat{\x}^{(k)}\right) \odot \mathcal{F}(\z_{c, k})
\end{equation}

Applying the inverse Fourier transform and the cross-correlation theorem, we obtain the effective update rule for the spatial dictionary $\D_c'$:
\begin{equation}
    \D_c'^{(k + 1)} = \D_c'^{(k)} + \dfrac{\eta}{M}\left(\x - \hat{\x}^{(k)} \right) \star \z_{c, t}
\end{equation}

This is equivalent to the gradient update of SAE-CNN with a scaling factor of $\frac{1}{M}$ on the learning rate, which arises from the inverse Fourier normalization conventions. 
\end{proof}

\begin{proposition}
    [Architectural Inference Equivalence of L-SAE-CNN and L-SAE-FNO]
    \label{prop:l-conv-operator-sae-arch-equiv}
    The architectural inference of a L-SAE-CNN is equivalent to a L-SAE-FNO if the integral operator of L-SAE-FNO and parameters of SAEs are defined as a convolution where the L-SAE-FNO's lifted frequency-domain weights $\W_{L, c}$ are the Fourier transform of L-SAE-CNN's lifted spatial domain kernels: $\W_{L, c} = \mathcal{F}\D_{L, c}$.
\end{proposition}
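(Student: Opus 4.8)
The plan is to chain together the two architectural-inference equivalences already established in the excerpt: the lifting equivalence for convolutional SAEs (\cref{prop:conv-lift-arch-equiv}) and the spatial-to-spectral equivalence between SAE-CNN and SAE-FNO (\cref{prop:conv-operator-sae-arch-equiv}). The structural observation that makes this composition clean is that the lifting operator $\lift$ and the projection $\proj$ act only on the channel dimension, whereas convolution $*$, correlation $\star$, and the Fourier transform $\mathcal{F}$ act only on the spatial dimension. These two families of operations therefore act on disjoint factors of the lifted signal and commute, so I can apply the lifted-CNN derivation and the CNN-FNO derivation in sequence without interference.

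First I would write down the L-SAE-CNN encoder update in the lifted space, exactly as derived in the proof of \cref{prop:conv-lift-arch-equiv}:
\begin{equation}
    \z_{c, t + 1} = \mathcal{R} \left(\z_{c, t} - \alpha \left( \sum\limits_{i = 1}^C \D_{L, i} * \z_{i, t} - \lift\x \right)  \star \D_{L, c} \right).
\end{equation}
This is the representation refinement expressed entirely in terms of the lifted kernels $\D_{L,c}$ and the lifted input $\lift\x$. Next I would substitute the defining relation $\W_{L,c} = \mathcal{F}\D_{L,c}$ and invoke the Convolution and Correlation Theorems precisely as in \cref{prop:conv-operator-sae-arch-equiv}, moving the residual-correlation term into the Fourier domain. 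The only new ingredient relative to the non-lifted argument is transporting the lifted input into the spectral domain: since $\lift$ mixes channels and $\mathcal{F}$ integrates over the spatial coordinate, they commute, giving $\mathcal{F}(\lift\x) = \lift\,\mathcal{F}\x$. The resulting iteration reads
\begin{equation}
    \z_{c, t + 1} = \mathcal{R}\left(\z_{c, t}  - \alpha \mathcal{F}^{-1} \left(\left(\sum\limits_{i = 1}^C \W_{L,i} \odot \mathcal{F}\z_{i, t} \right) - \lift\,\mathcal{F}\x\right) \odot (\W_{L,c})^H\right),
\end{equation}
which is exactly the L-SAE-FNO encoder update in the lifted spectral domain with dictionary $\W_L$, establishing the claimed equivalence.

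The main obstacle I anticipate is bookkeeping rather than conceptual: making rigorous the commutation $\mathcal{F}(\lift\x) = \lift\,\mathcal{F}\x$, i.e. confirming that $\lift$ acts pointwise across the channel index and leaves untouched the spatial coordinate over which $\mathcal{F}$ integrates. This requires being explicit about the tensor structure of the lifted signals (channel $\times$ spatial) and verifying that the channel-mixing matrix $\lift$ and the spatial operator $\mathcal{F}$ act on independent factors, so their composition is order-independent; the same care applies to the adjoint $\proj^\top = \lift$ when it appears inside the correlation. Once this commutation is verified, the remainder is a direct concatenation of the two cited propositions and carries no additional analytic content.
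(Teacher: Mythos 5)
Your proposal is correct and follows essentially the same route as the paper's proof: start from the L-SAE-CNN encoder iteration in the lifted space, substitute $\W_{L,c} = \mathcal{F}\D_{L,c}$, and move the residual-correlation term to the spectral domain via the Convolution and Correlation Theorems, arriving at the L-SAE-FNO update. The only cosmetic difference is that you additionally commute the channel-mixing lift past the spatial Fourier transform, writing $\mathcal{F}(\lift\x) = \lift\,\mathcal{F}\x$ (valid by linearity of $\mathcal{F}$ applied per channel), whereas the paper simply leaves the term as $\mathcal{F}(\lift\x)$, so this extra bookkeeping step is harmless but unnecessary.
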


\begin{proof}
We provide proof for an encoder that implements a proximal gradient descent type of operation in an unrolled learning setting. For the L-SAE-CNN, the representation refinement for the $c$-th feature map at every encoder layer follows:
\begin{equation}
    \label{eq:l-operator-code-update}
    \z_{c, t + 1} = \mathcal{R} \left( \z_{c, t} - \alpha \left(\sum\limits_{i = 1}^p \D_{L, i} * \z_{i, t} - \lift\x \right)  \star \D_{L, c} \right)
\end{equation}

By construction, we incorporate the underlying model from the L-SAE-FNO to the L-SAE-CNN: $\W_{L, c} = \mathcal{F}\D_{L, c}$, and re-write the gradient of the encoder using Correlation and Convolution Theorems:
\begin{equation}
    \mathcal{F}^{-1} \left(\mathcal{F} \left(\sum\limits_{i = 1}^p\D_{L, i}^{(k)} * \z_{i, t} - \lift \x \right) \odot \mathcal{F}\D_{L, c}^H \right)= \mathcal{F}^{-1} \left(\sum\limits_{i = 1}^p\W_{L, i}^{(k)} \odot \mathcal{F}\z_{i, t} - \mathcal{F}(\lift\x)\right) \odot (\W_{L, c})^H 
\end{equation}

Substituting this back into \cref{eq:l-operator-code-update} gives:
\begin{equation}
    \z_{c, t + 1} = \mathcal{R}\left(\z_{c, t}  - \alpha \mathcal{F}^{-1} \left[\left(\left(\sum\limits_{i = 1}^p\W_{L, i} \odot \mathcal{F}(\z_{i, t}) \right) - \mathcal{F}(\lift\x)\right) \odot (\W_{L, c})^H\right] \right)
\end{equation}

This final expression is the L-SAE-FNO's iterative update for $\z_c$ in the lifted spectral domain, where the dictionary is $\W_{L, c}$. This completes the proof of architectural inference equivalence between L-SAE-CNN and L-SAE-FNO.
\end{proof}

\begin{restatable}[The Architectural Inference of L-SAE-FNO reduced to SAE-FNO]{proposition}{liftarchequi}
    \label{prop:operator-lift-arch-equiv}
    The architectural inference of an SAE-FNO is equivalent to a L-SAE-FNO if the projection is tied to the lifting operator ($\proj = \lift^\top$), and the lifting operator is orthogonal ($\lift^\top\lift = \eye$), i.e., under the same learned model $\W_c = \proj \W_{L, c}$, the representation dynamics and output of the networks are equivalent.
\end{restatable}

\begin{proof}
We prove this equivalence by establishing a chain of architectural inference equivalences that connects the SAE-FNO to the Lifted SAE-FNO, leveraging the propositions previously established:
\begin{enumerate}
    \item From \Cref{prop:conv-operator-sae-arch-equiv}, the architectural inference of an SAE-FNO is equivalent to a SAE-CNN.
    \item From \Cref{prop:conv-lift-arch-equiv}, the architectural inference of a SAE-CNN is equivalent to L-SAE-CNN.
    \item From \Cref{prop:l-conv-operator-sae-arch-equiv}, the architectural inference of a L-SAE-CNN is equivalent to a L-SAE-FNO.
\end{enumerate}
By the transitive property of these equivalences, we can establish a direct architectural inference equivalence between SAE-FNO and L-SAE-FNO under the same lifting-projection conditions. This completes the proof.
\end{proof}

\begin{restatable}[Training Dynamics of SAE-FNO]{proposition}{snoequiv}\label{prop:sno_equiv}
Consider an SAE-FNO whose integral operator is implemented as a convolution, with frequency-domain weights $\W_c$ defined as the Fourier transform of the Conv-SAE's spatial kernels, i.e., $\W_c = \mathcal{F}\D_c$. The training dynamics of the SAE-FNO are then expressed as: $\D_c^{(k + 1)} = \D_c^{(k)} + \tfrac{\eta}{M}  (\x - \tfrac{1}{\sqrt{M}} \mathcal{F}^{-1} [{\textstyle \sum_{i = 1}^p} \mathcal{F} \D_i^{(k)} \odot \mathcal{F} \z_{i}]) \star \z_{c}$ where $M$ is the full number of modes, $\odot$ denotes element-wise multiplication, and $\star$ denotes correlation. If the SAE-FNO's filters span the full frequency range and are constrained to have the same spatial support as in the Conv-SAE, then its dynamics reduce to those of the Conv-SAE (see \cref{prop:conv-operator-equiv}).
\end{restatable}


\end{document}